\setlist{nolistsep}
\newcommand*\colourcheck[1]{%
  \expandafter\newcommand\csname #1check\endcsname{\textcolor{#1}{\ding{52}}}%
}
\newcommand*\colourlargex[1]{%
  \expandafter\newcommand\csname #1largex\endcsname{\textcolor{#1}{\ding{54}}}%
}
\newcommand*{\xdash}[1][3em]{\rule[0.5ex]{#1}{0.55pt}}
\theoremstyle{plain}
\newtheorem{theorem}{Theorem}[section]
\newtheorem{lemma}[theorem]{Lemma}
\newtheorem{corollary}[theorem]{Corollary}
\theoremstyle{definition}
\newtheorem{definition}[theorem]{Definition}
\theoremstyle{remark}
\newif\ifverbose
\DeclareMathOperator*{\argmin}{arg\,min}
\icmltitlerunning{Expanding ANN-to-SNN Conversion Beyond ReLU Network}
\begin{document}

\twocolumn[
\icmltitle{Sign Gradient Descent-based Neuronal Dynamics: \\ANN-to-SNN Conversion Beyond ReLU Network}



\icmlsetsymbol{equal}{*}

\begin{icmlauthorlist}
\icmlauthor{Hyunseok Oh}{snu}
\icmlauthor{Youngki Lee}{snu}
\end{icmlauthorlist}

\icmlaffiliation{snu}{Department of Computer Science \& Engineering, Seoul National University, Seoul, Republic of Korea}

\icmlcorrespondingauthor{Youngki Lee}{youngki.lee@gmail.com}

\icmlkeywords{Spiking neural network, ANN-to-SNN conversion, Sign descent method, Bio-inspired AI, Convex optimization}

\vskip 0.3in
]



\printAffiliationsAndNotice{}  

\begin{abstract}
Spiking neural network (SNN) is studied in multidisciplinary domains to (i) enable order-of-magnitudes energy-efficient AI inference and (ii) computationally simulate neuroscientific mechanisms.
The lack of discrete theory obstructs the practical application of SNN by limiting its performance and nonlinearity support. 
We present a new optimization-theoretic perspective of the discrete dynamics of spiking neurons. 
We prove that a discrete dynamical system of simple integrate-and-fire models approximates the subgradient method over unconstrained optimization problems. 
We practically extend our theory to introduce a novel sign gradient descent (signGD)-based neuronal dynamics that can (i) approximate diverse nonlinearities beyond ReLU and (ii) advance ANN-to-SNN conversion performance in low time steps.
Experiments on large-scale datasets show that our technique achieves (i) state-of-the-art performance in ANN-to-SNN conversion and (ii) is the first to convert new DNN architectures, e.g., ConvNext, MLP-Mixer, and ResMLP. We publicly share our source code at  \href{https://github.com/snuhcs/snn_signgd}{www.github.com/snuhcs/snn\_signgd}~.
\end{abstract}

\begin{figure}[ht]
    \begin{center} 
     \subfigure[Conceptual diagram of our technical contributions.]{\includegraphics[width=0.99\columnwidth]{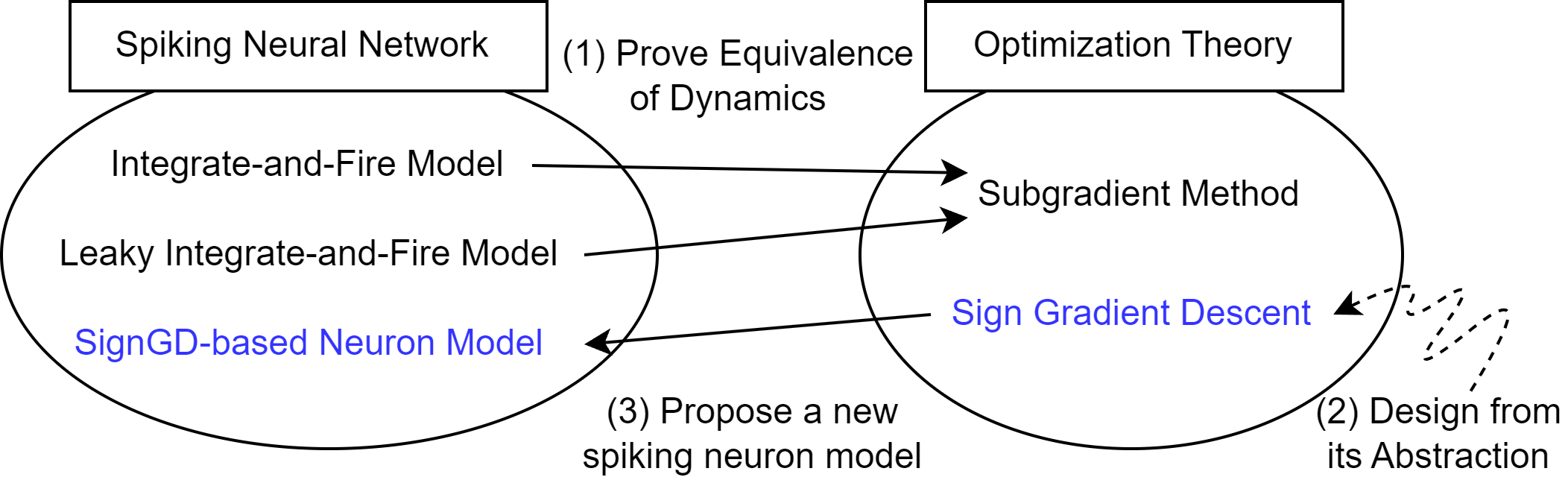}
     \label{fig:contributions}
     }     
     \vfill
     \subfigure[Our signGD-based spiking neuron enables the conversion of neural networks that use nonlinear operators other than ReLU.]{\includegraphics[width=0.99\columnwidth]{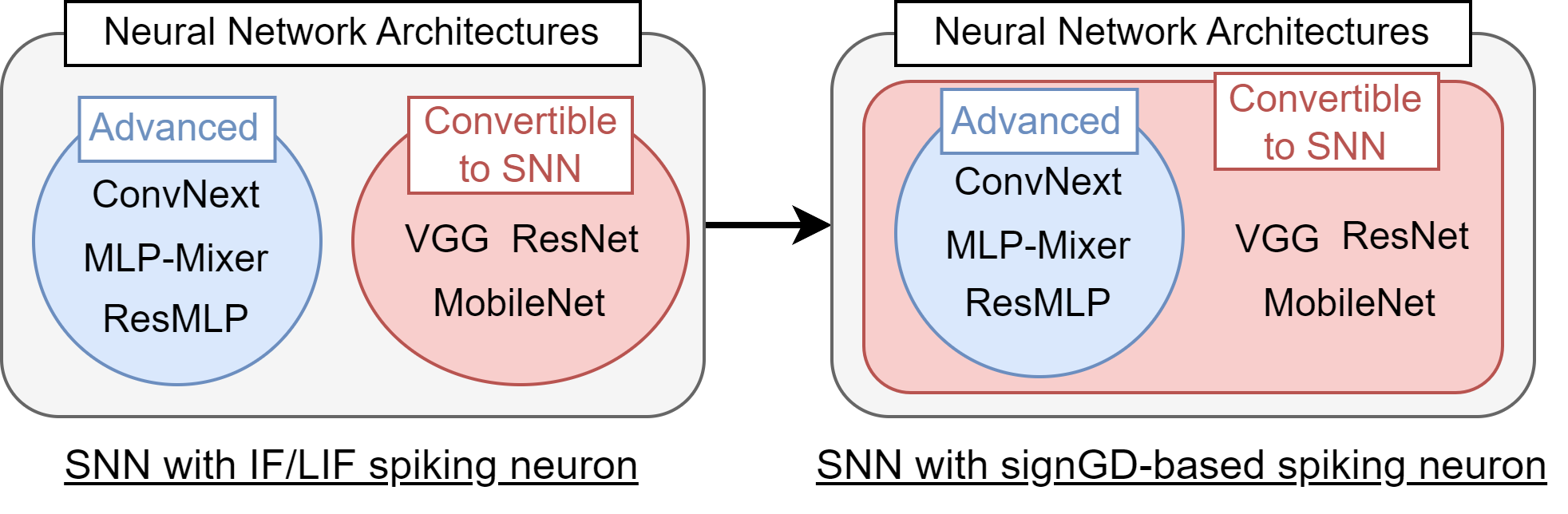}
     \label{fig:practical_contribution}
     }
    \end{center}
    \vskip -0.15in
    \caption{In this paper, we (i) mathematically connect the neuronal dynamics of integrate-and-fire models with the optimization dynamics of subgradient method, (ii) extend the theory to design a new spiking neuron model that can approximate arbitrary element-wise tensor operators, and (iii) use our neuron model to expand ANN-to-SNN conversion beyond ReLU networks (Fig.~\ref{fig:practical_contribution}).}
    \label{fig:teaser}
    \vskip -0.3in
\end{figure}
\section{Introduction}
Understanding how a biological neuron processes information has been a milestone of both neuroscience and efficient artificial intelligence~\cite{Christensen20212022roadmapneuromorphic, spinnaker, kasabov2014neucube}. Spiking neural network (SNN) is widely studied to get new insights on the information dynamics of the brain~\cite{ghosh2009snn,spinnaker}. SNN is a biologically plausible type of artificial neural network (ANN) in which its neuron models closely mimic neuroscientific mechanisms~\cite{Schuman2022OpportunitiesNeuromorphic}. 
In detail, the spiking neuron, the central processing unit of SNN, (i) processes non-linearity with an internal dynamical system and (ii) communicates the information in the form of a spike train, a time series of short binary electrical pulses.  
These key characteristics of SNN are practically leveraged to develop extremely efficient AI algorithms~\cite{Schuman2022OpportunitiesNeuromorphic}. For example, SNN inference is orders-of-magnitude more energy-efficient than the same-architecture DNNs, e.g., 35-560$\times$ less on VGG~\cite{bu2021optimalannsnn} and 280$\times$ less on YOLO~\cite{kim2020spikingyolo}. 
Discretization of spiking neuronal dynamics is thus pivotal to both (i) practically implement SNN for real-world applications and (ii) simulate brain mechanisms with SNN.
However, the theoretical understanding of SNN remains unclear~\cite{zhang2022theoreticallyprovable}, especially regarding its discrete neuronal dynamics.

The lack of discrete theory obstructs the practical application of SNN by constraining its inference accuracy and nonlinearity support.  
Building a high-performance SNN is categorized into two groups: (i) training SNN from scratch (Reviewed in Appendix~\ref{app:snn_training}.) and (ii) converting a pre-trained ANN model into SNN~\cite{cao2015spikingconv}. 
A successful SNN training strategy, surrogate gradient methods~\cite{Neftci2019SurrogateGL}, 
consumes immense computational resources since they unfold a DNN backwards the entire time-steps~\cite{li2021freelunch}. Yet, their best accuracy still falls behind DNNs with similar architecture, e.g., 4-6\% on ResNet~\cite{fang2021deepresidual} and 6-10\% on ViT~\cite{zhou2022spikformer}. 
On the other hand, ANN-to-SNN conversion techniques~\cite{cao2015spikingconv,han2020deeptsc} replace real-valued nonlinear operators of ANN with spiking neurons. However, the only known theoretical correspondence is between the clipped ReLU function and IF neuron~\cite{Rueckauer2017ConversionOC}. This leads to three major limitations in prior conversion approaches. First, ReLU should be the only nonlinear operator in a target ANN. Second, an accurate SNN inference mandates data-dependent normalization or calibration techniques~\cite{li2021freelunch}. Finally, low-latency techniques noticeably degrade the best accuracy of SNN by replacing ReLU with spike-aware  functions~\cite{bu2021optimalannsnn,Jiang2023SlipReLU}. 

In this light, we provide a new optimization-theoretic perspective of the discrete neuronal dynamics that can (i) explain 
the underlying principle of neuronal dynamics and (ii) extend to design a new spiking neuron that can compute various nonlinearities. 
We first prove that a discrete dynamical system of simple integrate-and-fire models is equivalent to a subgradient method over an unconstrained optimization problem with its solution as a spike-coded nonlinear function value. 
SNN inference is thus a neuron-wise first-order optimization process to approximate real-valued activations. 
This framework provides a way to study discrete neuronal dynamics by expressing it as an equivalent optimization algorithm, i.e., its optimizer form, and performing convergence analysis to derive the neuron's asymptotic behavior. 

Practically extending our theory, we present a novel signGD-based neuronal dynamics that can (i) approximate diverse nonlinearities beyond ReLU and (ii) achieve high accuracy in low time steps with converted SNNs. 
Specifically, we choose the sign gradient descent algorithm (signGD)~\cite{bernstein2018signsgd} as the optimizer form of our neuron, replacing the subgradient method. 
A spiking neuron's key characteristic, spike-based communication, constrains the space of approximable nonlinear function for the subgradient-based neuronal dynamics. 
In contrast, in the case of signGD-based neuronal dynamics, a binary spike conveys only the sign of the gradient of the objective function, widening the space of approximable nonlinear functions.
We generalize the learning rate schedule of the signGD-based optimizer form to formulate the new neuronal dynamics and the neural coding scheme.
We empirically validate with experiments that our signGD-based neuron can approximate unary nonlinearities, e.g., ReLU, LeakyReLU, and GELU~\cite{hendrycks2016gelu}, and n-ary nonlinearities, e.g., max pooling and layer normalization~\cite{ba2016layernorm}. 

To empirically verify the effectiveness of our signGD-based neuronal dynamics, we convert high-performance DNNs to SNNs 
with our proposed neurons and evaluate their performance. Experimental results on large-scale ImageNet~\cite{deng2009imagenet} and CIFAR~\cite{krizhevsky2009cifar} datasets show that our technique is (i) state-of-the-art in conversion techniques by precisely approximating the ANN performance in $\le 64$ time-steps, and (ii) first to convert complex DNN architectures, e.g., ConvNext~\cite{liu2022convnext}, MLP-Mixer~\cite{tolstikhin2021mlpmixer}, and ResMLP~\cite{Touvron2021ResMLP}. 
With our neuron, ImageNet top-1 accuracies of converted VGG16 and ResNet34 are  $>75\%$ in $T=64$, outperforming runner-ups by $3\%$.  ConvNext-B and RegNetX-3.2F reach $\approx 81\%$ in $T=256$ for the first time.

\section{Related Works}

\textbf{ANN-to-SNN Conversion.} 
Prior conversion techniques substitute the ReLU function of ANN with the IF neuron~\cite{roy2019towardsspikebasedmachine, Rueckauer2017ConversionOC}. 
Its limitation is that converted SNNs require a huge number of time steps for high accuracy~\cite{han2020deeptsc}, leading to large latency and energy consumption~\cite{liu2022spikeconverter}. Hence, subsequent works sought to \emph{accelerate} SNN inference, i.e., achieve higher accuracy in lower time steps. 
Data-dependent normalization~\cite{diehl2015thresholdbalancing,Rueckauer2017ConversionOC,Wang2022SignedNW}, calibration~\cite{li2021freelunch}, or neuron adaptation techniques~\cite{Hao2023ResidualMembranePotential} minimize the layer-wise empirical error between ANN and SNN activations~\cite{Deng2021OptimalConversionTheory}. 
Temporal coding-based techniques~\cite{Park2020TTFS,han2020deeptsc} embed information in latency of few spikes. 
Studies in SNN-aware nonlinearities substitute ReLU of an ANN architecture with piecewise-continuous functions, e.g., QCFS~\cite{bu2021optimalannsnn}, StepReLU~\cite{Wang2023StepReLU},  SlipReLU~\cite{Jiang2023SlipReLU}. 
Unlike these works, we theoretically explain integrate-and-fire models beyond IF neurons, propose novel neuronal dynamics to support diverse nonlinearities other than ReLU, and achieve the highest inference accuracy with our converted SNNs.

A few prior works customize neuronal dynamics to accelerate SNN or approximate different nonlinearities. \cite{liu2022spikeconverter} replaces event-driven computation with layer-wise computation. Instead of binary spikes, ternary spikes of $\{-1,0,1\}$ are used to approximate LeakyReLU~\cite{kim2020spikingyolo} or accelerate SNN~\cite{Wang2022SignedNW}. 
Time series of float instead of spikes are used to support max pooling~\cite{li2022quantizationframework} or accelerate SNN~\cite{Jiang2023SlipReLU}. Overall, these works sacrifice key characteristics of SNN, e.g., event-driven computation or binary spike train. In contrast, our signGD-based neuron computes in an event-driven manner with a binary spike train to support diverse nonlinearities.

\ifverbose{
\subsubsection{references}

(Spikformer)~\cite{zhou2022spikformer} ANN-to-SNN conversion and direct training. 
~\cite{hunsberger2015spikinglifconvert} There has recently been considerable effort to take deep ANNs and make them more biologically
plausible by introducing neural “spiking” [8, 9, 10, 11, 12, 13], such that connected nodes in the
network transmits information via instantaneous single bits (spikes), rather than transmitting realvalued activities. While one goal of this work is to better understand the brain by trying to reverse
engineer it [9], another goal is to build energy-efficient neuromorphic systems that use a similar
communication method for image categorization [12, 13].

(Deep Residual Learning in Spiking Neural Networks
)~\cite{fang2021deepresidual} ANN to SNN conversion (ANN2SNN) [20, 4, 46, 49, 12, 11, 6, 54, 33] and backpropagation with
surrogate gradient [40] are the two main methods to get deep SNNs. The ANN2SNN method first
trains an ANN with ReLU activation, then converts the ANN to an SNN by replacing ReLU with
spiking neurons and adding scaling operations like weight normalization and threshold balancing.

~\cite{zhang2022recentadvances}
The basic idea of DNNs-converted SNNs is that the average firing rate under rate encoding in SNNs can approximate the continuous activation value under the ReLU activation function in DNNs. In terms of performance, the DNNs-converted SNNs maintain the smallest gap with DNNs and can be implemented on large-scale network structures and datasets.

In ANNto-SNN conversion ~\cite{cao2015spikingconv}(Cao et al., 2015; Hunsberger & Eliasmith, 2015~\cite{hunsberger2015spikinglifconvert}; Rueckauer et al., 2017; Bu
et al., 2021~\cite{bu2021optimalannsnn}; Meng et al., 2022; Wang et al., 2022~\cite{Wang2022SignedNW}), the high-performance pre-trained ANN is
converted to SNN by replacing the ReLU activation layers with spiking neurons. 

(Towards spike-based machine intelligence with neuromorphic computing)~\cite{roy2019towardsspikebasedmachine} The idea of a conversion-based approach is to obtain an SNN that yields the same input–output mapping for a given task as that of a DLN.

~\cite{Rueckauer2017ConversionOC}A more straightforward approach is to take the parameters
of a pre-trained ANN and to map them to an equivalently accurate SNN. Early studies on ANN-to-SNN conversion began with the work of Perez-Carrasco et al. (2013), where CNN units were translated into biologically inspired spiking units with leaks and refractory periods, aiming for processing inputs from event-based sensors. 
Cao et al. (2015) suggested a close link between the transfer function of a spiking neuron, i.e., the relation between input current and output firing frequency to the activation of a rectified linear unit (ReLU), which is nowadays the standard model for the neurons in ANNs.
They report good performance error rates on conventional computer vision benchmarks, converting a class of CNNs that was restricted to having zero bias and only average-pooling layers. 
Their method was improved by Diehl et al. (2015), who achieved nearly loss-less conversion of ANNs for the MNIST (LeCun et al., 1998) classification task by using a weight normalization scheme. This technique rescales the weights to avoid approximation errors in SNNs due to either excessive or too little firing of the neurons. Hunsberger and Eliasmith (2016) introduced a conversion method where noise injection during training improves the robustness to approximation errors of the SNN with more realistic biological neuron models. Esser et al. (2016) demonstrated an approach that optimized CNNs for the TrueNorth platform which has binary weights and restricted connectivity. Zambrano and Bohte (2016) have developed a conversion method using spiking neurons that adapt their firing threshold to reduce the number of spikes needed to encode information.
These approaches achieve very good results on MNIST, but the SNN results are below state-of-the-art ANN results when scaling up to networks that can solve CIFAR-10 (Krizhevsky, 2009). One reason is that SNN implementations of many operators that are
crucial for improved ANN error rate, such as max-pooling layers, softmax activation functions, and batch-normalization, are nonexistent,
and thus SNNs can only approximately match the inference of an ANN. As a consequence, none of the previously proposed conversion approaches are general enough for full automatic conversion of arbitrary pre-trained ANNs taken from a Deep-Learning model zoo available, for example, in Caffe1.

~\cite{Rueckauer2017ConversionOC}In this work, we address some important shortcomings of existing ANN-to-SNN conversion methods. Through
mathematical analysis of the approximation of the output firing rate of a spiking neuron to the equivalent analog activation value, we were able to derive a theoretical measure of the error introduced in the previous conversion process. On the basis of this novel theory, we propose modifications to the spiking neuron model that significantly improve the performance of deep SNNs. By developing spiking implementations of max pooling layers, softmax activation, neuron biases, and batch normalization (Ioffe and Szegedy, 2015), we extend the suite of CNNs that can be converted. In particular, we demonstrate for the first time that GoogLeNet Inception-V3 can be converted to an equivalent-accurate SNN. Further, we show that the conversion to spiking networks is synergistic with ANN network compression techniques such as parameter quantization and the use of low-precision activations.

(A free lunch)~\cite{li2021freelunch} Unlike training from scratch, ANN-to-SNN conversion
methods, such as data-based normalization (Diehl et al.,
2015; Rueckauer et al., 2016) or threshold balancing (Diehl
et al., 2015; 2016), adapt to more complex situations (Tavanaei et al., 2019). The major bottleneck of these methods
is how to balance accuracy and inference latency as they
require more than 2k time steps to get accurate results. 

~\cite{fang2021deepresidual} Some recent conversion methods have achieved near loss-less accuracy with VGG-16 and ResNet [12,11, 6, 33].

The converted
SNN requires large time steps to accurately approximate ReLU activation, which causes large latency
(Han et al., 2020)~\cite{Han2020RMPSNNRM}. 

 ~\cite{Rueckauer2017ConversionOC} However, the converted SNN needs a longer time to rival the original ANN in precision
as the conversion is based on rate-coding [46], which increases the SNN’s latency and restricts the
practical application. 
Previous ANN2SNN methods noticed the distinction between plain feedforward ANNs and residual
ANNs, and made specific normalization for conversion. Hu et al. [17] were the first to apply the
residual structure in ANN2SNN with scaled shortcuts in SNN to match the activations of the original
ANN. Sengupta et al. [49] proposed Spike-Norm to balance SNN’s threshold and verified their
method by converting VGG and ResNet to SNNs

~\cite{zhang2022recentadvances}
Moreover, existing DNNs-converted SNNs algorithms also suffer from long simulation periods. From the perspective of model compression, the conversion process is an extreme quantization of activation values. The binary neural networks (BNNs) [Rastegari et al., 2016; Chen et al., 2018] in DNNs have a similar concept. However, the connection and difference between the BNNs and SNNs, and the possible impact of the additional temporal dimension in SNNs are not clearly elaborated. The threshold fring properties of SNNs may make them more receptive to compression algorithms. Therefore, the combination with compression algorithms such as weight quantization and pruning also needs to be explored so that the computational effciency advantage of SNNs can be further developed [Chen et al., 2021].

 As a consequence, some neurons are difficult to transmit information with short simulation sequences. Thus the converted SNNs usually require huge simulation length to achieve high accuracy (Deng et al., 2020) due to the trade-off between simulation length and accuracy (Rueckauer et al., 2017). This dilemma can be partially relieved by applying the threshold balance on the channel level Kim et al. (2019) and adjusting threshold values according to the input and output frequencies (Han et al., 2020; Han & Roy, 2020). However, as far as we know, it remains unclear how the gap between ANN and SNN formulates and how the simulation length and voltage threshold affect the conversion loss from layers to the whole network. In addition, the accuracy of converted SNN is not satisfactory when the simulation length is as short as tens.
The performance of the converted SNN is determined by the source ANN performance and the conversion error. we analyze that minimizing the conversion error is equivalent to minimizing output errors caused by different activation functions of the ANN and SNN on each layer. Here we further analyze how to modify the activation function so that the layer-wise error can be minimized.

~\cite{roy2019towardsspikebasedmachine}The output value of a nonlinear neuron—using, for example, a hyperbolic tangent (tanh) or a normalized exponential (softmax) function—can take both positive and negative values, whereas the rate of a spiking neuron can only be positive. Thus, negative values will always be discarded, leading to a decline in accuracy of the converted SNNs. Another problem with conversion is obtaining the optimal firing rate at each layer without any drastic performance loss. The inference time for SNNs that are converted from DLNs turns out be very large (of the order of a few thousand time steps), leading to increased latency as well as degraded energy efficiency.

we consider two extreme cases where (1) the threshold Vth is so large that the simulation time T is not long enough for the neurons to fire a spike or (2) the threshold Vth is so small that the neuron spikes every time and accumulates very large membrane potential after simulation (upper bound of Eqn.7). For these two cases, the remaining potential contains most of the information from ANN and it is almost impossible to convert from the source ANN to the target SNN. To eliminate these two cases, we apply the threshold ReLU instead of the regular ReLU and set the threshold voltage Vth in the SNN as the threshold yth for ReLU in the ANN.
How to minimize the layer-wise squared difference: We can see that when the input is equal, hl and h0l actually have a systematic bias that can be further optimized by either shifting hl or h0l. Thus the total conversion error can be approximately estimated as
 which explicitly indicates how the low threshold value and long simulation time decrease the conversion error. The optimal shift is affected by both the distribution of activation values and the level of overfitting in the source ANN and target SNN.

SpikingYOLO~\cite{kim2020spikingyolo} introduces channel-wise normalization (abbreviated to channel-norm) to enable fast and efficient information transmission in deep SNNs. Our method normalizes the weights by the maximum possible activation (the 99.9th percentile) in a channel-wise manner instead of the conventional layer-wise manner. 
Using signed neurons with IBT, leakyReLU can be implemented accurately in SNNs and can directly be mapped to the current neuromorphic architecture with minimum overhead. over 2,000 times better than Tiny-YOLO for 32-bit FL and INT operations.
SpikingYOLO consumes approximately 280 times less energy than Tiny YOLO when ran on TrueNorth. 

TSC~\cite{han2020deeptsc}: It uses a novel time-based coding scheme (TSC) and TSC spiking neuron model. We also propose a threshold balancing technique which alleviates the ANN-SNN conversion accuracy loss and significantly improved the latency and scalability of TSC-SNNs to deep architectures. Input and output of the TSC spiking neurons are encoded temporally using sparse spiking events over certain time period. Hence, inference in TSC-SNN is carried out over multiple feed-forward passes or time-steps (also known as inference latency), where each pass entails sparse spike-based computations.

~\cite{li2021freelunch}
Rueckauer et al. (2017)
suggest using percentile threshold, which avoids picking the
outlier in the activation distribution. Spike-Norm (Sengupta
et al., 2018) tests architectures like VGG-16 and ResNet20. In this work, we further extend the source architecture
to MobileNet and RegNet. RMP (Han et al., 2020) and
TSC (Han & Roy, 2020) achieves near-to-origin accuracy
by adjusting the threshold according to the input and output
spike frequency. Deng & Gu (2021) decompose the conversion loss into each layer and reduce it via shifting bias. Low
latency converted SNN is an on-going research challenge
since it still requires a considerable amount of simulation
length. At the same time, most SNN conversion work does
not address the BN layers in low latency settings.
All of them fail to convert ANN with BN layers in low latency time steps (≤ 256), which may significantly increase the latency especially for resource-limited devices. We think the simple copy-paste of parameters without any dedicated calibration on SNN will inevitably result in activation mismatch. In this work, we aim to obtain an SNN in extremely low latency (less than 256 time steps) and in extremely low cost. We choose to utilize a pre-trained ANN and convert it to SNN. Unlike previous conversion work which simply transplants the weights parameters to the SNN, in this work we show that the activation transplating is much more important. In order to accomplish this, we propose SNN calibration, a new technology family by calibrating the parameters in SNN to match the activation after conversion, and thus significantly narrow the gap in activation distribution between the source ANN and calibrated SNN.
we propose SNN calibration, a new technology family by calibrating the parameters in SNN to match the activation after conversion, and thus significantly narrow the gap in activation distribution between the source ANN and calibrated SNN.
We first use the derivation in Deng & Gu (2021) to deduce the relationship between ¯s(`) and ¯s(`+1). 
The conversion loss comes from two aspects, namely the flooring error and the clipping error. 
we use Minimization of Mean Squared Error (MMSE) to obtain the threshold V(`)th under different simulation length T. 
Besides adaptive threshold, we further reduce the conversion error by calibrating the parameters of SNN. We first analyze how conversion errors accumulate through layers, and then present a set of layer-wise algorithms to calibrate different types of SNN parameters, including bias, weights and initial membrane potential. We introduce Light Pipeline and Advanced Pipeline, which can be chose by users according to their memory and computation budgets for layer-wise calibration in practice. The light pipeline achieves fast calibration with less memory and computation consumption by only adjusting bias in SNNs. With a little effort, the light pipeline can outperform stateof-the-art methods by a large margin. We also propose an Advanced Pipeline that achieves best results by calibrating the weights as well as the initial membrane potential in a fine-grained way.
To make SNN run on corresponding hardware, we convert the AvgPool layer by treating the AvgPool layer as a convolutional layer with specific values. There is no corresponding module in SNN for Batch Normalization (BN) layers. Rueckauer et al. (2017) propose to
absorb the BN parameters to the weight and bias. To correct the bias and membrane potential, we sample one batch of unlabeled data
(128 training images). 

SpikeConverter~\cite{liu2022spikeconverter}: existing ANN-converted SNN methods are still far from applicable due to the following reasons. 1) Converted SNN still suffers from accuracy drop compared to the source ANN. ANN-converted SNNs typically exploit the firing rate of the spike train to serve as the equivalence of the activation value. However, the firing rate has far worse resolution than the activation values in ANN, leading to accuracy drop. 2) The converted SNNs need an enormous number of time steps for better information representation, which directly deteriorate the energy efficiency of SNNs. Although SNN is supposed to perform the event-driven asynchronous calculation that spikes happen at any time in the time window, the practical hardware operates on a clock-driven synchronous pattern that segment the time window into time steps and process the spikes in batches, resulting in the inference latency and energy consumption directly proportional to the number of time steps. In fact, converted ResNet on ImageNet requires up to 4096 time steps (Han, Srinivasan, and Roy 2020) while the energy consumption of AlexNet-converted SNN with 500 time steps is nearly 5 ∼ 10× higher than that of the source ANN (Singh et al. 2020).
In previous works, the hard reset mechanism is usually used to reset the membrane potential of the neuron to a fixed value when it fires a spike. However, such a method degrades the real-valued information contained in the membrane potential into a boolean value when the voltage exceeds the threshold. To eliminate such information degradation, we adopt the soft reset scheme, which only subtracts the threshold from the membrane potential rather than reset it directly. 
we can build precise relationship between the inputs spike trains, output spike trainand the membrane potential of a leaky integrate and fire neuron. we have the ideal conversion identical relation; an excellent counterpart of the activation value in ANN for both input and output. Such equivalence ensures the homogeneous representation of both input and output, which means that the information can be transmitted totally in the form of spike trains without transforming to other modalities.
The input voltage can be negative due to negative weights, but the output spike will not respond to it until the membrane potential is accumulate to positive values again. Therefore, we propose the temporal separation scheme that separates the neural calculation into two phases. The accumulating phase collects input spikes without firing spikes and the generating phase fires the output spike according to the accumulated membrane potential. To implement temporal separation, we propose the inverse-leaky integrate-and-fire neuron to realize the calculation
and the pipeline mechanism to minimize the delay.  the output spike train will only contain consecutive spikes. If the neuron fails to fire a spike in a certain spike, it will never spike any more since the membrane potential is already below the threshold and nonincreasing. In such context, we propose our inverse-leaky integrateand- fire (iLIF) neuron whose damping coefficient k is larger than 1. 
Inspired by the fact that different processing units are allocated to work on different layers to minimize memory accessing costs, we propose the inter-layer direct delivery and inter-sample pipelining to minimize the time delay of a single sample and maximize the productivity for multiple samples.
While T is usually determined by accuracy and delay requirements, we look for the optimal threshold voltage and k by minimizing the conversion error. We draw the conclusion that k = 2 gives us the optimal conversion.

(Optimal ANN-SNN Conversion )~\cite{bu2021optimalannsnn} To obtain high-performance SNNs with ultra-low latency (e.g., 4 time-steps), we list the critical errors in ANN-SNN conversion and provide solutions for each error. We go deeper into the errors in the ANN-SNN conversion and ascribe them to clipping error, quantization error, and unevenness error. We find that unevenness error, which is caused by the changes in the timing of arrival spikes and has been neglected in previous works, can induce more spikes or fewer spikes as expected. 
We propose the quantization clip-floor-shift activation function to replace the ReLU activation function in source ANNs, which better approximates the activation function of SNNs. We prove that the expected conversion error between SNNs and ANNs is zero, indicating
that we can achieve high-performance converted SNN at ultra-low time-steps.
The key idea of ANN-SNN conversion is to map the activation value of an analog neuron in ANN to the firing rate (or average postsynaptic potential) of a spiking neuron in SNN. 
Clipping error.  Considering that nearly 99.9
Quantization error (flooring error). The output spikes are discrete events, thus discrete with quantization resolution. When mapping, there exists avoidable quantization error.
Unevenness error. (AAAI'22 SpikeConverter paper에도 제시된 에러: negative weight 때문에 membrane potential timing 안좋으면 예상했던거보다 많이 혹은 적게 firing되는거) Unevenness error is caused by the unevenness of input spikes. If the timing of arrival spikes changes, the output firing rates may change, which causes conversion error. There are two situations: more spikes as expected or fewer spikes as expected.  Specifically, the unevenness error will degenerate to the quantization error if vl(T) is in the range of [0, θl]. (개소리)
It is natural to think that if the commonly used ReLU activation function is substituted by a clip-floor function with a given quantization
steps L, the conversion error at time-steps T = L will be eliminated. Thus the performance degradation problem at low latency will be solved. We proposed the quantization clip-floor activation function to train ANNs.
If the time-steps T of the converted SNN is the same as the quantization steps L of the source ANN, the conversion error will be zero. There is no guarantee that the conversion error is zero when T is not equal to L. We propose the quantization clip-floor-shift activation function to train ANNs.  there exists a hyperparameter vector φ that controls the shift of the activation function. 
Training an ANN with quantization clip-floor-shift activation instead of ReLU is also a tough problem. To direct train the ANN, we use the straight-through estimator (Bengio et al., 2013) for thederivative of the floor function.

(Bridging Gaps)~\cite{hao2022bridginggaps} We introduce offset spike to measure the degree of deviation between the actual firing rate and the desired firing rate in SNNs. Then, we demonstrate that the offset spike of being one accounts for the main part in each layer and is the main reason of conversion error.
ANN-SNN conversion errors can be divided into clipping error, quantization error (flooring error), and unevenness error (deviation error) (Bu et al., 2022b). In previous works (Han et al., 2020; Li et al., 2021; Meng et al., 2022b), those errors are eliminated (or reduced) separately, and thus far no method to eliminate the unevenness error (deviation error) has been identified. Since we find that the essential cause of most conversion errors comes from the remaining term. 
We train the source ANN with the QCFS activation function (equation 7) and then convert it to an SNN

\subsubsection{soft reset}
Soft reset~\cite{Han2020RMPSNNRM}

~\cite{li2021freelunch}
Recently, many methods have been proposed to reduce the
conversion loss and simulation length. The soft-reset also
called the reset-by-subtraction mechanism, is the most common technique to address the potential reset’s information
loss (Rueckauer et al., 2016; Han & Roy, 2020). Our IF neuron model also adopts this strategy.
\subsubsection{max pooling}
(Quantization Framework for Fast Spiking Neural Networks )~\cite{li2022quantization} we empirically show that the inference latency of the SNN and the activation bit-width of the ANN are correlated after ANN-to-SNN conversion, so a fast SNN can be built by using a quantized ANN.
Standard quantization methods (post-training quantization and quantization-aware training) have increasingly matured. These two standard methods are not suitable for this research, as they fail to achieve competitive accuracy in extremely low bit precision such as 2 bits. Hence, the first obstacle is to choose a more effective quantization method than the standard post-training quantization and quantization-aware training. 
The ANN quantization technique chosen in this paper is based on LSQ (Esser et al., 2019). LSQ defines the gradients of the quantization step size to prevent activations from being too close to quantization transmission points. It can enable network quantization down to 2 bits while minimizing the accuracy loss introduced by quantization.
Firstly, many quantization techniques including the applied LSQ leave the output layer in floating-point to render better accuracy, e.g., 4
Secondly, the integrate-and-fire mechanism in spiking neurons corresponds to rounding down rather than rounding to nearest which is generally used in ANN quantization. Here we stick to using rounding to nearest during quantization and compensate for it in the SNN by pre-charging the membrane potential (Hwang et al., 2021).
We add a mechanism for generating negative spikes in spiking neurons to compensate for the incorrectly emitted positive spikes: A negative spike will be generated when the membrane potential is smaller than zero and the total spike count generated by this neuron is greater than zero.
Event-Based Max Pooling: a max pooling output spike zt is generated only when Mt changes, which keeps the event-based nature of SNNs. Access to the spike count and the calculation of the difference
are uncomplicated and can be implemented in PyNN

\subsubsection{encoding}
(Recent Advances and New Frontiers in Spiking Neural Networks )~\cite{zhang2022recentadvances} Currently, the specifc method of neural encoding has not yet been concluded. Populations of neurons with different encodings may coexist and cooperate, thus providing a suffcient perception of information. Neural encoding methods may behave differently in different brain regions.  Furthermore, many SNNs algorithms only pay attention to rate coding, ignoring the spike trains’ temporal structure. It may cause that the advantages of SNNs in temporal information processing have not been well exploited. Therefore, the design of the algorithm suitable for temporal coding with high information density may be the new direction for future exploration.

(Theoretically Provable Spiking Neural Networks)~\cite{zhang2022theoreticallyprovable} About Neural Encoding. The actual input data (e.g., image or video) should usually be preconverted into a spiking version before fed up to SNNs. The conversion procedure is called neural
encoding, as shown in Figure 1. There are two main categories of neural encoding approaches: temporal encoding and rate-based encoding; the former encodes input data by exploiting the distance
between time instances that fire spikes [40], and the latter encodes input data as a count sequence
of the fired spikes within temporal windows [18]. The rate-based encoding is the simplest and most
popular scheme in SNNs. The representative techniques are usually encoded by a Poisson distribution or recorded by a dynamic vision sensor [3, 27]. Recent years have witnessed a lot of efforts on
the information capacity of neural encoding, specially rate-based encoding, from empirical [6, 17]
and theoretical [24, 35, 36] sides. Throughout this paper, we adopt rate-based encoding as the default and focus on the firing rates of SNNs, generalizing the computational powers concerning the
spike count.
About Firing Rates. When we investigate the dynamics and neural computation of SNNs, the
firing rates or equally the number of firing spikes are the key measure of network activities for investigating neural computation and model dynamics because of the close relation between firing
rates and network function (including neural input, connectivity, spiking function, and firing process) [1, 2]. There are great efforts to use firing rates in SNNs for some real-world tasks, such as
vision [13, 33, 44] and speech recognition [30, 39]. Besides, Barrett et al. [5] and Chou et al. [8]
showed that the averaged firing rate can approximate the optimal solutions of some quadratic programs within polynomial complexity. This work employs an “instantaneous” firing rate rather than
the averaged firing rate or the total number of firing spikes used in previous studies. This manner
provides a feasible way to construct the discrete dynamical systems using the IFR functions, based
on which we can develop in-depth understandings of SNNs from spatial and temporal aspects.
\fi

\ifverbose{
\subsubsection{references}

(STDP-based spiking deep convolutional neural networks
for object recognition) Primate’s visual system solves the object recognition task through hierarchical processing along the
ventral pathway of the visual cortex [10]. Through
this hierarchy, the visual preference of neurons
gradually increases from oriented bars in primary
visual cortex (V1) to complex objects in inferotemporal cortex (IT), where neural activity provides
a robust, invariant, and linearly-separable object
representation [10, 9]. Despite the extensive feedback connections in the visual cortex, the first feedforward wave of spikes in IT (∼ 100−150 ms poststimulus presentation) appears to be sufficient for
crude object recognition [54, 19, 33].
The computing units of DCNNs send floatingpoint values to each other which correspond to their
activation level, while, biological neurons communicate to each other by sending electrical impulses
(i.e., spikes). The amplitude and duration of all
spikes are almost the same, so they are fully characterized by their emission time. Interestingly, mean
spike rates are very low in the primate visual systems (perhaps only a few of hertz [50]). Hence, neurons appear to fire a spike only when they have to
send an important message, and some information
can be encoded in their spike times. Such spiketime coding leads to a fast and extremely energyefficient neural computation in the brain (the whole
human brain consumes only about 10-20 Watts of
energy [34]).
The current top-performing DCNNs are trained
with the supervised back-propagation algorithm
which has no biological root.
Although it works
well in terms of accuracy, the convergence is rather
slow because of the credit assignment problem [45].
Furthermore, given that DCNNs typically have millions of free parameters, millions of labeled examples are needed to avoid over-fitting. However, primates, especially humans, can learn from far fewer
examples while most of the time no label is available. They may be able to do so thanks to spiketiming-dependent plasticity (STDP), an unsupervised learning mechanism which occurs in mammalian visual cortex [38, 18, 37]. According to
STDP, synapses through which a presynaptic spike
arrived before (respectively after) a postsynaptic
one are reinforced (respectively depressed).
In this paper we proposed a STDP-based spiking deep neural network (SDNN) with a spiketime neural coding. The network is comprised of
a temporal-coding layer followed by a cascade of
consecutive convolutional (feature extractor) and
pooling layers. The first layer converts the input
image into an asynchronous spike train, where the
visual information is encoded in the temporal order of the spikes. Neurons in convolutional layers
integrate input spikes, and emit a spike right after
reaching their threshold. These layers are equipped
with STDP to learn visual features. Pooling layers
provide translation invariance and also compact the
visual information [48]. Through the network, visual features get larger and more complex, where
neurons in the last convolutional layer learn and
detect object prototypes. At the end, a classifier
detects the category of the input image based on
the activity of neurons in the last pooling layer with
global receptive fields.

(Event-driven Backpropagation) The first category consists of recurrent neural network (RNN)-like learning algorithms. These
algorithms treat spiking neural networks as binary-output recurrent neural networks and handle the
discontinuities of membrane potential at spike times with continuous surrogate derivatives [17]. They
typically train deep SNNs with surrogate gradients based on the idea of backpropagation through
time (BPTT) algorithm [18, 19, 20, 21, 22, 23, 24, 25, 26, 27, 28]. While competitive accuracies are
reported on the MNIST, CIFAR-10, and even ImageNet datasets [29, 30, 31], the gradient information
is propagated each time step, whether or not a spike is emitted (as shown in Fig. 1). Therefore,
these approaches do not follow the event-driven nature of spiking neural networks, which lose the
asynchronous characteristic of SNNs and consume much power when trained on neuromorphic
hardware.
The second category is event-driven algorithms, which propagate gradient information through spikes.
Precise spiking timing acts an important role in this situation, and they are extensively used in such
algorithms [32, 33, 34, 35, 36, 37, 38, 39]. Classical examples include SpikeProp [32] and its variants [33, 40, 41]. These algorithms approximate the derivative of spike timing to membrane potential
as the negative inverse of the time derivative of membrane potential function. This approximation
is actually mathematically correct without preconditions [42]. Some other works apply non-leaky
integrate-and-fire neurons to stabilize the training process [35, 38, 43]. Most of these works restrict
each neuron to fire at most once, which inspires [44] to take the spike time as the state of a neuron,
and model the relation of neurons by this spike time. As a result, the SNN is trained similarly to an
ANN. Among the methods trained in an event-driven fashion (not modelling the relation of spike
time to train like ANNs), the state-of-the-art model is TSSL-BP [39]. However, they use RNN-like
surrogate gradients (a sigmoid function) to assist training. Hence, it is still challenging to train SNNs
in a pure event-driven fashion.

(Event-driven Backpropagation) Event-driven learning v.s. RNN-like learning: In both forward and backward computation of
event-driven learning, information is only carried by spikes in SNNs. Specifically, in backward
computation, gradient information is propagated through spikes [32, 33, 41, 35] (shown in Fig. 1a-b).
On the other side, in RNN-like learning, information is not only carried by spikes in backward
computation. Especially, gradient information can be propagated through a neuron that does not emit
a spike in backward computation (shown in Fig. 1c-d). This gradient propagation is achieved by a
surrogate function [12, 17, 18, 23, 47], which is a function of the membrane potential at the current
time step ut, and the firing threshold θ.
Time-based gradient v.s. activation-based gradient: Time-based gradients represent the (reverse)
direction that the timing of a spike should move, that is, to move leftward or rightward on the time
axis [32]. In backward propagation, the derivative of the firing time of a spike to the corresponding
membrane potential ∂t
∂u is often approximated as −1
∂u
∂t
[32, 33], denoting how the change of membrane
potential will change the spike firing time (Fig. 1b). On the other side, activation-based approaches
replace the Heaviside neuron activation function Θ(·) (spike st = Θ(ut − θ)) in forward propagation
with derivable functions σ(·) in backward propagation, whether there are spikes in the current time
step [18, 26, 31, 21]. Therefore, activation-based approaches essentially regard SNNs as binary
RNNs and train them with approximated gradients, where the gradients indicate whether the values
in the network (including the binary spikes) should be larger or smaller (Fig. 1d).
As a result, time-based gradients are event-driven by nature, since the temporal gradient could only be
carried by spikes. Meanwhile, activation-based gradients are more suitable for the RNN-like training
scheme since the diversity of surrogate gradients largely relies on the fact that ut ̸= θ in discrete time
steps [17], which no longer holds in continuous time simulation. If we want to apply activation-based
gradients to event-driven learning, there should only be one value ∂s
∂u when the membrane potential
reaches the threshold. 
the sign of the gradient gets wrong in propagation between layers. Thus, the commonly
used double-exponential spike response kernel is incompatible with the time-based gradient in
event-driven learning.
A smoother gradient assigning approach. Inspired by the above gradient inconsistency as well
as the invariance of gradient sum, we propose a new gradient backpropagation approach here

~\cite{Rueckauer2017ConversionOC} In order to bridge the gap betweenDeep Learning continuousvalued
networks and neuromorphic spiking networks, it is
necessary to develop methods that yield deep Spiking Neural
Networks (SNNs) with equivalent error rates as their continuousvalued
counterparts. Successful approaches include direct
training of SNNs using backpropagation (Lee et al., 2016),
the SNN classifier layers using stochastic gradient descent
(Stromatias et al., 2017), or modifying the transfer function of
the ANNs during training so that the network parameters can be
mapped better to the SNN (O’Connor et al., 2013; Esser et al.,
2015; Hunsberger and Eliasmith, 2016). The largest architecture
trained by Hunsberger and Eliasmith (2016) in this way is
based on AlexNet (Krizhevsky et al., 2012). While the results
are promising, these novel methods have yet to mature to the
state where training spiking architectures of the size of VGG-16
becomes possible, and the same state-of-the-art error rate as the
equivalent ANN is achieved.

SG methods provide an alternative approach to overcoming the difficulties associated with
the discontinuous nonlinearity. Moreover, they hold opportunities to reduce the potentially high
algorithmic complexity associated with training SNNs. Their defining characteristic is that instead
of changing the model definition as in the smoothed approaches, a SG is introduced. In the following we make two distinctions. We first consider SGs which constitute a continuous relaxation
of the non-smooth spiking nonlinearity for purposes of numerical optimization (Fig. 4). Such
SGs do not explicitly change the optimization algorithm itself and can be used, for instance,
in combination with BPTT. Further, we also consider SGs with more profound changes that
explicitly affect locality of the underlying optimization algorithms themselves to improve the
computational and/or memory access overhead of the learning process. One example of this
approach that we will discuss involves replacing the global loss by a number of local loss
functions. Finally, the use of SGs allows to efficiently train SNNs end-to-end without the need
to specify which coding scheme is to be used in the hidden layers.

(Spikformer) In the area of direct training, SNNs are unfolded over the simulation time
steps and trained in a way of backpropagation through time (Lee et al., 2016; Shrestha & Orchard,
2018). Because the event-triggered mechanism in spiking neurons is non-differentiable, the surrogate
gradient is used for backpropagation (Lee et al., 2020; Neftci et al., 2019)Xiao et al. (2021a) adopts
implicit differentiation on the equilibrium state to train SNN. Various models from ANNs have
been ported to SNNs. However, the study of self-attention on SNN is currently blank. Yao et al.
(2021) proposed temporal attention to reduce the redundant time step. Zhang et al. (2022a;b) both
use ANN-Transformer to process spike data, although they have ’Spiking Transformer’ in the title.
Mueller et al. (2021) provides a ANN-SNN conversion Transformer, but remains vanilla self-attention
which does not conform the characteristic of SNN. In this paper, we will explore the feasibility of
implementing self-attention and Transformer in SNNs.

(Deep Residual Learning in Spiking Neural Networks
) The backpropagation methods can be classified into two categories [26]. The
method in the first category computes the gradient by unfolding the network over the simulation timesteps [31, 19, 58, 50, 30, 40], which is similar to the idea of backpropagation through time (BPTT). As
the gradient with respect to the threshold-triggered firing is non-differentiable, the surrogate gradient
is often used. The SNN trained by the surrogate method is not limited to rate-coding, and can also be
applied on temporal tasks, e.g., classifying neuromorphic datasets [58, 8, 16]. The second method
computes the gradients of the timings of existing spikes with respect to the membrane potential at the
spike timing [5, 39, 24, 65, 63].

(A Free lunch) For training-based SNN, there are several supervised learning algorithms divided into (1) synaptic plasticity and (2)
surrogate gradient. Synaptic plasticity methods are based on time-sensitivity and update the connection weight via
the two neurons’ firing time interval (Kheradpisheh et al.,
2018; Iyer & Chua, 2020; LI & LI, 2019). They are more
suitable for the neuromorphic image (Amir et al., 2017) or
rate coding from static images. On the other hand, surrogate
gradient (spiking-based backpropagation) methods use a
soft relaxed function to replace the hard step function and
train SNN like RNN (Wu et al., 2018; Shrestha & Orchard,
2018). They suffer from the computationally expensive
and slow during the training process on complex network
architecture(Rathi et al., 2019).

(Towards spike-based machine intelligence with neuromorphic computing) In a spike-based approach, SNNs are trained using timing information  and therefore offer the obvious advantages of sparsity and efficiency in overall spiking dynamics. Research efforts have been directed towards integrating global backpropagation-like spike-based error gradient descent to enable supervised learning in multi-layer SNNs. Most works that rely on backpropagation estimate a differentiable approximate function for the spiking neuronal functionality so that gradient descent can be performed (Fig. 4a).
SNNs already have a computational advantage as a result of binary spike-based processing. Furthermore, the stochasticity in neuronal dynamics of LIF neurons can improve the robustness of a network to external noise.

(Towards spike-based machine intelligence with neuromorphic computing) A major restriction in the use of SNNs with such sensors is the lack of appropriate training algorithms that can efficiently utilize the timing information of the spiking neurons. Practically, in terms of accuracy, SNNs are still behind their second-generation deep-learning counterparts in most learning tasks.
Another restriction on SNNs is spike-based data availability. The performance of SNN training algorithms is evaluated on existing static-image datasets, for example CIFAR or ImageNet. Such static-frame-based data are then converted to spike trains using appropriate encoding techniques, such as rate coding or rank-order coding.

(Attention Spiking Neural Networks) We do not intend to shift the meta-operator of existing SNNs, e.g., replacing convolution (or fully connected) with self-attention, but try to apply the attention as an auxiliary unit in a simple and lightweight way to easily integrate with existing SNN architectures for improving representation power, like attention CNNs. Challenges in adapting attention to SNNs arise from three aspects. Firstly, we must keep the neuromorphic computing characteristic of SNNs, which is the basis of SNN’s energy efficiency. Thus, implementing the attention while retaining SNN’s event-driven is the primary consideration. Secondly, SNNs are used to process various applications, such as sequential event streams and static images. We need to diverse attention SNN design to cope with different scenarios. Thirdly, binary spiking activity makes deep SNNs suffer from spike degradation [15] and gradient vanishing [14], collectively referred to as the degradation problem, i.e., an accuracy drop would occur on both the training and test sets when the network deepens.To adapt attention SNNs to a variety of application scenarios, we merge multidimensional attention with SNN (MA-SNN), including temporal, channel, and spatial dimensions, to learn ’when’, ’what’ and ’where’ to attend, respectively. 
We first aggregate spatial-channel information of a feature block at each time step by using both average-pooling and maxpooling operations, generating two different temporal context descriptors, which denote average-pooled features and max-pooled features respectively. Then, we transform both average-pooled and max-pooled features to a TA weight vector by a shared MLP network.
It is well known that each channel of feature maps corresponds to a certain visual pattern, and CA focuses on ”what” are salient semantic attributes for the given input.  Interestingly, we find that another key role of attention is the suppression of minor features, which is usually ignored in attention CNN but crucial for the efficiency of the SNN.
We adopt the SA part of CBAM [28] as our SA function. AvgPool, MaxPool is the 2-D SA attention weights, and a 7x7 convolution operation.
MA can be integrated into existing residual SNN architectures without constraints, and we consistently exploit attention to optimize membrane potential of spiking neurons. In this paper, we adopt the MS-Res-SNN [14] as the basic residual block.

(Self-Supervised Learning of Event-Based Optical Flow with Spiking Neural Networks) we use a representation consisting only of per-pixel and per-polarity event counts. This representation gets populated with consecutive, non-overlapping partitions of the event
stream each containing a fixed number of events.
We use the contrast maximization proxy loss for motion compensation [16] to learn to estimate optical flow from the continuous event stream in a self-supervised fashion.
We compare various spiking neuron models from literature on the task of event-based optical flow estimation. we introduce an adaptive threshold to make up the adaptive LIF (ALIF) model. A second state variable T acts as a low-pass filter over the output spikes, adapting the firing threshold based on the neuron’s activity.
Instead of postsynaptic adaptivity, we can keep a trace of presynaptic activity and use that to regularize neuron firing, giving the presynaptic LIF (PLIF) model.

(GLIF: A Unified Gated Leaky Integrate-and-Fire
Neuron for Spiking Neural Networks) We propose the gated LIF model (GLIF) that fuses different bio-features in the aforementioned
three neuronal behaviors to possess more response characteristics. As illustrated in Fig. 1, GLIF
controls the fusion of different bio-features through gating units Gα, Gβ, and Gγ for those three
neuronal behaviors that are membrane potential leakage, integration accumulation, and spike initiation,
respectively. In each gating unit, a gating factor is computed from a Sigmoid function σ(x) over
a learnable gating parameter x to determine the proportion of each bio-feature, thus guiding the
fusion of different bio-features. Therefore, GLIF can simultaneously contain different bio-features,
possessing more response characteristics. In addition, when the gating factor is 0 or 1, GLIF can also
support single bio-feature. As a result, GLIF can cover other different LIF models and be viewed as a
super spiking neuron, greatly enlarging the representation space of spiking neurons.
Furthermore, we introduce the channel-wise parametric method to GLIF. This method makes all
membrane-related parameters in GLIF learnable and shares the same GLIF parameters channelwisely in SNNs. Combining with learnable gating factors in GLIF, on the one hand, this method
makes different channels in SNNs have completely different spiking neurons, leveraging the larger
representation space of GLIF neurons to increase the neuronal dynamic diversity of spiking neurons.
Meanwhile, the heterogeneity of spiking neurons and the expressive ability of SNNs are also increased.
On the other hand, the spike neurons in SNNs are constantly changing during training, which is
similar to the neuronal maturation during development [20, 21, 22], thus the adaptivity of spiking
neurons being enhanced.
}\fi

\textbf{Theoretical understandings of SNN.} The theoretical basis of SNN is an unclear but widely investigated area~\cite{zhang2022theoreticallyprovable}. 
SNN can behave as a computational model of Turing machine~\cite{Maass1996Lowerbound,Maass1996ThirdGen}. 
Chaos theory analyzes singularities and asymptotic behavior of SNN~\cite{cessac2008discretespikeneurontheory}.
\citeauthor{mancoo2020understandingconvexopt} shows that a continuous LIF network holistically solves quadratic programming with convex constraints as a gradient flow. Bifurcation theory shows that the LIF network is a bifurcation dynamical system highly sensitive to decay factor~\cite{zhang2021bifurcation}.
Prior theoretical results on continuous SNN applies inexactly to discrete-time dynamics due to discretization errors~\cite{roy2010discretizationerror,mancoo2020understandingconvexopt} accumulating through time~\cite{niesen2004globaldiscretizationerror}. In contrast, we show that discrete neuronal dynamics of integrate-and-fire models approximate a subgradient method.
Furthermore, we practically extend our theory to achieve state-of-the-art performance on SNN inference and ANN-to-SNN conversion.

\ifverbose{
several works theoretically analyze SNN's computational capacity 
(Towards spike-based machine intelligence with neuromorphic computing)~\cite{roy2019towardsspikebasedmachine} A complementary body of work in the SNN domain is that of liquid state machines (LSMs)68. LSMs use unstructured, randomly connected recurrent networks paired with a simple linear readout. Such frameworks with spiking dynamics have shown a surprising degree of success for a variety of sequential recognition tasks69–71, but implementing them for complex and large-scale tasks remains an open problem.

(Theoretically Provable Spiking Neural Networks)~\cite{zhang2022theoreticallyprovable}
Some researchers [19, 20, 21, 31] focused on the approximation universality of SNNs, in which some
typical SNNs can simulate the standard computational models such as Turing machines, random
access machines, threshold circuits, sequence-to-sequence mapping, etc. There are also efforts on
the computational efficiency of SNNs for some specific issues, such as the convergence in the limit
results and computational complexity of SNNs for the sparse coding problem [34, 35] and temporal
quadratic programming [5, 8], respectively.
Amazingly, a recent study [44] theoretically proved that, contrary to previous beliefs, typical SNNs
can hardly work well on spatio-temporal data, because they in nature are bifurcation dynamical
systems with fixed eigenvalues in which many patterns inherently cannot be learned. They also
suggested that adding self-connection structure can enhance the representation ability of SNNs on
spatio-temporal systems that fully connect the spiking neurons in the same layer and solves adaptive
eigenvalues of discrete dynamical systems. In this paper, we theoretically investigate the approximation ability and computational efficiency of
the self-connection spike neural networks (scSNNs). Our theoretical results show that equipped with
self connections, scSNNs can approximate discrete dynamical systems using polynomial number
of parameters within polynomial time complexities. Our main contributions are summarized as
follows:
• We prove that the proposed scSNNs are universal approximators in Theorem 1.
• As for spatial approximation, we prove that a broad range of radial functions can be well
approximated by scSNNs with polynomial spiking neurons in Theorem 2.
• As for temporal approximation, we prove that multivariate spike flows can be approximated
by scSNNs within polynomial time in Theorem 3 and verify this conclusion in simulation
experiments.

(bifurcation spiking neural network)~\cite{zhang2021bifurcation} This work investigates the dynamical properties of the LIF-modeling SNNs, especially the
influence of hyper-parameters on the model dynamics. As a result, we declare that the LIF model is
a bifurcation dynamical system, which means that its topology depends sensitively on the control
rate hyper-parameters. This result sheds three significant insights: (1) The performance of SNNs
is sensitive to the setting of control rates, which is consistent with the facts. (2) It is necessary to
enable diverse and learnable control rates, corresponding to the eigenvalues of bifurcation dynamical
systems, for achieving adaptive systems. This claim argues the conventional manners that the control
rates are neatly preset as a negative fixed value. (3) The role of control rates cannot be replaced by
learnable connection parameters and other hyper-parameters.
However, training control rates is a very tricky challenge. Since control rates and connection
weights are entangled during the training process, the approaches (Hunter et al., 2012; Lorenzo et al.,
2017) of turning hyper-parameters in conventional neural networks cannot be directly used to solve
this issue. An alternative way is to sample the control rates from a certain pre-defined distribution
and find the optimal ones by alternating optimization. Nevertheless, this method usually succeeds on
an apposite distribution and larger computation and storage.
To tackle the challenges above and improve the performance of SNNs, we propose the Bifurcation
Spiking Neural Network (BSNN). By exploiting the bifurcation theory, we convert the issue of
learning a group of adaptive control rates into a new problem of learning a collection of apposite
eigenvalues. So BSNN overcomes the obstacle that controls rates interact with connection weights,
leading to a robust control rate setting and achieves a laudable performance with considerably less
computation and storage than the alternating optimization approaches. The experiments conducted
on four benchmark data sets demonstrate the effectiveness of BSNN, showing that its performance
surpasses existing SNNs and is robust to the setting of control rates.
Our main contributions are summarized as follows:
• We provide a theoretical framework for studying the dynamical properties of spiking neural
models, e.g., we show the LIF model is a bifurcation dynamical system in Section 4.
• We point out the fact that the control rate hyper-parameter, rather than other ones, is sensitive
to the performance of SNNs with LIF neurons.
• We present the BSNN with supervised implementation, and then theoretically show that the
dynamical system led by BSNN has adaptive eigenvalues, leaving a robust setting of the
control rate hyper-parameters

(Increasing Liquid State Machine Performance with
Edge-of-Chaos Dynamics Organized by
Astrocyte-modulated Plasticity)~\cite{ivanov2021lsmedgeofchaos}
The liquid state machine (LSM) combines low training complexity and biological plausibility, which has made it an attractive machine learning framework for
edge and neuromorphic computing paradigms. Originally proposed as a model of
brain computation, the LSM tunes its internal weights without backpropagation
of gradients, which results in lower performance compared to multi-layer neural
networks. Recent findings in neuroscience suggest that astrocytes, a long-neglected
non-neuronal brain cell, modulate synaptic plasticity and brain dynamics, tuning
brain networks to the vicinity of the computationally optimal critical phase transition between order and chaos. Inspired by this disruptive understanding of how
brain networks self-tune, we propose the neuron-astrocyte liquid state machine
(NALSM)1
that addresses under-performance through self-organized near-critical
dynamics. Similar to its biological counterpart, the astrocyte model integrates
neuronal activity and provides global feedback to spike-timing-dependent plasticity
(STDP), which self-organizes NALSM dynamics around a critical branching factor
that is associated with the edge-of-chaos.
With the recent rise of neuromorphic [1–4] and edge computing [5, 6], the liquid state machine (LSM)
learning framework [7] has become an attractive alternative [8–11] to deep neural networks owing to
its compatibility with energy-efficient neuromorphic hardware [12–14] and inherently low training
complexity. Originally proposed as a biologically plausible model of learning, LSMs avoid training
via backpropagation by using a sparse, recurrent, spiking neural network (liquid) with fixed synaptic
connection weights to project inputs into a high dimensional space from which a single neural layer
can learn the correct outputs. Yet, these advantages over deep networks come at the expense of 1)
sub-par accuracy and 2) extensive data-specific hand-tuning of liquid weights. Interestingly, these
two limitations have been targeted by several studies that tackle one [15, 16] or the other [17, 18],
but not both. This has limited the widespread use of LSMs in real-world applications [8]. In that
sense, there is an unmet need for a unified, brain-inspired approach that is directly applicable to the
emerging neuromorphic and edge computing technologies, facilitating them to go mainstream.

(Expressivity of Spiking Neural Networks)~\cite{singh2023expressivity} This article studies the expressive power of spiking neural networks where infor-
mation is encoded in the firing time of neurons. The implementation of spiking
neural networks on neuromorphic hardware presents a promising choice for future
energy-efficient AI applications. However, there exist very few results that com-
pare the computational power of spiking neurons to arbitrary threshold circuits and
sigmoidal neurons. Additionally, it has also been shown that a network of spiking
neurons is capable of approximating any continuous function. By using the Spike
Response Model as a mathematical model of a spiking neuron and assuming a
linear response function, we prove that the mapping generated by a network of
spiking neurons is continuous piecewise linear. We also show that a spiking neural
network can emulate the output of any multi-layer (ReLU) neural network. Further-
more, we show that the maximum number of linear regions generated by a spiking
neuron scales exponentially with respect to the input dimension, a characteristic
that distinguishes it significantly from an artificial (ReLU) neuron. Our results
further extend the understanding of the approximation properties of spiking neural
networks and open up new avenues where spiking neural networks can be deployed
instead of artificial neural networks without any performance loss

(A discrete time neural network model with spiking
neurons
Rigorous results on the spontaneous dynamics.)~\cite{cessac2008discretespikeneurontheory} This paper is the first one of a series trying to address some of these questions in
the context of BMS model. The goal the present article, is to pose the mathematical
framework used for subsequent developments. In section 2 we present the BMS model
and provide elementary mathematical results on the system dynamics. We show that
the presence of a sharp threshold for the model definition of neuron firing induces singularities responsible for a weak form of initial conditions sensitivity. This effect is
different from the usual notion of chaos since it arises punctually, whenever a trajectory
intersects a zero Lebesgue measure set, called the singularity set. Similar effects are encountered in billiards [16] or in Self-Organized Criticality [2],[3],[4]. Applying methods
from dynamical systems theory we derive rigorous results describing the asymptotic
dynamics in section 3. Although we show that the dynamics is generically periodic, the
presence of a singularity set has strong effects. In particular the number of periodic
orbits and the transients growth exponentially as the distance between the attractor
and the singularity set tends to zero. This has a strong impact on the numerics and
there is a dynamical regime numerically indistinguishable from chaos. Moreover, these
effects become prominent when perturbing the dynamics or when the infinite size limit
is considered. In this context we discuss the existence of a Markov partition allowing
to encode symbolically the dynamics with “spike trains”. In section 4 we indeed show
that there is a one to one correspondence between the membrane potential dynamics
and the sequences of spiking patterns (“raster plots”). This opens up the possibility
to use methods from ergodic theory and statistical mechanics (thermodynamic formalism) to analyse spiking sequences. This aspect will be the central topic of another
paper. As an example, we briefly analyze the case of random synapses and inputs on
the dynamics and compare our analysis to the results obtained by BMS in [51],[50]. We
exhibit numerically a sharp transition between a neural death regime where all neurons
are asymptotically silent, and a phase with long transient having the appearance of a
chaotic dynamics. This transition occurs for example when the variance of the synaptic
weights increases. A further increase leads to a periodic dynamics with small period. In
the discussion section we briefly comment some extensions (effect of Brownian noise,
use of Gibbs measure to characterize the statistics of spikes) that will be developed in forthcoming papers.

~\cite{lu2022linearlif} Structural equivalence is mainly reflected in the structures
of the ReLU-AN model and LIF model. The parameters
of the two models should have a mapping relationship
represented by a transformation function R. R can be
described as a binary relation satisfying reflexive (xRx),
symmetrical xRy ⇒ yRx), and transitive properties ((xRy∧
yRz) ⇒ xRz).We will present a perfect parametermapping
between Linear LIF/SNN (model A) and ReLU/ANN
(model B) in Section 4.4.
• Behavioral equivalence focuses on the functional
equivalence of the two models, requiring that model A can complete the functions of model B, and vice versa. We
define behavioral equivalence as: “Model A and Model B
have the same output if run under identical experimental
conditions. Given a parameter mapping rule, there always
exists a small error bound ε that
FA(x) − FB(x)
≤ ε can
be guaranteed for any valid input x, where FA,FB denotes
the function of model A and model B.”
In this section, we demonstrate the equivalence of LIF/SNN
and ReLU-AN/DNN model and the advantages of the Linear
LIF model compared to the Reset-to-Zero LIF model. As shown
in Figure 5, it mainly includes: 1. Verify the structural and
functional equivalence of LIF/SNN and ReLU/DNN through
simulation. 2. Reduce the simulation error by increasing
the sampling frequency and coding time, demonstrating a
convergence toward ideal conditions.
The simulation experiment in this section is mainly divided
into two parts:
1. Simulation 1: Proof of structural equivalence
(a) Compare the Linear LIF model with the ReLU-AN model
(with bias) when the input signal frequencies are the same.
2. Simulation 2: Prove of behavioral equivalence
(a) Compare the Linear LIF model and the ReLU-AN model
(with bias) under the condition that the two input signal
frequencies are different.
b) Compare the Linear LIF model and the ReLU-AN model
(with bias) under the condition that the input signal
frequencies are different.
(c) Compare LIFNN and DNN (without bias) based on
face/motor data set and MNIST and CIFAR10 data set.
}\fi
\section{Preliminaries}

Integrate-and-fire models are simplified phenomenological models of biological neuronal dynamics~\cite{gerstner2014neuronaldynamcis}. It consists of two components: (i) a time-evolution of membrane potential (Integration) and (ii) a firing mechanism to create a spike (Thresholding). Its continuous neuronal dynamics is a differential equation with a thresholding criterion (See Appendix~\ref{sec:continuous_lif_dynamics}.).
For computational tractability, the general one-dimensional integrate-and-fire model is discretized as follows.
\begin{align}
u_{pre}(t) &= u(t-1) +  f\big(u(t-1)\big)
    + \frac{R}{\tau_m}I(t) \quad \label{lif:1}\\
    s(t) &= \mathbb{H}(u_{pre}(t) - \theta_{th}) \label{lif:2}
\end{align}
where time $t \in \mathbb{N}$, dynamics function $f(u): \mathbb{R} \to \mathbb{R}$,  pre-firing potential $u_{pre}(t)$, $s(t) \in \{0,1\}$ a spike, post-firing potential $u(t)$, heaviside step function $\mathbb{H}$, influx current $I(t)$, threshold $\theta_{th}$, membrane resistance $R$, and membrane constant $\tau_m$.
The potential $u_{pre}$ resets to $u$ after the spike $s(t)$ fires based on its pre-defined reset mechanism. 
\begin{align}
    &u(t) = 
    u_{pre}(t) - \theta_{th}s(t) & \text{(reset-by-subtraction)}\label{lif:3}
    \\
    &u(t) =  u_{pre}(t) (1-s(t)) & \text{(reset-to-zero)}\nonumber
\end{align}
Discretized reset mechanisms  are categorized into two: (i) \emph{reset-to-zero} discards the leftover potential, and (ii) \emph{reset-by-subtraction}~\cite{Han2020RMPSNNRM} retains the leftover potential after the reset. We focus on the reset-by-subtraction mechanism since it is easier to theoretically analyze and more performant in practical applications~\cite{han2020deeptsc}.

Integrate-and-fire (IF) neuron has the simplest neuronal dynamics defined as $f(u) = 0, \tau_{m} = 1$ in equation~\eqref{lif:1}.
\begin{equation}
    u_{pre}(t) = u(t-1) 
    + R\ I(t) \label{if:1}
\end{equation}
Leaky-Integrate-and-fire(LIF) neuron introduces linear leakage $f(u) = -\frac{u - u_{rest}}{\tau_{m}}$ into the dynamics of equation~\eqref{lif:1}.
\begin{equation}
    u_{pre}(t) = u(t-1) -\frac{u(t-1) - u_{rest}}{\tau_{m}}
    + \frac{R}{\tau_{m}} I(t) \quad \label{lif:4}
\end{equation}

Neural coding schemes interpret the information representation of SNN by encoding a real value into a spike train and vice versa. Rate coding decodes an activation value as a ratio of spike events over time steps, i.e., 
$y = \text{(\# of spikes)} /\text{(total \# of time-steps)}$.
Importantly, it can be equally defined as a moving average of spikes,
\begin{equation}
y(t) = y(t-1) \cdot (t-1) / t + s(t) \cdot (1  / t) \label{eq:rate}
\end{equation}
Another coding scheme used in ANN-to-SNN conversion literatures is phase coding~\cite{liu2022spikeconverter, Li2021Bistable, Kim2018phasecoding}.
Phase coding encodes information in the phase of spikes, which correlates with 
internal oscillation rhythms~\cite{guo2021neuralcoding}. Phase coding-based techniques in SNN assign the weight $W_i = (1/2)^i$ to the phase $i$, similar to binary digits~\cite{Kim2018phasecoding}. To simplify the theoretical analysis, we generalize the phase coding into an arbitrary base $\tau \in \mathbb{R}^+$ of weight $W_i = \frac{1}{\tau} \big( \frac{\tau - 1}{\tau}\big)^{i-1}$ and an infinite-length period. We define it as exponential moving average (EMA) coding since its streaming update over a spike train can be defined as follows.
\begin{equation}
    y(t) =  y(t-1) \cdot (\tau - 1)/\tau +  s(t) \cdot 1/\tau
    \label{def:ema_coding}
\end{equation}

\begin{figure}[ht]
\vskip -0.2in
\begin{center}
\centerline{\includegraphics[width=0.99\columnwidth]{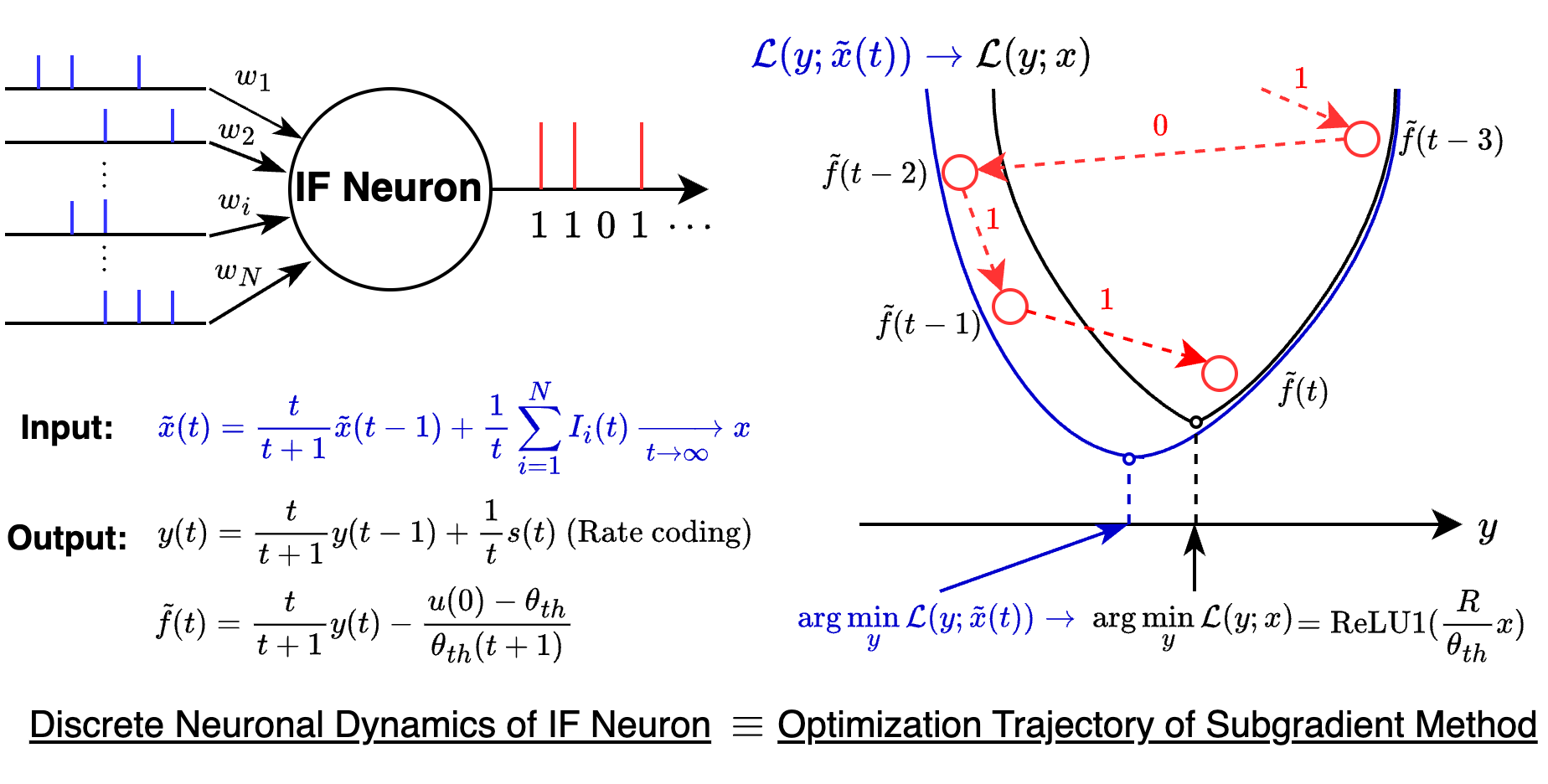}}
\vskip -0.15in
\caption{Mathematical equivalence of discrete neuronal dynamics of IF neuron (left) and subgradient method over an unconstrained convex optimization problem (right), described in Theorem~\ref{thm:if_rate}.}
\label{fig:dynamics_equivalence}
\end{center}
\vskip -0.4in
\end{figure}
\section{Optimizer Model of Neuronal Dynamics}
In this section, we show that neuronal dynamics of integrate-and-fire models is a first-order iterative optimization process approximating a spike-coded nonlinear function value.

\subsection{Theoretical Analysis}
We begin by showing that the IF neuron with rate-coded input behaves as a subgradient method with diminishing step sizes to approximate a clipped ReLU, as illustrated in Figure~\ref{fig:dynamics_equivalence}. 
Due to limited space, we  list detailed proofs of all the following theorems in the Appendix~\ref{app:proofs}. We denote a ReLU clipped at $x = 1$ as ReLU1(x). 

\begin{theorem}
\label{thm:if_rate}
Dynamical system of IF neuron~(Eq. \ref{lif:2},\ref{lif:3}, \ref{if:1}) with rate-coded input $\tilde{x}(t) = \frac{1}{t}\sum_{i=1}^{t} I(i)$ and output $y(t)= \frac{1}{t}\sum_{i=1}^{t} s(i)$ is equivalent to the subgradient method  over an optimization problem $\min_{y \in \mathbb{R}} \mathcal{L}(y;x)$, approximated with $x \leftarrow \tilde{x}(t+1)$ as,
\begin{align}
&\tilde{f}(t) = \tilde{f}(t-1) - \frac{1}{t+1} \cdot \tilde{g}\big(\tilde{f}(t-1); \tilde{x}(t)\big)\label{eq:main_if_subgradient} 
\end{align}
\begin{align}
&\mathcal{L}(y; x) = h\big( \frac{R}{\theta_{th}} x - y\big) + \frac{1}{2} y^2 \quad\text{(objective fn.)}\nonumber\\
&\tilde{f}(t) = \frac{t}{t+1} y(t) - \frac{u(0) - \theta_{th}}{\theta_{th}(t+1)}~ \text{($t$-th approx.)}\label{eq:main_if_tthapprox}
\end{align}
where $\tilde{g}(y;x)$ is a subgradient of $\mathcal{L}(y;x)$, $h(x) = \text{ReLU}(x)$. Solution of the problem is $\text{ReLU1}(\frac{R}{\theta_{th}} x)$. (Proof at~\ref{thm:if_dynamics})
\end{theorem}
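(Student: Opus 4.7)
}

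My plan is to verify the equivalence in three stages: first derive a conservation law from the IF recurrence, then match it algebraically against the claimed subgradient update, and finally confirm that the stated minimizer solves $\min_y \mathcal{L}(y;x)$.

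The starting point is to unroll Eq.~\eqref{if:1} with the reset-by-subtraction rule Eq.~\eqref{lif:3}. Summing $u(i) = u(i-1) + R\,I(i) - \theta_{th}\,s(i)$ from $i=1$ to $t$ gives the telescoping identity
\begin{equation*}
u(t) \;=\; u(0) \;+\; R\,t\,\tilde x(t) \;-\; \theta_{th}\,t\,y(t),
\end{equation*}
which is the ``charge balance'' that couples $\tilde x$ and $y$ to the residual potential. This identity, evaluated at $t-1$, will let me rewrite the spike condition $u_{pre}(t)\ge \theta_{th}$ purely in terms of $\tilde x(t)$ and $y(t-1)$.

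Next I would expand both sides of the claimed update~\eqref{eq:main_if_subgradient} using the definition~\eqref{eq:main_if_tthapprox} of $\tilde f(t)$. Multiplying through by $t+1$ and using $t\,y(t)-(t-1)\,y(t-1)=s(t)$, the whole recurrence collapses to the single claim
\begin{equation*}
s(t) \;=\; \mathbb{H}\!\left(\tfrac{R}{\theta_{th}}\tilde x(t) - \tilde f(t-1)\right),
\end{equation*}
with the subgradient taken as $\tilde g(y;x) = y - \mathbb{H}\!\left(\tfrac{R}{\theta_{th}}x - y\right)$, a valid element of $\partial_y \mathcal{L}$ (the ambiguity of $\mathbb{H}$ at $0$ matching the set-valued subgradient $[0,1]$ of ReLU at the kink). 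Substituting the conservation law at $t-1$ to eliminate $\tilde f(t-1)$ in favor of $u(t-1)$ and $\tilde x(t-1)$, and expanding $t\tilde x(t)=(t-1)\tilde x(t-1)+I(t)$, the inequality inside the Heaviside simplifies exactly to $u(t-1)+R\,I(t)\ge \theta_{th}$, which is the SNN firing rule~\eqref{lif:2} together with~\eqref{if:1}. This is the heart of the argument, and I expect the bookkeeping here to be the most error-prone step.

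Finally, for the minimizer claim, let $a=\tfrac{R}{\theta_{th}}x$ and split on $a\le 0$, $0<a\le 1$, $a>1$. On each region $\mathcal{L}(y;x)$ is a piecewise quadratic whose minimum is easily located by comparing the interior critical point $y=1$ (from the $y<a$ branch) and $y=0$ (from the $y\ge a$ branch) against the kink $y=a$; a one-line case check gives $y^{\star}=\mathrm{ReLU1}(a)$. The hardest step of the whole proof is the algebraic identification in stage two; the minimization and the $t$-indexed rescaling in~\eqref{eq:main_if_tthapprox} are routine once that identity is in place.
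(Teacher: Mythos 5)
Your proposal is correct and follows essentially the same route as the paper's proof: telescoping the reset-by-subtraction recurrence into the charge-balance identity $u(t)=u(0)+Rt\tilde x(t)-\theta_{th}t y(t)$, using $t y(t)-(t-1)y(t-1)=s(t)$ together with positive rescaling inside $\mathbb{H}$ to identify the firing rule with $s(t)=\mathbb{H}\big(\tfrac{R}{\theta_{th}}\tilde x(t)-\tilde f(t-1)\big)$, and a three-case check for the minimizer. The only difference is organizational (you reduce both the neuron dynamics and the claimed update to a common Heaviside identity rather than deriving the optimizer form forward from the dynamics), and your algebra checks out.
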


We now demonstrate that our framework can also predict the behavior of unknown combinations of neuronal dynamics and neural coding, beyond the known combination. We show that the LIF neuron with EMA-coded input behaves as a subgradient method with a constant step size. 
\begin{theorem}
\label{thm:lif_rate}
Dynamical system of LIF neuron~(Eq. \ref{lif:2},\ref{lif:3}, \ref{lif:4}) with EMA-coded input $\tilde{x}(t)$ and output $y(t)$~(Eq. \ref{def:ema_coding}), if $\tau_m = \tau$, is equivalent to subgradient method over an optimization problem $\min_{y\in \mathbb{R}} \mathcal{L}(y;x)$ with $x  \leftarrow \tilde{x}(t + 1)$,
\begin{align*}
&\tilde{f}(t+1) = \tilde{f}(t) - \frac{1}{\tau} \cdot \tilde{g}\big(\tilde{f}(t); \tilde{x}(t+1)\big) \\
&  \mathcal{L}(y; x) = \frac{1}{2} y^2 +\frac{u_{rest}}{\theta_{th}(\tau-1)} y +h\left(\frac{Rx - \theta_{th}}{\theta_{th}(\tau - 1)}x - y\right)\\
& \tilde{f}(t) = y(t)  - \frac{1}{\tau \theta_{th}}(\frac{\tau-1}{\tau})^{t}u(0) - \frac{u_{rest}\sum_{i = 0}^{t}(\frac{\tau - 1}{\tau})^{t - i}}{\theta_{th} \tau(\tau-1)} 
\end{align*}
where $\tilde{g}(y;x)$ is a subgradient of $\mathcal{L}(y;x)$, $h(x) = \text{ReLU}(x)$. If $u_{rest} = 0$, the solution to the problem is $\text{ReLU1}(\frac{R}{\theta_{th}(\tau - 1)}x - \frac{1}{\tau-1})$. (Proof at~\ref{thm:lif_dynamics})
\end{theorem}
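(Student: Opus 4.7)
The plan is to mimic the derivation of Theorem~\ref{thm:if_rate} but now account for the leakage term and the exponentially weighted encoding. First I would substitute $\tau_m = \tau$ into Eq.~\eqref{lif:4} to obtain the clean one-step recursion
\[
u(t) = \tfrac{\tau-1}{\tau}\,u(t-1) + \tfrac{u_{rest}}{\tau} + \tfrac{R}{\tau}I(t) - \theta_{th}\,s(t),
\]
and observe that it shares the decay factor $(\tau-1)/\tau$ with the EMA update in Eq.~\eqref{def:ema_coding}. This structural match is what the hypothesis $\tau_m = \tau$ is buying us, and is what will make the later algebra collapse.

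Next I would normalize by $\theta_{th}$ and unroll the recursion so that $u(t)/\theta_{th}$ becomes a geometric sum of $\{R I(i)/(\theta_{th}\tau),\,u_{rest}/(\theta_{th}\tau),\,-s(i)\}$ with weights $\bigl((\tau-1)/\tau\bigr)^{t-i}$. Using the closed form of EMA coding for both $y(t)$ and $\tilde{x}(t)$, which are the same geometric sums applied to $s(i)$ and $I(i)$ respectively, I can re-express $y(t)$ as a linear combination of $\tilde{x}(t)$, $u(t)/\theta_{th}$, and the two nuisance terms $\bigl((\tau-1)/\tau\bigr)^{t}u(0)/(\tau\theta_{th})$ and the $u_{rest}$ geometric series. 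Grouping the nuisance terms into a single shift off $y(t)$ yields exactly the candidate $\tilde{f}(t)$ in the statement.

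With $\tilde{f}(t)$ in hand, the second step is to verify the one-step update. Computing $\tilde{f}(t+1)-\tilde{f}(t)$ via the LIF recursion and the EMA update, I expect everything to collapse to $-\tfrac{1}{\tau}\bigl(\tilde{f}(t)+\tfrac{u_{rest}}{\theta_{th}(\tau-1)}-s(t+1)\bigr)$, matching the shape of a gradient-style step with constant size $1/\tau$. The remaining task is to rewrite the spike condition $s(t+1)=\mathbb{H}(u_{pre}(t+1)-\theta_{th})$, after substituting the closed forms, as $\mathbb{H}\bigl(\tfrac{R\tilde{x}(t+1)-\theta_{th}}{\theta_{th}(\tau-1)}-\tilde{f}(t)\bigr)$, which is precisely $-\partial_y h\bigl(\tfrac{R\tilde{x}(t+1)-\theta_{th}}{\theta_{th}(\tau-1)}-\tilde{f}(t)\bigr)$, a subgradient of the $h$-term of $\mathcal{L}(\cdot;\tilde{x}(t+1))$ at $\tilde{f}(t)$. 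Together with the linear-plus-quadratic part contributing $\tilde{f}(t)+u_{rest}/[\theta_{th}(\tau-1)]$, this identifies the update as the claimed subgradient step on $\mathcal{L}$. For the $u_{rest}=0$ case, imposing $0\in\partial_y\mathcal{L}(y;x)$ and case-splitting on the sign of the argument of $h$ (saturated, linear, or zeroed) recovers the minimizer $\text{ReLU1}\bigl(\tfrac{R}{\theta_{th}(\tau-1)}x-\tfrac{1}{\tau-1}\bigr)$.

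The main obstacle is pinning down the correct shift $\tilde{f}(t)$: the transient $\bigl((\tau-1)/\tau\bigr)^{t}u(0)$ and the accumulated geometric series of $u_{rest}$ must both be absorbed, and the $(\tau-1)/\tau$ weights coming from the LIF leak, the EMA output coding, and the EMA input coding have to line up so that the argument of $h$ ends up evaluated at $\tilde{x}(t+1)$ rather than $\tilde{x}(t)$ (this one-step shift is what forces the slight mismatch between the indices in $\tilde{f}$ and the iterate index in the update). Once the coupling $\tau_m=\tau$ is used and the shift is chosen correctly, the rest is the same type of bookkeeping that drove Theorem~\ref{thm:if_rate}, just with a constant step size $1/\tau$ in place of the diminishing $1/(t+1)$.
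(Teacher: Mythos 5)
Your proposal is correct and follows essentially the same route as the paper's proof: unroll the $(\tau-1)/\tau$-weighted recursion so that the membrane potential, the EMA-coded input, and the EMA-coded output sit in one linear relation, absorb the $u(0)$ transient and the $u_{rest}$ geometric series into the shift defining $\tilde{f}(t)$, and then read off the spike condition $\mathbb{H}\bigl(\tfrac{R}{\theta_{th}(\tau-1)}\tilde{x}(t+1)-\tilde{f}(t)-\tfrac{1}{\tau-1}\bigr)$ as the subgradient of the ReLU term, giving the constant-step-size $1/\tau$ update. Your version of the $h$-argument is in fact the correct one (the displayed objective in the theorem statement contains a stray factor of $x$ that the appendix derivation does not).
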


\subsubsection{Convergence Analysis}
Our framework provides a way to understand the asymptotic behavior of spiking neurons by theoretically analyzing its corresponding \emph{optimizer form}. To show this, we conduct a convergence analysis on the optimizer form of an IF neuron to reveal its asymptotic properties. Note that the spike-based information representation discrepates the subgradient estimate $\tilde{g}(y;\tilde{x}(t))$ and the true subgradient $\tilde{g}(y;x)$. 
Thus, the convergence property of spike train input determines the convergence property of the neuron output.

\begin{theorem}
\label{thm:convergence_if}
Let $f^*(x) = \argmin_{y \in \mathbb{R}}\mathcal{L}(y;x)$ be the minimizer of $ \mathcal{L}(y;x) = \text{ReLU}(x - y) + \frac{1}{2} y^2$ over a true input $x \in \mathbb{R}$, and its $t$-th approximation $\tilde{f}(t)$ be  defined as  equation~\ref{eq:main_if_subgradient}  and rate-coded input $\tilde{x}(t) = \frac{1}{t}\sum_{i=1}^{t} I(i)$. Denote $h(i) = \big(1 - \sqrt{\frac{i-1}{i+1}}\big)$. If $\Vert \tilde{f}(t) \Vert < M \in  \mathbb{R}^+$, then the error $\Vert \tilde{f}(t) - f^*(x) \Vert$ is upper bounded as follows. (Proof at~\ref{thm:if_neuron_convergence_analysis})
\begin{align}
&\Vert \tilde{f}(t) - f^*(x)\Vert^2 
\le \frac{\Vert \tilde{f}(0) - f^*(x)\Vert^2}{t+1}
  \\
&+\underbrace{\frac{M + 1}{t+1}\sum_{i = 1}^{t} h(i)}_{\text{Nondifferentiability Error}}
+\underbrace{\frac{4}{t+1}\sum_{i = 1}^{t}  \min(\Vert x - \tilde{x}(i)\Vert, 1)}_{\text{Input Error}}\nonumber
\end{align}
\end{theorem}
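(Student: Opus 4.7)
The plan is to perform the classical convergence analysis of a $1$-strongly-convex subgradient method, carefully tracking two deviations from the idealized case: (i) the subgradient $\tilde{g}(\,\cdot\,;\tilde{x}(t))$ is evaluated at the rate-coded input $\tilde{x}(t)$ rather than the true $x$, and (ii) $\mathcal{L}(\cdot;x) = \text{ReLU}(x-y) + \tfrac12 y^2$ is nondifferentiable at $y = x$, so the specific subgradient element produced by the iterate need not be the one that makes the tight strong-convexity inequality exact. I would begin with the one-step expansion
\begin{equation*}
    \|\tilde{f}(t) - f^*(x)\|^2 = \|\tilde{f}(t-1)-f^*(x)\|^2 - \tfrac{2}{t+1}\langle \tilde{g}_t,\,\tilde{f}(t-1)-f^*(x)\rangle + \tfrac{1}{(t+1)^2}\tilde{g}_t^2,
\end{equation*}
where $\tilde{g}_t := \tilde{g}(\tilde{f}(t-1);\tilde{x}(t))$, and lower-bound the inner product using the $1$-strong convexity of $\mathcal{L}(\cdot;x)$ inherited from the quadratic term.

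For the Input Error, I would apply the function-value form of strong convexity together with the Lipschitz bound $|\mathcal{L}(y;x) - \mathcal{L}(y;\tilde{x}(t))| \le \|x - \tilde{x}(t)\|$ (since $\text{ReLU}$ is $1$-Lipschitz in its argument) to obtain $\langle \tilde{g}_t, \tilde{f}(t-1) - f^*(x)\rangle \ge \tfrac12 \|\tilde{f}(t-1) - f^*(x)\|^2 - 2\|x - \tilde{x}(t)\|$. This yields a squared-norm contraction $1 - \eta_t = t/(t+1)$ with per-step residual $4\eta_t\|x - \tilde{x}(t)\|$. A parallel residual of $O(\eta_t)$ using the dimensionless bound $|\tilde{g}(y;\tilde{x}(t)) - \tilde{g}(y;x)| \le 1$ together with the iterate diameter $\|\tilde{f}(t-1) - f^*(x)\| \le M + 1$ serves as a fallback when $\|x - \tilde{x}(t)\|$ is large, and taking the pointwise minimum of the two estimates produces the $4\min(\|x - \tilde{x}(i)\|, 1)$ factor in the Input Error term.

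For the Nondifferentiability Error, I would isolate the residual that persists even in the limit $\tilde{x}(t) = x$: the subdifferential at the kink is the interval $[x-1, x]$, and the monotonicity form of strong convexity gives the sharper per-step squared contraction $1 - 2\eta_t = (t-1)/(t+1)$, whose norm version $\sqrt{(t-1)/(t+1)}$ produces a per-step penalty of $h(t) = 1 - \sqrt{(t-1)/(t+1)}$; multiplied by the diameter bound $\|\tilde{f}(t-1) - f^*(x)\| \le M + 1$ this gives $(M+1)h(t)$. Telescoping the resulting recursion via the identity $\prod_{j=i+1}^{t} j/(j+1) = (i+1)/(t+1)$ converts the initial distance into $\|\tilde{f}(0) - f^*(x)\|^2/(t+1)$ and averages each per-step residual with weight $\eta_i(i+1)/(t+1) = 1/(t+1)$, producing both summation terms. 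The main obstacle I expect will be reconciling the function-value contraction $(1-\eta_t)$ which drives the clean telescoping and the initial-error factor $1/(t+1)$ with the monotonicity-form contraction $(1-2\eta_t)$ needed to derive the exact $h(i)$ on the nose; this likely requires a careful comparison argument at the kink so that the two bounds combine additively into the stated three-term decomposition rather than masking or double-counting one another.
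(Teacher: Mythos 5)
Your overall skeleton (one-step expansion of $\Vert\tilde f - f^*\Vert^2$, a strong-convexity lower bound on the inner product, telescoping with $\prod_{j=i+1}^{t}\frac{j}{j+1}=\frac{i+1}{t+1}$) is the same as the paper's, and your treatment of the input error via the $1$-Lipschitz dependence of $\mathcal{L}(y;\cdot)$ on $x$ is a legitimate variant of what the paper does (the paper instead compares the two minimizers $f^*=\mathrm{ReLU1}(x)$ and $\mathbf{f}^*=\mathrm{ReLU1}(\tilde x(t+1))$, which is exactly where both the constant $4$ and the cap at $1$ come from, since $\mathrm{ReLU1}$ takes values in $[0,1]$). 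In your version the cap does not come out: your fallback, $\vert\tilde g(y;\tilde x)-\tilde g(y;x)\vert\le 1$ combined with the diameter $\Vert y-f^*\Vert\le M+1$, gives a per-step residual of order $2(M+1)\eta_t$, not $4\eta_t$, so the pointwise minimum of your two estimates is $\min(4\Vert x-\tilde x\Vert,\,2(M+1))\eta_t$, which is weaker than the stated $4\eta_t\min(\Vert x-\tilde x\Vert,1)$ whenever $M>1$; the surplus cannot be pushed into the nondifferentiability term because $h(i)\le\frac{2}{i+1}$ decays while this surplus is a constant per step.

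The more serious gap is the nondifferentiability term itself, which is the part of the theorem that requires a genuinely nonstandard step. In the paper, $(M+1)h(i)$ does not come from any contraction-rate bookkeeping: it comes from lower-bounding the surrogate suboptimality gap $\mathbf{L}(\tilde f(t))-\mathbf{L}(f^*)$ by a descent-lemma argument on the two $2$-smooth pieces $\mathbf{L}_1,\mathbf{L}_2$ of $\mathbf{L}(y)=\mathcal{L}(y;\tilde x(t+1))$. A virtual step of size $\alpha$ along $-\nabla\mathbf{L}_i(\tilde f(t))$ yields $\mathbf{L}(\tilde f(t))-\mathbf{L}(f^*)\ge(\alpha-\alpha^2)\Vert\nabla\mathbf{L}_i\Vert^2-\alpha\Vert\nabla\mathbf{L}_i\Vert-2\min(\Vert x-\tilde x(t+1)\Vert,1)$, where the linear penalty $\alpha\Vert\nabla\mathbf{L}_i\Vert$ is exactly the price of possibly crossing the kink; choosing $\alpha$ with $2(\alpha-\alpha^2)=\frac{1}{t+2}$ makes the quadratic term cancel the $\frac{1}{(t+2)^2}\Vert\nabla\mathbf{L}\Vert^2$ term of the one-step expansion on the nose, and then $2\alpha=1-\sqrt{t/(t+2)}=h$ together with $\Vert\nabla\mathbf{L}(\tilde f(t))\Vert\le\Vert\tilde f(t)\Vert+1\le M+1$ produces $(M+1)h$ to the \emph{first} power in $M+1$ (the naive $\eta_t^2\Vert g\Vert^2$ route would give $(M+1)^2/(i+1)$ instead). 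Your proposed mechanism --- take the square root of the $(1-2\eta_t)$ monotonicity contraction and multiply the deficit $h(t)$ by the diameter $M+1$ --- is not a valid way to obtain an additive penalty in a squared-norm recursion, and it cannot be combined additively with the $(1-\eta_t)$ value-form contraction that drives your telescoping; this is precisely the ``obstacle'' you flag at the end, and it is the missing idea rather than a technicality. Without the piecewise-smoothness/virtual-step argument (or an equivalent device), your outline does not reach the stated bound.
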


An assumption $\Vert \tilde{f}(t) \Vert < M$ over the optimization trajectory is rarely violated since $\tilde{f}(t)$ is attracted towards the bounded value $\text{ReLU1}
\big(\tilde{x}(t+1)\big)$, and $\tilde{x}(t) \xrightarrow{t \to \infty} x$. Furthermore, the upper bound can be tighter since the nondifferentiability error term $h(i)$ is non-zero only when the $i$-th update crosses the singularity $\tilde{x}(t+1)$. 
We now derive the convergence of neuron output in diverse input setups.
\begin{corollary}
\label{cor:convergence_results}
    Let $\Vert \tilde{f}(t) \Vert < M \in \mathbb{R}^+$, then (Proof at~\ref{cor:conv_guarantee_input})
    \begin{itemize}
    \item (Exact Input) If $\tilde{x}(t) = x$, then $\Vert \tilde{f}(t) - f^*\Vert \to 0$ as $t \to \infty$. 
    \item (Deterministic Input) If $\Vert \tilde{x}(t) - x \Vert = \mathcal{O}(\frac{1}{t})$, then $\Vert \tilde{f}(t) - f^*\Vert \to 0$ as $t \to \infty$. 
    \item (Stochastic Input) If $\mathbb{E}[\Vert \tilde{x}(t) - x \Vert] = \mathcal{O}(\frac{1}{t})$, then $\mathbb{E}[\Vert \tilde{f}(t) - f^*\Vert] \to 0$ as $t \to \infty$. 
    \end{itemize}
\end{corollary}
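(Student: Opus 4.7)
The plan is to derive all three claims by directly applying the upper bound in Theorem~\ref{thm:convergence_if} and then controlling each of its three terms separately: the initialization term, the nondifferentiability-error term, and the input-error term. The first term is trivial since $\Vert\tilde{f}(0) - f^*(x)\Vert^2/(t+1) \to 0$. The key analytical ingredient is a sharp asymptotic for $h(i) = 1 - \sqrt{(i-1)/(i+1)}$; I would Taylor-expand $\sqrt{1 - 2/(i+1)}$ to obtain $h(i) = \frac{1}{i+1} + \mathcal{O}(i^{-2})$, so that $\sum_{i=1}^{t} h(i) = \mathcal{O}(\log t)$, and hence $\frac{M+1}{t+1}\sum_{i=1}^{t} h(i) = \mathcal{O}((\log t)/t) \to 0$. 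This observation is the workhorse of the corollary, and I expect it to be the only step requiring a small calculation; everything else is an application of the bound.

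Given this, I would dispatch the three cases as follows. For the exact-input case, $\tilde{x}(t) = x$ makes the input-error term identically zero, and the remaining two terms vanish as above. For the deterministic case, $\Vert x - \tilde{x}(i)\Vert \le C/i$ gives $\min(\Vert x - \tilde{x}(i)\Vert, 1) \le C/i$, so $\frac{4}{t+1}\sum_{i=1}^{t} \min(\Vert x - \tilde{x}(i)\Vert, 1) = \mathcal{O}((\log t)/t) \to 0$, combining with the earlier pieces to give $\Vert\tilde{f}(t) - f^*\Vert \to 0$.

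For the stochastic case, I would take expectation of the bound in Theorem~\ref{thm:convergence_if}. Since $\min(\Vert x - \tilde{x}(i)\Vert, 1) \le \Vert x - \tilde{x}(i)\Vert$, we have $\mathbb{E}\bigl[\min(\Vert x - \tilde{x}(i)\Vert, 1)\bigr] \le \mathbb{E}\bigl[\Vert x - \tilde{x}(i)\Vert\bigr] = \mathcal{O}(1/i)$, so by linearity of expectation $\mathbb{E}\bigl[\Vert\tilde{f}(t) - f^*\Vert^2\bigr] = \mathcal{O}((\log t)/t) \to 0$. Then by Jensen's inequality, $\mathbb{E}\bigl[\Vert\tilde{f}(t) - f^*\Vert\bigr] \le \sqrt{\mathbb{E}\bigl[\Vert\tilde{f}(t) - f^*\Vert^2\bigr]} \to 0$, which is exactly the claim.

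The only step that requires any real care is the Taylor estimate $h(i) = \mathcal{O}(1/i)$, since a looser bound like $h(i) \to 0$ is not enough to conclude $\frac{1}{t+1}\sum h(i) \to 0$ in the absence of Cesàro-type reasoning; one must see that the partial sums grow only logarithmically. After that, each of the three input regimes is a one-line consequence of substituting the stated decay rate into the Theorem~\ref{thm:convergence_if} bound and, in the stochastic case, invoking Jensen to pass from an $L^2$ rate to an $L^1$ rate.
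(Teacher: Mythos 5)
Your proof is correct and follows essentially the same route as the paper: apply the bound of Theorem~\ref{thm:convergence_if} and show each of the three terms vanishes, the only cosmetic difference being that you control $h(i)$ via a Taylor expansion giving $h(i)=\mathcal{O}(1/i)$ and a logarithmic partial sum, whereas the paper uses the algebraic inequality $h(i)\le \tfrac{2}{i+1}$ together with an elementary lemma that $H_n/n\to 0$. One point in your favor: you explicitly invoke Jensen's inequality to pass from the $L^2$ rate to the stated $L^1$ claim in the stochastic case, a step the paper leaves implicit (its proof only establishes $\mathbb{E}[\Vert\tilde{f}(t)-f^*\Vert^2]\to 0$).
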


\textbf{Empirical Validation.}
To empirically validate our theoretical findings, we conduct toy experiments on a single neuron and compare the output of a neuron and its optimizer form. We use spikingjelly's implementation~\cite{spikingjelly} of spiking neurons and Poisson encoding.
Figure~\ref{fig:toy_if_rate} in Appendix shows that the time-evolution of IF neuron output transformed with Equation~\ref{eq:main_if_tthapprox} exactly equals its optimizer form in theorem~\ref{thm:if_rate}. As   Corollary~\ref{cor:convergence_results} states, the decoded output $\tilde{f}(t)$ of IF neuron converges to $\text{ReLU1}(\frac{R}{\theta_{th}}x)$ as the input spike train $\tilde{x}(t)$ converges to $x$. We also verify in Figure~\ref{fig:toy_lif} that the time-evolution of LIF neuron output matches the output of its subgradient method-based optimizer form in theorem~\ref{thm:lif_rate}. Both results show that the asymptotic behavior of spiking neuron outputs follows the known convergence properties of the subgradient method~\cite{boyd2003subgradientmethod}. Since the IF neuron with rate-coded input has a nonsummable diminishing step size schedule $\eta(t) = \frac{1}{t+1}$ for its optimizer form, the output converges. In contrast, the output only converges up to an error bound since the LIF neuron has a constant step size for its optimizer form.

\begin{figure}[ht]
    \vskip -0.05in
    \begin{center} 
     \subfigure[Sequential stages of neuronal dynamics]{\includegraphics[width=0.37\columnwidth]{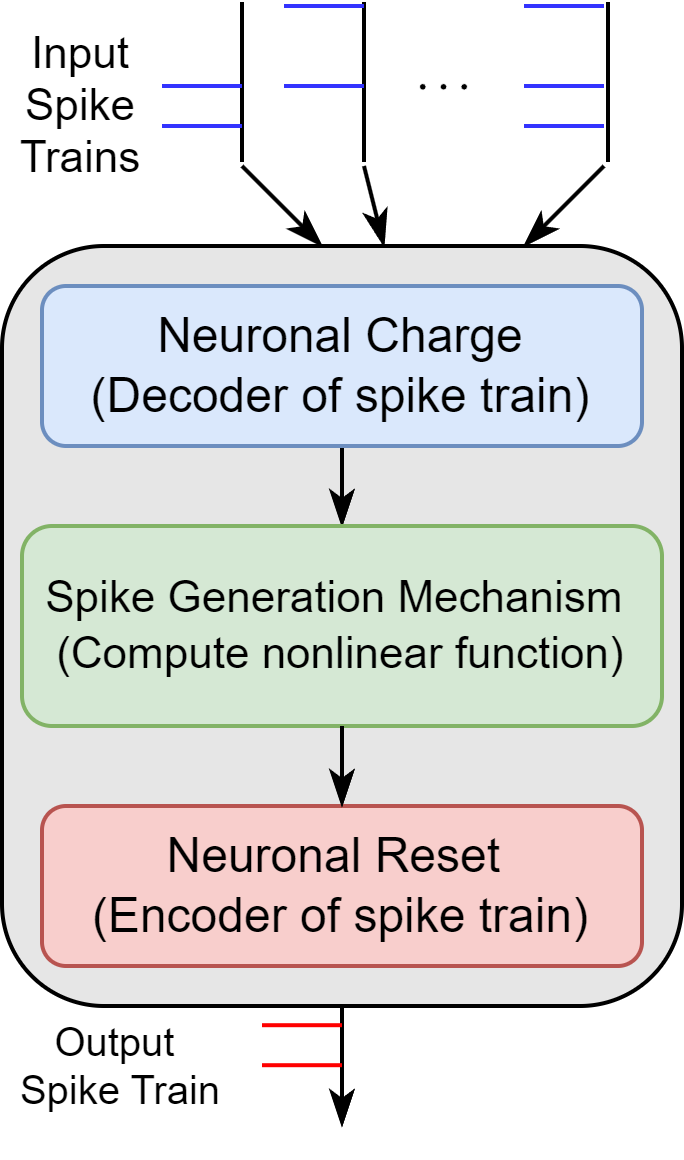}
     \label{fig:dynamics_stages}}     
     \hfill
     \subfigure[Optimization-theoretic interpretation of dynamics stages in Fig.~\ref{fig:dynamics_stages}]{\includegraphics[width=0.57\columnwidth]{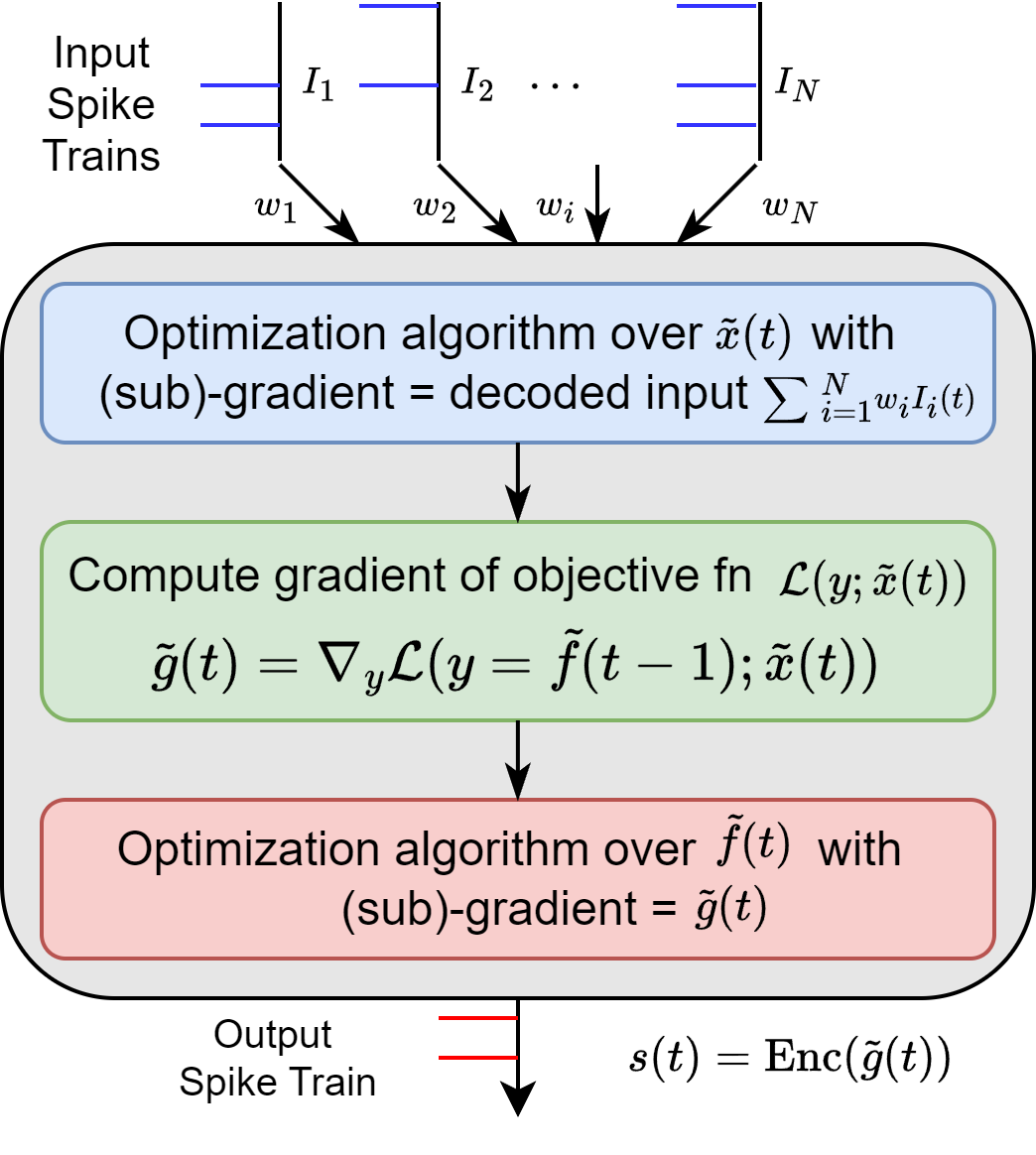}
     \label{fig:dynamics_interpretation}} 
    \vskip -0.1in
     \vfill
     \subfigure[A spike train carries iterative updates (gradients) of a presynaptic neuron's output value to postsynaptic neurons, thereby synchronizing an activation value.]{\includegraphics[width=0.80\columnwidth]{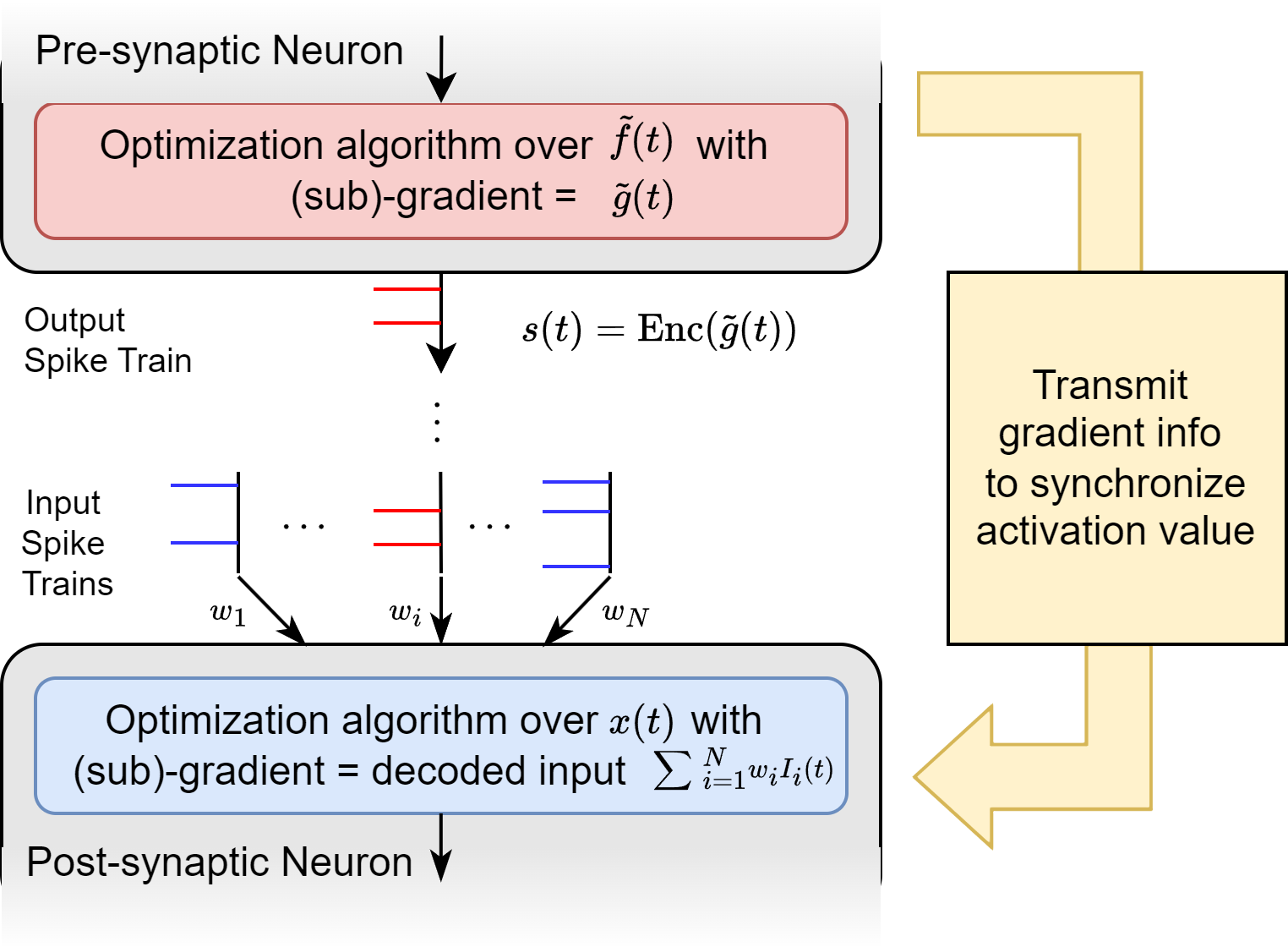}
     \label{fig:spike_train_interpretation}}
    \end{center}
    \vskip -0.15in
    \caption{Our interpretation of SNN's computational characteristics: neuronal dynamics (\ref{fig:dynamics_stages}-\ref{fig:dynamics_interpretation}), spike train (\ref{fig:spike_train_interpretation})
    \label{fig:interpretation}.}
    \vskip -0.2in
\end{figure}

\subsection{Interpretation}
Our framework provides an optimization-theoretic interpretation of SNN's key characteristics in four-fold. First, neuronal dynamics, i.e., a dynamical system of spiking neurons, can be analyzed using its corresponding \emph{optimizer form}.
Second, in an end-to-end SNN inference, each spiking neuron computes a first-order iterative algorithm to approximate a nonlinear function value. The integration phase spike-decodes pre-synaptic currents to approximate an input activation, and the thresholding phase estimates a subgradient from the approximated input.
Third, the role of the spike train is to deliver gradient information of a neuron's iterative updates to post-synaptic neurons. 
Finally, the neural coding scheme and neuronal dynamics jointly determine the learning rate schedule of its optimizer form. 

\begin{figure}[ht]
\vskip -0.0in
\begin{center}
\centerline{\includegraphics[width=0.75\columnwidth]{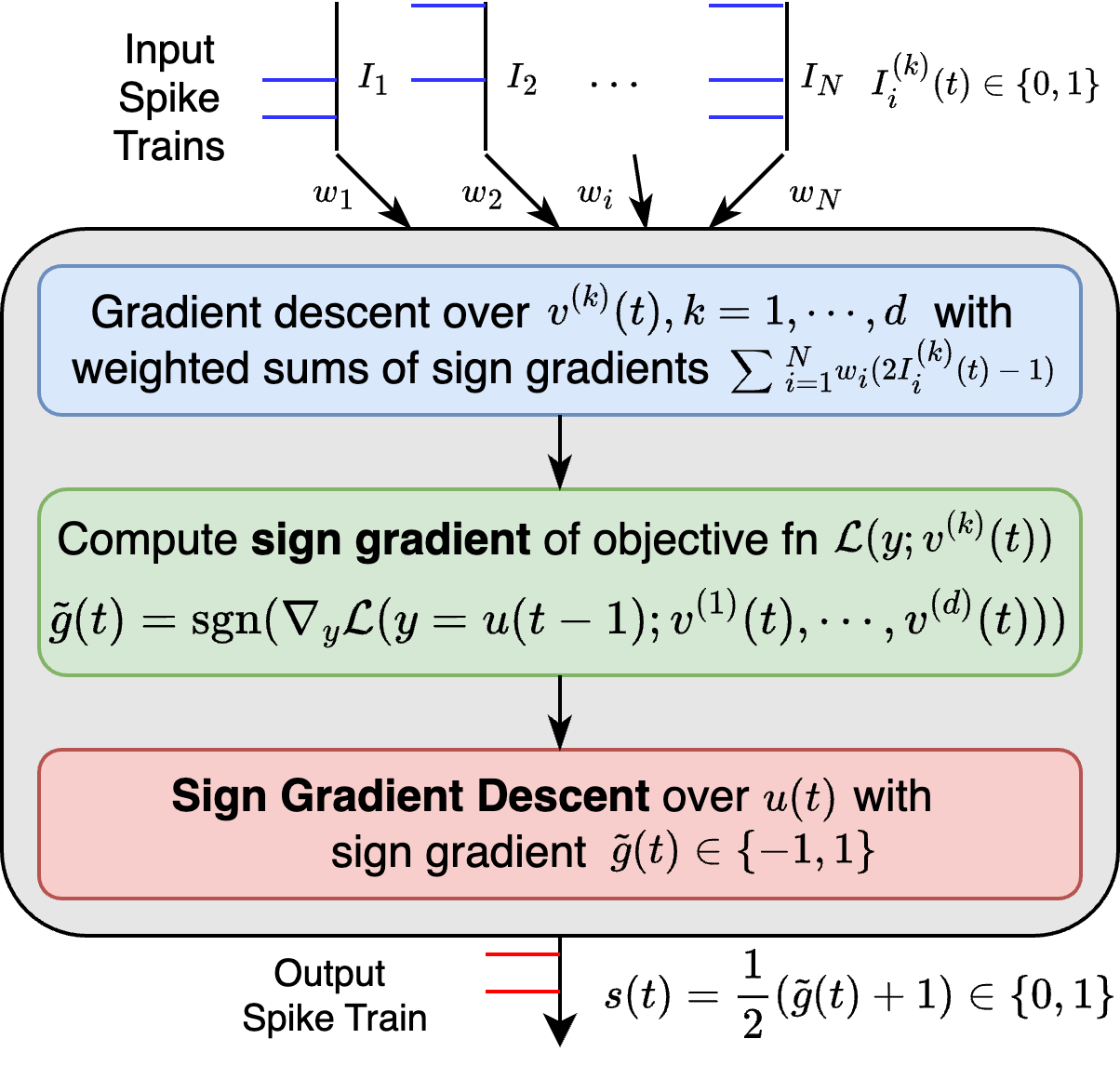}}
\vskip -0.15in
\caption{sign gradient descent(signGD)-based neuronal dynamics, a design optimization-theoretically extended from Fig.~\ref{fig:dynamics_interpretation}.}
\label{fig:signgd_based_neuron_concept}
\end{center}
\vskip -0.45in
\end{figure}

\section{SignGD-based Neuronal Dynamics}

Spiking neurons, with their optimizer form as the subgradient method, are difficult to approximate arbitrary nonlinear functions. Our framework interprets that a spike train delivers (sub)-gradient information. In detail, a binary spike transfers an activation update step between spiking neurons, and the update step is gradient information $\nabla_y\mathcal{L}(y;x)$ of a certain objective function $\mathcal{L}(y;x)$. However, binarity of a spike $s(t)$ constrains the space of (sub)-gradient $\nabla_y\mathcal{L}(y;x)$ to a bivariate function of $s(t)$ and $y$, thereby limiting the space of objective function $\mathcal{L}(y;x)$. For example, in Theorem~\ref{thm:if_rate}, $\mathcal{L}(y;x)$ is a form of (ReLU + regularizer) since the subgradient $\nabla_y\mathcal{L}(y;x) = y - s(t) =  y -\mathbb{H}(\frac{R}{\theta_{th}} \tilde{x}(t) - y)$. To support arbitrary nonlinear functions, the binarity of a spike should not  constrain the space of objective functions. At the same time, the spike-based computational characteristic should be preserved.  This implies that the dynamics of the optimizer form of a new spiking neuron should be different from the subgradient method. 

We thus present a new sign gradient descent (signGD) -based neuronal dynamics that can (i) easily approximate a larger space of nonlinear functions and (ii) accelerate SNN inference. Our key approaches are two-fold: First, we apply sign gradient descent (signGD)~\cite{bernstein2018signsgd} instead of the subgradient method to design neuronal dynamics. Second, we generalize the learning rate schedule of the optimizer form of neuronal dynamics. 

signGD~\cite{bernstein2018signsgd} is a distributed optimization method that cuts down the network communication cost of gradient tensor. Each node transmits only the sign of the gradient instead of the exact (or compressed) gradient to the parameter server.
We apply signGD to design new neuronal dynamics and a coding scheme. 
It preserves the SNN's key characteristic of binary spike-based communication since the optimizer form has a binary update of $\{-1, 1\}$, which easily transforms into a $\{0,1\}$ spike.
Thus, given arbitrary objective function, a binary spike can represent an activation update step, which is the \emph{sign} of the (sub)-gradient.

The learning rate schedule is important for the convergence speed of signGD since the learning rate solely determines the update size. It is difficult to determine optimal step sizes for SNNs, which perform neuron-wise convex optimization, different from a single convex objective. For instance, schedules with fast decay make approximation faster for neurons in frontal SNN layers but slower for lateral SNN layers,  since neurons in lateral SNN layers receive accurate gradients in later time steps. Thus, learning rate scheduling should consider the approximation speed of entire SNN neurons end-to-end. 
We hence generalize the learning rate schedule of the optimizer form of signGD-based neuron and empirically search local optimum. Specifically, we define our signGD-based neuron and coding scheme as follows.
\begin{definition}
\label{def:signed_schedule_coding}
\textbf{(Signed schedule coding)} Let a spike train $s(t) \in \{0,1\}$, a step size schedule $\eta(t) \in \mathbb{R}^{+}$ for $t \in \mathbb{N}$ and $y(0) = 0$. Signed schedule coding with $\eta(t)$ decodes an activation $y(t)$ from $s(t)$ as
\begin{equation}
y(t) = y(t-1) - \eta(t) \big(2\cdot s(t)- 1\big)
\end{equation}
\end{definition}
This coding scheme interprets a binary spike $s(t) \in \{0,1\}$ as a sign information $\{-1, 1\}$ with a simple mathematical formula $2s(t) - 1$. We list three spike encoding algorithms for our signed schedule coding scheme in Appendix~\ref{sec:signsgd_spike_encoding}.
\begin{definition}
\label{def:signgd_dynamics}
\textbf{(SignGD-based neuronal dynamics)} Let a smooth 
objective function $\mathcal{L}(y;x_1,\cdots,x_d): \mathbb{R} \times \mathbb{R}^d \to \mathbb{R}$, positive coefficients $\alpha_{i}(t) \in \mathbb{R}^{+}$, $\beta_{i}(t) \in \mathbb{R}^{+}$, and step size schedule $\eta(t) \in \mathbb{R}^{+}$ for $i=1,2$ and $t \in \mathbb{N}$. Suppose an influx current $I^{(k)}(t)$ of $k$-th operand in time $t$ is a sum of weighted spike trains, i.e., for real weights $W_{i}$,
\begin{equation}
    I^{(k)}(t) = \sum_{j=1}^N W_jI_j^{(k)}(t), \quad I_{j}^{(k)}: \mathbb{N} \to \{0,1\} 
\end{equation}
We denote $W = \sum_{i=1}^{N}W_i$. Then the signGD-based neuronal dynamics over two internal variables $u(t) \in \mathbb{R}$ and  $v(t) = \big(v^{(1)}(t), \cdots, v^{(k)}(t),\cdots, $ 
$ v^{(d)}(t)\big) \in \mathbb{R}^d$ is 
\begin{align}
v^{(k)}(t+1) &= \alpha_1(t)v^{(k)}(t) - \alpha_2(t+1) (2I^{(k)}(t+1) - W)\nonumber\\
s(t) = \mathbb{H}&\bigg(\nabla_y\mathcal{L}\big(\frac{\eta(t-1)}{\beta_2(t-1)}u(t);\frac{\eta(t)}{\alpha_2(t)}v(t)\big)\bigg) \label{eq:signsgd_fire}\\
u(t+1) &= \beta_1(t)u(t)  
-\beta_2(t) (2s(t) - 1)\label{eq:signsgd_reset}
\end{align}
\end{definition}
The dynamics coefficients $\alpha_i(t)$ and $\beta_i(t)$ with $i = 1,2$ are time-scheduled, and remains constant irrespective of the input current $I^{(k)}(t)$ or any internal variables $u(t)$ or $v^{(k)}(t)$. Hence, we can pre-compute and load these values up to a specific time-step $T$.
The term $W$  translates a weighted sum of $\{0,1\}$ spikes into a weighted sum of sign gradients $\{-1,1\}$. In practice, $W$ is pre-computed in three steps: (i) stimulate all neurons to spike for a single step and record the influx current $I^+$, (ii) depress all neurons to not spike for a single step and record the influx current $I-$, (iii) use $I^+$ and $I^-$ to compute $W$ neuron-wise. Figure~\ref{fig:signgd_based_neuron_concept} illustrates the optimization-theoretic abstraction of our spiking neuronal dynamics. Below, we show that our signGD-based neuronal dynamics is equivalent to the signGD algorithm.

\begin{theorem}
\label{thm:signsgd}
Let an input activation $\tilde{x}(t)$ be signed schedule coded over an $N$ weighted input spike trains, i.e., 
\begin{align*}
    &\tilde{x}^{(k)}(t) = \tilde{x}^{(k)}(t-1 ) - \sum_{i=1}^N W_i\bigg(  \eta(t)(2I_i^{(k)}(t) -1)\bigg)
\end{align*}
Output $\tilde{f}(t)  = \tilde{f}(t-1) - \eta(t) \big(2\cdot s(t)- 1\big)$ is signed schedule coded with $s(t)$ of signGD-based neuronal dynamics.\\
If $\alpha_1$, $\alpha_2$, $\beta_1$, $\beta_2$ and $\eta$ satisfies $\eta(1) = \alpha_2(1) = \beta_2(1)$ and
\begin{align}
\frac{\eta(t)}{\eta(t-1)} = \frac{\beta_1(t)\beta_2(t)}{\beta_2(t-1)} = \frac{\alpha_2(t)}{\alpha_1(t-1)\alpha_2(t-1)} \label{eq:main_signsgd_coeff_condition}
\end{align}
Then the dynamical system of $\tilde{f}(t)$ 
is equivalent to the sign gradient descent method formulated as, (Proof at~\ref{thm:signgd_dynamics})
\begin{align}
\tilde{f}&(t) = \tilde{f}(t-1) - \eta(t) \text{sgn} (\nabla_y\mathcal{L}(\tilde{f}(t-1); \tilde{x}(t))) \label{eq:main_signsgd_subg}
\end{align}
\end{theorem}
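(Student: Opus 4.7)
The plan is to establish the equivalence by exhibiting two scaling invariants that reinterpret the internal state variables $v^{(k)}(t)$ and $u(t)$ as the signGD iterates $\tilde{x}^{(k)}(t)$ and $\tilde{f}(t-1)$ respectively, and then to chain them through the firing rule so that the output decoder realizes signGD exactly.

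First I would establish by induction on $t$ the input-tracking invariant
$$\frac{\eta(t)}{\alpha_2(t)}\,v^{(k)}(t)=\tilde{x}^{(k)}(t).$$
Writing $I^{(k)}(t+1)=\sum_{j} W_j I_j^{(k)}(t+1)$, the $v$-recurrence becomes $v^{(k)}(t+1)=\alpha_1(t)v^{(k)}(t)-\alpha_2(t+1)\sum_{j}W_j(2I_j^{(k)}(t+1)-1)$. Multiplying through by $\eta(t+1)/\alpha_2(t+1)$ and demanding that the surviving coefficient of $v^{(k)}(t)$ equal $\eta(t)/\alpha_2(t)$ (from the inductive hypothesis) reduces exactly to the $\alpha$--$\eta$ ratio in Eq.~\ref{eq:main_signsgd_coeff_condition}; the induced inhomogeneous term is automatically $-\eta(t+1)\sum_j W_j(2I_j^{(k)}(t+1)-1)$, matching the signed-schedule update of $\tilde{x}^{(k)}$. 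The base case at $t=1$ follows from $\eta(1)=\alpha_2(1)$ together with $v^{(k)}(0)=\tilde{x}^{(k)}(0)=0$.

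A parallel induction proves the output-tracking invariant $\frac{\eta(t-1)}{\beta_2(t-1)}u(t)=\tilde{f}(t-1)$, this time matching coefficients through the $u$-recurrence to produce the $\beta$--$\eta$ ratio of Eq.~\ref{eq:main_signsgd_coeff_condition}, with the base anchored by $\eta(1)=\beta_2(1)$. Once both invariants hold, the firing rule in Eq.~\ref{eq:signsgd_fire} collapses to $s(t)=\mathbb{H}(\nabla_y\mathcal{L}(\tilde{f}(t-1);\tilde{x}(t)))$, and since $2\mathbb{H}(\cdot)-1=\text{sgn}(\cdot)$, the signed-schedule decoder $\tilde{f}(t)=\tilde{f}(t-1)-\eta(t)(2s(t)-1)$ becomes precisely the signGD step of Eq.~\ref{eq:main_signsgd_subg}.

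The delicate part is the index bookkeeping: the $v$-invariant lives at time $t$ while the $u$-invariant lives at time $t-1$, and the two must mesh through a single firing event at $t$ without an index shift. The coefficient ratios in Eq.~\ref{eq:main_signsgd_coeff_condition} are calibrated exactly so that one application of each recurrence advances the corresponding invariant by one step, and the initial equalities $\eta(1)=\alpha_2(1)=\beta_2(1)$ anchor both base cases simultaneously. Beyond keeping indices straight, every remaining manipulation is routine substitution; unlike Theorems~\ref{thm:if_rate} and~\ref{thm:lif_rate}, this claim is an \emph{exact} algebraic identity rather than a convergence bound, so no convexity or asymptotic tools enter the argument.
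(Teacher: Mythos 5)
Your proposal is correct and follows essentially the same route as the paper: the two scaling invariants you induct on, $\tfrac{\eta(t)}{\alpha_2(t)}v^{(k)}(t)=\tilde{x}^{(k)}(t)$ and $\tfrac{\eta(t-1)}{\beta_2(t-1)}u(t)=\tilde{f}(t-1)$, are exactly the change of variables the paper introduces in its proof (there run in the optimizer-to-dynamics direction, with the converse direction noted as symmetric), and the coefficient conditions play the same role of making the homogeneous parts of the recurrences match. The only caveat is that you assert the $u$-recurrence coefficient matching "produces the $\beta$--$\eta$ ratio of Eq.~\eqref{eq:main_signsgd_coeff_condition}" without writing it out; carried through explicitly it yields $\eta(t)/\eta(t-1)=\beta_2(t)/(\beta_1(t)\beta_2(t-1))$, i.e.\ $\beta_1$ in the denominator rather than the numerator as stated, a discrepancy already present in the paper's own write-up rather than a flaw introduced by your argument.
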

We now demonstrate that our neuron can support spike train-based evaluation of novel nonlinear functions.
\subsection{Single-operand Nonlinearities}
We first approximate single-operand nonlinear functions: ReLU, LeakyReLU, and GELU.
We utilize a simple  objective function $\mathcal{L}(y;x) = \frac{1}{2}\Vert y - f(x)\Vert^2$, which is smooth and convex over $y$. 
Note that the choice of nonlinear function only affects the firing mechanism (Eq.~\ref{eq:signsgd_fire}) of our signGD-based neuronal dynamics. Also, we approximate the exact ReLU instead of the clipped ReLU1.
Figure~\ref{fig:unary_nonlinearity_signgd} in Appendix shows toy experiments validating that our signGD-based neuron approximates the unary nonlinearities accurately, and the learning rate schedule affects the convergence speed.
\begin{corollary}
\label{cor:unary_signsgd}
SignGD-based neuronal dynamics~(Def.~\ref{def:signgd_dynamics}) satisfying Eq.~\ref{eq:main_signsgd_coeff_condition} and $\eta(t) = \alpha_2(t) = \beta_2(t)$ is equivalent to signGD (Eq.~\ref{eq:main_signsgd_subg}) if $s(t)$ (Eq.~\ref{eq:signsgd_fire}) satisfies, (Proof at~\ref{thm:signgd_unary})
\begin{itemize}
\item \textbf{(ReLU)} $\mathcal{L}(y;x) = \frac{1}{2}\Vert y - \text{ReLU(x)} \Vert^2$ and $s(t) = \mathbb{H}\big(v(t)\big)\mathbb{H}\big(u(t) - \beta_1(t)v(t)\big) + \mathbb{H}\big(-v(t)\big)\mathbb{H}\big(u(t)\big)$
\item \textbf{(Sigmoid approximation of GELU)}~\cite{hendrycks2016gelu}) $\mathcal{L}(y;x) = \frac{1}{2}\Vert y - \frac{x}{1+e^{-1.702x}} \Vert^2$ and $s(t) = \mathbb{H}\big((1 + (e^{-1.702})^{v(t)}) u(t) - \beta_1(t) v(t) \big)$
\item \textbf{(LeakyReLU)} $\mathcal{L}(y;x) = \frac{1}{2}\Vert y - \text{LeakyReLU(x, $\delta$)} \Vert^2$, where $\delta$ is the negative slope, and $s(t) = \mathbb{H}(v(t))\mathbb{H}\big(u(t) -\beta_1(t)v(t)\big) + \mathbb{H}(-v(t))\mathbb{H}\big(u(t) - \delta \beta_1(t)v(t)\big) $
\end{itemize}
\end{corollary}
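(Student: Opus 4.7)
My plan is to reduce the corollary to Theorem~\ref{thm:signsgd} plus a case-by-case verification that the stated firing rule equals $\mathbb{H}\bigl(\nabla_y\mathcal{L}(u(t); v(t))\bigr)$ for each objective. First I observe that combining $\eta(t) = \alpha_2(t) = \beta_2(t)$ with Eq.~\ref{eq:main_signsgd_coeff_condition} forces $\alpha_1(t) = \beta_1(t) = 1$: the identity $\tfrac{\beta_2(t)}{\beta_2(t-1)} = \tfrac{\beta_1(t)\beta_2(t)}{\beta_2(t-1)}$ immediately gives $\beta_1(t)=1$, and the analogous computation with the $\alpha$-ratio gives $\alpha_1(t)=1$. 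Consequently the scaling factors $\tfrac{\eta(t-1)}{\beta_2(t-1)}$ and $\tfrac{\eta(t)}{\alpha_2(t)}$ inside Eq.~\ref{eq:signsgd_fire} both collapse to one, so Eq.~\ref{eq:signsgd_fire} reduces to $s(t) = \mathbb{H}\bigl(\nabla_y\mathcal{L}(u(t); v(t))\bigr)$. Theorem~\ref{thm:signsgd} then produces the desired signGD iteration (Eq.~\ref{eq:main_signsgd_subg}) as soon as the firing rule is shown to coincide with this Heaviside of the gradient.

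For each choice of $\phi$, the quadratic objective $\mathcal{L}(y;x) = \tfrac12(y - \phi(x))^2$ has gradient $u - \phi(v)$, so the task reduces to rewriting $\mathbb{H}(u - \phi(v))$ in the piecewise form given in the statement. In the ReLU case I would split on the sign of $v$: when $v > 0$ the expression equals $\mathbb{H}(u - v)$, and when $v \le 0$ it equals $\mathbb{H}(u)$; rejoining the two branches with the indicators $\mathbb{H}(v)$ and $\mathbb{H}(-v)$ produces exactly the stated expression once $\beta_1(t)=1$ is substituted. The LeakyReLU case proceeds identically, except the $v < 0$ branch replaces $\mathbb{H}(u)$ by $\mathbb{H}(u - \delta v)$. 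For the GELU surrogate $\phi(x) = x/(1+e^{-1.702x})$, I use that $1 + e^{-1.702 v}$ is strictly positive to multiply through the inequality $u - \phi(v) > 0$, obtaining $(1 + e^{-1.702v})\,u - v > 0$; rewriting $e^{-1.702v}$ as $(e^{-1.702})^{v}$ then recovers the stated firing rule.

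The main subtlety, and the only real obstacle, is the bookkeeping around the scaling coefficients. The corollary writes $\beta_1(t)\,v(t)$ inside the Heaviside arguments, and without the derivation above it is not obvious why this is consistent with evaluating the gradient at the unscaled pair $(u(t), v(t))$. I would therefore present the $\alpha_1 = \beta_1 = 1$ reduction explicitly before entering the three cases, so that the equivalence between the general dynamics of Definition~\ref{def:signgd_dynamics} and the specialized firing rules is transparent. Boundary behavior at $v(t) = 0$ is measure-zero and can be absorbed into the convention on $\mathbb{H}$, so it does not affect the equivalence claim.
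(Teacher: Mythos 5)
Your case-by-case verification of the three firing rules is correct and matches the paper's computation in spirit, but your route through the coefficients is genuinely different from the paper's, and the difference matters. You observe that the literal hypotheses ($\eta(t)=\alpha_2(t)=\beta_2(t)$ together with Eq.~\ref{eq:main_signsgd_coeff_condition}) force $\alpha_1(t)=\beta_1(t)=1$, so the scaling factors in Eq.~\ref{eq:signsgd_fire} collapse to one and the whole corollary reduces to checking $\mathbb{H}\big(u-\phi(v)\big)$ against the stated piecewise forms with $\beta_1(t)=1$. That deduction is valid and does establish the corollary as literally stated. The paper's proof (at~\ref{thm:signgd_unary}) instead keeps the factors $\frac{\eta(t-1)}{\beta_2(t-1)}$ and $\frac{\eta(t)}{\alpha_2(t)}$ general, uses positive-scale invariance of $\mathbb{H}$ to strip them off the outer indicators, and then applies the coefficient condition in the form $\frac{\eta(t)\beta_2(t-1)}{\eta(t-1)\alpha_2(t)}=\beta_1(t)\frac{\beta_2(t)}{\alpha_2(t)}$, invoking only $\alpha_2(t)=\beta_2(t)$ at the last step. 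What that buys is a conclusion that remains meaningful when $\beta_1(t)\neq 1$: elsewhere in the paper (e.g.\ the energy analysis in Appendix~\ref{sec:energy_consumption} and the exponential schedules used in the experiments) the coefficients are chosen as $\alpha_2(t)=\beta_2(t)=1$ with $\beta_1(t)=\gamma$ and $\eta(t)=\eta(0)\gamma^t$, a regime your reduction does not cover because there $\eta(t)\neq\beta_2(t)$. So be aware that your shortcut proves only the degenerate instance in which the $\beta_1(t)$ appearing in the stated firing rules is identically one; if you want the version of the result the paper actually uses, you should carry the general coefficients through as the paper does rather than eliminating them at the outset. Your handling of the $v(t)=0$ boundary and of the strictly positive factor $1+e^{-1.702v}$ in the GELU case is fine.
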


\subsection{Multi-operand Nonlinearities}
\label{sec:multioperand_nonlinearities}
In this section, we approximate two tensor operators, max pooling and layer normalization~\cite{ba2016layernorm}, with binary-input signGD-based neuronal dynamics ($d=2$). We also empirically validate the multi-operand nonlinearity approximation with our neuron in Appendix~\ref{sec:layernorm_speed} and Figure~\ref{fig:binary_nonlinearity_signgd}.
\begin{figure}[ht]
\vskip -0.0in
\begin{center}
\centerline{\includegraphics[width=0.85\columnwidth]{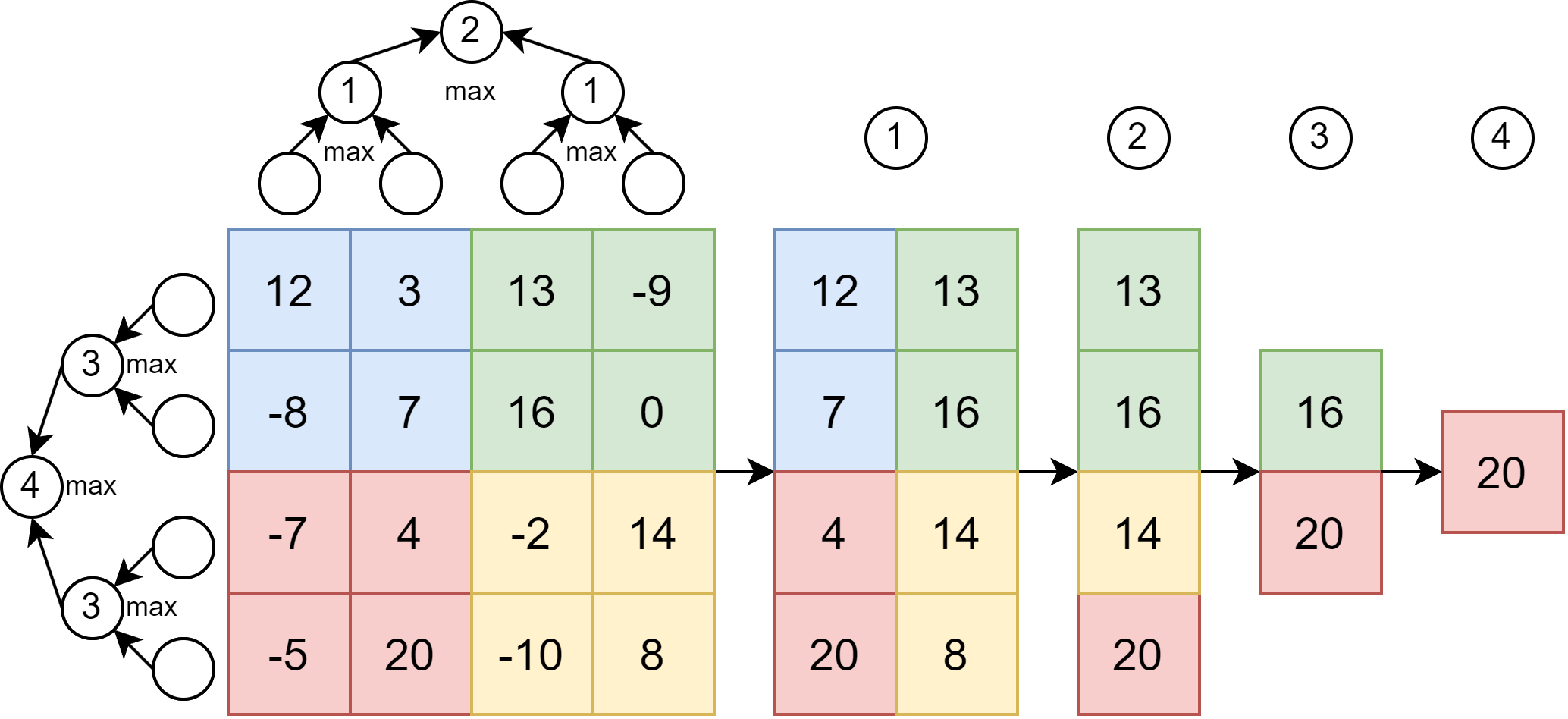}}
\vskip -0.1in
\caption{An example of decomposing max pooling of 4x4 window with binary-input maximum operators.}
\label{fig:maxpool_decomposition}
\end{center}
\vskip -0.4in
\end{figure}
\subsubsection{Max Pooling}
The max pooling operator downsamples a feature map with a maximum value for each patch in CNN~\cite{He2015ResNet,Simonyan2014VGG,tan2019efficientnet}.
Max pooling approximation in SNN has been a long-standing problem ~\cite{gaurav2022spikingmaxpool} since it should anticipate the maximum of spike trains ahead of time. A common strategy is to replace it with average pooling~\cite{Sengupta2018SNNVGGResNet, li2021freelunch}, which degrades the performance~\cite{Rueckauer2017ConversionOC}. Other works compute the instantaneous maximum  instead of exact maximum~\cite{gaurav2022spikingmaxpool,guo2019overheadfreemp}, use TTFS coding~\cite{stanojevic2023ttfsmp}, or sacrifice SNN's characteristics, e.g., spike-based~\cite{li2022quantizationframework} or event-driven computation~\cite{lu2022linearlif}.

 Our signGD-based neuron can approximate the exact maximum  in an event-driven manner, thereby supporting the max pooling operation.
If a neuron can evaluate the maximum over two input spike trains, then we can construct a binary computational tree over row and column dimension of the pooling kernel, as in Figure~\ref{fig:maxpool_decomposition}. 
If a kernel size is $K_r\times K_c$, the tree depth of max neurons is $\lceil\log_2 K_r\rceil \times \lceil\log_2 K_r\rceil$. 
We approximate maximum function over two activations with a binary-input signGD-based neuron defined as follows.
\begin{corollary}
\label{cor:maxpool_signsgd}
SignGD (Eq.~\ref{eq:main_signsgd_subg}) with $\mathcal{L}\big(y;x_1,x_2\big) = \frac{1}{2}\Vert y - \max(x_1, x_2) \Vert^2$ is equivalent to signGD-based neuronal dynamics~(Def.~\ref{def:signgd_dynamics})
satisfying Eq.~\ref{eq:main_signsgd_coeff_condition}, $\alpha_2(t) = \beta_2(t)$, 
\begin{equation*}
\begin{split}
s(t) =~&\mathbb{H}(v^{(1)}(t) - v^{(2)}(t))(u(t) -\beta_1(t) v^{(1)}(t))\\
& + \mathbb{H}(v^{(2)}(t) - v^{(1)}(t))(u(t) - \beta_1(t)v^{(2)}(t))
\end{split}
\end{equation*}    
\end{corollary}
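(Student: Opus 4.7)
The plan is to invoke Theorem~\ref{thm:signsgd} and show that, for the specific choice $\mathcal{L}(y;x_1,x_2) = \tfrac{1}{2}\Vert y - \max(x_1,x_2)\Vert^2$, the generic firing rule
\[
s(t) = \mathbb{H}\bigl(\nabla_y\mathcal{L}\bigl(\tfrac{\eta(t-1)}{\beta_2(t-1)}u(t);\tfrac{\eta(t)}{\alpha_2(t)}v(t)\bigr)\bigr)
\]
of Definition~\ref{def:signgd_dynamics} reduces, under the additional constraint $\alpha_2(t)=\beta_2(t)$, to the piecewise expression stated in the corollary. Since Theorem~\ref{thm:signsgd} already establishes the equivalence between signGD-based neuronal dynamics and the signGD iteration~\eqref{eq:main_signsgd_subg} whenever the coefficient condition~\eqref{eq:main_signsgd_coeff_condition} holds, the only task left is an algebraic simplification of the Heaviside argument for this particular objective.

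First, I would differentiate the objective to obtain $\nabla_y\mathcal{L}(y;x_1,x_2) = y - \max(x_1,x_2)$, which is well-defined almost everywhere and yields a valid subgradient on the measure-zero tie set $\{x_1=x_2\}$. Substituting the internal-variable encodings $\tilde{f}(t-1) = \tfrac{\eta(t-1)}{\beta_2(t-1)}u(t)$ and $\tilde{x}^{(k)}(t) = \tfrac{\eta(t)}{\alpha_2(t)}v^{(k)}(t)$ for $k=1,2$, the firing criterion becomes
\[
s(t) = \mathbb{H}\!\left(\tfrac{\eta(t-1)}{\beta_2(t-1)}u(t) - \tfrac{\eta(t)}{\alpha_2(t)}\max\bigl(v^{(1)}(t),v^{(2)}(t)\bigr)\right),
\]
where I have used the positive-homogeneity $\max(cx_1,cx_2)=c\max(x_1,x_2)$ for $c=\tfrac{\eta(t)}{\alpha_2(t)}>0$. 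Dividing the argument by the positive scalar $\tfrac{\eta(t-1)}{\beta_2(t-1)}$ (which preserves the Heaviside) gives a ratio $\tfrac{\eta(t)/\alpha_2(t)}{\eta(t-1)/\beta_2(t-1)}$ in front of the $\max$. Under $\alpha_2(t)=\beta_2(t)$ this ratio equals $\tfrac{\eta(t)}{\eta(t-1)}\cdot\tfrac{\beta_2(t-1)}{\beta_2(t)}$, which by the first equality in~\eqref{eq:main_signsgd_coeff_condition} is exactly $\beta_1(t)$. Hence
\[
s(t) = \mathbb{H}\bigl(u(t) - \beta_1(t)\max(v^{(1)}(t),v^{(2)}(t))\bigr).
\]

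The final step is a case-split on which operand attains the maximum, using the identity $\max(a,b) = a\,\mathbb{H}(a-b) + b\,\mathbb{H}(b-a)$ on the complement of $\{a=b\}$, together with the fact that $\mathbb{H}(v^{(1)}(t)-v^{(2)}(t)) + \mathbb{H}(v^{(2)}(t)-v^{(1)}(t)) = 1$ almost everywhere. Multiplying the outer Heaviside through by these two indicators gives the claimed expression as a sum of the two disjoint branches $\mathbb{H}(v^{(1)}-v^{(2)})\mathbb{H}(u-\beta_1 v^{(1)}) + \mathbb{H}(v^{(2)}-v^{(1)})\mathbb{H}(u-\beta_1 v^{(2)})$, matching the corollary (the tie set $v^{(1)}=v^{(2)}$ has Lebesgue measure zero and both branches agree there).

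The main obstacle is the scaling-factor bookkeeping in the second step: it must be checked carefully that the two encoding constants $\tfrac{\eta(t-1)}{\beta_2(t-1)}$ and $\tfrac{\eta(t)}{\alpha_2(t)}$ attached respectively to $u(t)$ and $v^{(k)}(t)$ collapse to the single coefficient $\beta_1(t)$ precisely because $\alpha_2=\beta_2$ and~\eqref{eq:main_signsgd_coeff_condition} hold; without this assumption the clean form of the firing rule would not survive. Everything else (differentiating the quadratic, applying positive-homogeneity of $\max$, and the Heaviside case split) is routine.
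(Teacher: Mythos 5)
Your proposal is correct and follows essentially the same route as the paper's proof: invoke Theorem~\ref{thm:signsgd}, compute $\nabla_y\mathcal{L}(y;x_1,x_2)=y-\max(x_1,x_2)$, and use the coefficient condition together with $\alpha_2(t)=\beta_2(t)$ and the scale invariance of $\mathbb{H}$ to collapse the encoding factors into $\beta_1(t)$. The only (cosmetic) difference is ordering — you factor out the scaling via positive homogeneity of $\max$ before the case split, whereas the paper expands the gradient into the two Heaviside branches first and simplifies the scale factors branch by branch.
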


\subsubsection{Layer Normalization}
Normalization techniques stabilize and accelerate the training of DNN models~\cite{huang2023normalization}. Batch normalization~\cite{ioffe2015batchnorm} is supported for SNN inference since batch statistics are fixed in the inference phase; hence, it becomes an affine operator~\cite{li2021freelunch}. It is not the case for layer normalization~\cite{ba2016layernorm}, an operator employed at high-performance DNNs, e.g., Transformer~\cite{vaswani2017transformer},  ConvNext~\cite{liu2022convnext}, and MLP-Mixer~\cite{tolstikhin2021mlpmixer}. Layer normalization in SNN should evaluate data instance-wise channel statistics across multiple spike trains ahead of time. This requires event-driven spike-based computation of intricate nonlinearities, e.g., variance and inverse square root, which was infeasible with prior spike neurons.

\begin{figure}[ht]
\vskip -0.0in
\begin{center}
\centerline{\includegraphics[width=0.9\columnwidth]{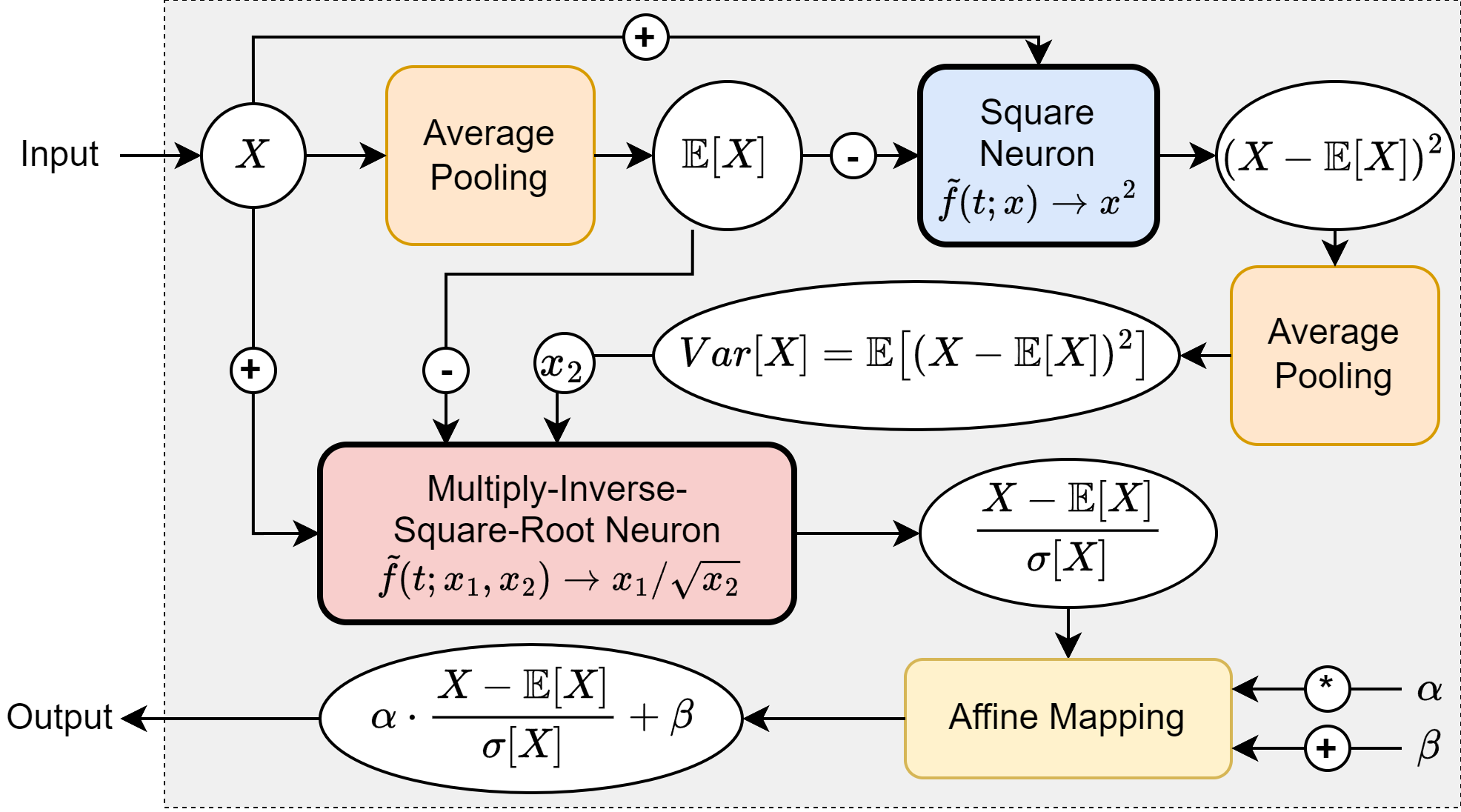}}
\vskip -0.1in
\caption{Decomposing layer normalization~\cite{ba2016layernorm} into operators that can be approximated with signGD-based spiking neurons (Corollary~\ref{cor:square_signsgd},~\ref{cor:sqrt_inverse_mul_signsgd}) and affine connections.}
\label{fig:layer_norm_decomposition}
\end{center}
\vskip -0.4in
\end{figure}

To approximate layer normalization with our signGD-based neuron, we decompose the operator into a computational graph of linear operators and two nonlinearities, as in Figure~\ref{fig:layer_norm_decomposition}. Linear operators can be implemented in SNN or be fused to other linear layers. Two key nonlinearities are square function $h_1(x) = x^2$ and multiply-inverse-sqrt function $h_2(x_1, x_2) = \frac{x_1}{\sqrt{x2}}$. We derive signGD-based neurons that can approximate these nonlinearities as follows. 
\begin{corollary}
\label{cor:square_signsgd}
SignGD (Eq.~\ref{eq:main_signsgd_subg}) with $\mathcal{L}(y;x) =  \frac{1}{2}\Vert y - x^2 \Vert^2$ is equivalent to the signGD-based neuronal dynamics~(Def.~\ref{def:signgd_dynamics})
satisfying Eq.~\ref{eq:main_signsgd_coeff_condition}, $\eta(t) =\beta_2(t) = \alpha_2(t)$ and $s(t) = \mathbb{H}\big(u(t) - \beta_1(t)v(t)^2\big)$ (Proof at~\ref{thm:signgd_square})
\end{corollary}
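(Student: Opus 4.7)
The plan is to treat this as a direct specialization of Theorem~\ref{thm:signsgd}. Once I verify that the stated firing rule coincides with the canonical firing rule from Definition~\ref{def:signgd_dynamics} when $\mathcal{L}(y;x) = \tfrac{1}{2}\Vert y - x^2\Vert^2$, the equivalence with the sign gradient descent recursion in Eq.~\ref{eq:main_signsgd_subg} follows immediately from that theorem, since all the coefficient hypotheses of Theorem~\ref{thm:signsgd} are in force by assumption.

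First I would compute the partial gradient $\nabla_y \mathcal{L}(y;x) = y - x^2$, observing that $\mathcal{L}$ is smooth and (quadratically) convex in $y$, so the smoothness hypothesis of Definition~\ref{def:signgd_dynamics} is met. Substituting this gradient into the canonical firing rule of Definition~\ref{def:signgd_dynamics} yields
\[
s(t) = \mathbb{H}\!\left(\tfrac{\eta(t-1)}{\beta_2(t-1)}\,u(t) \;-\; \left(\tfrac{\eta(t)}{\alpha_2(t)}\,v(t)\right)^{\!2}\right).
\]
Imposing the extra assumption $\eta(t) = \alpha_2(t) = \beta_2(t)$ collapses both prefactors to $1$, leaving $s(t) = \mathbb{H}(u(t) - v(t)^2)$.

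Next I would reconcile this with the form claimed in the corollary, $s(t) = \mathbb{H}(u(t) - \beta_1(t)v(t)^2)$, by showing that Eq.~\ref{eq:main_signsgd_coeff_condition} specialized to $\eta = \beta_2$ forces $\beta_1(t) = 1$ for every $t \ge 1$: plugging $\eta = \beta_2$ into $\eta(t)/\eta(t-1) = \beta_1(t)\beta_2(t)/\beta_2(t-1)$ gives $\beta_1(t)=1$ at once; symmetrically, $\eta = \alpha_2$ forces $\alpha_1(t-1)=1$. So the two expressions for $s(t)$ agree, and Theorem~\ref{thm:signsgd} transports the equivalence to the desired signGD recursion.

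The main obstacle is essentially bookkeeping rather than substance: one has to track the coefficient constraints carefully and check that the specialization of Eq.~\ref{eq:main_signsgd_coeff_condition} together with $\eta = \alpha_2 = \beta_2$ really does pin down $\beta_1$ and $\alpha_1$, so that the $\beta_1(t)$ factor appearing in the stated firing rule is consistent (indeed, equal to $1$). Once this is verified, no further optimization or convergence argument is needed, since all such content is inherited from Theorem~\ref{thm:signsgd}.
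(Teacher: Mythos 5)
Your proposal is correct and follows essentially the same route as the paper's proof: substitute $\nabla_y\mathcal{L}(y;x)=y-x^2$ into the canonical firing rule of Definition~\ref{def:signgd_dynamics} and use the coefficient hypotheses to simplify the prefactors. The only cosmetic difference is that you explicitly deduce $\beta_1(t)=1$ (and $\alpha_1(t)=1$) from Eq.~\ref{eq:main_signsgd_coeff_condition} together with $\eta=\alpha_2=\beta_2$, whereas the paper leaves the (then trivial) factor $\beta_1(t)$ symbolic in the final expression $\mathbb{H}\big(u(t)-\beta_1(t)v(t)^2\big)$; both readings agree.
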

\begin{corollary}
\label{cor:sqrt_inverse_mul_signsgd}
SignGD (Eq.~\ref{eq:main_signsgd_subg}) with $ \mathcal{L}(y;x_1,x_2) = \Vert y - \frac{x_1}{\sqrt{x_2}} \Vert^2$ is equivalent to the signGD-based neuron~(Def.~\ref{def:signgd_dynamics})
satisfying Eq.~\ref{eq:main_signsgd_coeff_condition}, $\eta(t) = \alpha_2(t) = \beta_2(t)$, (Proof at~\ref{thm:signgd_misr})
\begin{equation*}
\begin{split}
s(t) = &~\mathbb{H}\big(u(t)\big)\mathbb{H}\big(v^{(1)}(t)\big)\mathbb{H}\big(v^{(2)}(t)u(t)^2 - v^{(1)}(t)^2\big) 
 \\
+~ \mathbb{H}&\big(-u(t)\big)\mathbb{H}\big(-v^{(1)}(t)\big)\mathbb{H}\big(v^{(1)}(t)^2 - v^{(2)}(t)u(t)^2\big)  
\\
+~ \mathbb{H}&(u(t))\mathbb{H}(-v^{(1)}(t))
\end{split}
\end{equation*}    
\end{corollary}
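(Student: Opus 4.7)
The plan is to reduce the claim to Theorem~\ref{thm:signsgd} and then rewrite the firing condition $s(t) = \mathbb{H}\bigl(\nabla_y \mathcal{L}(u(t); v^{(1)}(t), v^{(2)}(t))\bigr)$ as a partition over the sign combinations of $u(t)$ and $v^{(1)}(t)$. Under the assumption $\eta(t)=\alpha_2(t)=\beta_2(t)$ together with Eq.~\ref{eq:main_signsgd_coeff_condition}, the scaling factors $\eta(t-1)/\beta_2(t-1)$ and $\eta(t)/\alpha_2(t)$ in Eq.~\ref{eq:signsgd_fire} both collapse to $1$, so Theorem~\ref{thm:signsgd} yields the equivalence as soon as we match the firing rule $\mathbb{H}\bigl(\nabla_y \mathcal{L}(u(t); v^{(1)}(t), v^{(2)}(t))\bigr)$ to the stated three-term expression.

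First I would compute $\nabla_y \mathcal{L}(y;x_1,x_2) = 2\bigl(y - x_1/\sqrt{x_2}\bigr)$, so that firing is equivalent to $u(t) \ge v^{(1)}(t)/\sqrt{v^{(2)}(t)}$. Treating $v^{(2)}(t) \ge 0$ as the natural domain (since this neuron is applied to the variance channel in the layer-normalization decomposition of Fig.~\ref{fig:layer_norm_decomposition}, $v^{(2)}(t)$ represents a squared quantity), I would then split $\mathbb{R}^2$ into the four quadrants of $(u(t), v^{(1)}(t))$. In the two quadrants where $u(t)$ and $v^{(1)}(t)$ share a sign the inequality $u(t) \ge v^{(1)}(t)/\sqrt{v^{(2)}(t)}$ can be squared without flipping, and in the two quadrants with opposite signs the inequality is automatically satisfied (if $u\ge 0$, $v^{(1)}\le 0$) or automatically violated (if $u\le 0$, $v^{(1)}\ge 0$).

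The bulk of the work is the sign-aware squaring. For $u(t)\ge 0,\ v^{(1)}(t)\ge 0$, both sides are nonnegative so $u(t) \ge v^{(1)}(t)/\sqrt{v^{(2)}(t)}$ is equivalent to $v^{(2)}(t)u(t)^2 \ge v^{(1)}(t)^2$, giving the first indicator product. For $u(t)\le 0,\ v^{(1)}(t)\le 0$, both sides are nonpositive and squaring reverses the direction, yielding $v^{(1)}(t)^2 \ge v^{(2)}(t)u(t)^2$, which matches the second term. The mixed case $u(t)\ge 0,\ v^{(1)}(t)\le 0$ makes the gradient nonnegative unconditionally, producing the unqualified indicator $\mathbb{H}(u(t))\mathbb{H}(-v^{(1)}(t))$ as the third term, while the mixed case $u(t)\le 0,\ v^{(1)}(t)\ge 0$ contributes $0$ and is therefore absent from the sum.

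The main obstacle I anticipate is not algebraic but care-of-cases: making sure the $\mathbb{H}$ convention at the boundary (the sign choice when $u(t)=0$, $v^{(1)}(t)=0$, or $v^{(2)}(t)u(t)^2 = v^{(1)}(t)^2$) agrees across the three indicator summands so that no point is double-counted and no point is missed. I would fix the convention $\mathbb{H}(0)=1$ used throughout the paper and check that overlap only occurs on measure-zero boundaries where both candidate summands evaluate to the same $\{0,1\}$ spike, so the expression remains well-defined and equal to $\mathbb{H}\bigl(u(t) - v^{(1)}(t)/\sqrt{v^{(2)}(t)}\bigr)$. Combining these quadrant contributions with the scale-free instantiation of Theorem~\ref{thm:signsgd} yields the desired equivalence.
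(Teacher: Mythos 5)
Your proposal is correct and follows essentially the same route as the paper: reduce to Theorem~\ref{thm:signsgd}, compute $\nabla_y\mathcal{L}$, and rewrite $\mathbb{H}\bigl(y - x_1/\sqrt{x_2}\bigr)$ as the three-term indicator decomposition via sign-aware squaring over the quadrants of $(u, v^{(1)})$ with $v^{(2)} \ge 0$ (the paper simply asserts this Heaviside identity and then cancels the $\eta/\alpha_2$, $\eta/\beta_2$ scaling factors, which your $\eta(t)=\alpha_2(t)=\beta_2(t)$ observation handles directly). The boundary-convention caveat you raise is shared by the paper's own proof and does not affect the result off a measure-zero set.
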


\begin{table*}[t]
\vskip -0.1in
\caption{Comparing ANN-to-SNN conversion performance on ImageNet~\cite{deng2009imagenet} models. \emph{Exact Arch} means the trained ANN architecture is identical to its original paper, without any customization or tailoring. \emph{No Spike-aware Activation Func} means ReLU functions in ANN architecture are not replaced with spike-aware functions before training, e.g., QCFS~\cite{bu2021optimalannsnn}, SlipReLU~\cite{Jiang2023SlipReLU}. For our signGD-based neuron, we use $\eta(t) = \frac{5.0}{t+1}$ for ConvNext and MLP-Mixer, and $\eta(t) = 0.15 \cdot 0.965^t$ for the others. Results of RTS~\cite{Deng2021OptimalConversionTheory} are from \cite{li2021freelunch}. }
\label{tab:imagenet_comparison}
\vskip 0.1in
\begin{center}
\begin{small}
\begin{tabular}{lccccccc}
\toprule
\multirow{2}{*}{Methods} & Exact& No Spike-aware &\multirow{2}{*}{ANN Acc.} & \multicolumn{4}{c}{Simulation time-steps} \\
 &Arch.& Activation Func. & &   T = 32 &T = 64 & T = 128 &T = 256 \\
\midrule
\multicolumn{8}{c}{ResNet-34~\cite{He2015ResNet} ImageNet}\\
\midrule
TSC~\cite{han2020deeptsc}    & \Redlargex & \Greencheck & 70.20 &  - & - & - & 55.65  \\
RTS~\cite{Deng2021OptimalConversionTheory}
&\Redlargex & \Redlargex & 75.66 &  0.09 & 0.12 & 3.19 & 47.11 \\
SNNC-AP~\cite{li2021freelunch}      
& \Redlargex & \Greencheck &75.66 &  64.54 & 71.12 & 73.45 & 74.61   \\
QCFS~\cite{bu2021optimalannsnn}      
& \Redlargex & \Redlargex &74.32&  \textbf{69.37} & 72.35 & 73.15 & 73.37  \\
SlipReLU~\cite{Jiang2023SlipReLU}  
&\Redlargex & \Redlargex & 75.08 &  66.61 & 72.71 & 74.01 & - \\
SRP~\cite{Hao2023ResidualMembranePotential}       
&\Redlargex & \Redlargex & 74.32 &  68.40 & 68.61 & - & -  \\
Ours (with Max Pooling)       
&\Greencheck & \Greencheck & 73.30 &  58.09 & 72.38 & 73.31 & 73.29  \\
Ours (without Max Pooling)       
&\Redlargex & \Greencheck & 75.65 & 59.85 & \textbf{75.34} & \textbf{75.67} & \textbf{75.65} \\
\midrule
\multicolumn{8}{c}{VGG-16~\cite{Simonyan2014VGG} ImageNet}\\
\midrule
TSC~\cite{han2020deeptsc}    
& \Redlargex & \Greencheck &73.49 &  - & - & - & 69.71  \\
RTS~\cite{Deng2021OptimalConversionTheory}
& \Redlargex & \Redlargex &75.36 &  0.114 & 0.118 & 0.122 & 1.81 \\
SNNC-AP~\cite{li2021freelunch}      
& \Redlargex & \Greencheck &75.36 &  63.64 & 70.69 & 73.32 & 74.23   \\
SNM~\cite{Wang2022SignedNW}      
&\Redlargex & \Greencheck &73.18 &  64.78 & 71.50 & 72.86 & -   \\
QCFS~\cite{bu2021optimalannsnn}      
& \Redlargex & \Redlargex &74.29 &  68.47 & 72.85 & 73.97 & 74.22 \\
SlipReLU~\cite{Jiang2023SlipReLU}  
& \Redlargex & \Redlargex &71.99 &  67.48 & 71.25 & 72.02 & - \\
SRP~\cite{Hao2023ResidualMembranePotential}       
& \Redlargex & \Redlargex &74.29 &  \textbf{69.35} & 69.43 & - & - \\
Ours (with Max Pooling)       
&\Greencheck & \Greencheck & 73.36 &  38.08 & 67.04 & 71.33 & 71.50  \\
Ours (without Max Pooling)       
&\Redlargex & \Greencheck & 75.35 & 69.16 & \textbf{75.32} & \textbf{75.31} & \textbf{75.34}   \\
\midrule
\multicolumn{8}{c}{RegNetX~\cite{Radosavovic2020RegNetX} ImageNet}\\
\midrule
RTS~\cite{Deng2021OptimalConversionTheory}
& \Redlargex & \Redlargex & 80.02 &  0.218 & 3.542 & 48.60 & 71.22 \\
SNNC-AP~\cite{li2021freelunch}      
& \Redlargex & \Greencheck & 80.02 &  \textbf{55.70} & 70.96 & 75.78 & 77.50   \\
Ours (RegNetX-3.2GF) &\Greencheck & \Greencheck & 81.19 &  26.85 & \textbf{77.74} & \textbf{80.93} & \textbf{80.99}  \\
\midrule 
\multicolumn{8}{c}{New DNN architectures converted with Our signGD-based neuron} \\
\midrule
ResMLP-S24~\cite{Touvron2021ResMLP}&\Greencheck & \Greencheck & 80.76 & \textbf{72.94} & \textbf{76.91} & \textbf{77.99} & 78.04 \\
ConvNext-B~\cite{liu2022convnext}&\Greencheck & \Greencheck & 84.06 & 0.11 & 5.07 & 72.60 & \textbf{81.07}\\
MLP-Mixer-B32~\cite{tolstikhin2021mlpmixer}&\Greencheck & \Greencheck & 76.59 & 0.11 & 0.35 & 50.06 & 72.97\\
\bottomrule
\end{tabular}
\end{small}
\end{center}
\vskip -0.28in
\end{table*}
\section{Evaluations}
We demonstrate the practical effectiveness of our signGD-based neuronal dynamics in four-fold. First, we validate our support for diverse nonlinearities by converting new DNN architectures. Second, we compare the accuracy of converted ANNs with existing conversion techniques. 
Third, we verify our design choices through ablation studies. Finally, we visualize the effect of our technique on SNN inference speed.
We also conduct an energy consumption analysis of our technique in Appendix~\ref{sec:energy_consumption}. 
For a fair comparison, we generalize the learning rate schedule of the subgradient method-based neuron, which is formulated in Appendix~\ref{sec:lr_schedule_subgradient}. 
We detail our conversion technique and its spikingjelly~\cite{spikingjelly} implementation in Appendix~\ref{app:implementation_details}.

\subsection{Diversifying DNN Architecture Support}
To verify that our signGD-based neuronal dynamics can approximate diverse nonlinearities, we convert large-scale DNN architectures that prior works fail to convert. We experiment with the latest non-transformer architectures: MLP-Mixer~\cite{tolstikhin2021mlpmixer}, ResMLP~\cite{Touvron2021ResMLP}, ConvNext~\cite{liu2022convnext}, ResNet34, and VGG16 with max pooling. Note that GELU conversion should be supported for ResMLP, MLP-Mixer, and ConvNext. Layer normalization should be converted for MLP-Mixer and ConvNext. Table~\ref{tab:imagenet_comparison} shows that our signGD-based neuron enables conversion of these architectures for the first time. Converted ResNet and VGG with max pooling layer  achieves $>70\%$ accuracy in $T \ge 128$. MLP-Mixer and ConvNext take comparably more time-steps slow due to the layer normalization, as expected in Appendix~\ref{sec:layernorm_speed}. ResMLP approximates its ANN accuracy the fastest among all DNN models, achieving $72\%$ in $T = 32$ and $78\%$ in $T = 128$. 

\subsection{Comparison with Prior Works}
To show the practicality of our signGD-based neuron, we convert DNN to SNN with our neuron and compare its performance with prior techniques.
We use ImageNet~\cite{deng2009imagenet} and CIFAR~\cite{krizhevsky2009cifar} datasets.
For ImageNet models, we use pretrained weights of \cite{li2021freelunch} for a fair comparison. Table~\ref{tab:imagenet_comparison} shows that our converted SNNs accurately approximate the ANN performance on $T\ge 64$, achieving the best performance among all conversion techniques.
In $T\le32$, spike-aware ANN activation works~\cite{bu2021optimalannsnn,Jiang2023SlipReLU,Hao2023ResidualMembranePotential} achieve higher accuracy but significantly sacrifices accuracy in later time-steps $T \ge 64$ with $1.97-5.71\%$. Such an accurate SNN acceleration is possible since our neuron approximates the true ReLU instead of clipped ReLU and does not fine-tune ANN activations for early convergence. Also, for CIFAR models, Tables~\ref{tab:cifar10} and \ref{tab:cifar100} show that our converted SNN outperforms all the existing works in $T\ge 64$, and also in $T \le 32$ except for spike-aware ANN activation techniques. 
Our SNN performance approaches the ANN in $T\ge32$ with a small gap of $<1\%$. Note that prior studies used specialized techniques for low time-steps $T \le 32$, e.g., data-driven calibration~\cite{li2021freelunch} or spike-aware activation functions~\cite{bu2021optimalannsnn, Jiang2023SlipReLU}. We do not apply these methods to isolate and directly compare the effect of neuron models, and minimize the influence of specific conversion techniques.

 \begin{figure}[!ht]
\vskip -0.1in
\begin{center}
    \subfigure[Effect of neuronal dynamics and normalization. Exponential  schedule $\eta(t) = 0.95 * (0.15)^t$.]{
        \includegraphics[width=0.45\linewidth]{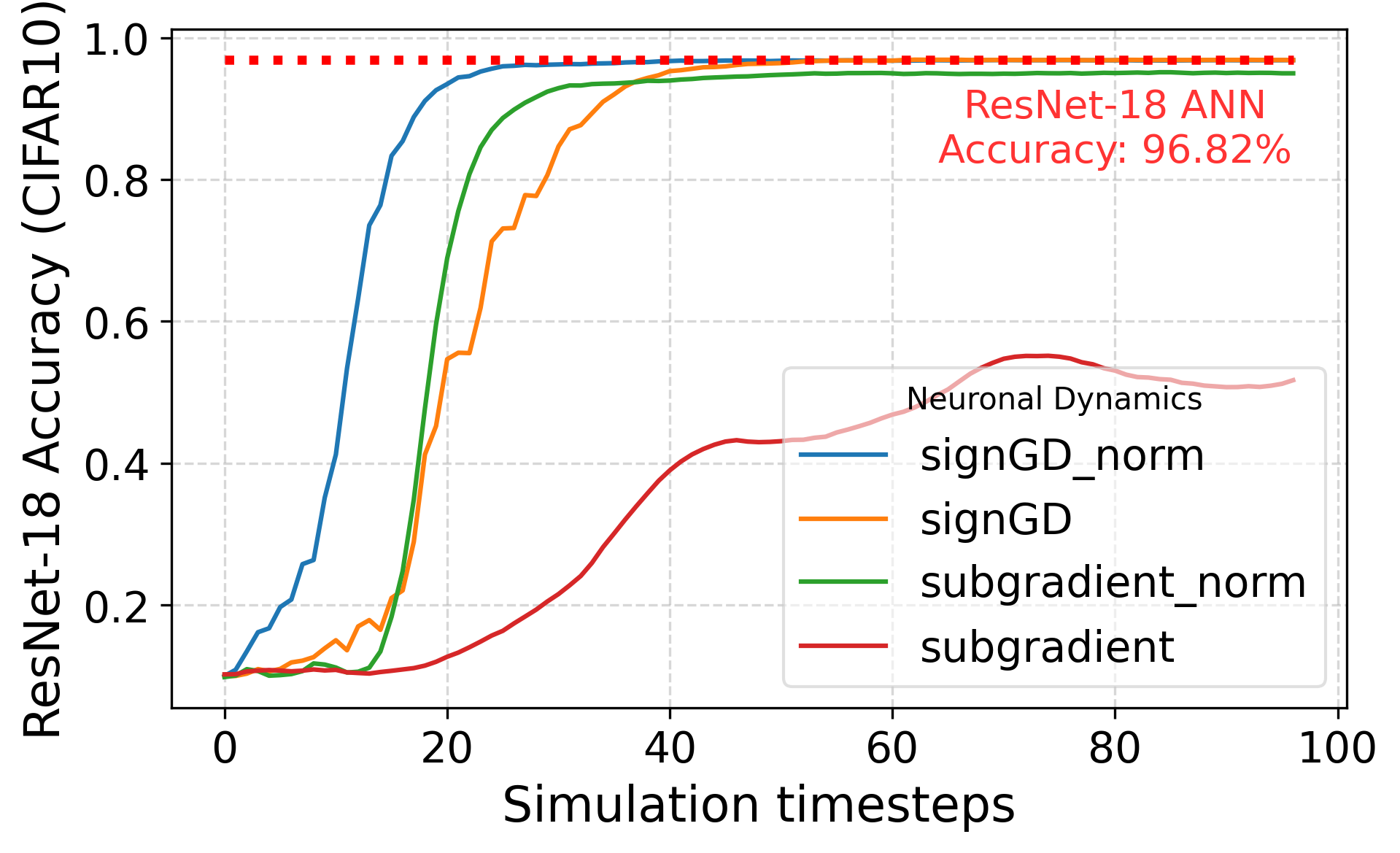}
        \label{fig:design_choice}
    }\hfill
    \subfigure[Effect of input encoding through timestep. SignGD-based neuron with $\eta(t) = \frac{1}{t+1}$.]{
        \includegraphics[width=0.45\linewidth]{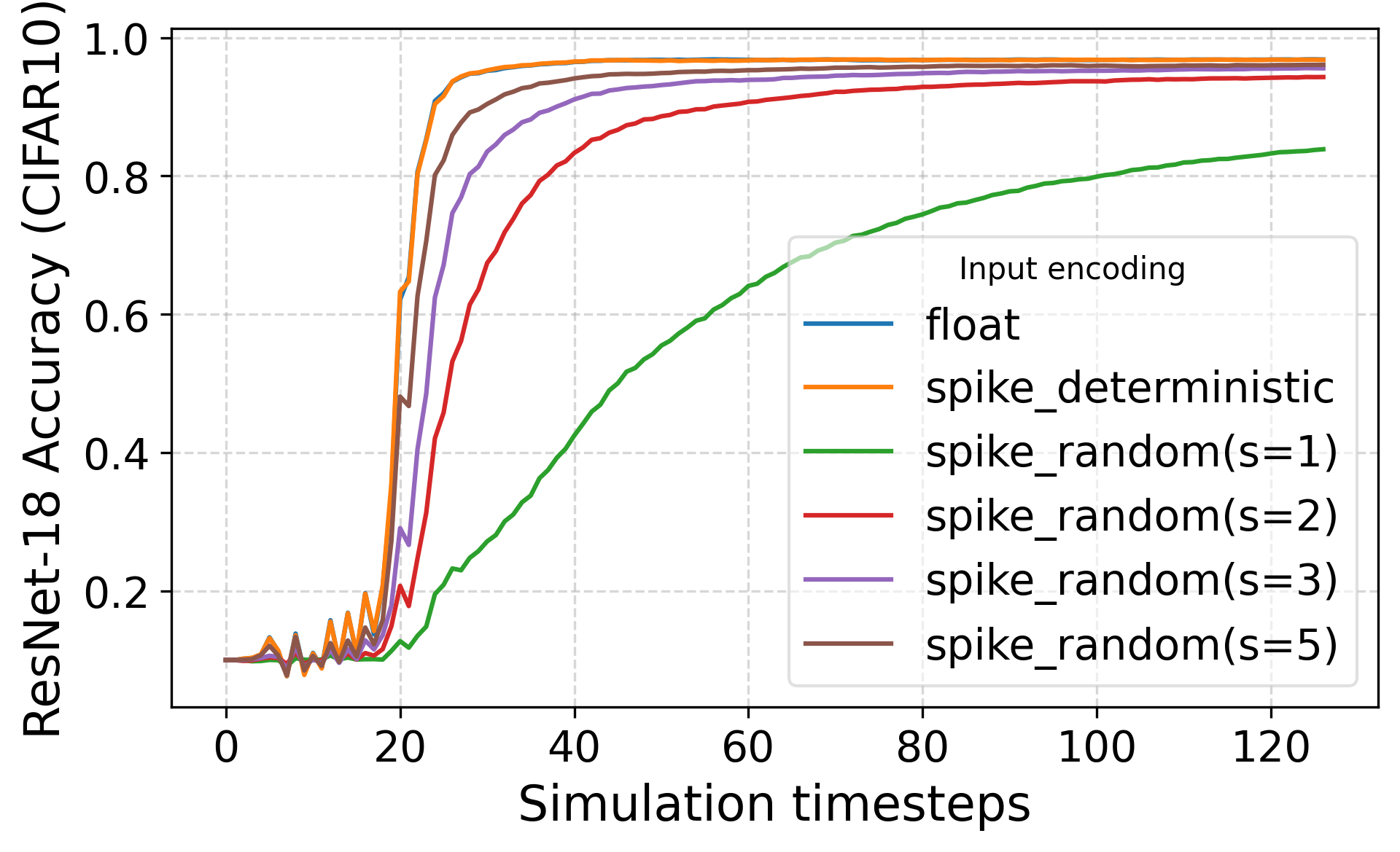}
        \label{fig:ablation_inputencoding}
    }
    \vfill
    \subfigure[Effect of LR on signGD-based neuron, inverse  schedule.]{
        \includegraphics[width=0.45\linewidth]{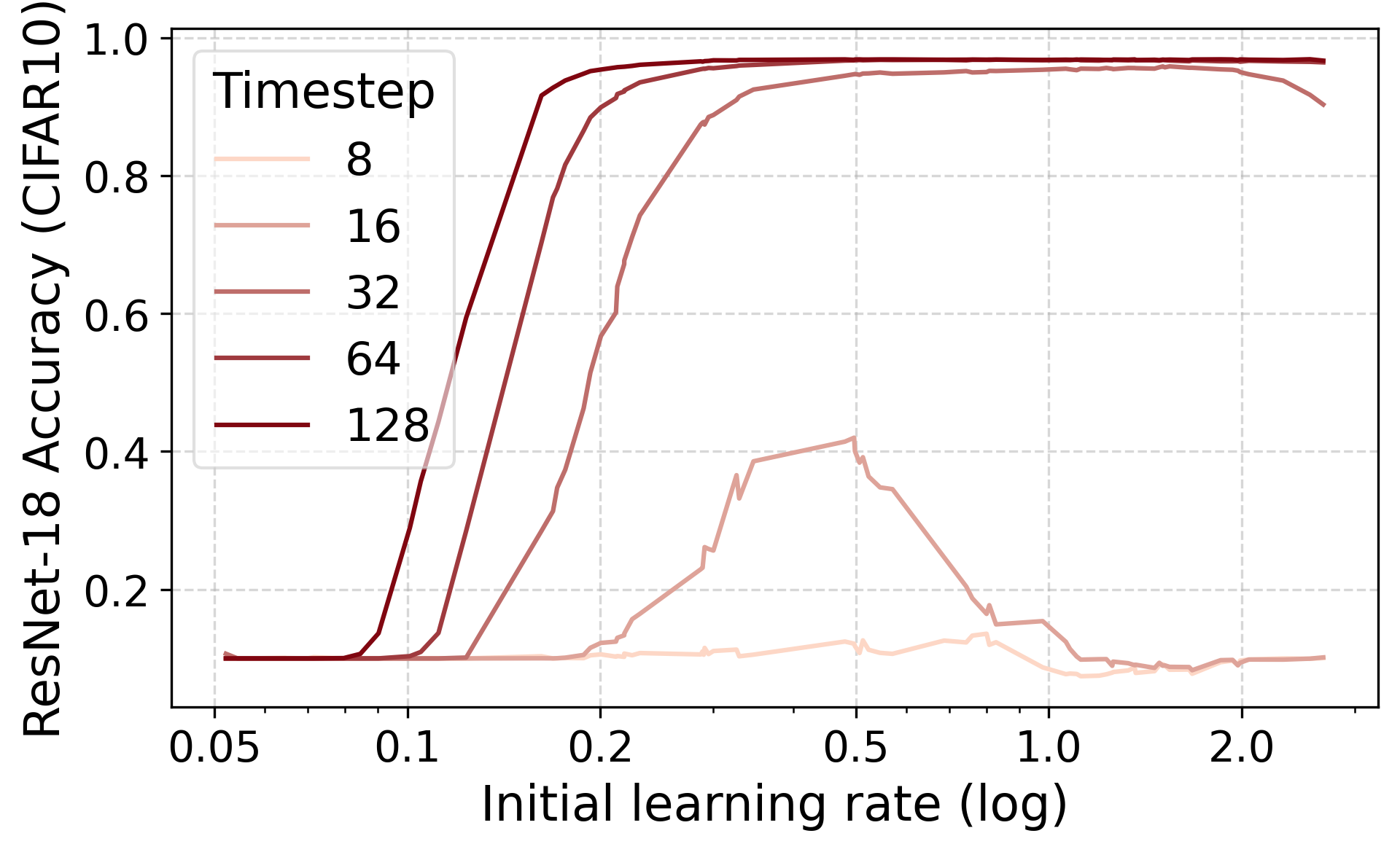}
        \label{fig:ablation_initlr_signsgd}
    }
    \hfill
    \subfigure[Effect of LR on subgradient-based neuron,  inverse schedule.]{
        \includegraphics[width=0.45\linewidth]{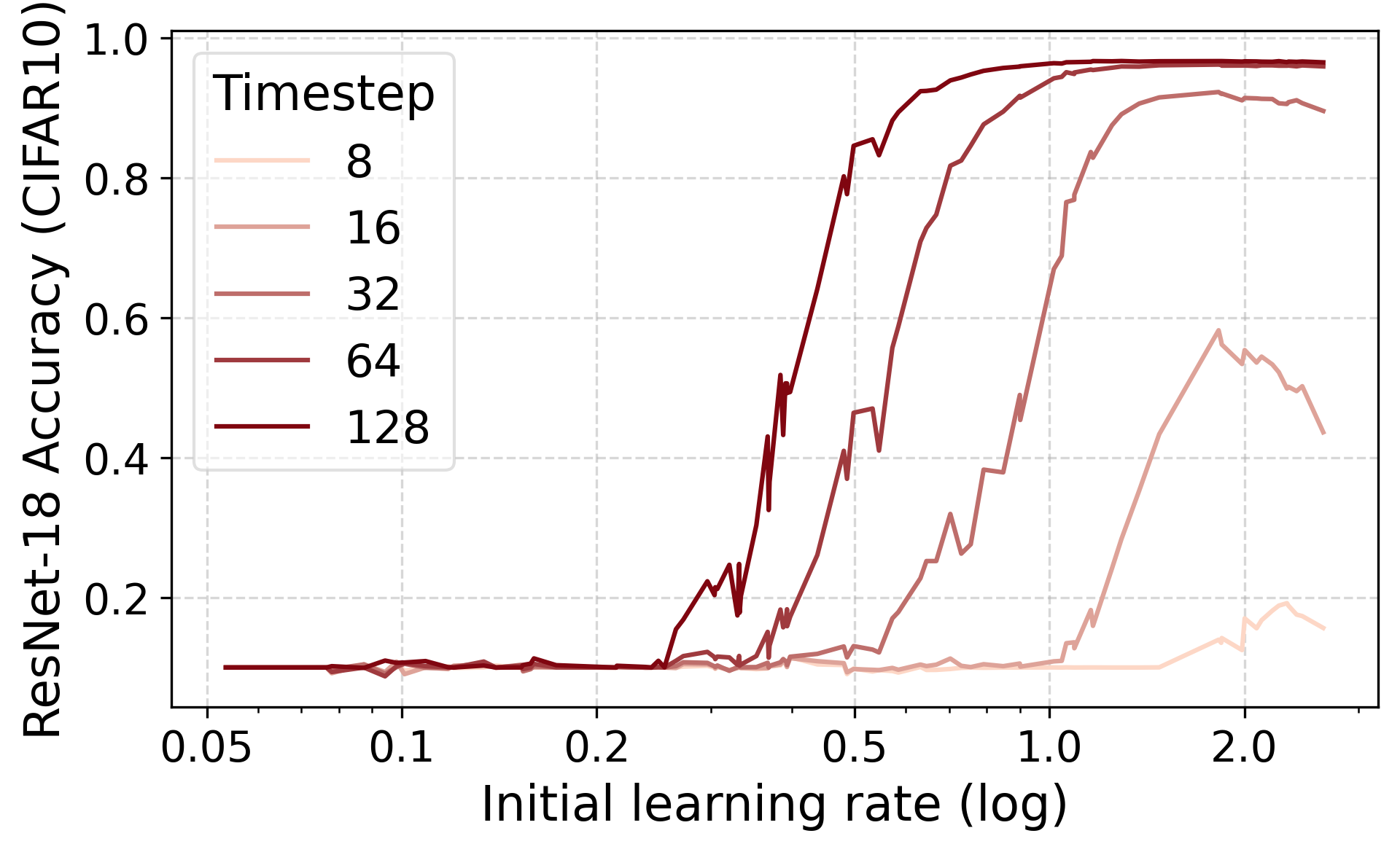}
        \label{fig:ablation_initlr_subgradient}
    }
    \vfill    
    \subfigure[Effect of LR and decay factor on signGD-based neuron, exponential schedule.]{
        \includegraphics[width=0.45\linewidth]{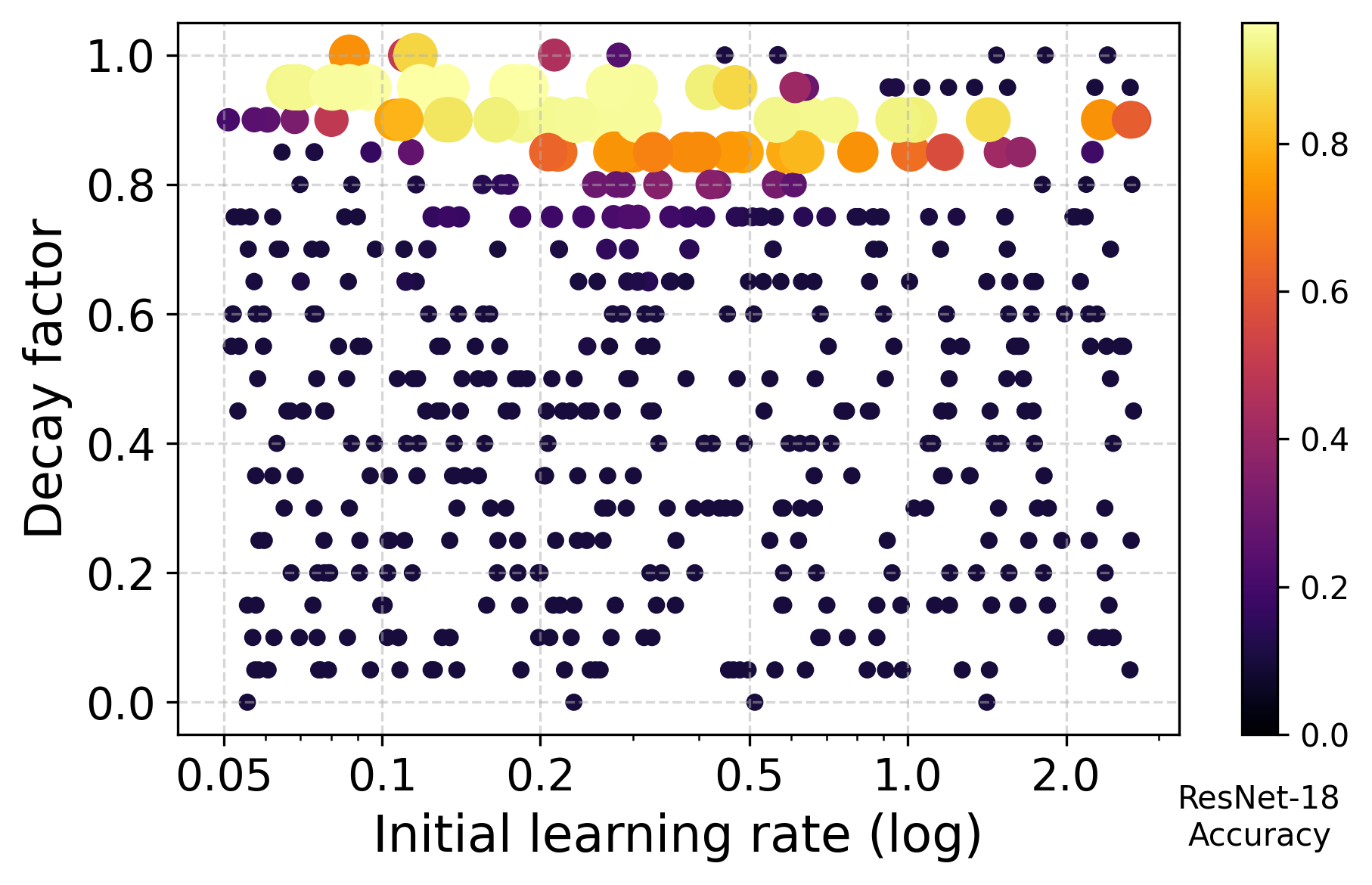}
        \label{fig:ablation_decayfactor_exp_signsgd}
    }\hfill    
    \subfigure[Effect of LR and decay factor on subgradient-based neuron, exponential schedule.]{
        \includegraphics[width=0.45\linewidth]{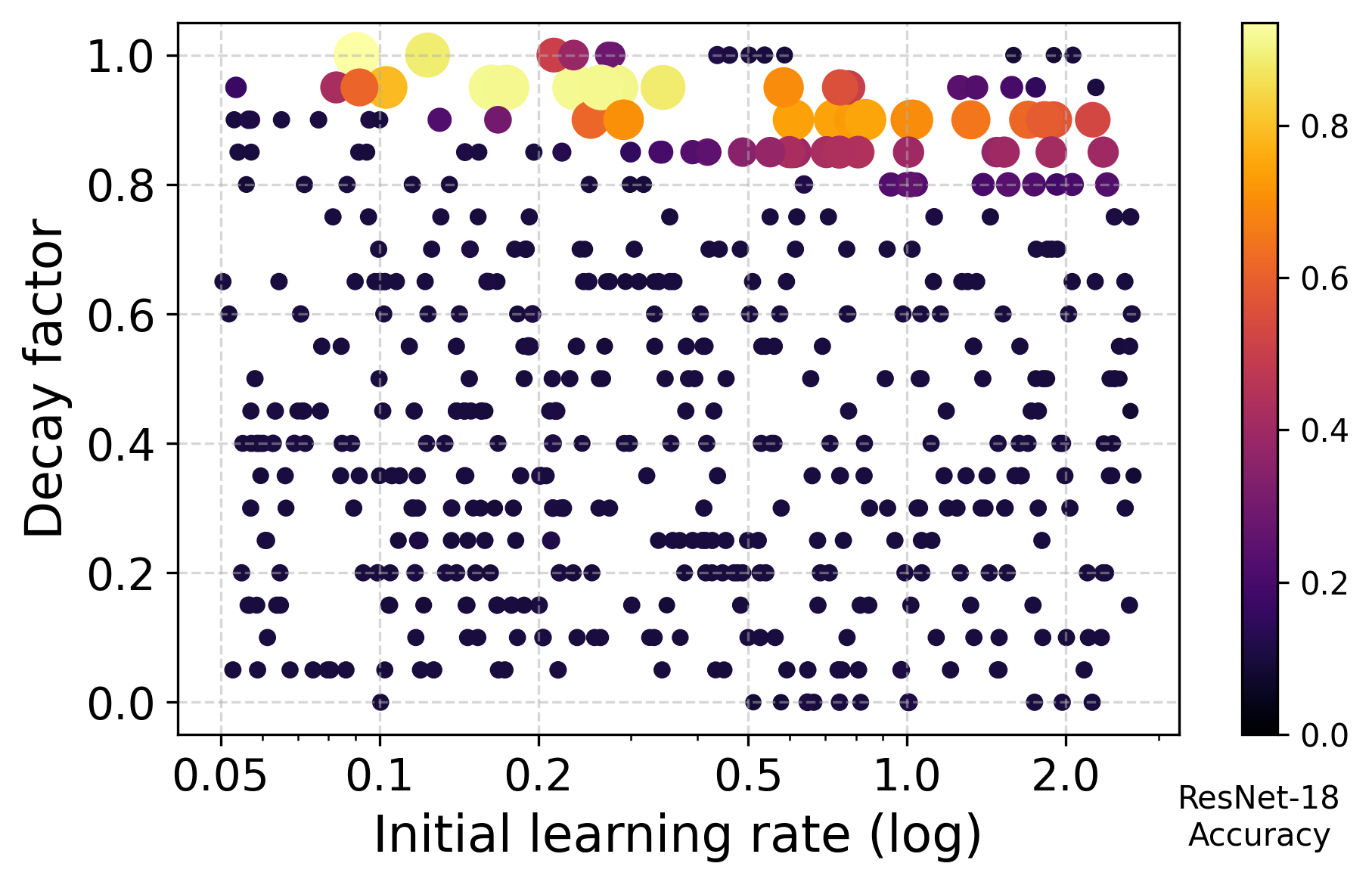}
        \label{fig:ablation_decayfactor_exp_subgradient}
    }
\vskip -0.1in
\caption{Ablation studies on hyper-parameters of signGD-based  (Def.~\ref{def:signgd_dynamics}) and subgradient-based neuronal dynamics (Def.~\ref{def:subgradient_based_neuron}). SNN inference accuracy is measured with ResNet-18 model~\cite{fang2021deepresidual} on CIFAR-10 dataset~\cite{krizhevsky2009cifar}.}
\label{fig:ablation_study}
\end{center}
\vskip -0.25in
\end{figure}
 \subsection{Ablation Study}
 
 \textbf{Effect of Neuronal Dynamics.} We first validate the acceleration effect of signGD-based dynamics in ANN-to-SNN conversion. In Figure~\ref{fig:design_choice}, we test two variables: the choice of neuronal dynamics and the normalization on ReLU layer. Results show that signGD-based neuron approximates noticeably faster than the subgradient-based neuron. Although the  normalization helps, it is auxiliary for our signGD-based neuron, different from  subgradient-based neuron. 
 The normalization is critical for the subgradient-based neuron since it approximates the clipped ReLU, not the true ReLU.

\textbf{Input Encoding Scheme.} 
Figure~\ref{fig:ablation_inputencoding} compares the effect of three input encoding schemes (See  Appendix~\ref{sec:signsgd_spike_encoding}.) on the SNN accuracy. To observe the impact of stochasticity, we also vary the parameter $s$ of the stochastic encoding. Results show that the performance gap is marginal between float and deterministic encoding, but it is noticeable with the stochastic encoding.
Marginal performance drop is desirable with deterministic encoding, since it is easier and more efficient to hardware-implement than float encoding. 

 \textbf{Case study: Inverse schedule.}
 To generalize the acceleration capability of our signGD-based dynamics, we fix the learning rate schedule and compare the converted SNN performance of signGD-based and subgradient-based neurons. As a case study, we first test an inverse LR schedule. In Figure~\ref{fig:ablation_initlr_signsgd}-\ref{fig:ablation_initlr_subgradient}, we measure the time-evolution of inference accuracy varying the initial learning rate (X-axis).  
 Figures show that compared to subgradient-based neuron, our signGD-based neuron provides (i) faster convergence, in a same initial learning rate, of converted SNN's inference accuarcy towards ANN, and (ii) wider coverage of initial learning rate that leads to high-performance SNN.

\textbf{Case study: Exponential schedule.} We also compare two neuronal dynamics in the case of an exponential schedule.
We evaluate the joint effect of two hyper-parameters, the initial learning rate and the decay factor. 
Similar to the inverse schedule case, Figures~\ref{fig:ablation_decayfactor_exp_signsgd}-\ref{fig:ablation_decayfactor_exp_subgradient} show that signGD-based neurons have a wider coverage of hyper-parameter combinations that achieve high inference accuracy. Also, the best accuracy is 
 $>4\%$ higher with signGD-based neuron;
The best accuracy in time-step $T=32$ is $91.4\%$ for subgradient-based neuron at $\eta(t) = 0.163\times(0.95)^t$ and $96.2\%$ for signGD-based neuron at $\eta(t) =0.177\times(0.95)^t$. 

\subsection{Visualization of Layer-wise Optimization Flow}
To qualitatively demonstrate the effect of our neuronal dynamics, Figure~\ref{fig:error_propagation} in the Appendix shows the time-evolution of layer-wise ANN-SNN activation error. 
The figure shows that our signGD-based neuron (i) reduces the layer-wise conversion error more rapidly and (ii) accumulates less conversion error through layers, compared to the subgradient-based neuron. In each layer, the error of signGD-based neurons converges faster than the subgradient-based neurons with the same learning rate schedule. In Figure~\ref{fig:visualize_if_rate} and \ref{fig:vis_exponential}, ANN-to-SNN conversion with subgradient-based neurons accumulates error through layers.
In contrast, the time-evolution of layer-wise error in signGD-based neurons is relatively uniform through layers. 
\section{Conclusion}
This paper constructs a novel optimization-theoretic framework on the discrete dynamical system of spiking neurons. Based on the framework, we develop new signGD-based neuronal dynamics that approximate various nonlinearities beyond ReLU, e.g., GELU, LeakyReLU, max pooling, and layer normalization. Using our neuron, we (i) successfully convert new high-performance DNN architectures for the first time, e.g., ConvNeXt, MLP-Mixer, and ResMLP,
and (ii) achieve state-of-the-art accuracy of converted ResNet34 and VGG16 in low time steps $T = 64$. We list our discussions and future works in the Appendix~\ref{sec:discussions_futureworks}.

\section*{Acknowledgements}
This work was partly supported by National Research Foundation of Korea (NRF) grant (No. 2022R1A2C3008495) and Institute of Information \& communications Technology Planning \& Evaluation (IITP) grant [NO.2021-0-01343-004, Artificial Intelligence Graduate School Program (Seoul National University)] funded by the Korean government (MSIT). We thank Minjae Kim for valuable feedback on writing the manuscript.

\section*{Impact Statement}
This paper presents a work that aims to broaden the applicability of technical advances in machine learning and biologically inspired artificial intelligence. A key advantage of SNN, which is the resource efficiency, would benefit a wide range of real-world AI applications. Resource-constrained mobile and IoT applications would especially benefit from this work by enabling high-performance intelligent inference with low-power pervasive sensing. Such an efficient AI inference can be widespread for socially underprivileged people who cannot afford large energy bills of embedded AI devices, e.g., immigrant parents getting AI assistance to nurture the linguistic skills of an under-developed child~\cite{kwon2022captivate}, or an AI chatbot to help the isolated elderly cope with their loneliness~\cite{valtolina2021charlie}. Another important societal advantage of energy-efficient AI is that it leads to environmental sustainability. The super-linear growth of AI techniques also scaled its ecological impact as an energy footprint~\cite{wu2022sustainableai}. Energy footprint directly leads to carbon emissions, i.e., carbon footprints, negatively impacting the atmosphere. SNN, with its orders-of-magnitude energy efficiency, can make DNN more environmentally sustainable while preserving the model's performance with this paper's approach.

Our theoretical findings may contribute to understanding the information dynamics of biological neuron models with refractory periods. Optimistically, this would foster neuroscientific studies in the human brain to develop diverse accessibility applications for the neurologically disabled, e.g., brain-computer interface for the motion-disabled~\cite{gao2003bci}, or a thought-controlled device to restore hand grasps of a paralyzed hand~\cite{pfurtscheller2003thought}. As a potential ethically negative impact, scientific advances in the overall neuroscience domain may enable decoding electronic signals of the human brain in the far future, thereby reading a person's mind. However, developing such a dangerous application is clearly out of the scope of this paper.
\bibliography{references}
\bibliographystyle{icml2024}

\newpage
\appendix
\onecolumn
\section{Related Works: Review on SNN Training.} 
\label{app:snn_training}
Training a SNN from scratch is an alternative of ANN-to-SNN conversion to build a high-performance SNN. Learning methodologies of SNN largely categorizes into two groups: (i) biologically plausible synaptic plasticity and (ii) backpropagation methods~\cite{li2021freelunch}. Spike timing-dependent plasticity(STDP), a representative synaptic plasticity mechanism, updates a synaptic weight based on the spike timing difference of two connected neurons~\cite{kheradpisheh2018stdp}. STDP falls behind backpropagation methods since it is a local unsupervised learning rule without globally guided error~\cite{Dong2022unsupervisedstdp}. Backpropagation methods for rate coding unfolds SNN backwards through entire time-steps~\cite{fang2021deepresidual, Zhu2022eventdriven}. To address the non-differentiablility of spiking mechanism, a common strategy is to use a surrogate gradient, which is a smooth relaxation of the spiking mechanism~\cite{Neftci2019SurrogateGL}. 
Still, they are computationally expensive and memory-intensive since they unfold a network backwards the entire time-steps~\cite{li2021freelunch}. A line of works efficiently approximate the costly backpropagation process with mixed-mode differentiations~\cite{zenke2021braininspiredlearning} or eligibility traces~\cite{bellec2020solutionLD, frenkel2022reckon}. 
Backpropagation methods for temporal coding computes backward only up to the last spike's timing, and hence it is event-driven and more resource-efficient~\cite{Zhu2022eventdriven}. 
Among these methods, surrogate gradient methods achieve state-of-the-art performance. Unfortunately, their performances still has a large margin with trained DNNs,~\cite{fang2021deepresidual,zhou2022spikformer,Yao2022AttentionSNN,Zhu2023SpikeGPT}, due to SNN-friendly architectural modification and gradient mismatch~\cite{Wang2023AdaptiveSmoothingGradient}.
\section{Discussions \& Future Works}
\label{sec:discussions_futureworks}
\textbf{Biophysical Neurons.}
Our theoretical framework explains the behavior of integrate-and-fire models. Since these models are a simplified discretization of biological neurons, we may extend our framework to continuous and biophysically-detailed models which incorporate known physiology and synaptic anatomy of a neuron cell~\cite{chizhov2021biophysicallydetailed}. 
For example, Hodgkin-huxley cell model~\cite{hodgkin1952huxleyneuron} is an experimentally discovered mathematical model of how biological neurons initiate and propagate spikes.
It is a continuous dynamical system of four ordinary differential equations over membrane potentials and conductances that cannot be solved analytically~\cite{gerstner2014neuronaldynamcis}. 
This model is a basis of biologically plausible neuron models~\cite{izhikevich2003simplemodel}.
To explain biologically realistic models, it may be helpful to extrapolate our framework to continuous dynamics. 



\textbf{Hardware Implementability.}
Our signGD-based neuron may need more complex hardware to realize than integrate-and-fire models.
Studies in efficient neuromorphic hardwares largely focus on LIF neuron~\cite{yang2020lifhw,davies2018loihi,akopyan2015truenorth}.
Computational neuroscientists invest on simulating more complex and biologically realistic models, e.g., SpiNNaker~\cite{furber2014spinnaker,mayr2019spinnakerv2}, 
and make them resource-efficient~\cite{ward2022beyondlif, rhodes2020realtimecortical}
or introduce new electrical elements for diversity~\cite{guo2021adjustablelif}.
Our neuronal dynamics is much more simple than biophysically detailed models~\cite{hodgkin1952huxleyneuron}, but slightly more complex than LIF neuron. For example, our neuron requires two or more internal variables, and GELU or layer norm approximation requires complex operators over internal variables, e.g., exponential or square function.  
Exponential operator is used in the computational neuroscience works to simulate exponential integrate-and-fire models~\cite{fourcaud2003eeif,brette2005adaptiveeif}.
 Square function is used in the izhikevich model~\cite{izhikevich2003simplemodel}. 
Thus, these operators are implemented in SpiNNaker~\cite{spinnaker}, and their efficient hardware implementations are also studied~\cite{srinivasan2022subthreshold,fang2022memristiveizhi}.
Note that operator requirements are dependent to spike mechanism; For instance, ReLU or LeakyReLU approximation with our neuron does not need such operators.

\textbf{Reset-to-Zero.} We focus on the reset-by-subtraction mechanism and does not explain the reset-to-zero mechanism. Reset-by-subtraction keeps the leftover information after thresholding operation, hence accelerating the approximation speed~\cite{rueckauer2016theory}.
In contrast, reset-to-zero discards the leftover information at reset~\cite{liu2022spikeconverter}, causing an unaccountable error for ReLU approximation~\cite{Rueckauer2017ConversionOC}. Two reset mechanisms are both valid discretizations of thresholding operation in continuous dynamics (See Appendix~\ref{sec:continuous_lif_dynamics}.), since both converge taken the continuum limit of time. Thus, it would be intriguing to interpret the reset-to-zero mechanism in an optimization-theoretic perspective, but gradient-based methods may not suit to formulate its behavior. 

\textbf{SNN Training.} We plan to investigate how our framework extends to a SNN training. Specifically, we newly interpret backpropagation~\cite{rumelhart1986backprop} and forward-forward algorithms~\cite{hinton2022forwardforward} with our optimization-theoretic perspective of neuronal dynamics. Conversely, we may analyze synaptic plasticity-based SNN learning rules~\cite{kheradpisheh2018stdp} with our theory, and discover methods to advance their training performance by applying state-of-the-art techniques of ANN training. It would also be appealing to apply short-term plasticity, e.g., habituation~\cite{glanzman2009habituationaplysia,zuo2017habituationpervoskite}, with our framework to facilitate few-shot adaptation of SNN.

\begin{figure}[!ht]
\vskip 0.0in
\begin{center}
    \subfigure[ReLU approximation.  $\eta(t) = \frac{1}{t+1}$]{
        \includegraphics[width=0.23\linewidth]{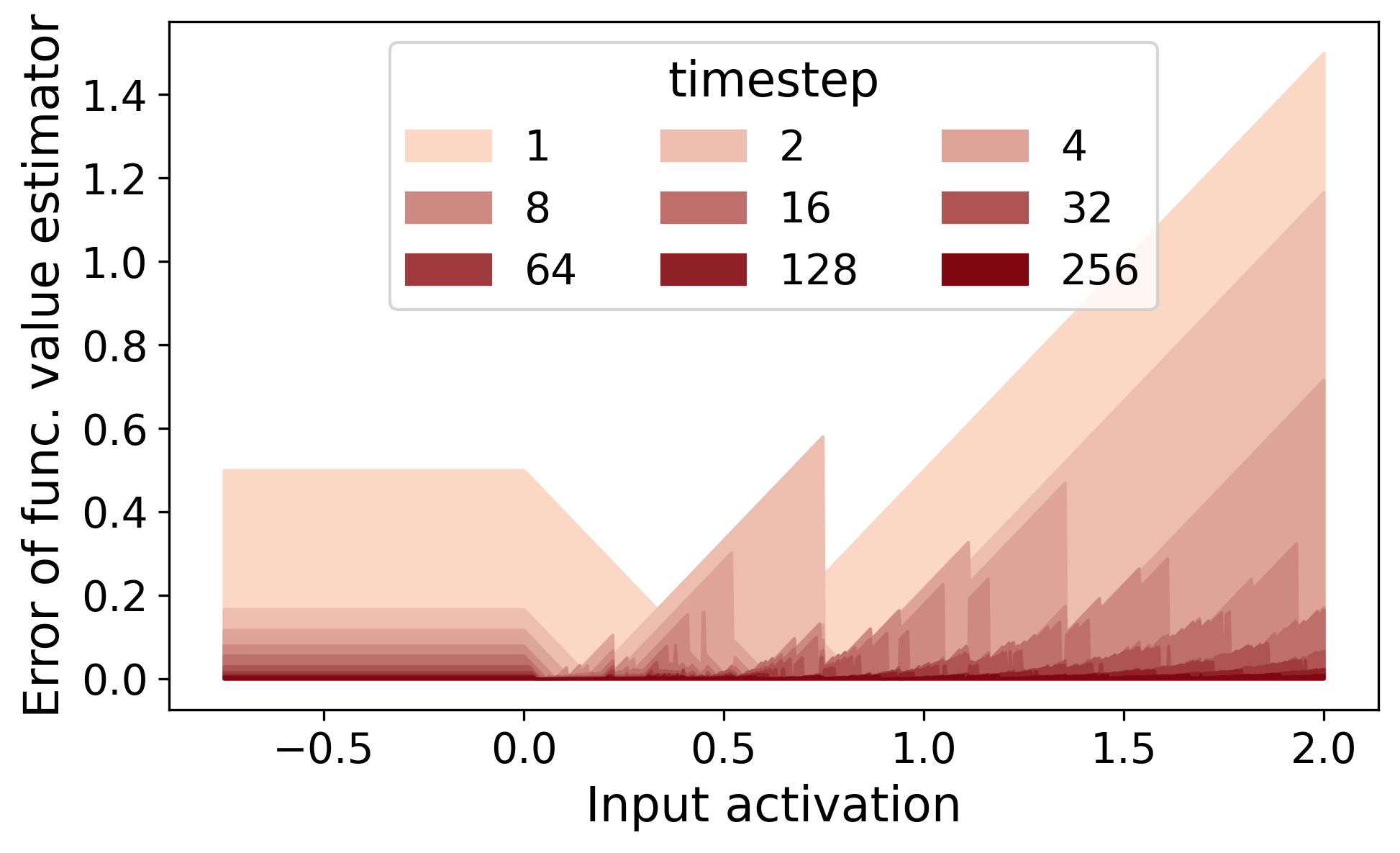}
    }
    \hfill
    \subfigure[ReLU approximation.  $\eta(t) = (0.95)^t$]{
        \includegraphics[width=0.23\linewidth]{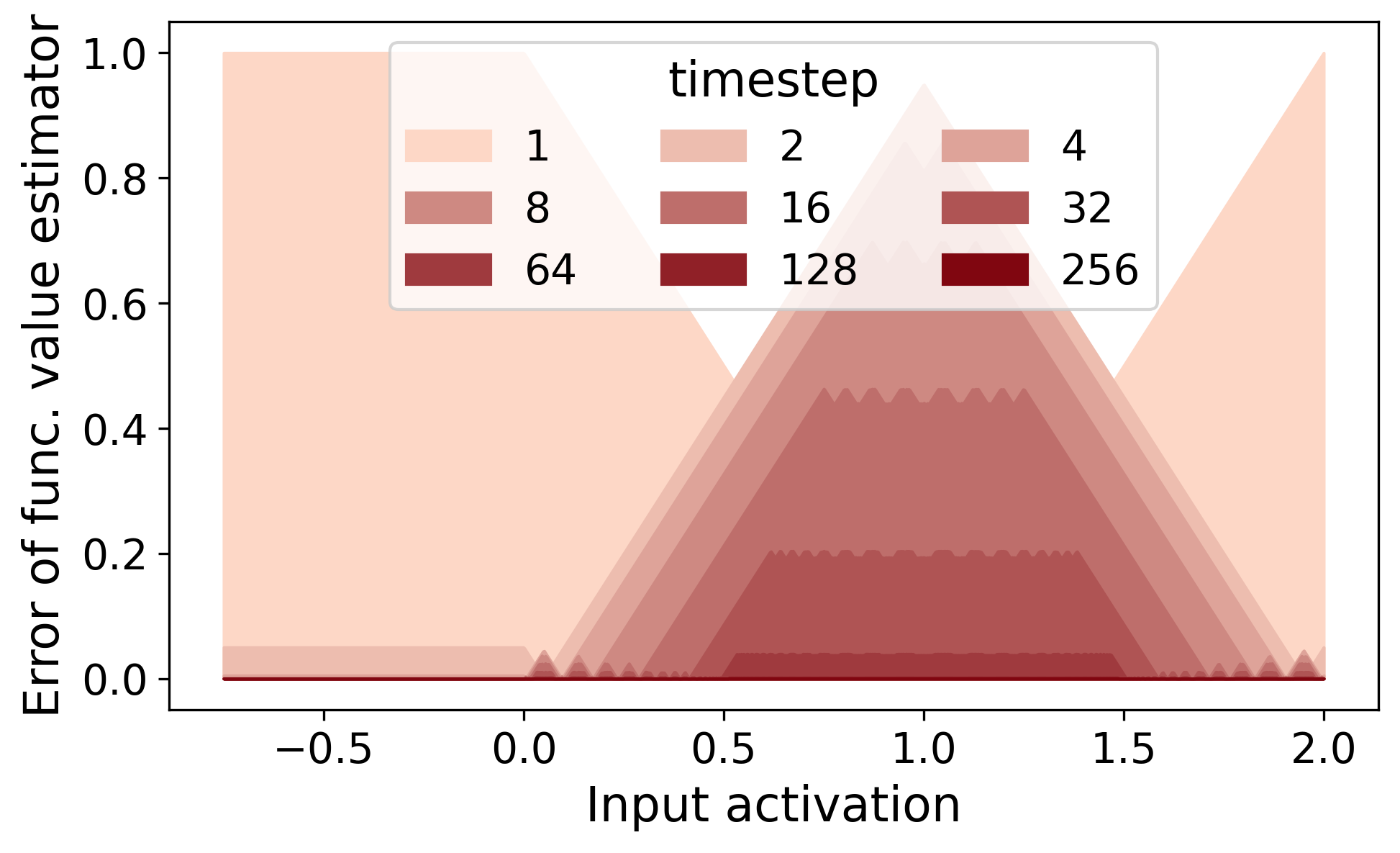}
    }\hfill
    \subfigure[LeakyReLU ($\alpha$=0.1) approx. $\eta(t) = \frac{1}{t+1}$]{
        \includegraphics[width=0.23\linewidth]{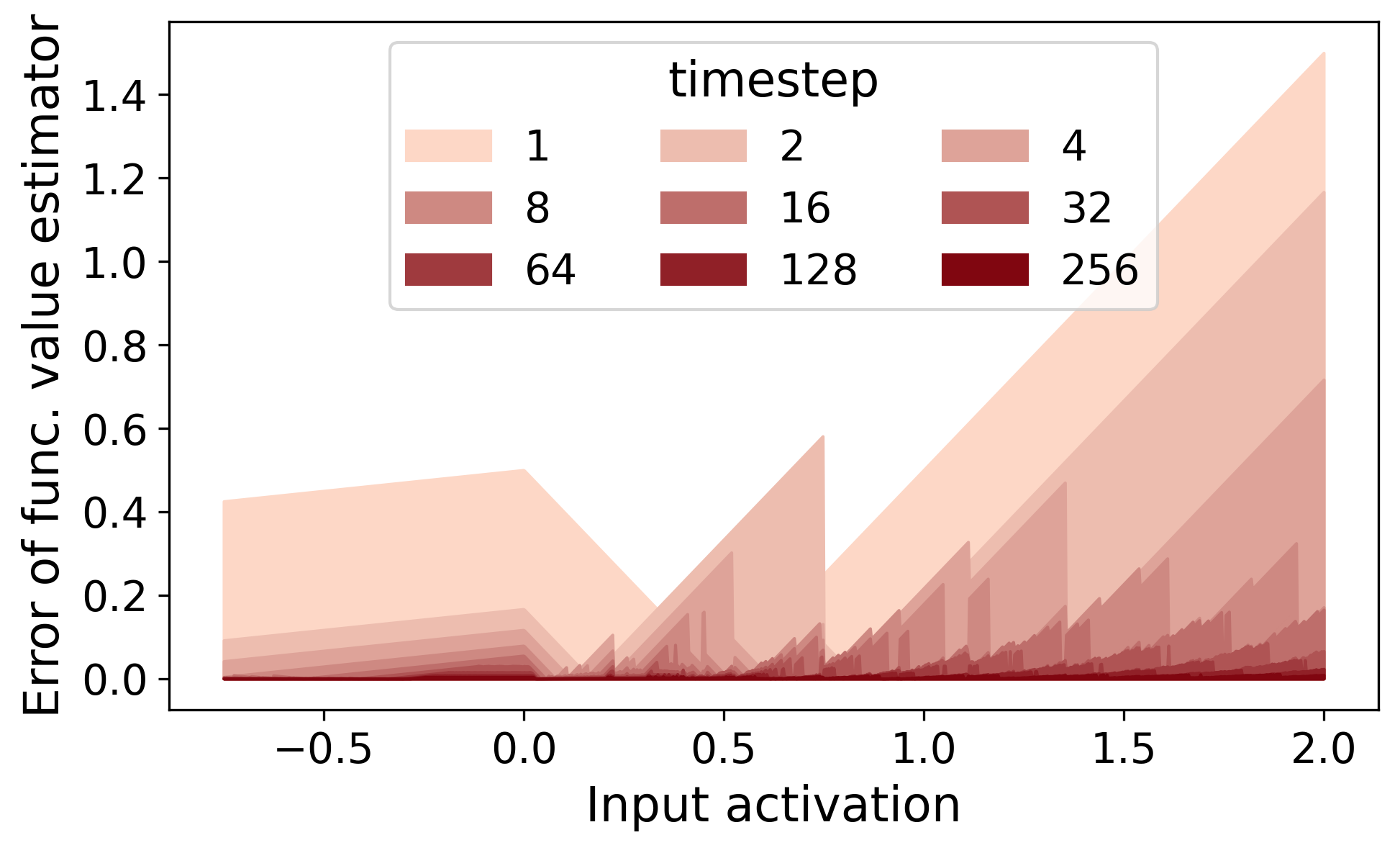}
    }
    \hfill
    \subfigure[GELU approximation.  $\eta(t) = \frac{1}{t+1}$]{
        \includegraphics[width=0.23\linewidth]{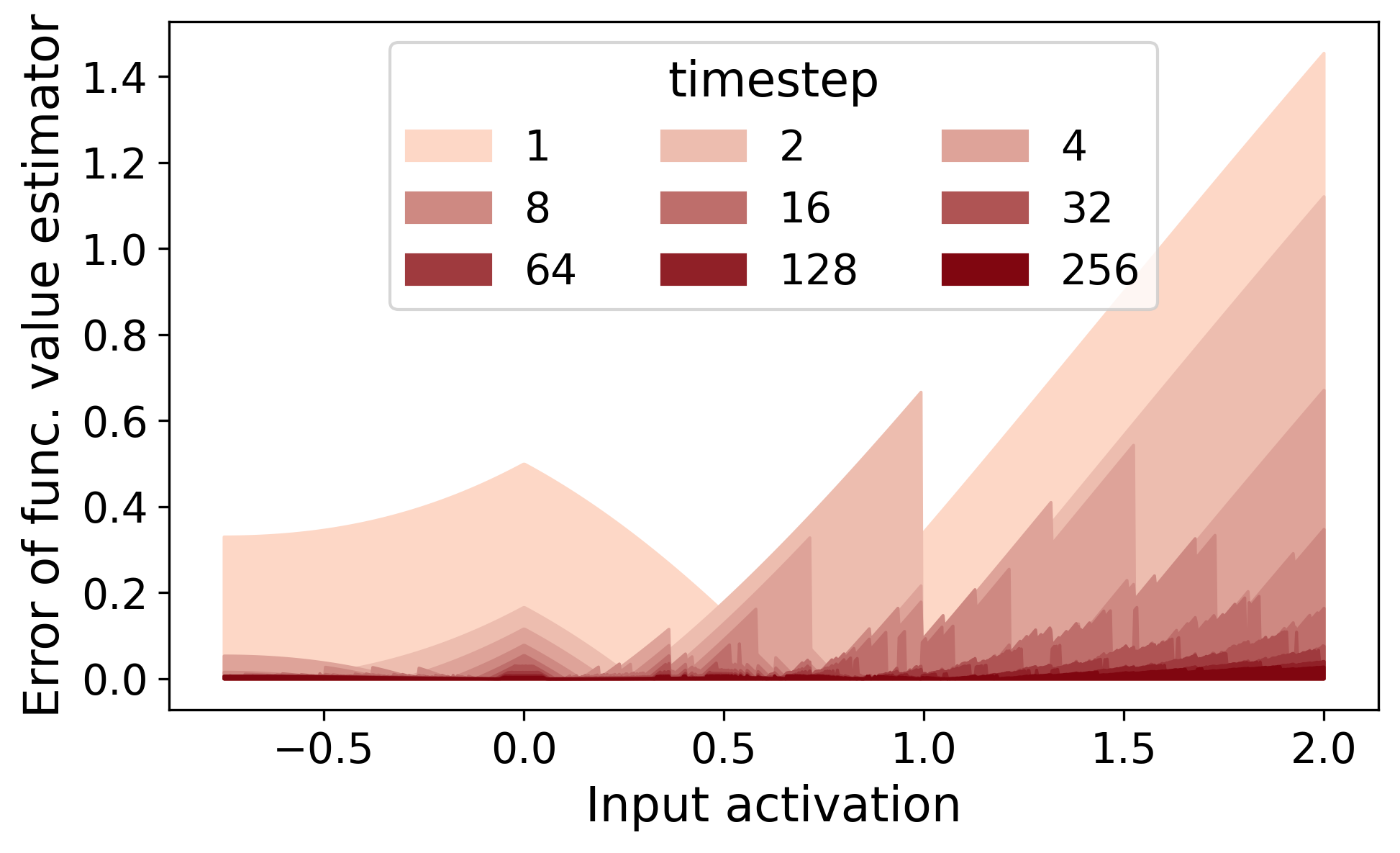}
    }
\vskip -0.1in
\caption{SignGD-based neurons approximating unary nonlinear functions, based on Corollary~\ref{cor:unary_signsgd}. We measure time-evolution of error between the spike neuron output and the nonlinear function output. Input value is deterministically spike-encoded with the Algorithm~\ref{alg:det_spike_encoding_signGD}.}
\label{fig:unary_nonlinearity_signgd}
\end{center}
\vskip -0.2in
\end{figure} 
\begin{figure}[!ht]
\vskip 0.2in
\begin{center}
    \subfigure[Max Pooling approximation with Corollary~\ref{cor:maxpool_signsgd}.  $\eta(t) = \frac{1}{t+1}$.]{
        \includegraphics[width=0.3\linewidth]{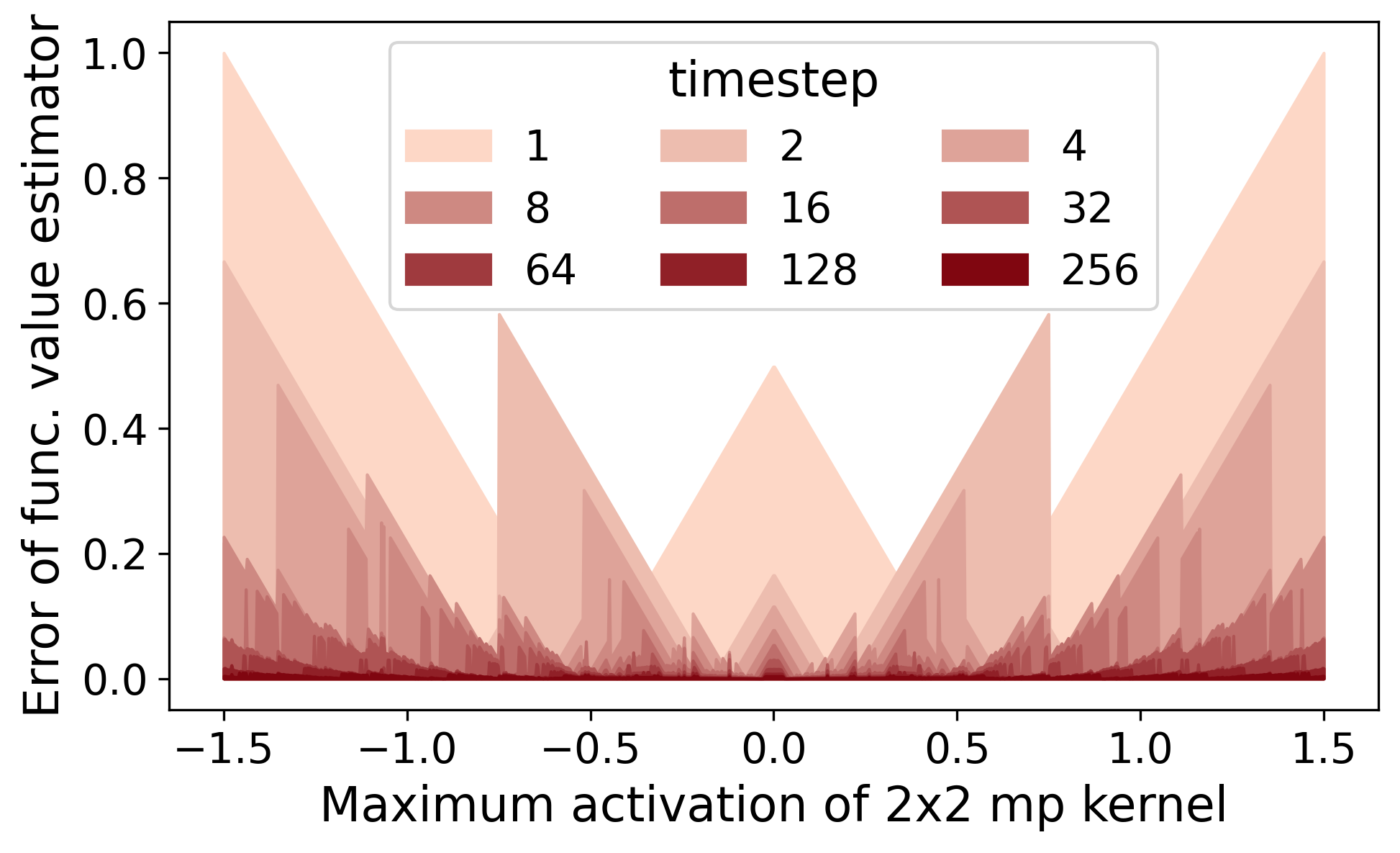}
        \label{fig:signsgd_maxpool}
    }\hfill
    \subfigure[LayerNorm approximation with Corollary~\ref{cor:square_signsgd},~\ref{cor:sqrt_inverse_mul_signsgd}.  $\eta(t) = \frac{1}{t+1}$]{
        \includegraphics[width=0.3\linewidth]{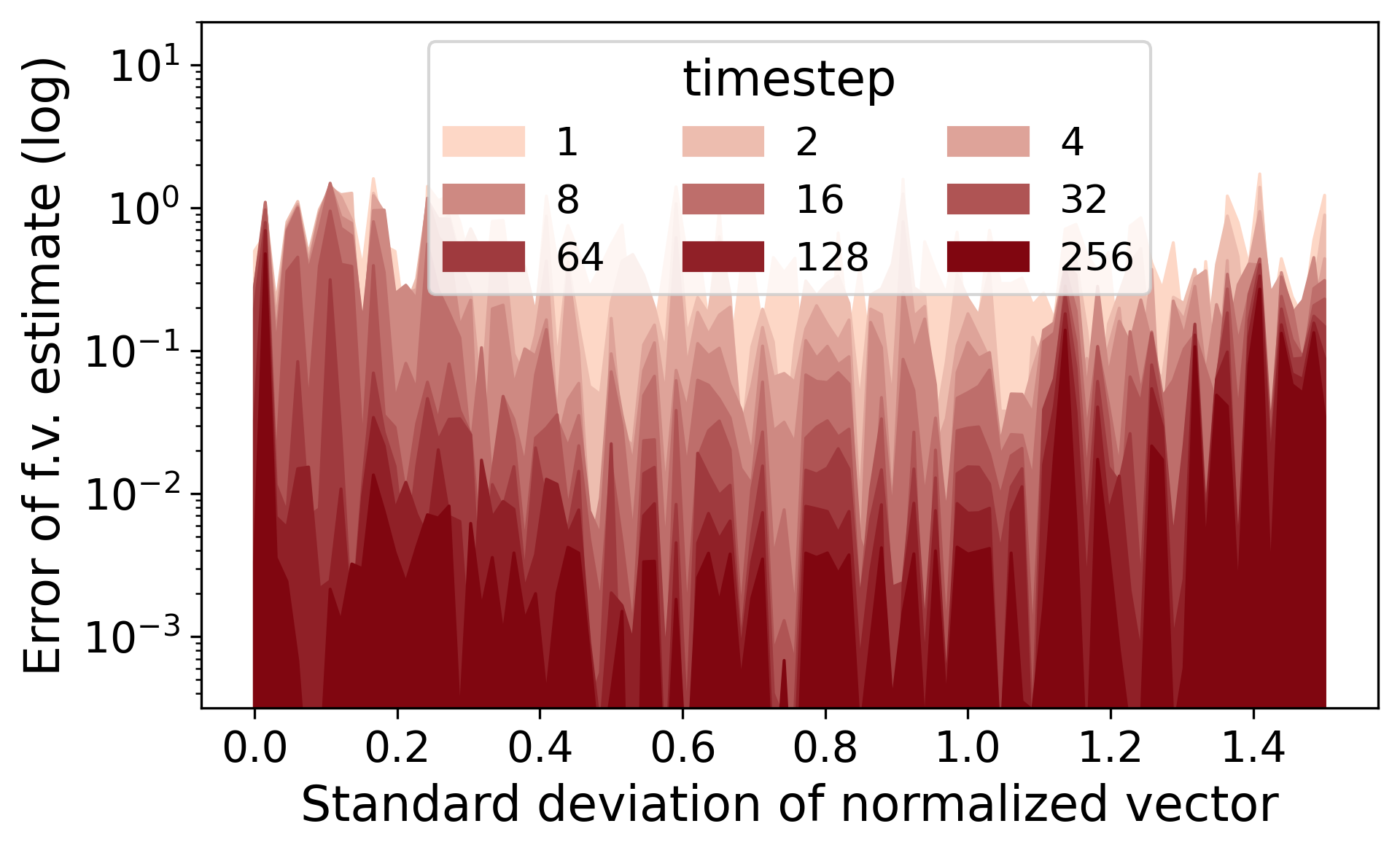}
        \label{fig:signsgd_layernorm}
    }\hfill
    \subfigure[LayerNorm approximation plot as in Fig.~\ref{fig:signsgd_layernorm}, with log-scaled X-axis $(10^{-7}, 200)$ ]{
        \includegraphics[width=0.3\linewidth]{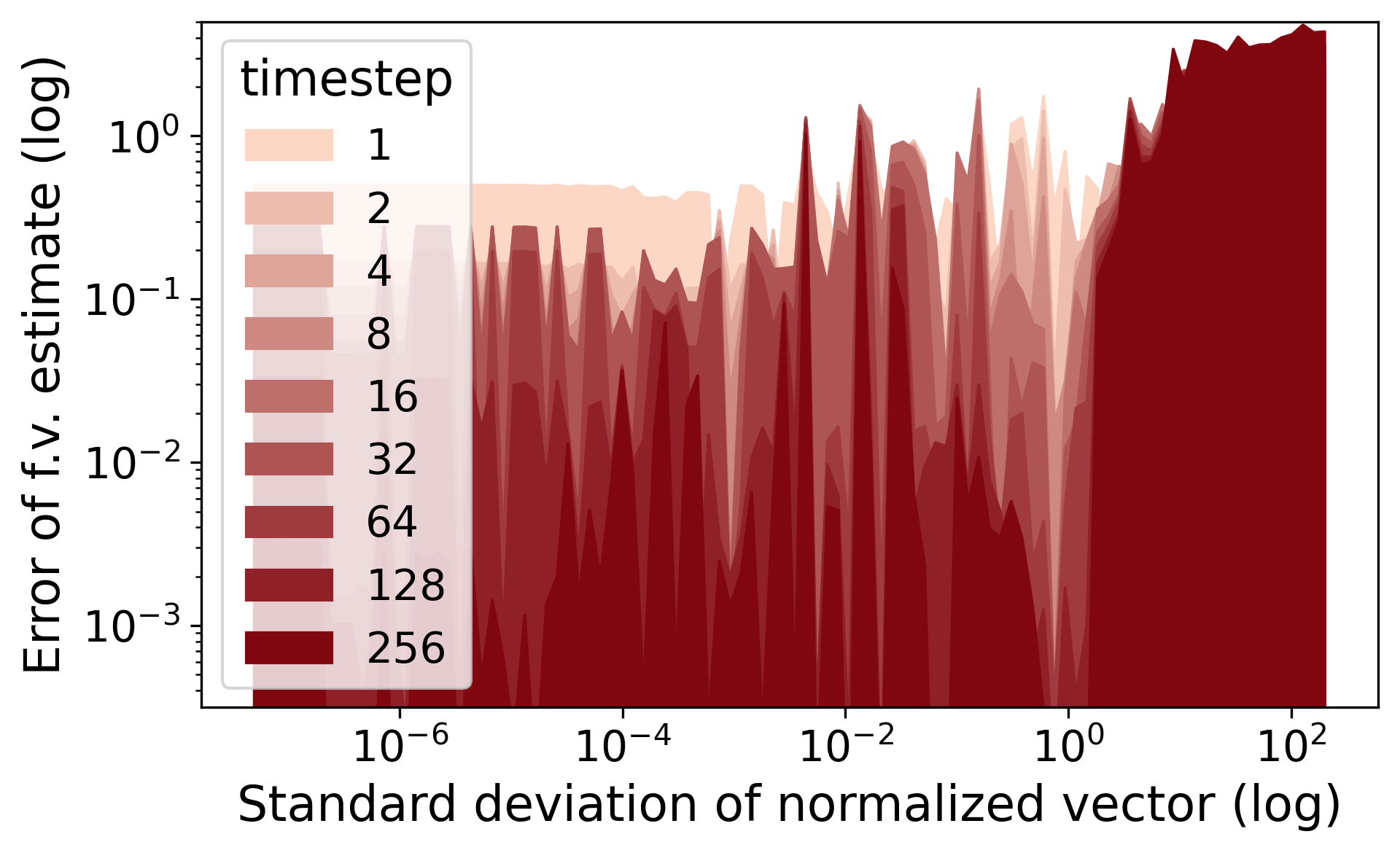}
        \label{fig:signsgd_layernorm_logspace}
    }
    \vfill
    \subfigure[Time-evolution of approximation error for distinct nonlinearities, inverse schedule.]{
        \includegraphics[width=0.3\linewidth]{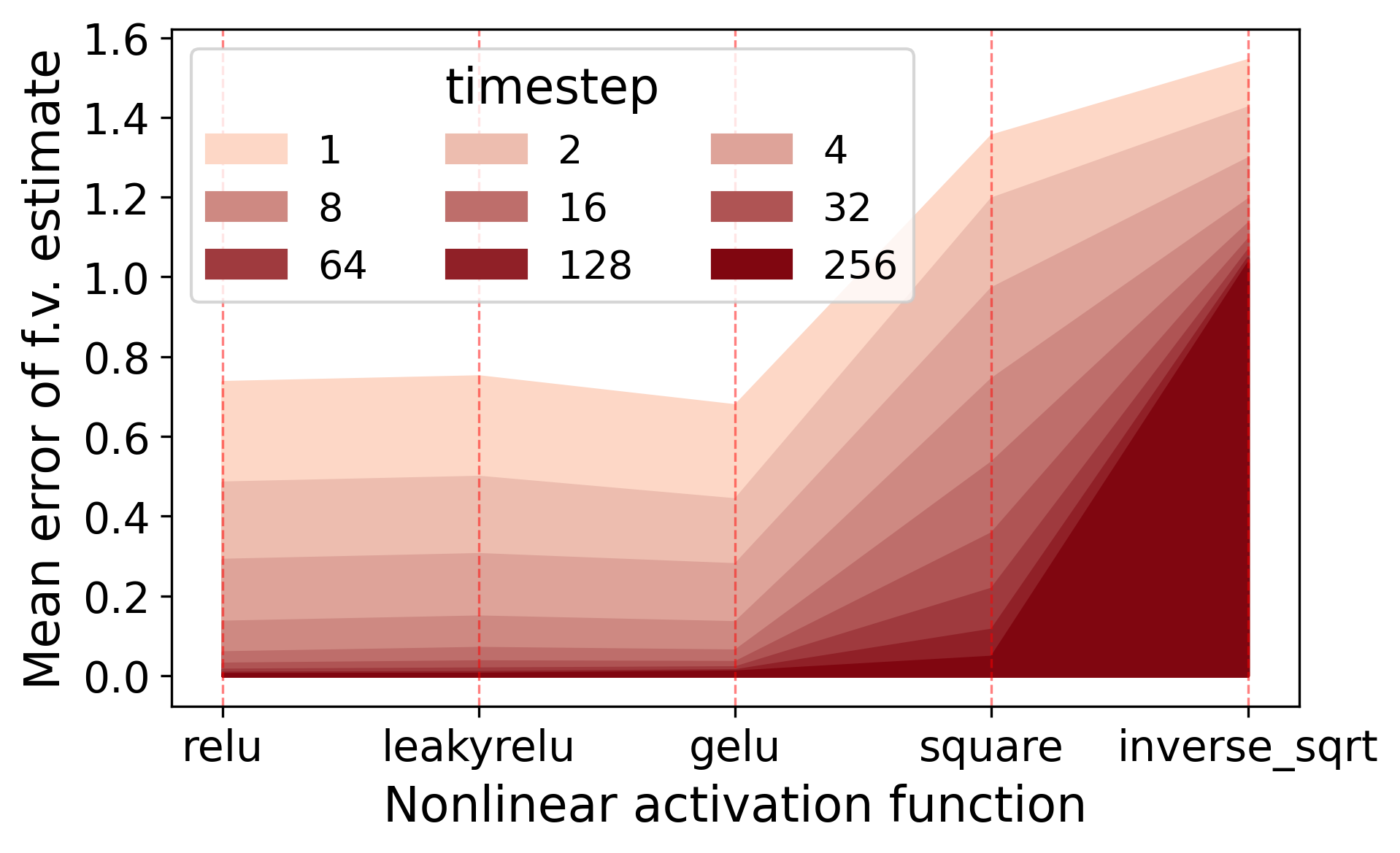}
        \label{fig:signsgd_nonlinearities}
    }\hfill
    \subfigure[Square approximation with Corollary~\ref{cor:square_signsgd}.  $\eta(t) = \frac{1}{t+1}$]{\includegraphics[width=0.3\linewidth]{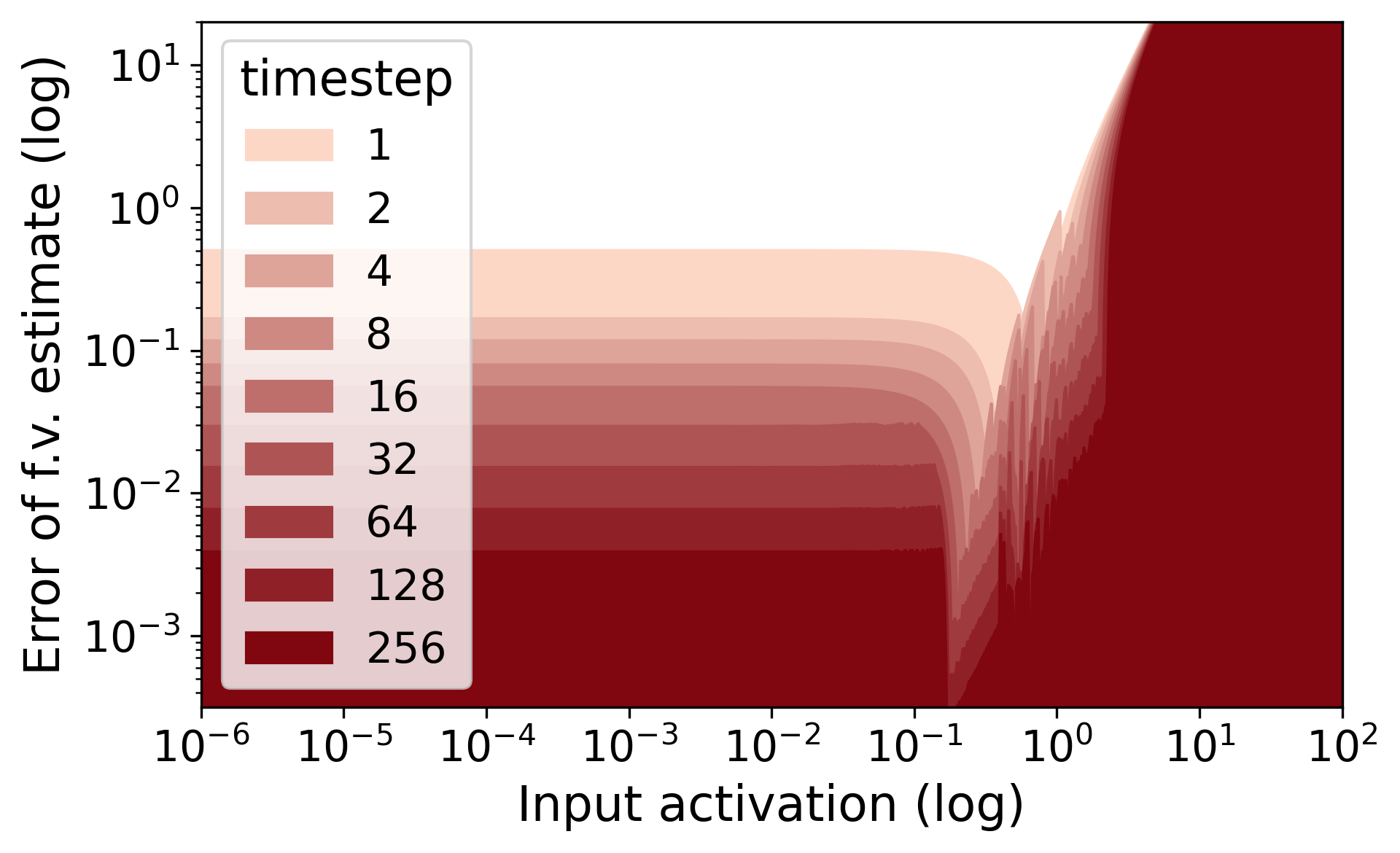}
    \label{fig:signsgd_square}
    }\hfill
    \subfigure[Inverse sqrt approximation with Corollary~\ref{cor:sqrt_inverse_mul_signsgd}.  $\eta(t) = \frac{1}{t+1}$]{
        \includegraphics[width=0.3\linewidth]{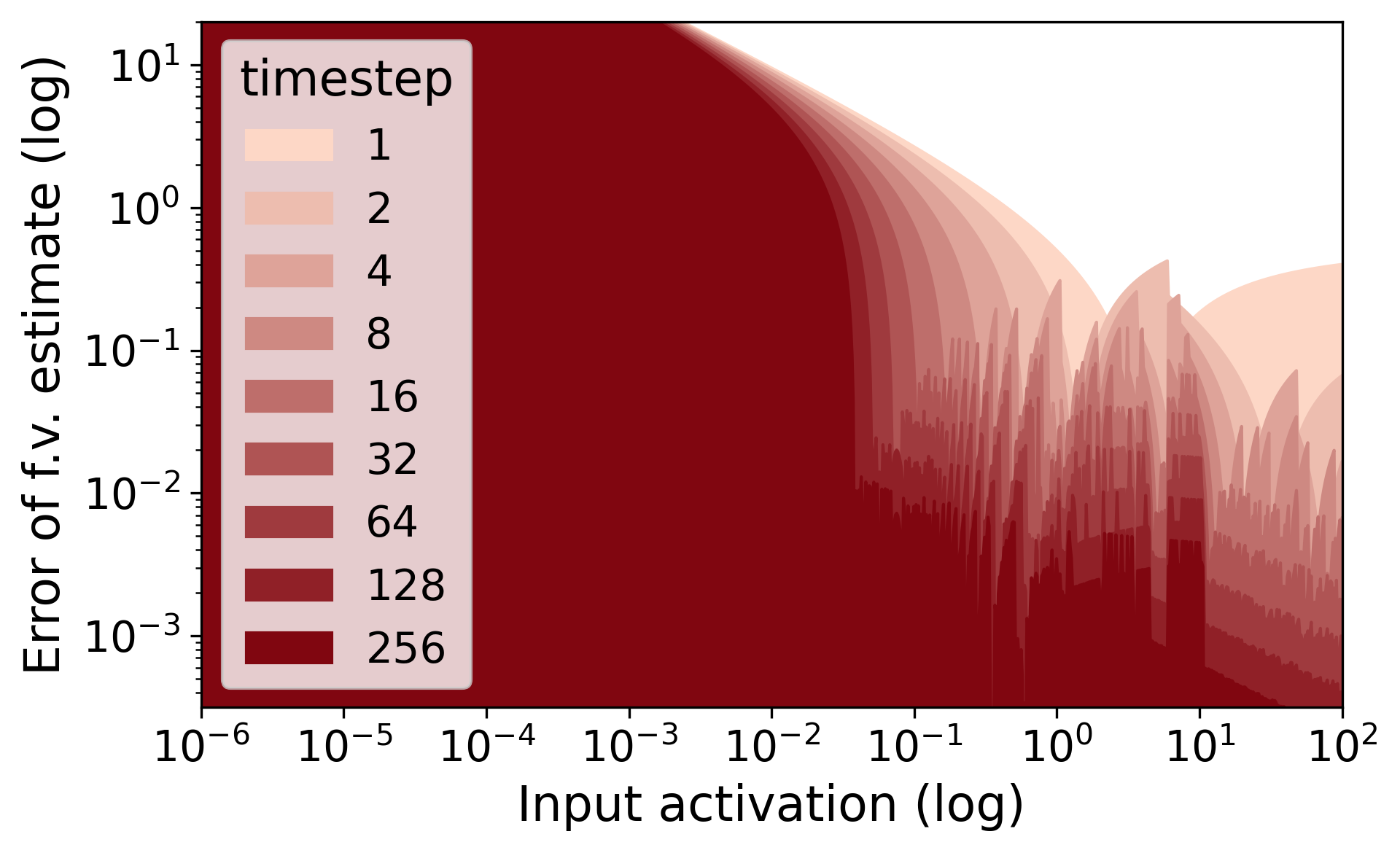}
        \label{fig:signsgd_inversesqrt}
    }
\caption{SignGD-based neurons approximating n-ary nonlinear operators. Input value is float-encoded with the Algorithm~\ref{alg:float_encoding_signGD}.}
\label{fig:binary_nonlinearity_signgd}
\end{center}
\vskip -0.2in
\end{figure}
\section{Empirical validation of multi-operand nonlinearity approximation with our signGD-based neuronal dynamics}
\label{sec:layernorm_speed}
We conduct toy experiments to validate the approximation capability of our signGD-based neuron on n-ary nonlinearities. 
Figure~\ref{fig:signsgd_maxpool} is the time-evolution of error from signGD-based neurons on the max pooling operator. X axis is the maximum $x_{max}$ of 2x2 patch. Rest of the patch elements $x$ are sampled by $x = \min(x_{max}, \epsilon-1)$ where $\epsilon \sim \mathcal{N}(0,1)$. The figure shows that our neuron approximates the max pooling operation through time-steps. Figure~\ref{fig:signsgd_layernorm} shows the time-evolution of error of our spike-based approximation of the layer normalization operator. For the experiment, we sample 10-dim vector from $\mathcal{N}(0,1)$, normalize and scale it to make its standard deviation $\sigma$. We plot the error of the first vector element. The figure shows that the approximation is noticeably slow as $\sigma \to 0$ or $\sigma$ gets larger. Figure~\ref{fig:signsgd_layernorm_logspace} further illustrates this trend with logarithmic $\sigma$ values. We analyze in-depth through Figure~\ref{fig:signsgd_nonlinearities}-~\ref{fig:signsgd_inversesqrt} and find that the slow convergence is because the step size is invariant with the input activation. The step size of signGD solely depends on the learning rate schedule. 
In Figure~\ref{fig:signsgd_square}, precise square approximation for a small input value, e.g., $0.1$, is incompatible with a square approximation of large value like $5,10$. For example, if we decrease the initial learning rate to swiftly approximate $0.01 = (0.1)^2$, then it needs more time-steps to approximate $25 = 5^2$. Also, Figure~\ref{fig:signsgd_inversesqrt} shows that inverse sqrt approximation gets slower or even fails to converge as the denominator, the input value, tends to $0$. It is difficult to design a learning rate schedule that mitigates these approximation issues. In Figure~\ref{fig:signsgd_nonlinearities}, we compare the time-evolution of mean approximation error over five unary nonlinearities, given a Gaussian-sampled input vector of length $1000$. Errors of square and inverse sqrt functions converge slower than others, explaining why it is difficult to approximate the layer normalization. Faster approximation of layer normalization is crucial to accelerate the inference of SNN-converted  MLP-Mixer, ConvNext, and Transformer, but we leave it as a future work.


\begin{algorithm}[tb]
   \caption{Float encoding for signed schedule coding (Definition~\ref{def:signed_schedule_coding})}
   \label{alg:float_encoding_signGD}
\begin{algorithmic}[1]
   \STATE {\bfseries Input:} A real-valued input activation $x$, learning rate schedule $\eta: \mathbb{N} \to \mathbb{R}$, total time-steps $T$
   \STATE {\bfseries Output:} A float train, i.e., time-series of floats, S = $[s(1),s(2), \cdots, s(T)] \in \mathbb{R}^T$
   \STATE Initialize $f \leftarrow 0$, $S \leftarrow [~]$.
   \FOR{$t=1$ {\bfseries to} $T$}
   \STATE$\text{grad} \leftarrow f - x$.
   \STATE$\text{current} \leftarrow 0.5 \cdot (1 + \text{grad})$.
   \STATE$f \leftarrow f - \eta(t)\cdot \text{grad}$.
   \STATE $S$.append(~\text{current}~)  
   \ENDFOR
\end{algorithmic}
\end{algorithm}
\begin{algorithm}[tb]
   \caption{Deterministic spike-based encoding for  signed schedule coding (Definition~\ref{def:signed_schedule_coding})}
   \label{alg:det_spike_encoding_signGD}
\begin{algorithmic}[1]
   \STATE {\bfseries Input:} A real-valued input activation $x$, learning rate schedule $\eta: \mathbb{N} \to \mathbb{R}$, total time-steps $T$
   \STATE {\bfseries Output:} A spike train, i.e., time-series of binary spikes, S = $[s(1),s(2), \cdots, s(T)] \in \{0,1\}^T$
   \STATE Initialize $f \leftarrow 0$, $S \leftarrow [~]$.
   \FOR{$t=1$ {\bfseries to} $T$}
   \STATE$\text{current} \leftarrow \mathbb{H}(f - x)$.
   \STATE$\text{grad} \leftarrow 2 \cdot \text{current} - 1$.
   \STATE$f \leftarrow f - \eta(t)\cdot \text{grad}$.
   \STATE $S$.append(~\text{current}~) 
   \ENDFOR
\end{algorithmic}
\end{algorithm}
\begin{algorithm}[tb]
   \caption{Stochastic spike-based sigmoidal encoding for signed schedule coding (Definition~\ref{def:signed_schedule_coding})}
   \label{alg:stochastic_spike_encoding_signGD}
\begin{algorithmic}[1]
   \STATE {\bfseries Input:} A real-valued input activation $x$, learning rate schedule $\eta: \mathbb{N} \to \mathbb{R}$, total time-steps $T$,  stochasticity $c \in [0,1]$
   \STATE {\bfseries Output:} A spike train, i.e., time-series of binary spikes, S = $[s(1),s(2), \cdots, s(T)] \in \{0,1\}^T$
   \STATE Initialize $f \leftarrow 0$, $S \leftarrow [~]$.
   \FOR{$t=1$ {\bfseries to} $T$}
   \STATE$p \leftarrow \text{Sigmoid}(c \cdot (f - x))$.
   \STATE$\text{current} \sim \text{Bernoulli}(p)$.
   \STATE$\text{grad} \leftarrow 2 \cdot \text{current} - 1$.
   \STATE$f \leftarrow f - \eta(t)\cdot \text{grad}$.
   \STATE $S$.append(~\text{current}~) 
   \ENDFOR
\end{algorithmic}
\end{algorithm}
\section{Neuronal codec for signGD-based neuronal dynamics}
\label{sec:signsgd_spike_encoding}
Neurons in SNN communicate with spike train. spike trains share the same neural coding scheme different from the float(or quantized integer)-based activations used in DNN. We can thus think of a \emph{neuronal codec} which encodes a real number into a spike train, and also decodes a spike train into a a real number. For example, rate coding has multiple encoding algorithms, e.g., float encoding~\cite{li2021freelunch} or poisson encoding~\cite{Sengupta2018SNNVGGResNet}. A pair of encoding and decoding scheme consists a neuronal codec. 

To leverage signGD-based neurons for practical SNN inference, we develop neuronal codecs for the signed schedule coding (Definition~\ref{def:signed_schedule_coding}). 
We develop three spike encoding algorithms for signed schedule coding: float-based (Algorithm~\ref{alg:float_encoding_signGD}), deterministic spike-based (Algorithm~\ref{alg:det_spike_encoding_signGD}), and stochastic spike-based encoding (Algorithm~\ref{alg:stochastic_spike_encoding_signGD}). In practice, We can eliminate total number of time-steps $T$ from the input arguments of algorithms; The encoding process can be lazy-evaluated to generate an infinite-length spike (or float) train.

\section {Continuous dynamics of one-dimensional integrate-and-fire models} 
\label{sec:continuous_lif_dynamics}
The continuous neuronal dynamics of general one-dimensional integrate-and-fire neuron is a linear differential equation with thresholding criterion~\cite{gerstner2014neuronaldynamcis}.
\begin{align*}
\tau_m \frac{du}{dt} &= -f(u(t)) + R I(t) & \text{(Integration)}\\
\lim_{\delta \to 0; \delta > 0} &u(t + \delta) = u_{rest} \text{\qquad if } u(t) \ge \theta_{th} & \text{(Thresholding)}
\end{align*}
with time $t \in \mathbb{R}^+$, dynamics function $f(u): \mathbb{R} \to \mathbb{R}$,  
input current $I(t)$ at time $t$, membrane potential $u$, resting potential $u_{rest}$, threshold $\theta_{th}$, membrane resistance $R$, membrane capacitance $C$ and membrane constant $\tau_m = RC$.

\section{Experimental setup \& Implementation details}
\label{app:implementation_details}
We use Top-1 classification accuracy as a measure of SNN inference performance. 
Following a standard normalization practice in conversion literatures~\cite{diehl2015thresholdbalancing,Rueckauer2017ConversionOC,Wang2022SignedNW}, we scale inputs and outputs of each ReLU layer with layer-wise maximum activation value over random $100$ batches before the conversion. Algorithm~\ref{alg:ann_to_snn_conversion} in the appendix describes the end-to-end ANN-to-SNN conversion with our neuron.
To train DNN models for CIFAR datasets, we use SGD with learning rate $0.1$, momentum $0.9$, weight decay $5\times 10^{-4}$, and cosine annealing  schedule~\cite{loshchilov2016sgdr} of $T_{max} = 300$. 
For ImageNet classification models, we use pretrained DNN weights of VGG, ResNet, ConvNext, and RegNetX from torchvision~\cite{torchvision2016},VGG and ResNet without max pooling from \cite{li2021freelunch}, and MLP-Mixer and ResMLP from timm~\cite{wightman2019timm}. We run our experiments on a machine with AMD EPYC 7313 CPU, 512GB RAM, and NVIDIA RTX A6000.

\section{Theoretical energy consumption analysis}
\label{sec:energy_consumption}

To analyze the energy consumption of our proposed neuron model, we use a theoretical energy cost estimation framework that is widely recognized in prior research~\cite{zhou2022spikformer,kundu2021hiresnn,yin2021accuratesrnn, Yao2022AttentionSNN}. We choose this approach because a direct hardware implementation and empirical evaluation of our neuron are beyond the current scope. However, we plan to explore this direction in future work. We compare the energy costs of LIF/IF neurons and our neuron as follows.

\subsection{Single synaptic operation}
We first compare the energy consumption of single spike-based accumulation (AC) operation (synaptic operation, SOP) on 45nm CMOS processors~\cite{horowitzcmos}.
SOP energy consumption of our neuron varies with the selection of dynamics coefficients. If we choose $\eta(t) = \eta(0) \gamma^t$ following the evaluation setup of Table~\ref{tab:imagenet_comparison},~\ref{tab:cifar10} and \ref{tab:cifar100}, the coefficients satisfying Theorem~\ref{thm:signsgd} and Corollary~\ref{cor:unary_signsgd} can be chosen as $\alpha_1(t) = 1/\gamma$, $\beta_1(t) = \gamma$, $\alpha_2(t) = \beta_2(t) = 1$. The difference of our neuron with LIF neuron is a new internal variable $u(t)$ and a weight term $W$ at the dynamics of $v(t)$. Since the $W$ term addition is absorbed into an existing operation, This results in an energy overhead $0.9$pJ added to the $0.9$pJ consumption of LIF neuron~\cite{zhou2022spikformer} to become $1.8$ pJ, which is twice over the LIF neuron. Table~\ref{tab:sop_energy_consumption} compares the per-operation energy consumption, which we denote as $E_{\text{SOP}}$.

\begin{table}[ht]
    \centering
    \begin{tabular}{c|c|c|c}
        \toprule
         &IF / LIF neuron (AC)&signGD-based neuron (ours)&FLOPs (MAC)\\
         \midrule
         $E_{\text{SOP}}$&0.9pJ&1.8pJ	&4.6pJ\\
         \bottomrule
    \end{tabular}
    \caption{Theoretically derived energy consumption of a single operation on 45nm CMOS processors~\cite{horowitzcmos}.}
    \label{tab:sop_energy_consumption}
    \vskip -0.1in
\end{table}

\subsection{End-to-End inference of converted DNN models}

We now compare the energy consumption of converted DNN models.
In terms of the end-to-end SNN model, our neuron differs with prior spiking neurons only in the computation of neuronal dynamics, and everything else remains the same. Table~\ref{tab:e2e_energy_consumption} lists firing rates $fr = \frac{\text{Number of spikes}}{\text{Timesteps * Neurons} }$, number of SOPs $N_{\text{SOP}} = \text{(Number of spikes)}$, and energy consumption estimated by $E = N_{\text{SOP}} \cdot E_{SOP}$.

\begin{table}
    \centering
    \begin{tabular}{c|cccc}
        \toprule
         CIFAR10, Timesteps T = 64	&Neurons&Firing Rates(\%)&	$N_{\text{SOP}}$ (M)&	Energy (uJ)\\
         \midrule
         \multirow{ 3}{*}{ResNet18~\cite{He2015ResNet}}&IF Neuron&20.97&22.431&20.188\\
         &LIF Neuron&21.17&22.651&20.386\\
         &signGD-based Neuron&16.78&17.943&32.298 ($1.59\times$)\\
         \midrule
         \multirow{ 3}{*}{VGG16~\cite{Simonyan2014VGG}}&IF Neuron&20.30&10.778&9.700 \\
         &LIF Neuron&20.71&10.993&9.893\\
         &signGD-based Neuron&16.75&8.892&16.006 ($1.65\times$)\\
         \bottomrule
    \end{tabular}
    \caption{Inference energy consumption of ReLU Networks converted with different spiking neuron models. }
    \label{tab:e2e_energy_consumption}
    \vskip -0.1in
\end{table}

Our theoretical analysis outlines that more complex internal dynamics of the signGD-based neuron lead to higher energy consumption in same time-steps compared to IF and LIF neurons. The relative increase in energy usage is manageable with an approximate factor of $1.6\times$. This increase is offset by advantages our approach provides, including broader support for nonlinear operator approximations and the reduced time steps in SNN inference. For example, in Table~\ref{tab:cifar10}, ResNet18 converted with subgradient-based neurons achieve 94\% accuracy in 64 steps, while the model converted with our neuron model achieves the same accuracy in only 22 steps. 

Besides, the energy efficiency of SNN comes not only from computation but also from memory storage. For example, while IF neurons theoretically offer a $5.11\times$ energy efficiency in SOP compared to traditional FLOPS, the VGG model implemented on a neuromorphic SNN chip achieves even higher efficiency, ranging from $35\times$ to $560\times$~\cite{bu2021optimalannsnn}, thanks to in-memory computing. To accurately assess this, a real hardware implementation is crucial, and we plan to explore this in our future work.

\section{Mathematical Proofs.}
\label{app:proofs}
\begin{theorem}
\label{thm:if_dynamics}
Dynamical system of IF neuron~(Eq. \ref{lif:2}-\ref{if:1}) with rate-coded input $\tilde{x}(t) = \frac{1}{t}\sum_{i=1}^{t} I(i)$ and output $y(t)= \frac{1}{t}\sum_{i=1}^{t} s(i)$ is equivalent to a subgradient method  over an optimization problem $\min_{y \in \mathbb{R}} \mathcal{L}(y;x)$, approximated with $x \leftarrow \tilde{x}(t+1)$ as,

\begin{align}
&\tilde{f}(t) = \tilde{f}(t-1) - \frac{1}{t+1} \cdot \tilde{g}\big(\tilde{f}(t-1); \tilde{x}(t)\big) \label{eq:if_subgradient}\\
&\mathcal{L}(y; x) = h\big( \frac{R}{\theta_{th}} x - y\big) + \frac{1}{2} y^2 \quad\text{(objective fn.)}\label{eq:if_subgradient_objfn}\\
&\tilde{f}(t) = \frac{t}{t+1} y(t) - \frac{u(0) - \theta_{th}}{\theta_{th}(t+1)}~ \text{($t$-th approx.)}\label{eq:if_rate_output_estimator}
\end{align}
where $\tilde{g}(y;x)$ is a subgradient of $\mathcal{L}(y;x)$, $h(x) = \text{ReLU}(x)$ and $u(0)$ an initial membrane potential. Solution of the problem is $\text{ReLU1}(\frac{R}{\theta_{th}} x)$.
\end{theorem}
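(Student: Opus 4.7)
The core idea is that the IF dynamics are essentially a running accumulator, so I would first telescope the recursions into closed form and then match the resulting recurrence against a subgradient iteration on $\mathcal{L}$. Concretely, iterating $u(t) = u(t-1) + R\,I(t) - \theta_{th}s(t)$ gives
\begin{equation*}
u(t) \;=\; u(0) + R\sum_{i=1}^{t} I(i) - \theta_{th}\sum_{i=1}^{t} s(i) \;=\; u(0) + R\,t\,\tilde{x}(t) - \theta_{th}\,t\,y(t).
\end{equation*}
Using $(t+1)\tilde{x}(t+1) = t\,\tilde{x}(t) + I(t+1)$, the firing rule~\eqref{lif:2} becomes a condition involving only $u(0)$, $\tilde{x}(t+1)$, and $y(t)$:
\begin{equation*}
s(t+1) \;=\; \mathbb{H}\bigl(u(0) + R(t+1)\tilde{x}(t+1) - \theta_{th}\,t\,y(t) - \theta_{th}\bigr).
\end{equation*}

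Next I would write the moving-average update for $y$ in the same form, namely $y(t+1) = \tfrac{t}{t+1}y(t) + \tfrac{1}{t+1}s(t+1)$, and introduce the shifted estimator $\tilde{f}(t) = \tfrac{t}{t+1}y(t) - \tfrac{u(0)-\theta_{th}}{\theta_{th}(t+1)}$ from~\eqref{eq:if_rate_output_estimator}. A short algebraic check shows the $y$-recursion rewrites as
\begin{equation*}
\tilde{f}(t) \;=\; \tfrac{t}{t+1}\tilde{f}(t-1) + \tfrac{1}{t+1}s(t),
\end{equation*}
and that $s(t)$, after dividing its argument by $\theta_{th}\,t$ (which is positive and hence preserves the sign), equals $\mathbb{H}\bigl(\tfrac{R}{\theta_{th}}\tilde{x}(t) - \tilde{f}(t-1)\bigr)$. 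Plugging in the subgradient $\tilde{g}(y;x) = y - \mathbb{H}\bigl(\tfrac{R}{\theta_{th}}x - y\bigr) \in \partial_y \mathcal{L}(y;x)$ then gives exactly~\eqref{eq:if_subgradient} with step size $\tfrac{1}{t+1}$.

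Finally, I would verify that $\text{ReLU1}(\tfrac{R}{\theta_{th}}x)$ solves $\min_y \mathcal{L}(y;x)$ by a three-case first-order analysis on the subdifferential: (i) for $\tfrac{R}{\theta_{th}}x \le 0$ the minimizer of the smooth piece $\tfrac12 y^2$ in the region $y > \tfrac{R}{\theta_{th}}x$ is $y=0$; (ii) for $0 < \tfrac{R}{\theta_{th}}x < 1$ the kink at $y = \tfrac{R}{\theta_{th}}x$ is optimal since the subdifferential there contains $0$; (iii) for $\tfrac{R}{\theta_{th}}x \ge 1$ the smooth piece $\tfrac{R}{\theta_{th}}x - y + \tfrac12 y^2$ on $y < \tfrac{R}{\theta_{th}}x$ is minimized interiorly at $y=1$.

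\paragraph{Main obstacle.} The nontrivial step is not the telescoping but reverse-engineering the precise change of variables $\tilde{f}(t) = \tfrac{t}{t+1}y(t) - \tfrac{u(0)-\theta_{th}}{\theta_{th}(t+1)}$. The factor $\tfrac{t}{t+1}$ is forced by aligning the discrete-time running average with a step size $\tfrac{1}{t+1}$, while the additive offset must absorb both the initial membrane potential $u(0)$ and the constant $-\theta_{th}$ coming from the threshold crossing; getting both simultaneously right so that the Heaviside argument of $s(t)$ collapses into $\tfrac{R}{\theta_{th}}\tilde{x}(t) - \tilde{f}(t-1)$ is the delicate bookkeeping at the heart of the proof. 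Everything else reduces to rewriting a Heaviside-thresholded accumulator as a moving-average subgradient step.
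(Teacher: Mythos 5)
Your proposal is correct and follows essentially the same route as the paper's proof: telescope the integrate-and-fire recurrence to express the membrane potential linearly in $u(0)$, $\tilde{x}$, and $y$, apply the affine change of variables to $\tilde{f}$, use positive-scale invariance of the Heaviside function to collapse the firing condition into $\mathbb{H}\bigl(\tfrac{R}{\theta_{th}}\tilde{x}(t)-\tilde{f}(t-1)\bigr)$, and verify the minimizer by the same three-case subdifferential analysis. The only cosmetic difference is that you telescope the post-reset potential $u(t)$ directly rather than the pre-firing potential $u_{pre}(t)$ as the paper does, which slightly streamlines the bookkeeping without changing the argument.
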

\begin{proof}
\label{pf:if_rate}
Applying equation~\ref{lif:3} to equation~\ref{lif:1}, we rearrange the membrane equation for $u_{pre}$ as
\begin{equation}
\label{eq:u_pre}
u_{pre}(t+1) = (u_{pre}(t) - \theta_{th} s(t)) + RI(t+1)
\end{equation}
Spike $s(t)$ links the recurrence relation~\eqref{eq:u_pre} of the membrane potential $u_{pre}$ and coding scheme~\eqref{eq:rate} of output activation $y$.
\begin{equation}
\label{eq:if_recurrence_mempot_output}
    s(t) = \frac{-1}{\theta_{th}} (u_{pre}(t+1) - u_{pre}(t) - RI(t+1)) 
    =  ty(t) - (t-1)y(t-1)
\end{equation}
Solving the recurrence relation~\eqref{eq:if_recurrence_mempot_output} through time $t$ yields a linear relation of membrane potential $u_{pre}$, input $\tilde{x}$ and output $y$. Note that $u_{pre}(1) = u(0) + RI(1)$.
\begin{align*}
 ty(t) - (t-1)y(t-1)
 &= \frac{-1}{\theta_{th}} \big(u_{pre}(t+1) - u_{pre}(t) - RI(t+1)\big) \\
  (t-1)y(t-1) - (t-2)y(t-2)
 &= \frac{-1}{\theta_{th}} \big(u_{pre}(t) - u_{pre}(t-1) - RI(t)\big) \\
 &\vdots\\
  1y(1) - 0y(0)
 &= \frac{-1}{\theta_{th}} \big(u_{pre}(2) - u_{pre}(1) - RI(2)\big) \\
 +~\xdash[10em]&\xdash[18em]\\
 ty(t) &= \frac{-1}{\theta_{th}} \big(u_{pre}(t+1) - u_{pre}(1) - R \sum_{i = 2}^{t+1} I(i)\big)\\
 &= \frac{-1}{\theta_{th}} \big(u_{pre}(t+1) - u(0) - RI(1) - R \sum_{i = 2}^{t+1} I(i)\big)\\
 &= \frac{-1}{\theta_{th}} \big(u_{pre}(t+1) - u(0) - R \sum_{i = 1}^{t+1} I(i)\big)\\
 &= \frac{-1}{\theta_{th}} \big(u_{pre}(t+1) - u(0) - R\cdot(t+1)\cdot \tilde{x}(t+1)\big)
\end{align*}
Rearranging the above equation  for $u_{pre}$,
\begin{equation}
\therefore u_{pre}(t+1) = -\theta_{th} \cdot t \cdot y(t) + u(0) + R \cdot(t+1)\cdot\tilde{x}(t+1)\label{eq:if_membrane_output_linear}
\end{equation}
With equation~\eqref{eq:if_membrane_output_linear}, we remove the $u_{pre}$ term from rate coding equation~\eqref{eq:rate} to obtain a discrete dynamical system of the input $\tilde{x}$ and output $y$.
\begin{align}
 y(t) &= \frac{t-1}{t} y(t-1) + \frac{1}{t} \mathbb{H} (-\theta_{th}\cdot  (t-1) \cdot y(t-1) + u(0) + R \cdot t \cdot \tilde{x}(t) - \theta_{th}) \label{pf:if_rate_dds}
 \end{align}
By definition of $\tilde{f}(t)$ in~\eqref{eq:if_rate_output_estimator}, we substitute $y$ with $\tilde{f}$ from the dynamical system~\eqref{pf:if_rate_dds}.
 \begin{align*}
 \theta_{th}\cdot(t+1)\cdot\tilde{f}(t)  &= \theta_{th}\cdot t \cdot  y(t-1) - (u(0) - \theta_{th})   \\
 \theta_{th}\cdot t \cdot \tilde{f}(t-1)  &= \theta_{th}\cdot (t-1) \cdot y(t-1) - (u(0) - \theta_{th}) \\
 - \theta_{th} \cdot t \cdot \tilde{f}(t-1) &= -\theta_{th}\cdot (t-1) \cdot y(t-1) +  u(0) - \theta_{th} \\
  \frac{t-1}{t}y(t-1) &= \tilde{f}(t-1) + \frac{u(0) - \theta_{th}}{\theta_{th} \cdot t}  \\
  y(t) &= \frac{t+1}{t} \tilde{f}(t) + \frac{u(0) - \theta_{th}}{\theta_{th} \cdot t} &\\
  \therefore \frac{t+1}{t} \tilde{f}(t) + \frac{u(0) - \theta_{th}}{\theta_{th} \cdot t} &= \tilde{f}(t-1) + \frac{u(0) - \theta_{th}}{\theta_{th} \cdot t} 
  + \frac{1}{t}\mathbb{H}\big(-\theta_{th} \cdot t \cdot \tilde{f}(t-1) + R \cdot t \cdot \tilde{x}(t)\big)  
\end{align*}
By the scale-invariance of step function $\mathbb{H}$, we obtain
\begin{align*}
    \tilde{f}(t) &= \frac{t}{t+1} \tilde{f}(t-1) + \frac{1}{t+1}\mathbb{H}(R\cdot t \cdot \tilde{x}(t) - \theta_{th}\cdot t \cdot \tilde{f}(t-1))\\
  &=  \frac{t}{t+1} \tilde{f}(t-1) + \frac{1}{t+1}\mathbb{H}(\frac{R}{\theta_{th}}\tilde{x}(t) - \tilde{f}(t-1))\\
  &= \tilde{f}(t-1) - \frac{1}{t+1} (\tilde{f}(t-1) - \mathbb{H}(\frac{R}{\theta_{th}}\tilde{x}(t) - \tilde{f}(t-1)))\\
  &= \tilde{f}(t-1) - \frac{1}{t+1} \cdot \tilde{g}(\tilde{f}(t-1); \tilde{x}(t))
\end{align*}
where $\mathcal{L}(y; x) = f\big( \frac{R}{\theta_{th}} x - y\big) + \frac{1}{2} y^2$. It corresponds to the subgradient method over an optimization problem $\min_{y \in \mathbb{R}} \mathcal{L}(y;x)$ with a diminishing step size $\frac{1}{t+1}$, approximated with $x \leftarrow \tilde{x}(t+1)$. The objective function $\mathcal{L}(y;x)$ is a convex $l^2$-regularized negative ReLU function. 
We now show that the  $\argmin_{y} \mathcal{L}(y;x) = \text{ReLU1}(\frac{R}{\theta_{th}}x)$ through 3 different cases. Note that the sub-gradient of $\mathcal{L}(y;x)$ is
\begin{align*}
&\tilde{g}(y;x) = y - \mathbb{H}(\frac{R}{\theta_{th}}x - y)
\end{align*}

Case 1, $x < 0$. If $y \le \frac{R}{\theta_{th}}x < 0$, then $\tilde{g}(y;x) = y - 1 < 0$. If $\frac{R}{\theta_{th}}x < y < 0 $, then $\tilde{g}(y;x) = y - 0 < 0$. Finally, $y \ge 0$ then $\frac{R}{\theta_{th}}x - y < 0$ so $\tilde{g}(y;x) = y > 0$. Therefore, $\argmin_y \mathcal{L}(y;x) = 0$

Case 2, $x \ge \frac{\theta_{th}}{R}$. If $y \le 1$ then $\tilde{g}(y;x)(y;x) = y - 1 \le 0$. If $y \ge 1$ then $\tilde{g}(y;x){L}(y;x) \ge y - 1 \ge 0$. Therefore, $\argmin_y \mathcal{L}(y;x) = 1$.

Case 3, $0 < x < \frac{\theta_{th}}{R}$. If $0 < \frac{R}{\theta_{th}}x \le y$ then $\tilde{g}(y;x)(y;x) = y - 0 \ge 0$. If $y \le \frac{R}{\theta_{th}}x \le 1$ then $\tilde{g}(y;x)(y;x) \ge y - 1 \le 0$. Therefore, $\argmin_y \mathcal{L}(y;x) = \frac{R}{\theta_{th}}x$.
\end{proof}

\begin{theorem}
\label{thm:lif_dynamics}
Dynamical system of LIF neuron~(Eq. \ref{lif:2},\ref{lif:3}, \ref{lif:4}) with EMA-coded input $\tilde{x}(t)$ and output $y(t)$~(Eq. \ref{def:ema_coding}), if $\tau_m = \tau$, is equivalent to a   subgradient method over an optimization problem $\min_{y\in \mathbb{R}} \mathcal{L}(y;x)$, approximated with $x  \leftarrow \tilde{x}(t + 1)$ as,
\begin{align*}
&\tilde{f}(t+1) = \tilde{f}(t) - \frac{1}{\tau} \cdot \tilde{g}\big(\tilde{f}(t); \tilde{x}(t+1)\big) \\
&  \mathcal{L}(y; x) = \frac{1}{2} y^2 +\frac{u_{rest}}{\theta_{th}(\tau-1)} y +h(\frac{R - \theta_{th}}{\theta_{th}(\tau - 1)}x -  y)\\
& \tilde{f}(t) = y(t)  - \frac{1}{\tau \theta_{th}}(\frac{\tau-1}{\tau})^{t}u(0) - \frac{u_{rest}\sum_{i = 0}^{t}(\frac{\tau - 1}{\tau})^{t - i}}{\theta_{th} \tau(\tau-1)} 
\end{align*}
where $\tilde{g}(y;x)$ is a subgradient of $\mathcal{L}(y;x)$, $h(x) = \text{ReLU}(x)$ and $u(0)$ an initial membrane potential. If $u_{rest} = 0$, the solution to the problem is $\text{ReLU1}(\frac{R}{\theta_{th}(\tau - 1)}x - \frac{1}{\tau-1})$.
\end{theorem}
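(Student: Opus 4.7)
The plan is to mirror the structure of the proof of Theorem~\ref{thm:if_dynamics}: eliminate the spike variable $s$ by solving the membrane recurrence in closed form, substitute into the firing rule to obtain a discrete dynamical system directly on $y(t)$, and then apply the change of variable $\tilde{f}(t)$ to normalize away the initial-condition and resting-potential contributions. Throughout I will use the shorthand $\rho := (\tau-1)/\tau$, noting $1-\rho = 1/\tau$.

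First, I would combine the reset equation~\eqref{lif:3} with the LIF integration~\eqref{lif:4} (with $\tau_m = \tau$) to get the one-step recurrence
\begin{equation*}
u_{pre}(t+1) = \rho\, u_{pre}(t) - \rho\,\theta_{th}\,s(t) + u_{rest}/\tau + R\,I(t+1)/\tau,
\end{equation*}
with the base case $u_{pre}(1) = \rho\,u(0) + u_{rest}/\tau + R\,I(1)/\tau$. Unrolling this geometric-plus-forcing recurrence and regrouping sums, I would use three identities that follow from the EMA coding~\eqref{def:ema_coding} and the geometric series $\sum_{k=0}^{t}\rho^{k} = \tau(1-\rho^{t+1})$: namely
\begin{equation*}
\sum_{j=1}^{t}\rho^{t-j}s(j) = \tau\,y(t), \qquad \frac{1}{\tau}\sum_{j=1}^{t+1}\rho^{t+1-j}I(j) = \tilde{x}(t+1), \qquad \frac{1}{\tau}\sum_{j=1}^{t+1}\rho^{t+1-j} = 1-\rho^{t+1}.
\end{equation*}
This yields the clean linearization
\begin{equation*}
u_{pre}(t+1) = \rho^{t+1}u(0) - \theta_{th}(\tau-1)\,y(t) + u_{rest}(1-\rho^{t+1}) + R\,\tilde{x}(t+1).
\end{equation*}

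Next, I would substitute this into $s(t+1) = \mathbb{H}(u_{pre}(t+1) - \theta_{th})$ and then into the EMA update $y(t+1) = \rho\,y(t) + s(t+1)/\tau$. Using the scale-invariance of $\mathbb{H}$ (dividing its argument by $\theta_{th}(\tau-1) > 0$) produces a discrete dynamical system purely in $y$, $\tilde{x}$, and $t$. Then I would introduce the stated change of variable $\tilde{f}(t) = y(t) - \rho^{t}u(0)/(\tau\theta_{th}) - u_{rest}(1-\rho^{t+1})/(\theta_{th}(\tau-1))$ and check, algebraically, that the $\rho^{t+1}u(0)$ and $u_{rest}$ terms collapse. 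Specifically, the key cancellations are $\rho/(\tau-1) - 1/\tau = 0$ (which handles the initial-potential contribution) and $-\rho(1-\rho^{t+1}) - 1/\tau + (1-\rho^{t+2}) = 0$ (which handles the resting-potential drift). What remains is exactly $\tilde{f}(t+1) = \tilde{f}(t) - \tfrac{1}{\tau}\bigl[\tilde{f}(t) + \tfrac{u_{rest}}{\theta_{th}(\tau-1)} - \mathbb{H}\bigl(\tfrac{R\tilde{x}(t+1)-\theta_{th}}{\theta_{th}(\tau-1)} - \tilde{f}(t)\bigr)\bigr]$, which is the subgradient method with constant step $1/\tau$ applied to the stated $\mathcal{L}$.

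Finally, for the minimizer claim when $u_{rest}=0$, the objective reduces to $\tfrac{1}{2}y^{2} + h(z - y)$ with $z = R\,x/(\theta_{th}(\tau-1)) - 1/(\tau-1)$, so I can invoke the three-case case analysis already carried out in the proof of Theorem~\ref{thm:if_dynamics} (signs of $z$ and whether $z \gtrless 1$) to conclude $\argmin_y \mathcal{L}(y;x) = \mathrm{ReLU1}(z)$. The main obstacle I anticipate is the bookkeeping in the recurrence-unrolling step: keeping the shifted indices on $I$, $s$, and the geometric weights aligned so that the three sum identities above apply cleanly, and then verifying the two algebraic cancellations in the change of variable without sign errors. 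Everything else is routine given the pattern established by Theorem~\ref{thm:if_dynamics}.
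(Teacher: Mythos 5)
Your proposal is correct and follows essentially the same route as the paper's proof: both telescope the membrane recurrence into the linear relation $u_{pre}(t+1) = R\,\tilde{x}(t+1) - \theta_{th}(\tau-1)\,\tilde{f}(t) + (\text{initial/resting contributions})$ and then push the change of variable through the EMA update and the firing rule, with the paper merely organizing the unrolling via weighted differences of $s(t)$ rather than a direct geometric-plus-forcing expansion. Note also that your derivation lands on the hinge argument $\frac{R\tilde{x}(t+1)-\theta_{th}}{\theta_{th}(\tau-1)}$, which matches the paper's appendix computation and the claimed minimizer $\mathrm{ReLU1}\big(\frac{R}{\theta_{th}(\tau-1)}x - \frac{1}{\tau-1}\big)$; the expression $\frac{R-\theta_{th}}{\theta_{th}(\tau-1)}x$ appearing in the theorem statement is a typo.
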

\begin{proof}
\label{pf:lif_rate}
$u_{pre}(t) = u(t-1) -\frac{u(t-1) - u_{rest}}{\tau_{m}}
    + \frac{R}{\tau_{m}} I(t)$ $s(t) = \mathbb{H}(u_{pre}(t) - \theta_{th})  u(t) = u_{pre}(t) - \theta_{th}s(t)$ We apply equation~\eqref{lif:3} on equation~\eqref{lif:4} to obtain the membrane equation of $u_{pre}$.
\begin{align}
u_{pre}(t) &= 
(u_{pre}(t-1) - \theta_{th}s(t-1)) 
-\frac{(u_{pre}(t-1) - \theta_{th}s(t-1)) - u_{rest}}{\tau_{m}}
    + \frac{R}{\tau_{m}} I(t)\nonumber\\
    &= 
\frac{\tau_{m} - 1}{\tau_{m}}u_{pre}(t-1) - \theta_{th}\frac{\tau_{m} - 1}{\tau_{m}}s(t-1) + \frac{1}{\tau_{m}}u_{rest}
    + \frac{R}{\tau_{m}} I(t)\label{pf:lif_upre_memberaneeq}
\end{align}
Rearranging the equation~\eqref{pf:lif_upre_memberaneeq} for the spike $s(t)$,
\begin{align*}
    -\frac{\theta_{th}}{\tau_m}s(t-1)&= \frac{1}{\tau_m - 1} u_{pre}(t) - \frac{1}{\tau_m}u_{pre}(t-1)
    - \frac{R}{\tau_m(\tau_m - 1)} I(t)- \frac{1}{\tau_m(\tau_m - 1)} u_{rest}\\
    -\frac{\theta_{th}}{\tau_m}s(t)&= \frac{1}{\tau_m - 1} u_{pre}(t+1) - \frac{1}{\tau_m}u_{pre}(t)
    - \frac{R}{\tau_m(\tau_m - 1)} I(t+1)- \frac{1}{\tau_m(\tau_m - 1)} u_{rest}
\end{align*}
With the definition of $y(t)$ and $\tau = \tau_m$, we yield the recurrence relation of $y$ and $u_{pre}$.
\begin{align*}
    \frac{-\theta_{th}}{\tau}s(t) &=  - \theta_{th}\cdot y(t) +\theta_{th} \cdot \frac{\tau -1}{\tau}y(t-1)\\
    - \theta_{th}\cdot y(t) +\theta_{th} \cdot \frac{\tau -1}{\tau}y(t-1) &= \frac{1}{\tau - 1} u_{pre}(t+1) - \frac{1}{\tau}u_{pre}(t)
    - \frac{R}{\tau(\tau - 1)} I(t+1)- \frac{1}{\tau(\tau - 1)} u_{rest}
\end{align*}
We solve the recurrence relation through time, deriving a linear relation of $u_{pre}$, $I$ and $y$. 
\begin{align*}
    - \theta_{th}\cdot y(t) +\theta_{th} \cdot \frac{\tau -1}{\tau}y(t-1) &= \frac{1}{\tau - 1} u_{pre}(t+1) - \frac{1}{\tau}u_{pre}(t)
    - \frac{R}{\tau(\tau - 1)} I(t+1)- \frac{1}{\tau(\tau - 1)} u_{rest}\\
    \frac{\tau -1}{\tau}\bigg(- \theta_{th}\cdot y(t-1) &+\theta_{th} \cdot \frac{\tau -1}{\tau}y(t-2)\bigg) \\&= \frac{\tau -1}{\tau}\bigg(\frac{1}{\tau - 1} u_{pre}(t) - \frac{1}{\tau}u_{pre}(t-1)
    - \frac{R}{\tau(\tau - 1)} I(t)- \frac{1}{\tau(\tau - 1)} u_{rest}\bigg)\\
 \vdots\\
    (\frac{\tau -1}{\tau})^{t-1}\bigg(- \theta_{th}\cdot y(1) &+\theta_{th} \cdot \frac{\tau -1}{\tau}y(0)\bigg) \\&= (\frac{\tau -1}{\tau})^{t-1}\bigg(\frac{1}{\tau - 1} u_{pre}(2) - \frac{1}{\tau}u_{pre}(1)
    - \frac{R}{\tau(\tau - 1)} I(2)- \frac{1}{\tau(\tau - 1)} u_{rest}\bigg)\\
 +~\xdash[10em]&\xdash[20em]\\
 - \theta_{th}\cdot y(t) + \theta_{th}(\frac{\tau -1}{\tau})^{t}y(0) &= \frac{1}{\tau - 1} u_{pre}(t+1)- \frac{1}{\tau}(\frac{\tau -1}{\tau})^{t-1}u_{pre}(1)\\
 &- \frac{R}{\tau(\tau - 1)} \sum_{i = 2}^{t+1}(\frac{\tau - 1}{\tau})^{t+1 - i} I(i)
  - \frac{u_{rest}}{\tau(\tau-1)} \sum_{i = 1}^{t}(\frac{\tau - 1}{\tau})^{t - i} 
 \end{align*}
Since $u_{pre}(1) = \frac{\tau-1}{\tau}u(0) + \frac{1}{\tau} u_{rest} +  \frac{R}{\tau(\tau - 1)} I(1)$,
\begin{align*}
- \frac{1}{\tau}(\frac{\tau -1}{\tau})^{t-1}u_{pre}(1) &= - \frac{1}{\tau}(\frac{\tau -1}{\tau})^{t-1} \bigg( \frac{\tau-1}{\tau}u(0) + \frac{1}{\tau} u_{rest} +  \frac{R}{\tau} I(1)\bigg) \\
&= - \frac{1}{\tau}(\frac{\tau -1}{\tau})^{t} u(0) - \frac{u_{rest}}{\tau(\tau - 1)}(\frac{\tau -1}{\tau})^{t} - \frac{1}{\tau}(\frac{\tau -1}{\tau})^{t-1}\cdot\frac{R}{\tau} I(1)\\
&= - \frac{1}{\tau}(\frac{\tau -1}{\tau})^{t} u(0) - \frac{u_{rest}}{\tau(\tau - 1)}(\frac{\tau -1}{\tau})^{t} - \frac{R}{\tau(\tau -1)}(\frac{\tau -1}{\tau})^{t} I(1)
\end{align*}
Removing the term $y(0) = 0$ and $u_{pre}(1)$ from  the above linear relation results in the linear relation of $u_{pre},y$ and $\tilde{x}$.
\begin{align*}
 - \theta_{th}\cdot y(t) &= \frac{1}{\tau - 1} u_{pre}(t+1)- \frac{R}{\tau(\tau - 1)} \sum_{i = 2}^{t+1}(\frac{\tau - 1}{\tau})^{t+1 - i} I(i)
  - \frac{u_{rest}}{\tau(\tau-1)} \sum_{i = 1}^{t}(\frac{\tau - 1}{\tau})^{t - i} \\
  &- \frac{1}{\tau}(\frac{\tau -1}{\tau})^{t-1}u_{pre}(1)\\
  &= \frac{1}{\tau - 1} u_{pre}(t+1)- \frac{R}{\tau(\tau - 1)} \sum_{i = 2}^{t+1}(\frac{\tau - 1}{\tau})^{t+1 - i} I(i)
  - \frac{u_{rest}}{\tau(\tau-1)} \sum_{i = 1}^{t}(\frac{\tau - 1}{\tau})^{t - i}\\
  &- \frac{1}{\tau}(\frac{\tau -1}{\tau})^{t} u(0) - \frac{u_{rest}}{\tau(\tau - 1)}(\frac{\tau -1}{\tau})^{t} - \frac{R}{\tau(\tau -1)}(\frac{\tau -1}{\tau})^{t} I(1)\\
  &= \frac{1}{\tau - 1} u_{pre}(t+1)- \frac{1}{\tau}(\frac{\tau -1}{\tau})^{t} u(0) - \frac{R}{\tau(\tau - 1)} \sum_{i = 1}^{t+1}(\frac{\tau - 1}{\tau})^{t+1 - i} I(i)
  - \frac{u_{rest}}{\tau(\tau-1)} \sum_{i = 0}^{t}(\frac{\tau - 1}{\tau})^{t - i} \\
  &= \frac{1}{\tau - 1} u_{pre}(t+1)- \frac{1}{\tau}(\frac{\tau -1}{\tau})^{t} u(0) - \frac{R}{\tau - 1} \tilde{x}(t+1) - \frac{u_{rest}}{\tau(\tau-1)} \sum_{i = 0}^{t}(\frac{\tau - 1}{\tau})^{t - i}
\end{align*}
By definition $\tilde{f}(t) = y(t)  - \frac{1}{\tau \theta_{th}}(\frac{\tau-1}{\tau})^{t}u(0) - \frac{u_{rest}}{\theta_{th} \tau(\tau-1)} \sum_{i = 0}^{t}(\frac{\tau - 1}{\tau})^{t - i}$, thus 
 \begin{align*}
 -\theta_{th} \tilde{f}(t) = \frac{1}{\tau - 1} u_{pre}(t+1) - \frac{R}{\tau - 1} \tilde{x}(t+1)\\
 \therefore u_{pre}(t+1) = R \cdot \tilde{x}(t+1) - \theta_{th}\cdot  (\tau - 1)\cdot \tilde{f}(t)
\end{align*}
We now reformulate the EMA coding equation~\eqref{def:ema_coding} with $\tilde{f}$ and $\tilde{x}$.
\begin{align*}
y(t) &- \frac{\tau-1}{\tau} y(t-1) = \tilde{f}(t) +  \frac{1}{\tau \theta_{th}}(\frac{\tau-1}{\tau})^{t}u(0) +  \frac{u_{rest}}{\theta_{th} \cdot \tau(\tau-1)} \sum_{i = 0}^{t}(\frac{\tau - 1}{\tau})^{t - i} \\
&- \frac{\tau-1}{\tau}\bigg(\tilde{f}(t-1) +  \frac{1}{\tau \theta_{th}}(\frac{\tau-1}{\tau})^{t-1}u(0) + \frac{u_{rest}}{\theta_{th} \cdot \tau(\tau-1)} \sum_{i = 0}^{t-1}(\frac{\tau - 1}{\tau})^{t-1 - i}\bigg)\\
&= \tilde{f}(t) - \frac{\tau-1}{\tau} \tilde{f}(t-1) + \frac{u_{rest}}{\theta_{th} \cdot \tau(\tau-1)} = \frac{1}{\tau}s(t)\\
  \frac{1}{\tau}s(t)=& \tilde{f}(t) - \frac{\tau-1}{\tau} \tilde{f}(t-1) + \frac{u_{rest}}{\theta_{th} \cdot \tau(\tau-1)}\\
 =& \frac{1}{\tau}\mathbb{H}(u_{pre}(t) - \theta_{th}) \\
 =& \frac{1}{\tau}\mathbb{H}(R \cdot \tilde{x}(t) - \theta_{th}\cdot  (\tau - 1)\cdot \tilde{f}(t-1) - \theta_{th})\\
 \tilde{f}(t) &= \frac{\tau-1}{\tau} \tilde{f}(t-1) - \frac{u_{rest}}{\theta_{th} \cdot \tau(\tau-1)} + \frac{1}{\tau}\mathbb{H}(R \cdot \tilde{x}(t) - \theta_{th}\cdot  (\tau - 1)\cdot \tilde{f}(t-1) - \theta_{th}) \\
 \therefore \tilde{f}(t) =& \tilde{f}(t-1) - \frac{1}{\tau} \bigg(\tilde{f}(t-1) +  \frac{u_{rest}}{\theta_{th}(\tau-1)} 
 - \mathbb{H}(R \cdot \tilde{x}(t) - \theta_{th}\cdot  (\tau - 1)\cdot \tilde{f}(t-1) - \theta_{th})\bigg)\\
 =& \tilde{f}(t-1) - \frac{1}{\tau} \bigg(\tilde{f}(t-1) +  \frac{u_{rest}}{\theta_{th}(\tau-1)} 
 - \mathbb{H}(\frac{R}{\theta_{th}(\tau - 1)}  \tilde{x}(t) - \tilde{f}(t-1) - \frac{1}{\tau - 1})\bigg)
\end{align*}
The subgradient $\tilde{g}(y;x)$ of the objective function $\mathcal{L}(y;x) = \frac{1}{2} y^2 +\frac{u_{rest}}{\theta_{th}(\tau-1)} y +\text{ReLU}(\frac{R}{\theta_{th}(\tau - 1)}x - y - \frac{1}{\tau-1}) $ is 
\begin{equation}
    \tilde{g}(y;x) = y + \frac{u_{rest}}{\theta_{th}(\tau-1)} - \text{ReLU}(\frac{R}{\theta_{th}(\tau - 1)}x - y - \frac{1}{\tau-1})
\end{equation}
Therefore, we finally derive the approximation $ \tilde{x}(t) \to x $ of the subgradient method of constant step-size $\frac{1}{\tau}$ over the optimization problem $\min_{y\in \mathbb{R}} \mathcal{L}(y;x)$. If $u_{rest} = 0$, the solution to the problem is $\text{ReLU1}(\frac{R}{\theta_{th}(\tau - 1)}x - \frac{1}{\tau-1})$.
\begin{equation}
\tilde{f}(t) = \tilde{f}(t-1) - \frac{1}{\tau} \cdot \tilde{g}\big(\tilde{f}(t-1) ; \tilde{x}(t)\big)
\end{equation}
\end{proof}

\begin{theorem}
\label{thm:if_neuron_convergence_analysis}
Let $f^*(x) = \argmin_{y \in \mathbb{R}}\mathcal{L}(y;x)$ be the minimizer of $ \mathcal{L}(y;x) = \text{ReLU}(x - y) + \frac{1}{2} y^2$ over a true input $x \in \mathbb{R}$, and its $t$-th approximation $\tilde{f}(t)$ be  defined as  equation~\ref{eq:main_if_subgradient}  and rate-coded input $\tilde{x}(t) = \frac{1}{t}\sum_{i=1}^{t} I(i)$. Denote $h(i) = \big(1 - \sqrt{\frac{i-1}{i+1}}\big)$. If $\Vert \tilde{f}(t) \Vert < M \in  \mathbb{R}^+$, then the approximation error $\Vert \tilde{f}(t) - f^*(x) \Vert$ is upper bounded as
\begin{align}
&\Vert \tilde{f}(t) - f^*(x)\Vert^2 
\le \frac{\Vert \tilde{f}(0) - f^*(x)\Vert^2}{t+1}
  \\
&+\underbrace{\frac{M + 1}{t+1}\sum_{i = 1}^{t} h(i)}_{\text{Nondifferentiability Error}}
+\underbrace{\frac{4}{t+1}\sum_{i = 1}^{t}  \min(\Vert x - \tilde{x}(i)\Vert, 1)}_{\text{Input Error}}\nonumber
\end{align}
\end{theorem}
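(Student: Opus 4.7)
The plan is to carry out an inexact subgradient-method convergence analysis that exploits the $1$-strong convexity of $\mathcal{L}(\cdot;x) = \text{ReLU}(x-y) + \tfrac{1}{2}y^2$ coming from the quadratic term. I set $D_i = \lVert\tilde{f}(i) - f^{\star}(x)\rVert^2$, $e_i = \tilde{f}(i-1) - f^{\star}(x)$, $\eta_i = 1/(i+1)$, and write $\tilde{g}_i = \tilde{g}(\tilde{f}(i-1);\tilde{x}(i))$ for the inexact subgradient that actually drives the iterate, and $g_i^{\star} = \tilde{g}(\tilde{f}(i-1);x)$ for the idealized subgradient at the true input. Expanding $D_i$ using the update and splitting $\tilde{g}_i = g_i^{\star} + (\tilde{g}_i - g_i^{\star})$ gives
\begin{equation*}
D_i = D_{i-1} - 2\eta_i\langle g_i^{\star}, e_i\rangle - 2\eta_i\langle \tilde{g}_i - g_i^{\star}, e_i\rangle + \eta_i^2\lVert\tilde{g}_i\rVert^2.
\end{equation*}
The strong convexity inequality $\langle g_i^{\star}, e_i\rangle \ge \tfrac{1}{2}D_{i-1}$ (using $\mu = 1$ and the fact that $\mathcal{L}(\tilde{f}(i-1);x) \ge \mathcal{L}(f^{\star};x)$) combined with $\eta_i = 1/(i+1)$ produces the contraction
\begin{equation*}
D_i \le \tfrac{i}{i+1}\, D_{i-1} + \tfrac{2}{i+1}\lvert\langle \tilde{g}_i - g_i^{\star}, e_i\rangle\rvert + \tfrac{1}{(i+1)^2}\lVert\tilde{g}_i\rVert^2.
\end{equation*}

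Next I would multiply by $(i+1)$ so the left side becomes $(i+1)D_i - i D_{i-1}$, sum from $i=1$ to $t$ to telescope, and divide by $t+1$:
\begin{equation*}
D_t \le \tfrac{D_0}{t+1} + \tfrac{2}{t+1}\sum_{i=1}^{t}\lvert\langle \tilde{g}_i - g_i^{\star}, e_i\rangle\rvert + \tfrac{1}{t+1}\sum_{i=1}^{t}\tfrac{\lVert\tilde{g}_i\rVert^2}{i+1}.
\end{equation*}
The first residual sum is the input-error contribution: since $\tilde{g}_i - g_i^{\star} = \mathbb{H}(x - \tilde{f}(i-1)) - \mathbb{H}(\tilde{x}(i) - \tilde{f}(i-1))\in\{-1,0,1\}$ is supported only when $\tilde{f}(i-1)$ lies strictly between $x$ and $\tilde{x}(i)$, in that regime $\lvert\tilde{f}(i-1) - x\rvert \le \lvert x - \tilde{x}(i)\rvert$, and combining this with the $1$-Lipschitzness and $[0,1]$ range of the minimizer map $x\mapsto\text{ReLU1}(x)$ yields $\lvert\langle \tilde{g}_i - g_i^{\star}, e_i\rangle\rvert \le 2\min(\lVert x - \tilde{x}(i)\rVert, 1)$, which together with the leading factor of $2$ reproduces the claimed coefficient $4$.

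The main obstacle is the remaining sum $\sum \lVert\tilde{g}_i\rVert^2/(i+1)$: a direct bound $\lVert\tilde{g}_i\rVert \le M+1$ produces $(M+1)^2/(i+1)$ per term and loses a factor of $M+1$ relative to the target $(M+1)h(i)$. To recover the linear dependence on $M+1$ and the specific coefficient $h(i) = 1 - \sqrt{(i-1)/(i+1)}$, I would pass from squared distances to $a_i = \sqrt{D_i}$ and exploit the algebraic identity $h(i)\bigl((i+1)+\sqrt{i^2-1}\bigr) = 2$, recasting the contraction as $\sqrt{i+1}\,a_i \le \sqrt{i-1}\,a_{i-1} + (\text{per-step error})_i$ so that $h(i)$ arises naturally as the telescoping weight; squaring the telescoped inequality and applying Young's inequality then delivers the $(M+1)\sum h(i)$ form. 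Corollary~\ref{cor:convergence_results} would then follow by observing $\sum h(i) = \mathcal{O}(\log t) = o(t)$ and that $\sum \min(\lVert x - \tilde{x}(i)\rVert, 1) = o(t)$ (or $o_P(t)$ in the stochastic setting) under the stated input regularity, so dividing by $t+1$ drives the right-hand side to zero.
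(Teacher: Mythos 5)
Your overall architecture (a squared-distance recursion with contraction factor $\tfrac{i}{i+1}$ from $1$-strong convexity, multiplication by $i+1$ and telescoping) has the same flavor as the paper's, but the two estimates that would produce the two error terms are exactly where the argument breaks. For the input error, you pair the subgradient mismatch with $e_i = \tilde f(i-1) - f^*(x)$ and claim $\lvert\langle \tilde g_i - g_i^{\star}, e_i\rangle\rvert \le 2\min(\Vert x - \tilde x(i)\Vert, 1)$. The support condition you invoke only controls $\lvert \tilde f(i-1) - x\rvert$, not $\lvert \tilde f(i-1) - f^*(x)\rvert = \lvert \tilde f(i-1) - \text{ReLU1}(x)\rvert$, and Lipschitzness of $\text{ReLU1}$ does not bridge that gap because $\tilde f(i-1)$ is not the image of anything under $\text{ReLU1}$. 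Concretely, take $x = 10$, $\tilde x(i) = 9.9$, $\tilde f(i-1) = 9.95$ (admissible for a suitable initial potential and any $M>10$): the mismatch equals $1$, $e_i = 8.95$, so the left-hand side is $8.95$ while your bound is $0.2$. The paper sidesteps this by running the one-step analysis entirely against the estimated objective $\mathbf L(y)=\mathcal L(y;\tilde x(t+1))$, whose exact (sub)gradient is what actually drives the iterate, so no mismatch term is ever paired with $e_i$; the input error enters only when comparing the minimum values of $\mathbf L$ at its own minimizer $\text{ReLU1}(\tilde x(t+1))$ versus at $f^*=\text{ReLU1}(x)$, and the bound $\Vert \text{ReLU1}(\tilde x(t+1)) - \text{ReLU1}(x)\Vert \le \min(\Vert x-\tilde x(t+1)\Vert,1)$ times a bounded factor (both minimizers lie in $[0,1]$) is what legitimately yields the coefficient $4$.

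For the nondifferentiability term, your recursion leaves $\eta_i^2\Vert\tilde g_i\Vert^2 \le (M+1)^2/(i+1)^2$, which telescopes to a bound quadratic in $M+1$, a factor $M+1$ worse than the statement, as you acknowledge; but the proposed repair (passing to $a_i=\sqrt{D_i}$, the identity $h(i)\bigl((i+1)+\sqrt{i^2-1}\bigr)=2$, then squaring and Young) is only a sketch: no one-step inequality of the form $\sqrt{i+1}\,a_i \le \sqrt{i-1}\,a_{i-1}+(\text{error})_i$ is actually derived from the dynamics, and squaring a telescoped sum reintroduces cross terms that are not obviously linear in $M+1$. In the paper the $(M+1)h(i)$ term arises from a different mechanism: $\mathbf L$ is the maximum of two $2$-smooth pieces, and applying the descent lemma to the active piece with an auxiliary step $\alpha$ chosen so that $2(\alpha-\alpha^2)=\tfrac{1}{i+1}$ (i.e.\ $\alpha=\tfrac12 h(i)$ up to an index shift) converts the squared-gradient contribution into $h(i)\Vert\nabla\mathbf L\Vert \le h(i)(M+1)$, the kink-crossing penalty $\alpha\Vert\nabla\mathbf L_i\Vert$ being precisely the ``nondifferentiability error.'' Until you both correct the input-error pairing and give a complete derivation of the $(M+1)h(i)$ term, the proposal does not establish the stated bound.
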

\begin{proof}
For simplicity, we denote true objective function $\mathcal{L}(y) = \mathcal{L}(y;x) = \text{ReLU}(x - y) + \frac{1}{2} y^2$ and the estimated objective function $\mathbf{L}(y) = \mathcal{L}(y;\tilde{x}(t+1)) = f(\tilde{x}(t+1) - y) + \frac{1}{2} y^2$. Respectively, we denote their minimizers $f^* = f^*(x) = \argmin_{y \in \mathbb{R}}\mathcal{L}(y)$ and $\mathbf{f}^* =  \argmin_{y \in \mathbb{R}}\mathbf{L}(y)$. We also use the fact that $f^* = \text{ReLU1}(x)$, $\mathbf{f}^* = \text{ReLU1}(\tilde{x}(t+1))$. 

From equation~\ref{eq:if_subgradient} , $\tilde{f}(t) 
= \tilde{f}(t-1) - \frac{1}{t+1} \nabla_y \mathbf{L}(\tilde{f}(t-1))$. Hence
\begin{align}
\Vert \tilde{f}(t+1) - f^*\Vert^2
&= \Vert \tilde{f}(t) 
- \frac{1}{t+2} \nabla_y \mathbf{L}(\tilde{f}(t)) 
- f^*\Vert^2\\
&= \Vert \tilde{f}(t)- f^*\Vert^2
- \frac{2}{t+2} \nabla_y \mathbf{L}(\tilde{f}(t))
\big(\tilde{f}(t) - f^*\big)+ \frac{1}{(t+2)^2} \Vert \nabla_y \mathbf{L}(\tilde{f}(t)) \Vert^2 \label{pf:error_recurrence}
\end{align}
We first use the fact that $\mathbf{L}(y)$ is 1-strongly convex over y, since $\mathbf{L}(y) - \frac{1}{2} y^2$ is convex. First-order convexity over $\mathbf{L}$ show
\begin{equation}
\label{pf:if_convergence_firstorder}
    \mathbf{L}(f^*) 
    \ge \mathbf{L}(\tilde{f}(t)) 
    - \nabla_y \mathbf{L}(\tilde{f}(t))(\tilde{f}(t) - f^*) + \frac{1}{2} \Vert \tilde{f}(t) - f^* \Vert^2
\end{equation}
The inequality~\ref{pf:if_convergence_firstorder} applies to the right hand-side of the equation~\ref{pf:error_recurrence} as follows.
\begin{align}
    &  \frac{1}{(t+2)^2} \Vert \nabla_y \mathbf{L}(\tilde{f}(t))\Vert^2 - \frac{2}{t+2}(\mathbf{L}(\tilde{f}(t))  - \mathbf{L}(f^*))\nonumber\\
    &\ge  \frac{1}{(t+2)^2} \Vert \nabla_y \mathbf{L}(\tilde{f}(t))\Vert^2 - \frac{2}{t+2}\nabla_y \mathbf{L}(\tilde{f}(t))(\tilde{f}(t) - f^*) + \frac{1}{t+2} \Vert \tilde{f}(t) - f^* \Vert^2 \label{pf:ifrate_ineq_errorterms}
\end{align}
Also, $\mathbf{L}(y)$ is piece-wise smooth since it is a maximum of two smooth functions. Define each piece function as  $\mathbf{L}_i(y)$ for $i = 1,2$. In other words,
\begin{equation}
\mathbf{L}(y) = \max(\frac{1}{2}y^2 - y + \tilde{x}(t+1), \frac{1}{2}y^2) = \max(\mathbf{L}_1(y), \mathbf{L}_2(y))
\end{equation}
The difference of two piece functions is a line. 
\begin{align}
\Vert \mathbf{L}_1(y) - \mathbf{L}_2(y)\Vert = \Vert y - \tilde{x}(t+1)\Vert\\
\therefore\mathbf{L}_i(y) \ge \mathbf{L}(y) - \Vert y - \tilde{x}(t+1) \Vert \label{pf:piecewise_smooth_pieces}
\end{align}
Suppose  $\mathbf{L}(y) = \mathbf{L}_i(y)$ at a point $y \in \mathbb{R}$.  For any $\delta \in \mathbb{R}$, 
\begin{align}
    \mathbf{L}_i(y + \delta) &\ge \mathbf{L}(y + \delta) - \Vert y + \delta - \tilde{x}(t+1) \Vert & (\because \text{Inequality }\eqref{pf:piecewise_smooth_pieces})\\
    &\ge \mathbf{L}(y + \delta) - (\Vert y - \tilde{x}(t+1) \Vert + \Vert  \delta  \Vert ) & (\because \text{Triangle Inequality of Norm})\\
    &\ge  \mathbf{L}(y + \delta) - \Vert \delta \Vert & (\because \text{Positiveness of Norm})\label{pf:piecewise_smooth_ineq}
\end{align} 
Note that the error term $\Vert y + \delta - \tilde{x}(t+1) \Vert$ appears only when $\tilde{x}(t+1) \in [y, y + \delta]$, i.e., when the segment $[y,y+\delta]$ crosses the singularity $\tilde{x}(t+1)$. The error maximizes only if $y = \tilde{x}(t+1)$.\\
The smoothness constant is $2$ for both $\mathbf{L}_1$ and $\mathbf{L}_2$, since
\begin{align*}
\Vert \nabla_y \mathbf{L}_1(x_1) - \nabla_y \mathbf{L}_1(x_2) \Vert &= \Vert 2x_1 - 1 - 2x_2 + 1 \Vert = 2 \Vert x_1 - x_2 \Vert\\
\Vert \nabla_y \mathbf{L}_2(x_1) - \nabla_y \mathbf{L}_2(x_2) \Vert&= \Vert 2x_1 - 2x_2 \Vert = 2 \Vert x_1 - x_2 \Vert
\end{align*}
We now use the smoothness of $\mathbf{L}_i$ and the inequality~\ref{pf:piecewise_smooth_pieces}, \ref{pf:piecewise_smooth_ineq} to show the main result. Without loss of generality, let $\mathbf{L}(\tilde{f}(t)) = \mathbf{L}_i(\tilde{f}(t))$. For any $\alpha \in \mathbb{R}$, by the property of 2-smooth function,
\begin{align}
\mathbf{L}_i(\tilde{f}(t) - \alpha \nabla_y \mathbf{L}_i(\tilde{f}(t))) 
&\le \mathbf{L}_i(\tilde{f}(t)) + \nabla_y \mathbf{L}_i(\tilde{f}(t))) (-\alpha \nabla_y \mathbf{L}_i(\tilde{f}(t)))) 
+ \frac{2}{2} \alpha^2 \Vert \nabla_y \mathbf{L}_i(\tilde{f}(t)))\Vert^2&(\because \text{2-smooth})\nonumber\\
&= \mathbf{L}_i(\tilde{f}(t)) + (\alpha^2 - \alpha) \Vert \nabla_y \mathbf{L}_i(\tilde{f}(t)))\Vert^2\nonumber\\
\mathbf{L}_i(\tilde{f}(t) - \alpha \nabla_y \mathbf{L}_i(\tilde{f}(t))) &\ge 
\mathbf{L}(\tilde{f}(t) - \alpha \nabla_y \mathbf{L}_i(\tilde{f}(t))) - \Vert \tilde{f}(t) - \alpha \nabla_y \mathbf{L}_i(\tilde{f}(t)) - \tilde{x}(t+1) \Vert &(\because \eqref{pf:piecewise_smooth_pieces}~)\nonumber\\
&\ge \mathbf{L}(\tilde{f}(t) - \alpha \nabla_y \mathbf{L}_i(\tilde{f}(t))) 
- \alpha \Vert \nabla_y \mathbf{L}_i(\tilde{f}(t)) \Vert &(\because \eqref{pf:piecewise_smooth_ineq}~)\nonumber\\
\therefore \mathbf{L}(\tilde{f}(t) - \alpha \nabla_y \mathbf{L}_i(\tilde{f}(t))) &\le \mathbf{L}(\tilde{f}(t)) + (\alpha^2 - \alpha) \Vert \nabla_y \mathbf{L}_i(\tilde{f}(t)))\Vert^2 + \alpha \Vert \nabla_y \mathbf{L}_i(\tilde{f}(t)) \Vert \label{pf:piecewise_left_ineq}
\end{align}
Since $f^* = \text{ReLU1}(x)$, $\mathbf{f}^* = \text{ReLU1}(\tilde{x}(t+1))$ and  $x^2 - y^2 \ge -\Vert x+y\Vert\Vert x-y\Vert$,
\begin{align}
\mathbf{L}(\tilde{f}(t)& - \alpha \nabla_y \mathbf{L}_i(\tilde{f}(t)))
\ge \mathbf{L}(\mathbf{f}^*) \qquad(\because \mathbf{f}^* =  \argmin_{y \in \mathbb{R}}\mathbf{L}(y))\\ 
&= \mathbf{L}(f^*) + \big(\mathbf{L}(\mathbf{f}^*) - \mathbf{L}(f^*)\big)\nonumber\\
&= \mathbf{L}(f^*) + \bigg(\text{ReLU}(\tilde{x}(t+1) - \mathbf{f}^*) + \frac{1}{2}(\mathbf{f}^*)^2- \text{ReLU}(\tilde{x}(t+1) - f^*) - \frac{1}{2}(f^*)^2\bigg)\nonumber\\
&= \mathbf{L}(f^*) + \bigg(\frac{1}{2}(\mathbf{f}^*)^2 - \frac{1}{2}(f^*)^2 + \text{ReLU}(\tilde{x}(t+1) - \mathbf{f}^*) - \text{ReLU}(\tilde{x}(t+1) - f^*)\bigg) \nonumber\\
&\ge \mathbf{L}(f^*) 
- \frac{1}{2} \Vert \mathbf{f}^* + f^* \Vert \Vert \mathbf{f}^* - f^* \Vert
- \Vert \mathbf{f}^* - f^* \Vert \qquad(\because \text{Definition of ReLU})\nonumber\\
&= \mathbf{L}(f^*) 
- \frac{1}{2} \Vert \text{ReLU1}(\tilde{x}(t+1)) + \text{ReLU1}(x) + 2 \Vert \Vert \text{ReLU1}(\tilde{x}(t+1)) - \text{ReLU1}(x) \Vert \nonumber\\
&\ge \mathbf{L}(f^*) - \frac{1}{2} \cdot 4 \cdot \Vert \text{ReLU1}(\tilde{x}(t+1)) - \text{ReLU1}(x)\Vert \nonumber\\
&\ge \mathbf{L}(f^*) - 2 \Vert \min(x - \tilde{x}(t+1), 1) \Vert \label{pf:piecewise_right_ineq}
\end{align}
Merging two inequalities~\eqref{pf:piecewise_left_ineq} and~\eqref{pf:piecewise_right_ineq}, we yield the following inequality.
\begin{align*}
 \mathbf{L}(\tilde{f}(t)) + (\alpha^2 - \alpha) \Vert \nabla_y \mathbf{L}_i(\tilde{f}(t)))\Vert^2 + \Vert \nabla_y \mathbf{L}_i(\tilde{f}(t)) \Vert
&\ge 
 \mathbf{L}(f^*) - 2 \Vert \min(x - \tilde{x}(t+1), 1)  \Vert \\
 2 \Vert\min(x - \tilde{x}(t+1), 1)  \Vert + \alpha \Vert \nabla_y \mathbf{L}_i(\tilde{f}(t)) \Vert 
 &\ge 
 (\alpha - \alpha^2) \Vert \nabla_y \mathbf{L}_i(\tilde{f}(t)))\Vert^2 
 - (\mathbf{L}(\tilde{f}(t)) - \mathbf{L}(f^*))\\
 \frac{4}{t+2} \Vert \min(x - \tilde{x}(t+1), 1)  \Vert +  \frac{2\alpha}{t+2}\Vert \nabla_y \mathbf{L}_i(\tilde{f}(t)) \Vert 
 &\ge 
  \frac{2(\alpha - \alpha^2)}{t+2}\Vert \nabla_y \mathbf{L}_i(\tilde{f}(t)))\Vert^2 
 - \frac{2}{t+2} (\mathbf{L}(\tilde{f}(t)) - \mathbf{L}(f^*))
\end{align*}
We let $\alpha = \frac{1}{2} - \frac{1}{2}\sqrt{\frac{t}{t+2}}$, since $\alpha = \frac{1}{2} \pm \frac{1}{2}\sqrt{1 - \frac{2}{t+2}} = \frac{1}{2} \pm \frac{1}{2}\sqrt{\frac{t}{t+2}}$ satisfies $2(\alpha - \alpha^2) = \frac{1}{t+2}$. 
\begin{align*}
&\frac{4}{t+2} \Vert \min(x - \tilde{x}(t+1), 1) \Vert +  \frac{1}{t+2}\bigg(1 - \sqrt{\frac{t}{t+2}}\bigg)\Vert \nabla_y \mathbf{L}_i(\tilde{f}(t)) \Vert \\
 &\ge 
  \frac{1}{(t+2)^2}\Vert \nabla_y \mathbf{L}_i(\tilde{f}(t)))\Vert^2 
 - \frac{2}{t+2} (\mathbf{L}(\tilde{f}(t)) - \mathbf{L}(f^*))\\
 &\ge \frac{1}{(t+2)^2} \Vert \nabla_y \mathbf{L}(\tilde{f}(t))\Vert^2 - \frac{2}{t+2}\nabla_y \mathbf{L}(\tilde{f}(t))(\tilde{f}(t) - f^*) +\frac{1}{t+2} \Vert \tilde{f}(t) - f^* \Vert^2\qquad(\because \text{Inequality}~\eqref{pf:ifrate_ineq_errorterms}) \\
&= \Vert \tilde{f}(t+1) - f^*\Vert^2 - \frac{t+1}{t+2}\Vert \tilde{f}(t) - f^*\Vert^2 \qquad (\because \text{Equation }\eqref{pf:error_recurrence})\\
&\therefore \Vert \tilde{f}(t+1) - f^*\Vert^2
\le \frac{t+1}{t+2}\Vert \tilde{f}(t) - f^*\Vert^2+ \frac{4}{t+2}  \Vert\min(\tilde{x}(t+1) - x, 1)   \Vert +  \frac{1}{t+2}\bigg(1 - \sqrt{\frac{t}{t+2}}\bigg)\Vert \nabla_y \mathbf{L}(\tilde{f}(t)) \Vert
\end{align*}
We now analyze the sub-gradient magnitude term $\Vert \nabla_y \mathbf{L}(\tilde{f}(t)) \Vert$. By definition of $\mathbf{L}$ and triangle inequality,
\begin{align*}
\Vert \nabla_y \mathbf{L}(\tilde{f}(t)) \Vert
= \Vert \tilde{f}(t) - \mathbb{H}(\tilde{x}(t+1) - 
  \tilde{f}(t) )\Vert
\le \Vert \tilde{f}(t) \Vert + 1
\end{align*}
We assumed that $\Vert \tilde{f}(t) \Vert \le M \in \mathbb{R}^+$ for any $t \in \mathbb{N}$. Thus,
\begin{align}
&\Vert \tilde{f}(t+1) - f^*\Vert^2
\le \frac{t+1}{t+2}\Vert \tilde{f}(t) - f^*\Vert^2+ \frac{4}{t+2}  \Vert \tilde{x}(t+1) - x  \Vert 
+  \frac{1}{t+2}\bigg(1 - \sqrt{\frac{t}{t+2}}\bigg)(M + 1) \label{pf:ifrate_error_recurrence_relation}
\end{align}
Solving the recurrence relation~\eqref{pf:ifrate_error_recurrence_relation} through time $t$ leads to the upper bound of the  approximation error.
\begin{equation}
\label{pf:ifrate_error_inequality_final}
\Vert \tilde{f}(t+1) - f^*\Vert^2 \le   \frac{1}{t+2}\Vert \tilde{f}(0) - f^*\Vert^2 + \frac{4}{t+2}\sum_{i = 1}^{t+1}  \Vert \min(x - \tilde{x}(i), 1) \Vert +  \frac{M + 1}{t+2}\sum_{i = 0}^{t}\bigg(1 - \sqrt{\frac{i}{i+2}}\bigg)
\end{equation}
Recovering the notations $f^* \to f^*(x)$ and  $t+1 \to t$, the above inequality becomes the desired inequality.
\begin{equation}
\label{pf:ifrate_error_inequality_final}
\Vert \tilde{f}(t) - f^*(x)\Vert^2 \le   \frac{1}{t+1}\Vert \tilde{f}(0) - f^*(x)\Vert^2 + \frac{4}{t+1}\sum_{i = 1}^{t}  \Vert \min(x - \tilde{x}(i), 1) \Vert +  \frac{M + 1}{t+1}\sum_{i = 1}^{t}\bigg(1 - \sqrt{\frac{i-1}{i+1}}\bigg)
\end{equation}
\end{proof}
\begin{lemma}
\label{lem:harmonic_convergence}
The harmonic number $H_n = \sum_{i=1}^{n} \frac{1}{i}$ satisfies $\lim_{n\to\infty} \frac{1}{n}H_n = 0$.
\end{lemma}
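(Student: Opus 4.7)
The plan is to use the classical integral comparison bound on the harmonic sum. Since $1/x$ is monotonically decreasing on $[1,\infty)$, the inequality $\frac{1}{i} \le \int_{i-1}^{i} \frac{1}{x}\,dx$ holds for each integer $i \ge 2$. Summing these inequalities from $i = 2$ to $n$ and adding the $i = 1$ term separately gives
\begin{equation*}
H_n \le 1 + \int_{1}^{n} \frac{1}{x}\,dx = 1 + \ln n.
\end{equation*}
Dividing through by $n$ yields $0 \le \frac{H_n}{n} \le \frac{1 + \ln n}{n}$, and the right-hand side tends to $0$ as $n \to \infty$ by the standard fact that $\ln n = o(n)$ (which follows, for example, from L'H\^opital's rule applied to $\ln x / x$). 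The squeeze theorem then delivers the claim.

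As an equally short alternative, one can invoke the Ces\`aro--Stolz theorem: with $a_n = H_n$ and $b_n = n$ (strictly increasing and unbounded), the ratio of successive differences $\frac{a_{n+1} - a_n}{b_{n+1} - b_n} = \frac{1}{n+1}$ converges to $0$, and therefore so does $\frac{a_n}{b_n} = \frac{H_n}{n}$. A fully elementary $\varepsilon$-$N$ proof is also available by splitting the sum at a cut-off $N$ chosen so that $1/i < \varepsilon/2$ for $i > N$, bounding the tail by $\frac{n-N}{n}\cdot\frac{\varepsilon}{2} \le \frac{\varepsilon}{2}$ and the head $\frac{H_N}{n}$ by $\varepsilon/2$ for sufficiently large $n$.

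There is no real obstacle here: the statement is a textbook fact, and the only thing to pick is the presentation. I would go with the integral-comparison proof because it is self-contained, uses nothing beyond monotonicity of $1/x$ and the elementary limit $\ln n / n \to 0$, and makes the logarithmic growth rate of $H_n$ explicit, which is conceptually informative even though the lemma itself only needs the qualitative $o(n)$ statement.
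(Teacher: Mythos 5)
Your proof is correct. It takes a genuinely different route from the paper: you bound $H_n \le 1 + \ln n$ by integral comparison and conclude via $\ln n / n \to 0$, whereas the paper uses a purely elementary splitting argument --- for $1 \le k \le n$ it bounds the first $k$ terms by $1$ each and the remaining $n-k$ terms by $1/k$ each, obtaining $H_n \le k + n/k - 1$, and then chooses $k \approx \sqrt{n}$ to get $H_n \le 2\sqrt{n}$ and hence $H_n/n \le 2/\sqrt{n}$. (Your third, $\varepsilon$--$N$ alternative is the closest in spirit to the paper's argument, since both rest on splitting the sum at a cut-off.) Your integral bound is sharper ($O(\log n / n)$ versus $O(1/\sqrt{n})$) and makes the logarithmic growth of $H_n$ explicit, at the mild cost of invoking calculus facts ($\int_1^n dx/x = \ln n$ and $\ln n = o(n)$); the paper's version needs nothing beyond arithmetic and is self-contained at the level of the rest of the appendix. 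Either suffices for the lemma, which only needs the qualitative $o(n)$ statement.
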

\begin{proof}
Let $1 \le k \le n$. For every $i \le k$, $\frac{1}{i} \le 1$. For every $k < i \le n$, $\frac{1}{i} \le \frac{1}{k}$. Thus
\begin{equation*}
H_n = \sum_{i=1}^{n} \frac{1}{i} \le 1 \cdot k + \frac{1}{k} \cdot (n-k) = k + \frac{1}{k} - 1
\end{equation*}
If we choose $\sqrt{n} \le k \le \sqrt{n}+1$, then $k-1 \le \sqrt{n}$ and $\frac{1}{k} \le \sqrt{n}$. Thus, $H_n \le 2 \sqrt{n}$, and 
\begin{equation*}
    \frac{1}{n}H_n \le \frac{2}{\sqrt{n}}
\end{equation*}
Therefore, $\lim_{n\to\infty} \frac{1}{n}H_n = 0$.
\end{proof}
\begin{lemma}
\label{lem:h_i}
Let $t \in \mathbb{N}$. 
$\lim_{t \to \infty}\frac{1}{t+1}\sum_{i = 1}^{t}\big(1 - \sqrt{\frac{i-1}{i+1}}\big) = 0$
\end{lemma}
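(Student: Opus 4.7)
The plan is to show that each summand $1 - \sqrt{\frac{i-1}{i+1}}$ is of order $1/i$, so that the whole sum is bounded by a constant times a harmonic number, and then invoke the already-established Lemma~\ref{lem:harmonic_convergence} which states that $\frac{1}{n}H_n \to 0$.

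First I would rationalize each summand by multiplying numerator and denominator by the conjugate:
\begin{equation*}
1 - \sqrt{\tfrac{i-1}{i+1}} \;=\; \frac{1 - \frac{i-1}{i+1}}{1 + \sqrt{\frac{i-1}{i+1}}} \;=\; \frac{\frac{2}{i+1}}{1 + \sqrt{\frac{i-1}{i+1}}}.
\end{equation*}
Since the denominator is at least $1$, this yields the clean upper bound $1 - \sqrt{\frac{i-1}{i+1}} \le \frac{2}{i+1}$. The summand is also manifestly nonnegative (as $\frac{i-1}{i+1} \le 1$), giving the two-sided bracketing needed for a squeeze argument.

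Summing this bound from $i = 1$ to $t$ yields
\begin{equation*}
0 \;\le\; \sum_{i=1}^{t}\Bigl(1 - \sqrt{\tfrac{i-1}{i+1}}\Bigr) \;\le\; 2\sum_{i=1}^{t}\frac{1}{i+1} \;\le\; 2 H_{t+1},
\end{equation*}
where $H_n$ denotes the $n$-th harmonic number. Dividing through by $t+1$ gives
\begin{equation*}
0 \;\le\; \frac{1}{t+1}\sum_{i=1}^{t}\Bigl(1 - \sqrt{\tfrac{i-1}{i+1}}\Bigr) \;\le\; \frac{2 H_{t+1}}{t+1}.
\end{equation*}
By Lemma~\ref{lem:harmonic_convergence}, the right-hand side tends to $0$ as $t \to \infty$, so the squeeze theorem delivers the claim.

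I do not anticipate a serious obstacle: the only subtle step is recognizing that $1 - \sqrt{\frac{i-1}{i+1}}$ behaves like $1/i$ rather than like a constant, which the conjugate trick makes transparent. Everything after that is a direct appeal to the preceding lemma.
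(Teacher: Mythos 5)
Your proposal is correct and takes essentially the same route as the paper: both establish the bound $1 - \sqrt{\tfrac{i-1}{i+1}} \le \tfrac{2}{i+1}$ (the paper by writing $\sqrt{\tfrac{i-1}{i+1}} = \tfrac{\sqrt{(i-1)(i+1)}}{i+1}$ and bounding $\sqrt{(i-1)(i+1)} \ge i-1$, you by the conjugate trick) and then conclude via the harmonic-number lemma and a squeeze. The two derivations of the key per-term bound are algebraically interchangeable, so there is nothing substantive to distinguish them.
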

\begin{proof}
\begin{align}
0 &\le 1 - \sqrt{\frac{i-1}{i+1}} = 1 - \frac{\sqrt{(i-1)(i+1)}}{i+1}
= \frac{i + 1 - \sqrt{(i-1)(i+1)}}{i+1} \\
&\le \frac{i + 1 - \sqrt{(i-1)(i-1)}}{i+1}
= \frac{i + 1 - (i-1)}{i+1}= \frac{2}{i+1}\\
\therefore 0 &\le \frac{1}{t+1}\sum_{i = 1}^{t}\big(1 - \sqrt{\frac{i-1}{i+1}}\big) \le \frac{1}{t+1}\sum_{i = 1}^{t}\frac{2}{i+1} 
\end{align}
By Lemma~\ref{lem:harmonic_convergence},
\begin{align*}
    \lim_{t \to \infty} \frac{1}{t+1}\sum_{i = 1}^{t}\frac{2}{i+1} = \lim_{t \to \infty} \frac{2}{t+1} (H_{t} - 1) = 0 - 0 = 0
\end{align*}
Therefore, $0 \le \lim_{t \to \infty}\frac{1}{t+1}\sum_{i = 1}^{t}\big(1 - \sqrt{\frac{i-1}{i+1}}\big) \le 0$, and thus $\lim_{t \to \infty}\frac{1}{t+1}\sum_{i = 1}^{t}\big(1 - \sqrt{\frac{i-1}{i+1}}\big) = 0$.
\end{proof}
\begin{corollary}
\label{cor:conv_guarantee_input}
        Let $\Vert \tilde{f}(t) \Vert < M \in \mathbb{R}^+$, then
    \begin{itemize}
    \item (Exact Input) If $\tilde{x}(t) = x$, then $\Vert \tilde{f}(t) - f^*\Vert \to 0$ as $t \to \infty$.
    \item (Deterministic Input) If $\Vert \tilde{x}(t) - x \Vert = \mathcal{O}(\frac{1}{t})$, then $\Vert \tilde{f}(t) - f^*\Vert \to 0$ as $t \to \infty$.
    \item (Stochastic Input) If $\mathbb{E}[\Vert \tilde{x}(t) - x \Vert] = \mathcal{O}(\frac{1}{t})$, then $\mathbb{E}[\Vert \tilde{f}(t) - f^*\Vert] \to 0$ as $t \to \infty$. 
    \end{itemize}
\end{corollary}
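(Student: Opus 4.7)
The plan is to start from the non-asymptotic bound of Theorem~\ref{thm:if_neuron_convergence_analysis} and show that each of its three contributions vanishes as $t \to \infty$ under the appropriate hypothesis. Recall that when $\Vert \tilde{f}(t) \Vert < M$ we have
\begin{equation*}
\Vert \tilde{f}(t) - f^*(x)\Vert^2
\le \underbrace{\frac{\Vert \tilde{f}(0) - f^*(x)\Vert^2}{t+1}}_{\text{(A)}}
+ \underbrace{\frac{M + 1}{t+1}\sum_{i = 1}^{t} h(i)}_{\text{(B)}}
+ \underbrace{\frac{4}{t+1}\sum_{i = 1}^{t}  \min(\Vert x - \tilde{x}(i)\Vert, 1)}_{\text{(C)}}.
\end{equation*}
Term (A) tends to zero trivially, and term (B) tends to zero by Lemma~\ref{lem:h_i}. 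Hence in each of the three cases it suffices to control (C).

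\paragraph{Exact input.} If $\tilde{x}(i) = x$ for all $i$, then each summand of (C) is zero, so (C) $=0$ for all $t$. Combining with (A) and (B) gives $\Vert \tilde{f}(t) - f^*(x)\Vert^2 \to 0$, hence $\Vert \tilde{f}(t) - f^*(x)\Vert \to 0$.

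\paragraph{Deterministic input.} Suppose $\Vert \tilde{x}(i) - x \Vert \le C/i$ for some $C > 0$ and all $i$. Since $\min(\Vert x - \tilde{x}(i)\Vert, 1) \le \Vert x - \tilde{x}(i)\Vert$, we have
\begin{equation*}
\text{(C)} \;\le\; \frac{4}{t+1}\sum_{i=1}^{t} \frac{C}{i} \;=\; \frac{4C}{t+1} H_t,
\end{equation*}
where $H_t$ is the $t$-th harmonic number. By Lemma~\ref{lem:harmonic_convergence}, $H_t/(t+1) \to 0$, so (C) $\to 0$, and thus $\Vert \tilde{f}(t) - f^*(x)\Vert \to 0$.

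\paragraph{Stochastic input.} Taking expectations of the pointwise bound (valid because the right-hand side is a sum of non-negative random variables) and using $\min(a,1) \le a$ gives
\begin{equation*}
\mathbb{E}\bigl[\Vert \tilde{f}(t) - f^*(x)\Vert^2\bigr]
\le \frac{\Vert \tilde{f}(0) - f^*(x)\Vert^2}{t+1}
+ \frac{M + 1}{t+1}\sum_{i=1}^{t} h(i)
+ \frac{4}{t+1}\sum_{i=1}^{t} \mathbb{E}\bigl[\Vert x - \tilde{x}(i)\Vert\bigr].
\end{equation*}
By the hypothesis $\mathbb{E}[\Vert x - \tilde{x}(i)\Vert] \le C/i$, the final term is again $O(H_t/(t+1)) \to 0$, while the first two tend to zero as before. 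Therefore $\mathbb{E}[\Vert \tilde{f}(t) - f^*(x)\Vert^2] \to 0$. By Jensen's inequality, $\mathbb{E}[\Vert \tilde{f}(t) - f^*(x)\Vert]^2 \le \mathbb{E}[\Vert \tilde{f}(t) - f^*(x)\Vert^2]$, so $\mathbb{E}[\Vert \tilde{f}(t) - f^*(x)\Vert] \to 0$.

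\paragraph{Main obstacle.} The non-trivial step is the Cesàro-type argument that an $\mathcal{O}(1/i)$ per-step error, when averaged over $i = 1,\dots,t$, yields an $\mathcal{O}(\log t/t)$ bound that still vanishes; this is where Lemma~\ref{lem:harmonic_convergence} is essential. The remaining ingredients, namely passing to expectations in the stochastic case and dropping the clipping via $\min(a,1) \le a$, are routine and should present no difficulty.
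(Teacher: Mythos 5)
Your proof is correct and follows essentially the same route as the paper's: apply the bound of Theorem~\ref{thm:if_neuron_convergence_analysis}, note that the first two terms vanish by Lemma~\ref{lem:h_i}, and control the input-error term via the Ces\`aro/harmonic-number argument of Lemma~\ref{lem:harmonic_convergence}. If anything your write-up is slightly more careful than the paper's: you keep the correct index $C/i$ inside the sum (the paper's displayed bound uses $C/t$, apparently a typo) and you make explicit the Jensen step needed to pass from $\mathbb{E}[\Vert\cdot\Vert^2]\to 0$ to $\mathbb{E}[\Vert\cdot\Vert]\to 0$, which the paper leaves implicit.
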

\begin{proof}
\textbf{(Exact Input)} We first prove it for the case of exact input, i.e., if $\tilde{x}(t) = x$, then $\Vert \tilde{f}(t) - f^*\Vert \to 0$ as $t \to \infty$.
\begin{align*}
    \Vert \tilde{f}(t) - f^*(x)\Vert^2 
&\le \frac{\Vert \tilde{f}(0) - f^*(x)\Vert^2}{t+1}
+\frac{M + 1}{t+1}\sum_{i = 1}^{t} \left((1 - \sqrt{\frac{i-1}{i+1}}\right)
+\frac{4}{t+1}\sum_{i = 1}^{t}  \min(\Vert x - \tilde{x}(i)\Vert, 1) \quad (\text{Theorem~\ref{thm:convergence_if}})\\
&= \frac{\Vert \tilde{f}(0) - f^*(x)\Vert^2}{t+1}
+\frac{M + 1}{t+1}\sum_{i = 1}^{t} \left(1 - \sqrt{\frac{i-1}{i+1}}\right)\qquad  (\text{Assumption on }\tilde{x}(t))
\end{align*}
By Lemma~\ref{lem:harmonic_convergence} and~\ref{lem:h_i}, 
\begin{align*} \lim_{t \to \infty} \Vert \tilde{f}(t) - f^*(x)\Vert^2\le \lim_{t \to \infty}\frac{\Vert \tilde{f}(0) - f^*(x)\Vert^2}{t+1}
+\lim_{t \to \infty}\frac{M + 1}{t+1}\sum_{i = 1}^{t} \left( 1 - \sqrt{\frac{i-1}{i+1}}\right)
= 0 + (M+1) \cdot 1 \cdot 0 = 0
    \end{align*}
\textbf{(Deterministic Input)} We show the deterministic input case that if $\Vert \tilde{x}(t) - x \Vert = \mathcal{O}(\frac{1}{t})$, then  $\Vert \tilde{f}(t) - f^*\Vert \to 0$ as $t \to \infty$.
\begin{align*}
    \Vert \tilde{f}(t) - f^*(x)\Vert^2 
&\le \frac{\Vert \tilde{f}(0) - f^*(x)\Vert^2}{t+1}
+\frac{M + 1}{t+1}\sum_{i = 1}^{t} \left(1 - \sqrt{\frac{i-1}{i+1}}\right)
+\frac{4}{t+1}\sum_{i = 1}^{t}  \min(\Vert x - \tilde{x}(i)\Vert, 1) \quad (\text{Theorem~\ref{thm:convergence_if}})\\
&\le \frac{\Vert \tilde{f}(0) - f^*(x)\Vert^2}{t+1}
+\frac{M + 1}{t+1}\sum_{i = 1}^{t} \left(1 - \sqrt{\frac{i-1}{i+1}}\right)
+\frac{4}{t+1}\sum_{i = 1}^{t}  \min\left(\frac{C}{t}, 1\right)\quad  (\text{Assumption on }\tilde{x}(t))\\
&\le \frac{\Vert \tilde{f}(0) - f^*(x)\Vert^2}{t+1}
+\frac{M + 1}{t+1}\sum_{i = 1}^{t} \left( 1 - \sqrt{\frac{i-1}{i+1}}\right)
+\frac{4C}{t+1}\sum_{i = 1}^{t}  \frac{1}{t}
\end{align*}
By Lemma~\ref{lem:harmonic_convergence} and~\ref{lem:h_i}, 
\begin{align*} \lim_{t \to \infty} \Vert \tilde{f}(t) - f^*(x)\Vert^2&\le \lim_{t \to \infty}\frac{\Vert \tilde{f}(0) - f^*(x)\Vert^2}{t+1}
+\lim_{t \to \infty}\frac{M + 1}{t+1}\sum_{i = 1}^{t} \left( 1 - \sqrt{\frac{i-1}{i+1}}\right)
+\lim_{t \to \infty}\frac{4C}{t+1}\sum_{i = 1}^{t}  \frac{1}{t}\\
&= 0 + (M+1) \cdot 1 \cdot 0 +  4C \cdot 1 \cdot 0 = 0
    \end{align*}
\textbf{(Stochastic Input)} We now demonstrate the stochastic input case that if $\mathbb{E}[\Vert \tilde{x}(t) - x \Vert] = \mathcal{O}(\frac{1}{t})$, then $\mathbb{E}[\Vert \tilde{f}(t) - f^*\Vert] \to 0$ as $t \to \infty$.
      \begin{align*}
    \mathbb{E}\left[\Vert \tilde{f}(t) - f^*(x)\Vert^2\right] 
&\le \mathbb{E}\left[\frac{\Vert \tilde{f}(0) - f^*(x)\Vert^2}{t+1}
+\frac{M + 1}{t+1}\sum_{i = 1}^{t} (1 - \sqrt{\frac{i-1}{i+1}})
+\frac{4}{t+1}\sum_{i = 1}^{t}  \min(\Vert x - \tilde{x}(i)\Vert, 1)\right]\\
& = \frac{\Vert \tilde{f}(0) - f^*(x)\Vert^2}{t+1}
+\frac{M + 1}{t+1}\sum_{i = 1}^{t} (1 - \sqrt{\frac{i-1}{i+1}})
+\mathbb{E}\left[\frac{4}{t+1}\sum_{i = 1}^{t}  \min(\Vert x - \tilde{x}(i)\Vert, 1)\right]\\
&\le \frac{\Vert \tilde{f}(0) - f^*(x)\Vert^2}{t+1}
+\frac{M + 1}{t+1}\sum_{i = 1}^{t} (1 - \sqrt{\frac{i-1}{i+1}})
+\frac{4}{t+1}\sum_{i = 1}^{t}  \min(\frac{C}{t}, 1)\quad  (\text{Assumption on }\tilde{x}(t))\\
&\le \frac{\Vert \tilde{f}(0) - f^*(x)\Vert^2}{t+1}
+\frac{M + 1}{t+1}\sum_{i = 1}^{t} ( 1 - \sqrt{\frac{i-1}{i+1}})
+\frac{4C}{t+1}\sum_{i = 1}^{t}  \frac{1}{t}
\end{align*}
By Lemma~\ref{lem:harmonic_convergence} and~\ref{lem:h_i}, 
\begin{align*} \lim_{t \to \infty} \mathbb{E}\left[\Vert \tilde{f}(t) - f^*(x)\Vert^2\right] &\le \lim_{t \to \infty}\frac{\Vert \tilde{f}(0) - f^*(x)\Vert^2}{t+1}
+\lim_{t \to \infty}\frac{M + 1}{t+1}\sum_{i = 1}^{t} \left( 1 - \sqrt{\frac{i-1}{i+1}}\right)
+\lim_{t \to \infty}\frac{4C}{t+1}\sum_{i = 1}^{t}  \frac{1}{t}\\
&= 0 + (M+1) \cdot 1 \cdot 0 +  4C \cdot 1 \cdot 0 = 0
    \end{align*}
\end{proof}

\begin{theorem}
\label{thm:signgd_dynamics}
Let an input activation $\tilde{x}(t)$ be signed schedule coded over an $N$ weighted input spike trains, i.e., 
\begin{align}
    &\tilde{x}^{(k)}(t) = \tilde{x}^{(k)}(t-1 ) - \sum_{i=1}^N W_i\bigg(  \eta(t)(2I_i^{(k)}(t) -1)\bigg)\label{eq:signsgd_activation}
\end{align}
$\tilde{f}(t)  = \tilde{f}(t-1) - \eta(t) \big(2\cdot s(t)- 1\big)$ is signed schedule coded with the spike output $s(t)$ of signGD-based neuronal dynamics.\\
If $\alpha_1$, $\alpha_2$ $\beta_1$, $\beta_2$ and $\eta$ satisfies $\eta(1) = \alpha_2(1) = \beta_2(1)$ and
\begin{align}
\frac{\eta(t)}{\eta(t-1)} = \beta_1(t)\frac{\beta_2(t)}{\beta_2(t-1)} = \frac{1}{\alpha_1(t-1)}\frac{\alpha_2(t)}{\alpha_2(t-1)} \label{eq:signsgd_coeff_condition}
\end{align}
Then the dynamical system of $\tilde{f}(t)$ 
is equivalent to a sign gradient descent method
\begin{align}
\tilde{f}&(t) = \tilde{f}(t-1) - \eta(t) \text{sgn} (\nabla_y\mathcal{L}(\tilde{f}(t-1); \tilde{x}(t))) \label{eq:signsgd_subg}
\end{align}
\end{theorem}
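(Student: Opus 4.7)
The plan is to prove the theorem by establishing two coupled invariants by induction on $t$:
\[
\frac{\eta(t)}{\alpha_2(t)}\,v^{(k)}(t) \;=\; \tilde{x}^{(k)}(t), \qquad \frac{\eta(t-1)}{\beta_2(t-1)}\,u(t) \;=\; \tilde{f}(t-1),
\]
valid for every $t \ge 1$ and every operand index $k$. Once both invariants hold, plugging them into the firing rule~\eqref{eq:signsgd_fire} gives $s(t) = \mathbb{H}\bigl(\nabla_y \mathcal{L}(\tilde{f}(t-1); \tilde{x}(t))\bigr)$, and the identity $2\mathbb{H}(x) - 1 = \text{sgn}(x)$ converts the signed-schedule output update $\tilde{f}(t) = \tilde{f}(t-1) - \eta(t)(2s(t) - 1)$ directly into the signGD iterate~\eqref{eq:signsgd_subg}. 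Note that no convexity or smoothness of $\mathcal{L}$ is used; the equivalence is purely kinematic, reducing to the fact that the coefficient schedule is chosen to align two separate linear recurrences.

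For the base case I would take the natural initialization $v^{(k)}(0) = 0$, $u(1) = 0$, and $\tilde{f}(0) = \tilde{x}^{(k)}(0) = 0$; the normalization $\eta(1) = \alpha_2(1) = \beta_2(1)$ then makes both invariants hold at $t = 1$ by direct substitution into the coding equation for $\tilde{x}$ in the theorem statement and the membrane recurrences from Definition~\ref{def:signgd_dynamics}. For the inductive step on the first invariant, I would substitute the $v$-recurrence into $\frac{\eta(t+1)}{\alpha_2(t+1)}v^{(k)}(t+1)$ and use the hypothesis to replace $v^{(k)}(t)$ by $\frac{\alpha_2(t)}{\eta(t)}\tilde{x}^{(k)}(t)$, yielding
\[
\frac{\eta(t+1)}{\alpha_2(t+1)}\,v^{(k)}(t+1) \;=\; \frac{\eta(t+1)\,\alpha_1(t)\,\alpha_2(t)}{\alpha_2(t+1)\,\eta(t)}\,\tilde{x}^{(k)}(t) \;-\; \eta(t+1)\bigl(2I^{(k)}(t+1) - W\bigr).
\]
Because $\sum_i W_i\bigl(2I^{(k)}_i(t+1) - 1\bigr) = 2I^{(k)}(t+1) - W$, the coding equation rewrites $\tilde{x}^{(k)}(t+1) = \tilde{x}^{(k)}(t) - \eta(t+1)(2I^{(k)}(t+1) - W)$, so matching the two expressions reduces to the scalar identity $\eta(t+1)\,\alpha_1(t)\,\alpha_2(t) = \alpha_2(t+1)\,\eta(t)$, which is precisely the $\alpha$-ratio equality in~\eqref{eq:signsgd_coeff_condition} with index shifted by one. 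The inductive step for the $u$-invariant proceeds identically using the $u$-recurrence and the $\beta$-ratio equality in~\eqref{eq:signsgd_coeff_condition}.

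The main obstacle is the off-by-one bookkeeping across the two recurrences: the $v$-update pairs the input current at time $t+1$ with $\alpha_2(t+1)$ whereas the $u$-update pairs the spike at time $t$ with $\beta_2(t)$, so the invariant scaling factors $\eta(t)/\alpha_2(t)$ and $\eta(t-1)/\beta_2(t-1)$ are themselves staggered by one time step. This staggering is exactly what forces the asymmetric placement of $\alpha_1(t-1)$ on one side and $\beta_1(t)$ on the other of~\eqref{eq:signsgd_coeff_condition}; the initial normalization $\eta(1) = \alpha_2(1) = \beta_2(1)$ then seeds both inductions so that no stray multiplicative factor propagates through time, and the conclusion follows immediately by combining the two invariants with the firing rule.
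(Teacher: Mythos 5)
Your proposal is correct and is essentially the paper's own argument run in the opposite direction: the two invariants $\frac{\eta(t)}{\alpha_2(t)}v^{(k)}(t)=\tilde{x}^{(k)}(t)$ and $\frac{\eta(t-1)}{\beta_2(t-1)}u(t)=\tilde{f}(t-1)$ are exactly the change of variables the paper posits at the start of its proof, and your induction verifies the same two recurrence alignments (including the $W=\sum_i W_i$ bookkeeping) that the paper checks by direct substitution, so there is no genuinely different idea here. One caveat on the step you wave through as ``proceeds identically'': carrying out the $u$-induction actually requires $\beta_1(t)\beta_2(t-1)\eta(t)=\beta_2(t)\eta(t-1)$, i.e.\ $\frac{\eta(t)}{\eta(t-1)}=\frac{1}{\beta_1(t)}\frac{\beta_2(t)}{\beta_2(t-1)}$, which is the \emph{reciprocal} of the displayed $\beta$-condition in~\eqref{eq:signsgd_coeff_condition} (unlike the $\alpha$-condition, which matches your $v$-step exactly); the paper's own proof stumbles over the same reciprocal at this point, so this is best read as a typo in the stated condition rather than a flaw in your strategy, but you should state the identity you actually need rather than asserting it follows ``identically.''
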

\begin{proof}
We start from the equation~(\ref{eq:signsgd_subg}) to derive the signGD-based neuron in Definition~\ref{def:signgd_dynamics}. We define $u(t), v(t)$ to be
\begin{align}
u(t) = \frac{\beta_2(t-1)}{\eta(t-1)}\tilde{f}(t-1)
\qquad~v(t) = \frac{\alpha_2(t)}{\eta(t)}\tilde{x}(t) \qquad~s(t) = \mathbb{H}\big(\nabla_y\mathcal{L}(\frac{\eta(t-1)}{\beta_2(t-1)}u(t); \frac{\eta(t)}{\alpha_2(t)}v(t))\big)\label{pf:def_u_v_signgd}
\end{align}
From equation~\ref{eq:signsgd_subg}, by substituting $\tilde{f}(t)$ with $u(t)$ and $\tilde{x}(t)$ with $v(t)$,
\begin{align}
\frac{\eta(t)}{\beta_2(t)}u(t+1) &= \frac{\eta(t-1)}{\beta_2(t-1)}u(t) - \eta(t) \text{sgn} (\nabla_y\mathcal{L}(\frac{\eta(t-1)}{\beta_2(t-1)}u(t); \frac{\eta(t)}{\alpha_2(t)}v(t))) \nonumber\\
u(t+1) &= \frac{\beta_2(t)}{\eta(t)}\frac{\eta(t-1)}{\beta_2(t-1)}u(t) - \frac{\beta_2(t)}{\eta(t)}\eta(t) \text{sgn} (\nabla_y\mathcal{L}(\frac{\eta(t-1)}{\beta_2(t-1)}u(t); \frac{\eta(t)}{\alpha_2(t)}v(t)))\nonumber\\
&= \frac{\beta_2(t)}{\eta(t)}\frac{\eta(t-1)}{\beta_2(t-1)}u(t) - \beta_2(t) \text{sgn} (\nabla_y\mathcal{L}(\frac{\eta(t-1)}{\beta_2(t-1)}u(t); \frac{\eta(t)}{\alpha_2(t)}v(t))) \label{pf:signgd_u}
\end{align}
By equation~\eqref{eq:signsgd_coeff_condition}, $\beta_1(t) = \frac{\eta(t)}{\beta_2(t)}\frac{\beta_2(t-1)}{\eta(t-1)}$. The equation~\eqref{pf:signgd_u} becomes identical to the equation~\eqref{eq:signsgd_reset} in the definition of signGD-based neuron.
\begin{equation}
u(t+1) = \beta_1(t)u(t) - \beta_2(t) (2\cdot s(t) - 1)
\end{equation}
We now derive the membrane equation for $v^{(k)}(t)$.
\begin{align}
v^{(k)}(t+1) - \alpha_1(t)v^{(k)}(t) &= \frac{\alpha_2(t+1)}{\eta(t+1)}\tilde{x}^{(k)}(t+1) - \alpha_1(t)\frac{\alpha_2(t)}{\eta(t)}\tilde{x}^{(k)}(t)\nonumber\\
&= \frac{\alpha_2(t+1)}{\eta(t+1)} \bigg(\tilde{x}^{(k)}(t ) - \sum_{i=1}^N W_i\big(  \eta(t+1)(2I_i^{(k)}(t+1) -1)\big)\bigg)- \alpha_1(t)\frac{\alpha_2(t)}{\eta(t)}\tilde{x}^{(k)}(t)
\end{align}
By equation~\eqref{eq:signsgd_coeff_condition}, $\alpha_1(t)\frac{\alpha_2(t)}{\eta(t)} = \frac{\alpha_2(t+1)}{\eta(t+1)}$. Since $I^{(k)}(t+1) = \sum_{i=1}^N W_i(I_i^{(k)})(t+1)$ and $W = \sum_{i=1}^N W_i$,
\begin{align}
v^{(k)}(t+1) - \alpha_1(t)v^{(k)}(t) 
&= \frac{\alpha_2(t+1)}{\eta(t+1)} \bigg(\tilde{x}^{(k)}(t ) - \sum_{i=1}^N W_i\big(  \eta(t+1)(2I_i^{(k)}(t+1) -1)\big)\bigg)- \alpha_1(t)\frac{\alpha_2(t)}{\eta(t)}\tilde{x}^{(k)}(t)\nonumber\\
&= - \frac{\alpha_2(t+1)}{\eta(t+1)}  \sum_{i=1}^N W_i\big(  \eta(t+1)(2I_i^{(k)}(t+1) -1)\big)\nonumber\\
&= - \alpha_2(t+1) \sum_{i=1}^N W_i(2I_i^{(k)}(t+1) -1) = - \alpha_2(t+1) \big(2(\sum_{i=1}^N W_iI_i^{(k)}(t+1)) -W\big)\nonumber\\
&=  - \alpha_2(t+1) (2 \cdot I^{(k)} - W)\nonumber\\
\therefore v^{(k)}(t+1) &= \alpha_1(t)v^{(k)}(t) - \alpha_2(t+1) (2 \cdot I^{(k)} - W)
\end{align}
We thus derive the equation for $v^{(k)}(t)$. The proof direction from signGD-based neuronal dynamics to optimization algorithm can be similarly derived by defining $\tilde{f}(t)$ and $\tilde{x}(t)$ as~\eqref{pf:def_u_v_signgd}.
\end{proof}
\begin{corollary}
\label{thm:signgd_unary}
SignGD-based neuronal dynamics~(Def.~\ref{def:signgd_dynamics}) satisfying Eq.~\ref{eq:main_signsgd_coeff_condition} and $\eta(t) = \alpha_2(t) = \beta_2(t)$ is equivalent to signGD (Eq.~\ref{eq:main_signsgd_subg}) if for $s(t)$ (Eq.~\ref{eq:signsgd_fire}),
\begin{itemize}
\item \textbf{(ReLU)} $\mathcal{L}(y;x) = \frac{1}{2}\Vert y - \text{ReLU(x)} \Vert^2$ and $s(t) = \mathbb{H}\big(v(t)\big)\mathbb{H}\big(u(t) - \beta_1(t)v(t)\big) + \mathbb{H}\big(-v(t)\big)\mathbb{H}\big(u(t)\big)$
\item \textbf{(Sigmoid approximation of GELU)}~\cite{hendrycks2016gelu}) $\mathcal{L}(y;x) = \frac{1}{2}\Vert y - \frac{x}{1+e^{-1.702x}} \Vert^2$ and $s(t) = \mathbb{H}\big((1 + (e^{-1.702})^{v(t)}) u(t) - \beta_1(t) v(t) \big)$
\item \textbf{(LeakyReLU)} $\mathcal{L}(y;x) = \frac{1}{2}\Vert y - \text{LeakyReLU(x, $\delta$)} \Vert^2$, where $\delta$ is the negative slope, and $s(t) = \mathbb{H}(v(t))\mathbb{H}\big(u(t) -\beta_1(t)v(t)\big) + \mathbb{H}(-v(t))\mathbb{H}\big(u(t) - \delta \beta_1(t)v(t)\big) $
\end{itemize}
\end{corollary}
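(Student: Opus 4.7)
\textbf{Proof proposal for Corollary~\ref{cor:unary_signsgd}.} The plan is to invoke Theorem~\ref{thm:signsgd} as a black box and then verify, case by case, that the displayed firing rule $s(t)$ equals $\mathbb{H}\bigl(\nabla_y\mathcal{L}(\tfrac{\eta(t-1)}{\beta_2(t-1)}u(t);\tfrac{\eta(t)}{\alpha_2(t)}v(t))\bigr)$ for each of the three objective functions. Once this equality is established, Theorem~\ref{thm:signsgd} immediately yields the claimed equivalence with the signGD iteration~\eqref{eq:main_signsgd_subg}.

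The first step is a bookkeeping simplification. The additional hypothesis $\eta(t)=\alpha_2(t)=\beta_2(t)$ collapses the ratios $\tfrac{\eta(t-1)}{\beta_2(t-1)}=\tfrac{\eta(t)}{\alpha_2(t)}=1$, so the firing condition reduces to $s(t)=\mathbb{H}\bigl(\nabla_y\mathcal{L}(u(t);v(t))\bigr)$. Substituting $\eta(t)=\beta_2(t)$ into the coefficient constraint~\eqref{eq:main_signsgd_coeff_condition} also forces $\beta_1(t)=1$ (and similarly $\alpha_1(t-1)=1$), so wherever $\beta_1(t)$ appears in the stated $s(t)$ it can be replaced by $1$ without altering the equality. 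I will record this observation once at the start of the proof and then treat $\beta_1(t)$ formally as a harmless multiplicative factor of $1$ in each case.

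For the ReLU case, $\nabla_y\mathcal{L}(y;x)=y-\mathrm{ReLU}(x)$, so the firing condition is $\mathbb{H}\bigl(u(t)-\mathrm{ReLU}(v(t))\bigr)$. A case split on the sign of $v(t)$ (using $\mathrm{ReLU}(v)=v\cdot\mathbb{H}(v)$ and the partition $\mathbb{H}(v)+\mathbb{H}(-v)=1$ away from $v=0$) produces exactly the two summands $\mathbb{H}(v(t))\,\mathbb{H}(u(t)-v(t))$ and $\mathbb{H}(-v(t))\,\mathbb{H}(u(t))$. For the LeakyReLU case the argument is identical with $\mathrm{ReLU}(v)$ replaced by $v\mathbb{H}(v)+\delta v\mathbb{H}(-v)$; the two branches of the case split then give $\mathbb{H}(u(t)-v(t))$ and $\mathbb{H}(u(t)-\delta v(t))$ respectively, matching the stated $s(t)$.

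For the sigmoid-GELU case, $\nabla_y\mathcal{L}(y;x)=y-x/(1+e^{-1.702x})$, so the firing condition is $\mathbb{H}\bigl(u(t)-v(t)/(1+e^{-1.702v(t)})\bigr)$. Since $1+e^{-1.702v(t)}>0$ for every real $v(t)$, multiplying the argument of $\mathbb{H}$ by this positive quantity preserves its sign, and the condition rewrites as $\mathbb{H}\bigl((1+e^{-1.702v(t)})u(t)-v(t)\bigr)$, which is exactly the stated $s(t)$ once $\beta_1(t)$ is read as $1$. I expect the only subtle point, and hence the main obstacle, to be a careful treatment of the measure-zero boundary events $v(t)=0$ (ReLU/LeakyReLU) where the subgradient is set-valued; this is handled by noting that the convention $\mathbb{H}(0)=0$ or $1$ is consistent with one selected element of the subdifferential and the equality then holds pointwise almost everywhere, which is sufficient for the iterative dynamics to agree.
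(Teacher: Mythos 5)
Your proposal is correct and follows essentially the same route as the paper's proof: compute $\nabla_y\mathcal{L}$ for each objective, substitute the scaled states $\tfrac{\eta(t-1)}{\beta_2(t-1)}u(t)$ and $\tfrac{\eta(t)}{\alpha_2(t)}v(t)$ into Eq.~\eqref{eq:signsgd_fire}, and simplify the Heaviside of the gradient by a sign split on $v(t)$ together with positive-scale invariance of $\mathbb{H}$. The one substantive difference is bookkeeping: you note that the stated hypotheses force $\beta_1(t)=1$ and collapse every ratio to $1$ up front, whereas the paper keeps the coefficients symbolic and uses only $\alpha_2(t)=\beta_2(t)$ together with Eq.~\eqref{eq:main_signsgd_coeff_condition} to reduce $\tfrac{\eta(t)\beta_2(t-1)}{\eta(t-1)\alpha_2(t)}$ to $\beta_1(t)$; your observation is valid and in fact exposes that the $\beta_1(t)$ appearing in the stated firing rules is redundant under the literal hypothesis, but the paper's version of the argument is what keeps the $\beta_1(t)$-form meaningful for the non-degenerate schedules actually used elsewhere (e.g.\ $\alpha_2=\beta_2=1$, $\eta(t)=\eta(0)\gamma^t$, $\beta_1(t)=\gamma$, where $\eta(t)\neq\beta_2(t)$). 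Your explicit treatment of the measure-zero boundary $v(t)=0$ is fine and, if anything, more careful than the paper, which silently uses $\mathbb{H}(x)+\mathbb{H}(-x)=1$.
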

\begin{proof}
\textbf{(ReLU)} We first show that signGD-based neuronal dynamics with a spiking mechanism $s(t) = \mathbb{H}\big(v(t)\big)\mathbb{H}\big(u(t) - \beta_1(t)v(t)\big) + \mathbb{H}\big(-v(t)\big)\mathbb{H}\big(u(t)\big)$ is equivalent to the signGD algorithm (Eq.~\ref{eq:main_signsgd_subg}) with the objective function $\mathcal{L}(y;x) = \frac{1}{2}\Vert y - \text{ReLU}(x)\Vert^2$, which has $\text{ReLU}(x)$ as its minimizer.

Since $\mathcal{L}(y;x) = \frac{1}{2}\Vert y - \text{ReLU}(x)\Vert^2$, $\nabla_y\mathcal{L}(y;x) = (y - \text{ReLU}(x)) = \mathbb{H}(x)(y - x) + \mathbb{H}(-x)y$
\begin{align*}
    s(t) &= \mathbb{H}\bigg(\nabla_y\mathcal{L}\big(\frac{\eta(t-1)}{\beta_2(t-1)}u(t);\frac{\eta(t)}{\alpha_2(t)}v(t)\big)\bigg)\\
    &= \mathbb{H}\bigg(\mathbb{H}(\frac{\eta(t)}{\alpha_2(t)}v(t))\big(\frac{\eta(t-1)}{\beta_2(t-1)}u(t) - \frac{\eta(t)}{\alpha_2(t)}v(t)\big) + \mathbb{H}(-\frac{\eta(t)}{\alpha_2(t)}v(t))\frac{\eta(t-1)}{\beta_2(t-1)}u(t)\bigg)\\
    &= \mathbb{H}\bigg(\frac{\eta(t)}{\alpha_2(t)}v(t)\bigg)\mathbb{H}\bigg(\frac{\eta(t-1)}{\beta_2(t-1)}u(t) - \frac{\eta(t)}{\alpha_2(t)}v(t)\bigg) + \mathbb{H}\bigg(-\frac{\eta(t)}{\alpha_2(t)}v(t)\bigg)\mathbb{H}\bigg(\frac{\eta(t-1)}{\beta_2(t-1)}u(t)\bigg)\\
    &= \mathbb{H}(v(t))\mathbb{H}\bigg(u(t) - \frac{\eta(t)\beta_2(t-1)}{\eta(t-1)\alpha_2(t)}v(t)\bigg) + \mathbb{H}(-v(t))\mathbb{H}(u(t))\\
    &= \mathbb{H}(v(t))\mathbb{H}\bigg(u(t) - \beta_1(t)\frac{\beta_2(t)}{\alpha_2(t)}v(t)\bigg) + \mathbb{H}(-v(t))\mathbb{H}(u(t))\\
    &= \mathbb{H}(v(t))\mathbb{H}\bigg(u(t) - \beta_1(t)v(t)\bigg) + \mathbb{H}(-v(t))\mathbb{H}(u(t))
\end{align*}
\textbf{(GELU)} We now show that signGD-based neuronal dynamics with a spiking mechanism $s(t) = \mathbb{H}\big((1 + (e^{-1.702})^{v(t)}) u(t) - \beta_1(t) v(t) \big)$ is equivalent to the signGD algorithm (Eq.~\ref{eq:main_signsgd_subg}) with the objective function $\mathcal{L}(y;x) = \frac{1}{2}\Vert y - \frac{x}{1+e^{-1.702x}} \Vert^2$, which has the sigmoid approximation of GELU function value $\frac{x}{1+e^{-1.702x}}$ as its minimizer.

Since $\mathcal{L}(y;x) = \frac{1}{2}\Vert y - g(x) \Vert^2$, $\nabla_y\mathcal{L}(y;x) = (y - g(x)) = y - \frac{x}{1+e^{-1.702x}}$
\begin{align*}
    s(t) &= \mathbb{H}\bigg(\nabla_y\mathcal{L}\big(\frac{\eta(t-1)}{\beta_2(t-1)}u(t);\frac{\eta(t)}{\alpha_2(t)}v(t)\big)\bigg)\\
    &= \mathbb{H}\bigg(\frac{\eta(t-1)}{\beta_2(t-1)}u(t) - \frac{\frac{\eta(t)}{\alpha_2(t)}v(t)}{1+e^{-1.702\frac{\eta(t)}{\alpha_2(t)}v(t)}}\bigg)\\
    &= \mathbb{H}\bigg((1+e^{-1.702\frac{\eta(t)}{\alpha_2(t)}v(t)})\frac{\eta(t-1)}{\beta_2(t-1)}u(t) - \frac{\eta(t)}{\alpha_2(t)}v(t)\bigg)\\
    &= \mathbb{H}\bigg((1+e^{-1.702\frac{\eta(t)}{\alpha_2(t)}v(t)})u(t) - \frac{\eta(t)\beta_2(t-1)}{\eta(t-1)\alpha_2(t)}v(t)\bigg)\\
    &= \mathbb{H}\bigg((1+e^{-1.702\frac{\eta(t)}{\alpha_2(t)}v(t)})u(t) - \beta_1(t)\frac{\beta_2(t)}{\alpha_2(t)}v(t)\bigg)\\
    &= \mathbb{H}\bigg((1+e^{-1.702v(t)})u(t) - \beta_1(t)v(t)\bigg)
\end{align*}

\textbf{(LeakyReLU)} Finally, we show that signGD-based neuronal dynamics with a spiking mechanism $s(t) = \mathbb{H}(v(t))\mathbb{H}\big(u(t) -\beta_1(t)v(t)\big) + \mathbb{H}(-v(t))\mathbb{H}\big(u(t) - \delta \beta_1(t)v(t)\big) $ is equivalent to the signGD algorithm (Eq.~\ref{eq:main_signsgd_subg}) with the objective function $\mathcal{L}(y;x) = \frac{1}{2}\Vert y - \text{LeakyReLU(x, $\delta$)} \Vert^2$, which has $\text{LeakyReLU(x, $\delta$)}$ as its minimizer.

Since $\mathcal{L}(y;x) = \frac{1}{2}\Vert y - \text{LeakyReLU}(x, \delta) \Vert^2$, $\nabla_y\mathcal{L}(y;x) = y - \text{LeakyReLU}(x, \delta) = \mathbb{H}(x)(y -x) + \mathbb{H}(-x)(y - \delta x)$
\begin{align*}
    s(t) &= \mathbb{H}\bigg(\nabla_y\mathcal{L}\big(\frac{\eta(t-1)}{\beta_2(t-1)}u(t);\frac{\eta(t)}{\alpha_2(t)}v(t)\big)\bigg)\\
    &=\mathbb{H}(\frac{\eta(t)}{\alpha_2(t)}v(t))\mathbb{H}\big(\frac{\eta(t-1)}{\beta_2(t-1)}u(t) -\frac{\eta(t)}{\alpha_2(t)}v(t)\big) + \mathbb{H}(-\frac{\eta(t)}{\alpha_2(t)}v(t))\mathbb{H}\big(\frac{\eta(t-1)}{\beta_2(t-1)}u(t) - \delta \frac{\eta(t)}{\alpha_2(t)}v(t)\big)\\
    &=\mathbb{H}(v(t))\mathbb{H}\big(u(t) -\frac{\eta(t)\beta_2(t-1)}{\eta(t-1)\alpha_2(t)}v(t)\big) + \mathbb{H}(-v(t))\mathbb{H}\big(u(t) - \delta \frac{\eta(t)\beta_2(t-1)}{\eta(t-1)\alpha_2(t)}v(t)\big)\\
    &=\mathbb{H}(v(t))\mathbb{H}\big(u(t) -\beta_1(t)\frac{\beta_2(t)}{\alpha_2(t)}v(t)\big) + \mathbb{H}(-v(t))\mathbb{H}\big(u(t) - \delta \beta_1(t)\frac{\beta_2(t)}{\alpha_2(t)}v(t)\big)\\
    &=\mathbb{H}(v(t))\mathbb{H}\big(u(t) -\beta_1(t)v(t)\big) + \mathbb{H}(-v(t))\mathbb{H}\big(u(t) - \delta \beta_1(t)v(t)\big)
\end{align*}
\end{proof}

\begin{corollary}
\label{thm:signgd_max}
SignGD (Eq.~\ref{eq:main_signsgd_subg}) with $\mathcal{L}\big(y;(x_1,x_2)\big) = \frac{1}{2}\Vert y - \max(x_1, x_2) \Vert^2$ is equivalent to the signGD-based neuronal dynamics~(Def.~\ref{def:signgd_dynamics})
satisfying equation~\ref{eq:main_signsgd_coeff_condition}, $\alpha_2(t) = \beta_2(t)$ and 
\begin{equation*}
\begin{split}
s(t) =\mathbb{H}(v^{(1)}(t) - v^{(2)}(t))(u(t) -\beta_1(t) v^{(1)}(t))+ \mathbb{H}(v^{(2)}(t) - v^{(1)}(t))(u(t) - \beta_1(t)v^{(2)}(t))
\end{split}
\end{equation*}    
\end{corollary}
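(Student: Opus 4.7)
My plan is to mirror the proof structure of Corollary~\ref{thm:signgd_unary} (ReLU/LeakyReLU cases), specialising the computation of the gradient to the two-operand objective $\mathcal{L}(y;(x_1,x_2)) = \tfrac{1}{2}\|y-\max(x_1,x_2)\|^2$. The starting observation is that, almost everywhere, $\max(x_1,x_2) = \mathbb{H}(x_1-x_2) x_1 + \mathbb{H}(x_2-x_1) x_2$, so the gradient with respect to $y$ splits as
\begin{equation*}
\nabla_y \mathcal{L}(y;(x_1,x_2)) = \mathbb{H}(x_1-x_2)(y-x_1) + \mathbb{H}(x_2-x_1)(y-x_2).
\end{equation*}
I would then substitute this into the firing rule (Eq.~\ref{eq:signsgd_fire}) with $y = \tfrac{\eta(t-1)}{\beta_2(t-1)} u(t)$ and $x_k = \tfrac{\eta(t)}{\alpha_2(t)} v^{(k)}(t)$, exactly as in the ReLU derivation.

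Next, I would use two elementary properties of $\mathbb{H}$. First, positive-scale invariance: since $\tfrac{\eta(t)}{\alpha_2(t)}>0$, we have $\mathbb{H}\!\left(\tfrac{\eta(t)}{\alpha_2(t)}(v^{(1)}(t)-v^{(2)}(t))\right) = \mathbb{H}(v^{(1)}(t)-v^{(2)}(t))$, and similarly for the reversed difference. Second, a ``disjoint-indicator'' identity: because $\mathbb{H}(A)$ and $\mathbb{H}(-A)$ partition the real line, $\mathbb{H}\!\left(\mathbb{H}(A)\cdot B + \mathbb{H}(-A)\cdot C\right) = \mathbb{H}(A)\mathbb{H}(B) + \mathbb{H}(-A)\mathbb{H}(C)$. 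Applying this identity pulls the outer Heaviside through the selector.

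The remaining step is to collapse the coefficient $\tfrac{\eta(t)\beta_2(t-1)}{\eta(t-1)\alpha_2(t)}$ that appears in front of $v^{(k)}(t)$ inside each inner Heaviside. Using the required coefficient condition (Eq.~\ref{eq:main_signsgd_coeff_condition}) I can rewrite this ratio as $\beta_1(t)\tfrac{\beta_2(t)}{\alpha_2(t)}$, and then the additional hypothesis $\alpha_2(t)=\beta_2(t)$ reduces it to exactly $\beta_1(t)$. This yields the claimed form
\begin{equation*}
s(t) = \mathbb{H}(v^{(1)}(t)-v^{(2)}(t))\,\mathbb{H}(u(t)-\beta_1(t) v^{(1)}(t)) + \mathbb{H}(v^{(2)}(t)-v^{(1)}(t))\,\mathbb{H}(u(t)-\beta_1(t) v^{(2)}(t)),
\end{equation*}
matching the target expression (interpreting the product terms in the statement as containing an outer $\mathbb{H}$, consistent with the ReLU/LeakyReLU entries of Corollary~\ref{thm:signgd_unary}).

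The only subtle point is the disjoint-indicator identity at the tie $v^{(1)}(t)=v^{(2)}(t)$, where both arguments of $\mathbb{H}$ vanish; by convention $\mathbb{H}(0)=1$ (used elsewhere in the paper), so both branches fire identically and the identity still holds. Everything else is bookkeeping paralleling the single-operand proof; I do not expect a genuine obstacle beyond verifying this tie-breaking convention is consistent with $\max$ being well-defined on the diagonal.
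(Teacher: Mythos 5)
Your proposal follows essentially the same route as the paper's proof: write $\nabla_y\mathcal{L}(y;x_1,x_2)=\mathbb{H}(x_1-x_2)(y-x_1)+\mathbb{H}(x_2-x_1)(y-x_2)$, substitute $y=\tfrac{\eta(t-1)}{\beta_2(t-1)}u(t)$ and $x_k=\tfrac{\eta(t)}{\alpha_2(t)}v^{(k)}(t)$ into the firing rule, pull the outer $\mathbb{H}$ through the two disjoint branches using positive-scale invariance, and collapse the ratio $\tfrac{\eta(t)\beta_2(t-1)}{\eta(t-1)\alpha_2(t)}$ to $\beta_1(t)$ via Eq.~\eqref{eq:main_signsgd_coeff_condition} and $\alpha_2(t)=\beta_2(t)$. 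Your reading of the product terms as containing an inner $\mathbb{H}$ is also the correct one (the paper's own derivation lands on exactly that form), so the argument is correct and matches the paper.
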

\begin{proof}
Since $\mathcal{L}\big(y;(x_1,x_2)\big) = \frac{1}{2}\Vert y - \max(x_1, x_2) \Vert^2$, $\nabla_y\mathcal{L}(y;x_1, x_2) = y - \max(x_1, x_2) = \mathbb{H}(x_1 - x_2)(y -x_1) + \mathbb{H}(x_2 - x_1)(y - x_2)$.
\begin{align*}
    s(t) &= \mathbb{H}\bigg(\nabla_y\mathcal{L}\big(\frac{\eta(t-1)}{\beta_2(t-1)}u(t);\frac{\eta(t)}{\alpha_2(t)}v(t)\big)\bigg)\\
    &=  \mathbb{H}(\frac{\eta(t)}{\alpha_2(t)}v^{(1)}(t) - \frac{\eta(t)}{\alpha_2(t)}v^{(2)}(t))\mathbb{H}(\frac{\eta(t-1)}{\beta_2(t-1)}u(t) -\frac{\eta(t)}{\alpha_2(t)}v^{(1)}(t)) \\
    &+ \mathbb{H}(\frac{\eta(t)}{\alpha_2(t)}v^{(2)}(t) - \frac{\eta(t)}{\alpha_2(t)}v^{(1)}(t))\mathbb{H}(\frac{\eta(t-1)}{\beta_2(t-1)}u(t) - \frac{\eta(t)}{\alpha_2(t)}v^{(2)}(t))\\
    &= \mathbb{H}(v^{(1)}(t) - v^{(2)}(t))\mathbb{H}(u(t) -\frac{\eta(t)\beta_2(t-1)}{\eta(t-1)\alpha_2(t)}v^{(1)}(t))+ \mathbb{H}(v^{(2)}(t) - v^{(1)}(t))\mathbb{H}(u(t) - \frac{\eta(t)\beta_2(t-1)}{\eta(t-1)\alpha_2(t)}v^{(2)}(t))\\
    &= \mathbb{H}(v^{(1)}(t) - v^{(2)}(t))\mathbb{H}(u(t) -\frac{\eta(t)\beta_2(t-1)}{\eta(t-1)\alpha_2(t)}v^{(1)}(t))+ \mathbb{H}(v^{(2)}(t) - v^{(1)}(t))\mathbb{H}(u(t) - \frac{\eta(t)\beta_2(t-1)}{\eta(t-1)\alpha_2(t)}v^{(2)}(t))\\
    &= \mathbb{H}(v^{(1)}(t) - v^{(2)}(t))\mathbb{H}(u(t) -\beta_1(t)\frac{\beta_2(t)}{\alpha_2(t)} v^{(1)}(t))+ \mathbb{H}(v^{(2)}(t) - v^{(1)}(t))\mathbb{H}(u(t) - \beta_1(t)\frac{\beta_2(t)}{\alpha_2(t)}v^{(2)}(t))\\
    &= \mathbb{H}(v^{(1)}(t) - v^{(2)}(t))\mathbb{H}(u(t) -\beta_1(t) v^{(1)}(t))+ \mathbb{H}(v^{(2)}(t) - v^{(1)}(t))\mathbb{H}(u(t) - \beta_1(t)v^{(2)}(t))
\end{align*}
\end{proof}
\begin{corollary}
\label{thm:signgd_square}
SignGD (Eq.~\ref{eq:main_signsgd_subg}) with $\mathcal{L}(y;x) =  \frac{1}{2}\Vert y - x^2 \Vert^2$ is equivalent to the signGD-based neuronal dynamics~(Def.~\ref{def:signgd_dynamics})
satisfying equation~\ref{eq:main_signsgd_coeff_condition}, $\eta(t) =\beta_2(t) = \alpha_2(t)$ and
\begin{equation*}
\begin{split}
s(t) =& \mathbb{H}\bigg(u(t) - \beta_1(t)v(t)^2\bigg)
\end{split}
\end{equation*}    
\end{corollary}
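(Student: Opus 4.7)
The plan is to mirror the strategy used in Corollary~\ref{thm:signgd_unary} and Corollary~\ref{thm:signgd_max}: apply Theorem~\ref{thm:signgd_dynamics}, which already establishes the equivalence between signGD and the signGD-based neuronal dynamics in general, and then specialize the firing condition (Eq.~\ref{eq:signsgd_fire}) to the particular objective $\mathcal{L}(y;x) = \tfrac{1}{2}\|y - x^2\|^2$ under the parameter constraints $\eta(t)=\alpha_2(t)=\beta_2(t)$ together with Eq.~\ref{eq:main_signsgd_coeff_condition}. The goal reduces to showing that the general firing rule collapses to $s(t) = \mathbb{H}(u(t) - \beta_1(t) v(t)^2)$.

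The first step is to compute $\nabla_y\mathcal{L}(y;x) = y - x^2$, which is smooth and convex in $y$ with minimizer $y = x^2$. Substituting the internal variables into the firing mechanism of Definition~\ref{def:signgd_dynamics} gives
\begin{equation*}
s(t) = \mathbb{H}\!\left(\frac{\eta(t-1)}{\beta_2(t-1)} u(t) - \left(\frac{\eta(t)}{\alpha_2(t)}\right)^{\!2} v(t)^2 \right).
\end{equation*}
Since $\mathbb{H}$ is invariant under positive scaling, I would multiply the argument by $\beta_2(t-1)/\eta(t-1) > 0$, which reduces the claim to verifying the identity
\begin{equation*}
\frac{\beta_2(t-1)\, \eta(t)^2}{\eta(t-1)\, \alpha_2(t)^2} \;=\; \beta_1(t).
\end{equation*}

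The main (mildly delicate) step is this coefficient identity, which is where the squaring departs from the linear-in-$v$ cases handled previously. From Eq.~\ref{eq:main_signsgd_coeff_condition} I have $\eta(t)\beta_2(t-1)/\eta(t-1) = \beta_1(t)\beta_2(t)$, so the left-hand side can be rewritten as $\beta_1(t)\beta_2(t)\eta(t)/\alpha_2(t)^2$. Under the assumption $\eta(t) = \alpha_2(t) = \beta_2(t)$, the factor $\beta_2(t)\eta(t)/\alpha_2(t)^2$ collapses to $1$, yielding $\beta_1(t)$ as required. Combining with the scale-invariance step above gives $s(t) = \mathbb{H}(u(t) - \beta_1(t) v(t)^2)$.

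Having verified that the specialized firing rule coincides with the firing rule of Definition~\ref{def:signgd_dynamics} instantiated at this $\mathcal{L}$, the equivalence with signGD (Eq.~\ref{eq:main_signsgd_subg}) then follows directly from Theorem~\ref{thm:signgd_dynamics}, since all of its hypotheses (Eq.~\ref{eq:main_signsgd_coeff_condition} and the normalization $\eta(1)=\alpha_2(1)=\beta_2(1)$, which is implied by $\eta(t)=\alpha_2(t)=\beta_2(t)$) are satisfied. The only substantive work, as anticipated, is the quadratic coefficient cancellation; everything else is a mechanical specialization of the general equivalence.
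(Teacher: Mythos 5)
Your proposal is correct and follows essentially the same route as the paper's proof: compute $\nabla_y\mathcal{L}(y;x)=y-x^2$, substitute the rescaled internal variables into the firing rule, and use the scale-invariance of $\mathbb{H}$ together with the coefficient condition and $\eta(t)=\alpha_2(t)=\beta_2(t)$ to reduce the prefactor $\beta_2(t-1)\eta(t)^2/(\eta(t-1)\alpha_2(t)^2)$ to $\beta_1(t)$. The paper performs exactly this cancellation, so there is nothing to add.
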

\begin{proof}
Since $\mathcal{L}(y;x) =  \frac{1}{2}\Vert y - x^2 \Vert^2$, $\nabla_y\mathcal{L}(y;x) = y - x^2$.
\begin{align*}
    s(t) &= \mathbb{H}\bigg(\nabla_y\mathcal{L}\big(\frac{\eta(t-1)}{\beta_2(t-1)}u(t);\frac{\eta(t)}{\alpha_2(t)}v(t)\big)\bigg)\\
    &=\mathbb{H}\bigg(\frac{\eta(t-1)}{\beta_2(t-1)}u(t) - (\frac{\eta(t)}{\alpha_2(t)}v(t))^2\bigg)\\
    &=\mathbb{H}\bigg(u(t) - \frac{\eta(t)^2\beta_2(t-1)}{\eta(t-1)\alpha_2(t)^2}v(t)^2\bigg)\\
    &=\mathbb{H}\bigg(u(t) - \beta_1(t)\frac{\eta(t)\beta_2(t)}{\alpha_2(t)^2}v(t)^2\bigg)\\
    &=\mathbb{H}\bigg(u(t) - \beta_1(t)v(t)^2\bigg)
\end{align*}
\end{proof}
\begin{corollary}
\label{thm:signgd_misr}
SignGD (Eq.~\ref{eq:main_signsgd_subg}) with $ \mathcal{L}(y;x_1,x_2) = \Vert y - \frac{x_1}{\sqrt{x_2}} \Vert^2$ is equivalent to the signGD-based neuronal dynamics~(Def.~\ref{def:signgd_dynamics})
satisfying equation~\ref{eq:main_signsgd_coeff_condition}, and $\eta(t) = \alpha_2(t) = \beta_2(t)$ and 
\begin{equation*}
\begin{split}
s(t) = &~\mathbb{H}\big(u(t)\big)\mathbb{H}\big(v_1(t)\big)\mathbb{H}\big(v_2(t)u(t)^2 - v_1(t)^2\big) 
 \\
&+ \mathbb{H}\big(-u(t)\big)\mathbb{H}\big(-v_1(t)\big)\mathbb{H}\big(v_1(t)^2 - v_2(t)u(t)^2\big)  
\\
&+ \mathbb{H}(u(t))\mathbb{H}(-v_1(t))
\end{split}
\end{equation*}    
\end{corollary}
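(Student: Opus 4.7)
The plan is to mirror the derivations used in Corollaries~\ref{thm:signgd_unary},~\ref{thm:signgd_max}, and~\ref{thm:signgd_square}: start from the signGD update~\eqref{eq:main_signsgd_subg}, evaluate $\text{sgn}\big(\nabla_y\mathcal{L}(\tilde f(t-1);\tilde x(t))\big)$ explicitly, translate it via the identification $\tilde{f}(t-1)=\frac{\eta(t-1)}{\beta_2(t-1)}u(t)$ and $\tilde{x}_k(t)=\frac{\eta(t)}{\alpha_2(t)}v^{(k)}(t)$ provided by Theorem~\ref{thm:signgd_dynamics}, and finally simplify using the assumptions~\eqref{eq:main_signsgd_coeff_condition} together with $\eta(t)=\alpha_2(t)=\beta_2(t)$.

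First, I compute $\nabla_y\mathcal{L}(y;x_1,x_2)=2\big(y-\tfrac{x_1}{\sqrt{x_2}}\big)$, so that
\[
\text{sgn}\big(\nabla_y\mathcal{L}(y;x_1,x_2)\big)
=\mathbb{H}\!\left(y-\frac{x_1}{\sqrt{x_2}}\right),
\]
the factor $2$ being absorbed by the scale-invariance of $\mathbb{H}$. Under $\eta(t)=\alpha_2(t)=\beta_2(t)$, the coefficient conditions~\eqref{eq:main_signsgd_coeff_condition} force $\frac{\eta(t-1)}{\beta_2(t-1)}=\frac{\eta(t)}{\alpha_2(t)}=1$, so the spiking criterion reduces to
\[
s(t)=\mathbb{H}\!\left(u(t)-\frac{v^{(1)}(t)}{\sqrt{v^{(2)}(t)}}\right),
\]
which is meaningful precisely when $v^{(2)}(t)\geq 0$ (the regime of interest, since $v^{(2)}(t)$ carries a variance-like signal).

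Second, I carry out a case analysis on the signs of $u(t)$ and $v^{(1)}(t)$. Let $U=u(t)$, $V_1=v^{(1)}(t)$, $V_2=v^{(2)}(t)\geq 0$. When $U\geq 0$ and $V_1\leq 0$, the inequality $U\geq V_1/\sqrt{V_2}$ holds trivially, producing the lone term $\mathbb{H}(U)\mathbb{H}(-V_1)$. When $U\leq 0$ and $V_1\geq 0$, it fails, contributing nothing. When $U\geq 0$ and $V_1\geq 0$, both sides are non-negative, so squaring preserves the order and $U\geq V_1/\sqrt{V_2}\iff V_2U^2-V_1^2\geq 0$, giving $\mathbb{H}(U)\mathbb{H}(V_1)\mathbb{H}(V_2U^2-V_1^2)$. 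When $U\leq 0$ and $V_1\leq 0$, both sides are non-positive and negating flips the comparison, so $U\geq V_1/\sqrt{V_2}\iff |U|\leq |V_1|/\sqrt{V_2}\iff V_1^2-V_2U^2\geq 0$, giving $\mathbb{H}(-U)\mathbb{H}(-V_1)\mathbb{H}(V_1^2-V_2U^2)$. Summing the four disjoint contributions yields exactly the stated expression for $s(t)$, after which Theorem~\ref{thm:signgd_dynamics} completes the equivalence.

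I do not expect a serious obstacle: the only subtlety is in the last two cases, where one must recall that squaring an inequality reverses its direction when both sides are negative; otherwise the computation is a direct specialization of Theorem~\ref{thm:signgd_dynamics} analogous to the preceding corollaries. Potential pitfalls at the boundary $V_2=0$ or $U=0$ are absorbed by the Heaviside convention used throughout the paper and do not affect the stated identity.
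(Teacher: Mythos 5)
Your proposal is correct and follows essentially the same route as the paper: compute $\nabla_y\mathcal{L}$, reduce the firing condition to $\mathbb{H}\big(u(t)-v^{(1)}(t)/\sqrt{v^{(2)}(t)}\big)$ using $\eta(t)=\alpha_2(t)=\beta_2(t)$, and expand into the three Heaviside products. The only difference is that you explicitly justify the sign-case decomposition (including the inequality reversal when both sides are negative), which the paper simply asserts as an identity before substituting the scaled variables.
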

\begin{proof}
Since $\mathcal{L}(y;x_1,x_2) = \frac{1}{2}\Vert y - \frac{x_1}{\sqrt{x_2}} \Vert^2$, $\nabla_y\mathcal{L}(y;x) = y - \frac{x_1}{\sqrt{x_2}}$.
\begin{align*}
\mathbb{H}(y - \frac{x_1}{\sqrt{x_2}}) 
 = \mathbb{H}\big(y\big)\mathbb{H}\big(x_1\big)\mathbb{H}\big(x_2y^2 - x_1^2\big) + \mathbb{H}\big(-y\big)\mathbb{H}\big(-x_1\big)\mathbb{H}\big(x_1^2 - x_2y^2\big) + \mathbb{H}(y)\mathbb{H}(-x_1)
\end{align*}
Applying the equation to the spike mechanism $s(t)$,
\begin{align*}
    s(t) =& \mathbb{H}\bigg(\nabla_y\mathcal{L}\big(\frac{\eta(t-1)}{\beta_2(t-1)}u(t);\frac{\eta(t)}{\alpha_2(t)}v(t)\big)\bigg)\\
    =&\mathbb{H}\big(\frac{\eta(t-1)}{\beta_2(t-1)}u(t)\big)\mathbb{H}\big(\frac{\eta(t)}{\alpha_2(t)}v^{(1)}(t)\big)\mathbb{H}\big(\frac{\eta(t)}{\alpha_2(t)}v^{(2)}(t)(\frac{\eta(t-1)}{\beta_2(t-1)}u(t))^2 - (\frac{\eta(t)}{\alpha_2(t)}v^{(1)}(t))^2\big) \\
    &+ \mathbb{H}\big(-\frac{\eta(t-1)}{\beta_2(t-1)}u(t)\big)\mathbb{H}\big(-\frac{\eta(t)}{\alpha_2(t)}v^{(1)}(t)\big)\mathbb{H}\big((\frac{\eta(t)}{\alpha_2(t)}v^{(1)}(t))^2 - \frac{\eta(t)}{\alpha_2(t)}v^{(2)}(t)(\frac{\eta(t-1)}{\beta_2(t-1)}u(t))^2\big) \\
    &+ \mathbb{H}(\frac{\eta(t-1)}{\beta_2(t-1)}u(t))\mathbb{H}(-\frac{\eta(t)}{\alpha_2(t)}v^{(1)}(t))\\    =&\mathbb{H}\big(u(t)\big)\mathbb{H}\big(v^{(1)}(t)\big)\mathbb{H}\big(\frac{\alpha_2(t)}{\eta(t)}\frac{\eta(t-1)^2}{\beta_2(t-1)^2}v^{(2)}(t)u(t)^2 - v^{(1)}(t)^2\big) \\
    &+ \mathbb{H}\big(-u(t)\big)\mathbb{H}\big(-v^{(1)}(t)\big)\mathbb{H}\big(v^{(1)}(t)^2 - \frac{\alpha_2(t)}{\eta(t)}\frac{\eta(t-1)^2}{\beta_2(t-1)^2}v^{(2)}(t)u(t)^2\big) \\
    &+ \mathbb{H}(u(t))\mathbb{H}(-v^{(1)}(t))\\    =&\mathbb{H}\big(u(t)\big)\mathbb{H}\big(v^{(1)}(t)\big)\mathbb{H}\big(v^{(2)}(t)u(t)^2 - v^{(1)}(t)^2\big) \\
    &+ \mathbb{H}\big(-u(t)\big)\mathbb{H}\big(-v^{(1)}(t)\big)\mathbb{H}\big(v^{(1)}(t)^2 - v^{(2)}(t)u(t)^2\big) \\
    &+ \mathbb{H}(u(t))\mathbb{H}(-v^{(1)}(t))
\end{align*}
\end{proof}

\begin{figure}[!ht]
\vskip 0.2in
\begin{center}
    \subfigure[Inverse LR, $\eta(t) = \frac{1}{t+1}$ $\approx$ IF + Rate of $\theta_{th} = R = 1$]{
        \includegraphics[width=0.31\linewidth]{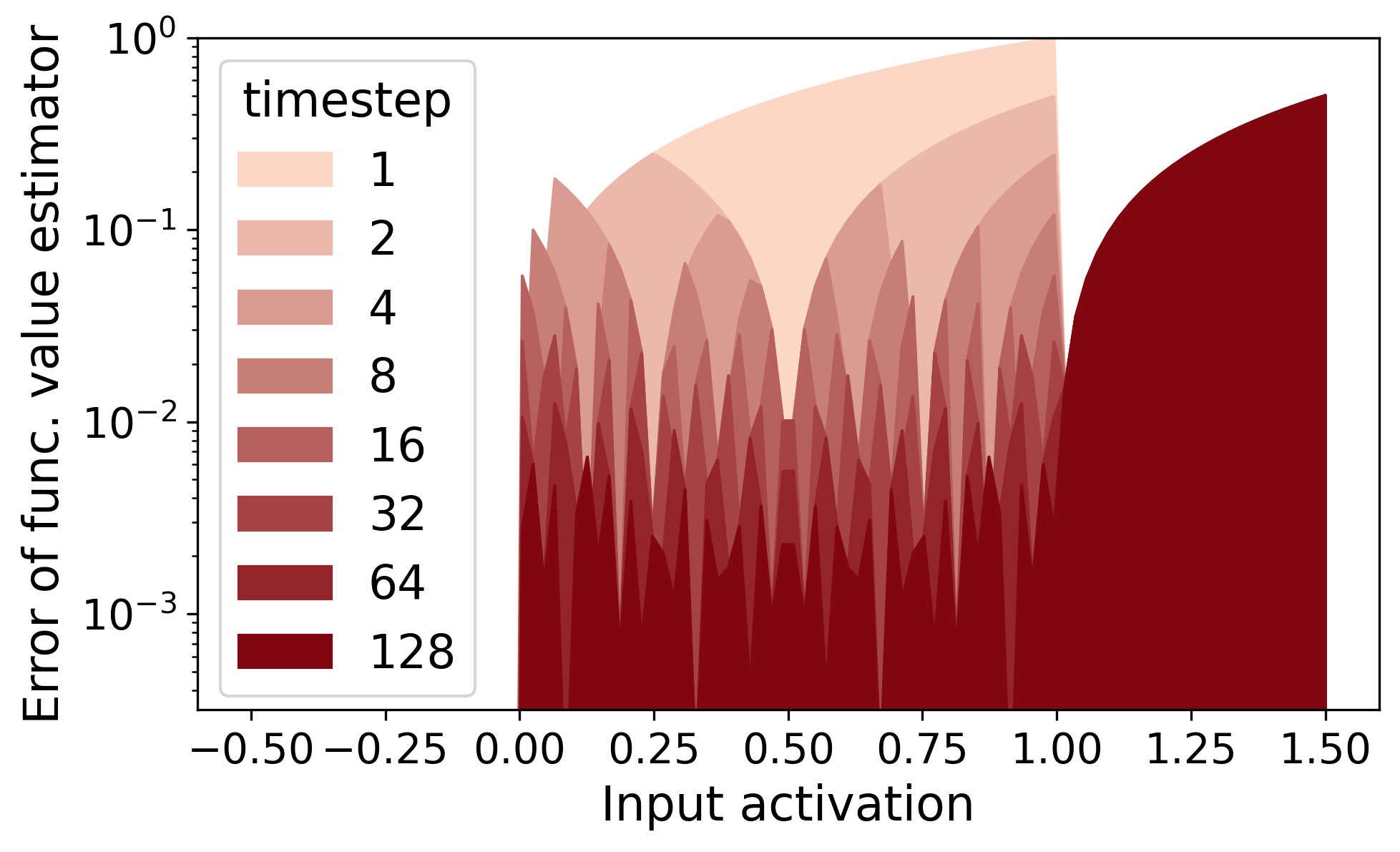}
        \label{fig:toy_subgradient_inverse_1.0}
    }
    \hfill
    \subfigure[Constant LR, $\eta(t) = 0.05$ $\approx$ LIF + EMA of $\theta_{th}= 1, \tau = 20,u_{rest} = 0$]{
        \includegraphics[width=0.31\linewidth]{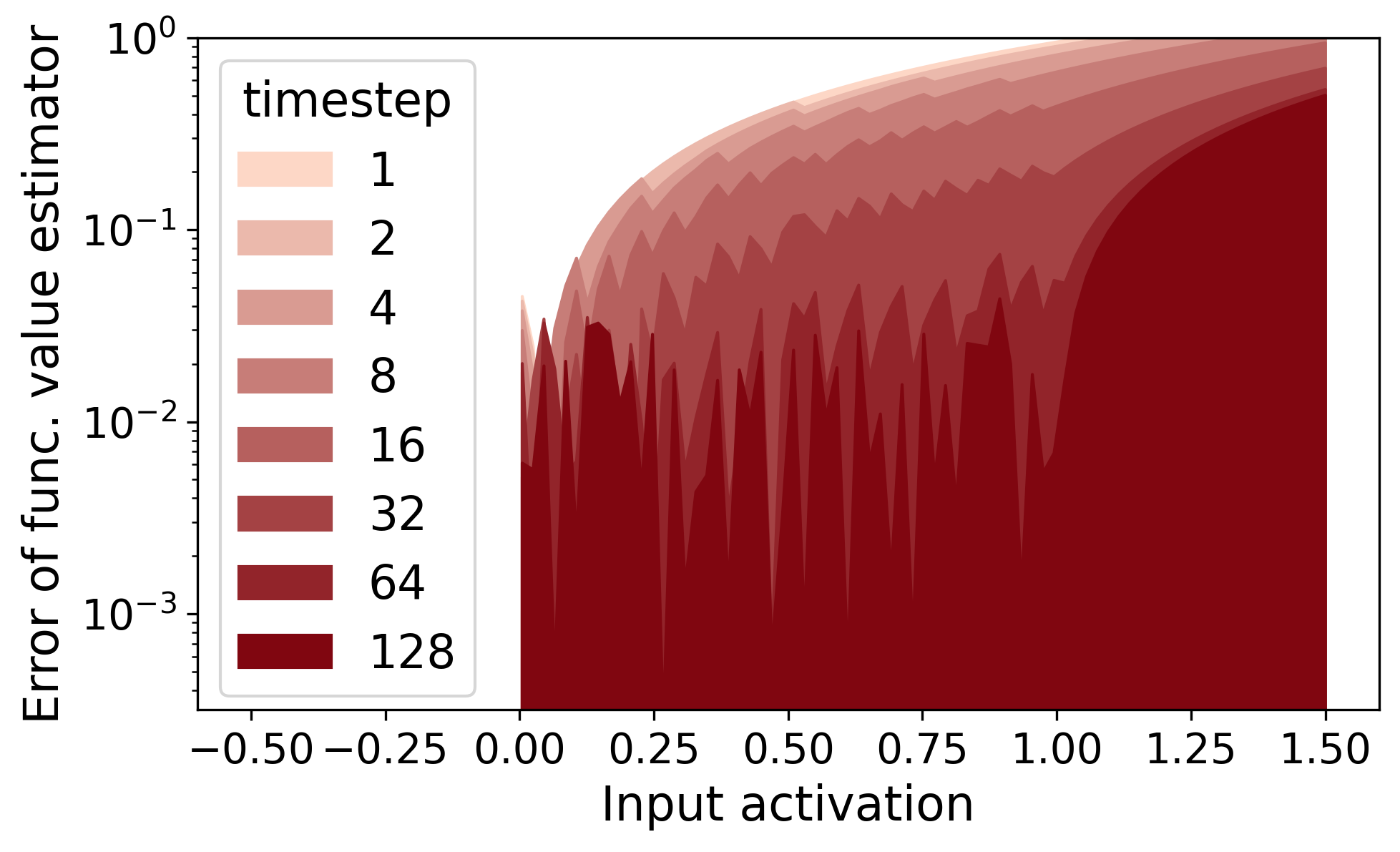}
        \label{fig:toy_subgradient_constant_0.05}
    }
    \hfill
    \subfigure[Exp. LR, $\eta(t) = (0.9)^t$.]{
        \includegraphics[width=0.31\linewidth]{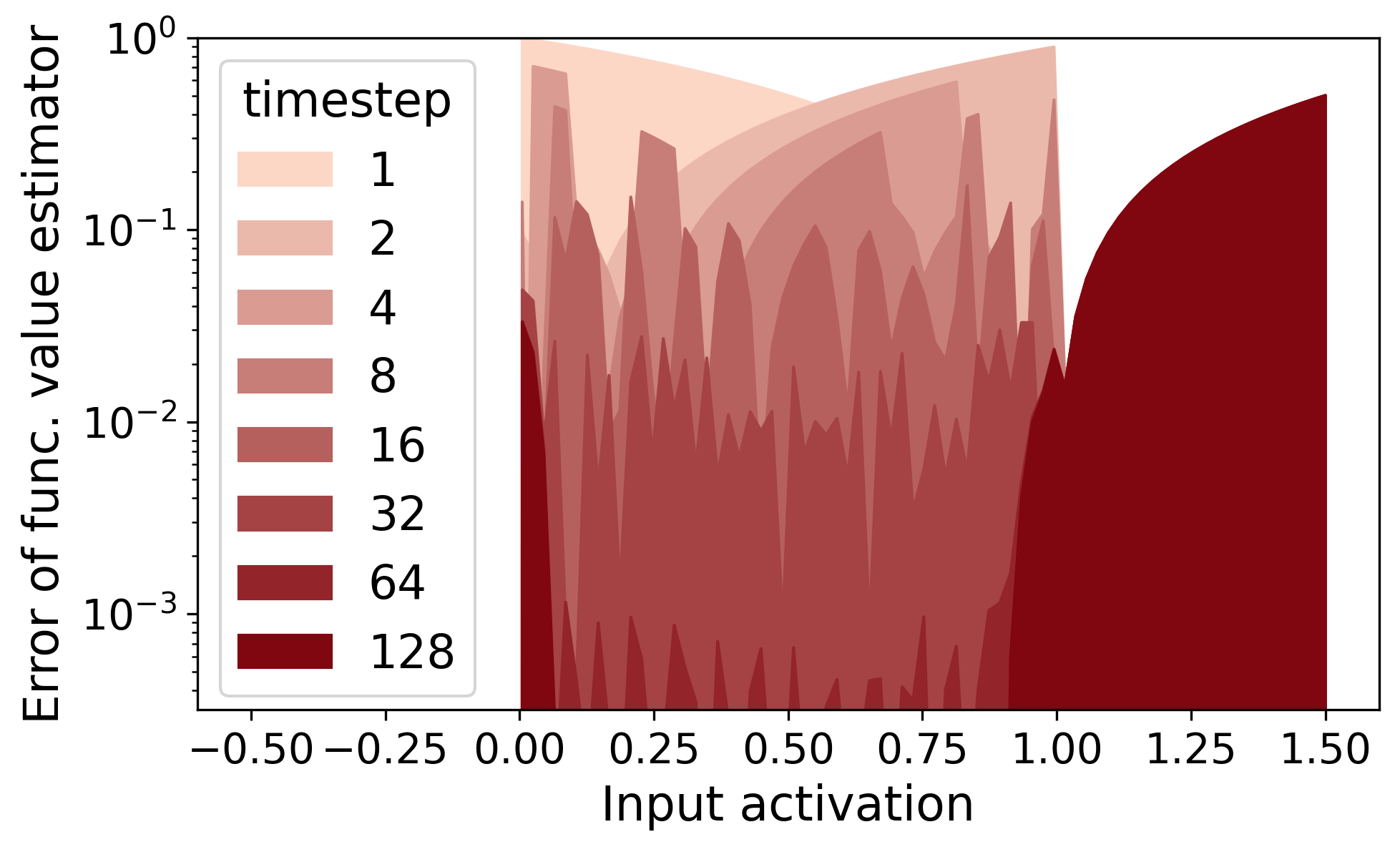}
        \label{fig:toy_subgradient_exponential}
    }
\caption{Toy experiment on subgradient-based neuronal dynamics with generalized learning rate schedule (Definition~\ref{thm:lr_schedule}). Time-evolution of error (Y-axis, log-scale) between ReLU function and spike-decoded neuron output, over a float input activation (X-axis) }
\label{fig:subgradient_lr_schedule_toy_experiment}
\end{center}
\vskip -0.2in
\end{figure}
\section{Generalizing learning rate schedule over subgradient-based neuronal dynamics}
To verify the effect of signGD dynamics on acceleration, we generalize the learning rate schedule of subgradient method-based neuronal dynamics, which underlies the simple integrate-and-fire models. In Figure~\ref{fig:subgradient_lr_schedule_toy_experiment}, we also empirically verify that the subgradient-based neuronal dynamics are well-defined by approximating the clipped ReLU function with the subgradient-based neuron. We design neuronal dynamics with three different learning rate schedules: inverse (Figure~\ref{fig:toy_subgradient_inverse_1.0}), exponential (Figure~\ref{fig:toy_subgradient_exponential}), and constant (Figure~\ref{fig:toy_subgradient_constant_0.05}) schedule. We formulate them as follows. 
\label{sec:lr_schedule_subgradient}
\begin{definition}
\textbf{(Schedule coding)} Let a spike train $s(t) \in \{0,1\}$, a step size schedule $\eta(t) \in \mathbb{R}^{+}$ for $t \in \mathbb{N}$ and $y(0) = 0$. Schedule coding with $\eta(t)$ decodes an activation $y(t)$ from $s(t)$ as
\begin{equation}
y(t) = (1-\eta(t)) y(t-1) + \eta(t) s(t)\label{eq:lr_schedule_coding}
\end{equation}
\end{definition}
\begin{definition}
\label{def:subgradient_based_neuron}
\textbf{(Subgradient-based neuronal dynamics)} Positive coefficients $\alpha_{i}(t) \in \mathbb{R}^{+}$, $\beta_{i}(t) \in \mathbb{R}^{+}$, and step size schedule $\eta(t) \in \mathbb{R}^{+}$ for $i=1,2$ and $t \in \mathbb{N}$. The subgradient-based neuronal dynamics over membrane potential $u(t) \in \mathbb{R}$ is 
\begin{align}
u_{pre}(t+1) &= \alpha(t)u(t) +  \gamma(t+1)I(t+1) \label{eq:lr_schedule_charge}\\
s(t+1) &= \mathbb{H}\big(u_{pre}(t+1) - u_{pre}(0)\prod^{t}_{j=0}\alpha(j) \big)\label{eq:lr_schedule_fire}\\
u(t+1) &= u_{pre}(t+1) -\beta(t+1) s(t+1)\label{eq:lr_schedule_reset}
\end{align}
\end{definition}
\begin{theorem}
\label{thm:lr_schedule}
Let $\mathcal{L}(y;x)= \text{ReLU}\big( x - y\big) + \frac{1}{2} y^2$ and its sub-gradient $\tilde{g}(y;x)$ over $y$. If input $\tilde{x}(t) = (1-\eta(t)) x(t-1) + \eta(t) I(t)$ and output activation $\tilde{f}(t) = (1-\eta(t)) \tilde{f}(t-1) + \eta(t) s(t)$ are schedule coded with $\eta(t)$ and $\eta$, $\alpha$, $\beta$, $\gamma$ satisfies for $i,t \in \mathbb{N}$
\begin{align}
\frac{\beta(t)}{\eta(t)}(1-\eta(t)) =  \frac{\beta(t-1)}{\eta(t-1)}\alpha(t-1) \label{eq:coeff_condition}\\
\frac{\eta(i)}{\eta(t)} \prod_{j=i+1}^{t}(1-\eta(j))
= \frac{\gamma(i)}{\beta(t)}\prod_{j=i}^{t-1} \alpha(j)\label{eq:coeff_prod_condition}
\end{align}
Then the dynamical system of $\tilde{f}(t)$ and $\tilde{x}(t)$ 
is equivalent to a subgradient method
\begin{align}
\tilde{f}(t&) = \tilde{f}(t-1) - \eta(t) \cdot \tilde{g}\big(\tilde{f}(t-1); \tilde{x}(t)\big) \label{eq:lr_schedule_subg}
\end{align}
\end{theorem}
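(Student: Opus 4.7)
}
The plan is to follow the same template used in the proofs of Theorem~\ref{thm:if_dynamics} and Theorem~\ref{thm:signgd_dynamics}: unroll the linear recurrences on both sides (the neuronal dynamics and the coding scheme), match them term-by-term using the coefficient identities (\ref{eq:coeff_condition}) and (\ref{eq:coeff_prod_condition}), and then read off the firing rule as a Heaviside of a subgradient. First I would eliminate $u(t)$ from the dynamics by substituting the reset equation (\ref{eq:lr_schedule_reset}) into (\ref{eq:lr_schedule_charge}) to get the one-step recurrence $u_{pre}(t+1) = \alpha(t)\,u_{pre}(t) - \alpha(t)\beta(t) s(t) + \gamma(t+1) I(t+1)$. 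Telescoping this from $t=0$ and peeling off the homogeneous part $u_{pre}(0)\prod_{j=0}^{t}\alpha(j)$ gives the closed form
\begin{equation*}
\tilde{u}(t)\;:=\;u_{pre}(t) - u_{pre}(0)\!\!\prod_{j=0}^{t-1}\!\alpha(j) \;=\; \sum_{i=1}^{t}\gamma(i)\!\!\prod_{j=i}^{t-1}\!\alpha(j)\,I(i) \;-\; \sum_{i=1}^{t-1}\beta(i)\!\!\prod_{j=i}^{t-1}\!\alpha(j)\,s(i).
\end{equation*}

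Second, I would apply the same telescoping to the schedule coding (\ref{eq:lr_schedule_coding}) with $\tilde{x}(0)=\tilde{f}(0)=0$, obtaining $\tilde{x}(t)=\sum_{i=1}^{t}\eta(i)\prod_{j=i+1}^{t}(1-\eta(j))\,I(i)$ and an analogous expression for $\tilde{f}(t-1)$ with $s(i)$ in place of $I(i)$. Now the heart of the argument is to verify that $\tilde{u}(t)$ is a positive scalar multiple of $\tilde{x}(t)-\tilde{f}(t-1)$ (or of an equivalent quantity whose sign coincides with the subgradient direction). Here condition (\ref{eq:coeff_prod_condition}) rewrites the $I(i)$ coefficient of $\tilde{u}(t)$ as a rescaling of the corresponding coefficient in $\tilde{x}(t)$, while condition (\ref{eq:coeff_condition}), applied recursively as $b(t)(1-\eta(t))=b(t-1)\alpha(t-1)$ with $b(t):=\beta(t)/\eta(t)$, propagates the same rescaling to the $s(i)$ coefficients in $\tilde{f}(t-1)$. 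Together these yield a clean identity of the form $\tilde{u}(t) = C(t)\bigl(\tilde{x}(t) - \tilde{f}(t-1)\bigr)$ with $C(t)>0$, by scale-invariance of $\mathbb{H}$.

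Third, once that proportionality is established, the firing condition (\ref{eq:lr_schedule_fire}) reduces to $s(t)=\mathbb{H}\bigl(\tilde{x}(t)-\tilde{f}(t-1)\bigr)$, which is exactly the subgradient $-\tilde{g}\bigl(\tilde{f}(t-1);\tilde{x}(t)\bigr)+\tilde{f}(t-1)$ of $\mathcal{L}(y;x)=\mathrm{ReLU}(x-y)+\tfrac12 y^2$ up to the identity $\tilde{g}(y;x)=y-\mathbb{H}(x-y)$. Substituting into the schedule coding relation $\tilde{f}(t)=(1-\eta(t))\tilde{f}(t-1)+\eta(t)\,s(t)$ and rearranging gives precisely the claimed update $\tilde{f}(t)=\tilde{f}(t-1)-\eta(t)\,\tilde{g}(\tilde{f}(t-1);\tilde{x}(t))$.

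The main obstacle will be the bookkeeping in step two: the two products $\prod_{j=i}^{t-1}\alpha(j)$ and $\prod_{j=i+1}^{t}(1-\eta(j))$ are offset by one index, so ensuring that conditions (\ref{eq:coeff_condition}) and (\ref{eq:coeff_prod_condition}) jointly produce a \emph{single} constant $C(t)$ matching both the $I$-coefficients and the $s$-coefficients (rather than two different scalings that would leave a stray $(1-\eta(t))$ factor on the $s$-side) requires care. I would verify this by first using (\ref{eq:coeff_prod_condition}) at $i=t$ to deduce $\gamma(t)=\beta(t)$, then inductively using (\ref{eq:coeff_condition}) to propagate $\beta(i)\prod_{j=i}^{t-1}\alpha(j)$ into the required $(1-\eta(j))$-product form, absorbing index offsets via the empty-product conventions. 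The rest of the proof is then routine algebraic substitution paralleling the argument of Theorem~\ref{thm:if_dynamics}.
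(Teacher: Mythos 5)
Your overall strategy is the same as the paper's: eliminate $u(t)$ via the reset equation, unroll the membrane recurrence and the two coding recurrences, use \eqref{eq:coeff_prod_condition} to match the $I$-coefficients and \eqref{eq:coeff_condition} to match the $s$-coefficients, and finish with scale-invariance of $\mathbb{H}$ together with the identity $\tilde g(y;x)=y-\mathbb{H}(x-y)$. Your closed forms for $\tilde u(t)$, $\tilde x(t)$ and $\tilde f(t-1)$ are correct, as is your observation that \eqref{eq:coeff_prod_condition} at $i=t$ forces $\gamma(t)=\beta(t)$. (The paper telescopes a single combined recurrence in $\frac{\alpha(t)\beta(t)}{\eta(t)}\tilde f(t)$ rather than unrolling the two sides separately, but this is only a cosmetic difference.)

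The gap is precisely the step you defer as ``bookkeeping,'' and it is the entire content of the theorem. Carrying out the matching: \eqref{eq:coeff_prod_condition} gives $\gamma(i)\prod_{j=i}^{t-1}\alpha(j)=\frac{\beta(t)}{\eta(t)}\,\eta(i)\prod_{j=i+1}^{t}(1-\eta(j))$, so the $I$-part of $\tilde u(t)$ equals $\frac{\beta(t)}{\eta(t)}\tilde x(t)$; iterating \eqref{eq:coeff_condition} with $b(i):=\beta(i)/\eta(i)$ gives $\beta(i)\prod_{j=i}^{t-1}\alpha(j)=\frac{\beta(t)}{\eta(t)}\,\eta(i)\prod_{j=i+1}^{t}(1-\eta(j))$, which is $(1-\eta(t))$ times the coefficient of $s(i)$ in $\frac{\beta(t)}{\eta(t)}\tilde f(t-1)$. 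Hence $\tilde u(t)=\frac{\beta(t)}{\eta(t)}\bigl[\tilde x(t)-(1-\eta(t))\tilde f(t-1)\bigr]$: the two scalings differ by exactly the stray $(1-\eta(t))$ you flagged, and neither the empty-product conventions nor $\gamma(t)=\beta(t)$ removes it. Since $\tilde x(t)-(1-\eta(t))\tilde f(t-1)$ and $\tilde x(t)-\tilde f(t-1)$ can have opposite signs, the ``clean identity'' $\tilde u(t)=C(t)\bigl(\tilde x(t)-\tilde f(t-1)\bigr)$ needed in your step two does not follow from the stated hypotheses: e.g.\ with $\alpha\equiv\tfrac12$, $\beta\equiv\gamma\equiv 1$, $\eta\equiv\tfrac12$ (which satisfy both conditions), $u(0)=0$, $I(1)=1$, $I(2)=\tfrac14$, one gets $s(2)=1$ while $\mathbb{H}\bigl(\tilde x(2)-\tilde f(1)\bigr)=\mathbb{H}(-\tfrac18)=0$. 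The paper's own write-up reaches the conclusion only by identifying $\sum_{i\le t+1}\eta(i)\prod_{j=i+1}^{t}(1-\eta(j))I(i)$ with $\tilde x(t+1)$, i.e.\ by silently dropping the factor $(1-\eta(t+1))$ from the unrolled input coding; to close your argument you would have to either adopt that modified coding of $\tilde x$ or shift \eqref{eq:coeff_prod_condition} by one index so that both matchings yield the same constant. As written, the deferred step is not routine and is exactly where the proof fails.
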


\begin{proof}
Subgradient $\tilde{g}(t) = \tilde{f}(t-1)  - \mathbb{H}\big( \tilde{x}(t+1) - \tilde{f}(t) \big)$. We start from the neuronal dynamics (Definition~\ref{def:subgradient_based_neuron}) to derive the subgradient method. By definition of $\tilde{f}(t)$,
\begin{align*}
s(t)= \frac{1}{\eta(t)} \tilde{f}(t) - \bigg( \frac{1}{\eta(t)} - 1\bigg)\tilde{f}(t-1) 
\end{align*}
The equation~\eqref{eq:lr_schedule_reset} applies to equation~\eqref{eq:lr_schedule_charge} to formulate the recurrence relation of $\tilde{f}(t)$ and $u(t)$.
\begin{align}
u_{pre}(t+1) &= \alpha(t) u_{pre}(t) -\alpha(t)\beta(t)s(t) + \gamma(t+1) I(t+1)  \\
  \frac{\alpha(t)\beta(t)}{\eta(t)}\bigg(\tilde{f}(t) -  \big(1 - \eta(t)\big)\tilde{f}(t-1) \bigg)&= \alpha(t) u_{pre}(t) - u_{pre}(t+1) + \gamma(t+1) I(t+1) \label{eq:learning_rate_schedule_membrane_potential}
\end{align}
By equation~\eqref{eq:coeff_prod_condition}, $\frac{\alpha(t)\beta(t)}{\eta(t)}(1-\eta(t)) = \alpha(t) \frac{\alpha(t-1)\beta(t-1)}{\eta(t-1)}$
. With $\tilde{f}(0) = 0$, solving the recurrence relation yields
\begin{align*}
 \frac{\alpha(t)\beta(t)}{\eta(t)}  \tilde{f}(t)&= - u_{pre}(t+1) + u_{pre}(0)\prod^{t}_{j=0}\alpha(j)   + \sum_{i=1}^{t+1}\gamma(i) I(i) \prod_{j=i}^{t}\alpha(j)\\
    \tilde{f}(t)&= -\frac{\eta(t)}{\alpha(t)\beta(t)} u_{pre}(t+1) + \frac{\eta(t)}{\alpha(t)\beta(t)}u_{pre}(0)\prod^{t}_{j=0}\alpha(j)   + \frac{\eta(t)}{\alpha(t)\beta(t)}\sum_{i=1}^{t+1}\gamma(i) I(i) \prod_{j=i}^{t}\alpha(j)
\end{align*}
By equation~\eqref{eq:coeff_condition}, $\eta(i) \prod_{j=i+1}^{t}(1-\eta(j))
= \frac{\eta(t)}{\alpha(t)\beta(t)}\gamma(i)\prod_{j=i}^{t} \alpha(j)$. Hence,
\begin{align*}
    \tilde{f}(t)&= -\frac{\eta(t)}{\alpha(t)\beta(t)} u_{pre}(t+1) + \frac{\eta(t)}{\alpha(t)\beta(t)}u_{pre}(0)\prod^{t}_{j=0}\alpha(j)   + \sum_{i=1}^{t+1} \eta(i) \prod_{j=i+1}^{t}(1-\eta(j)) I(i)\\
    &= -\frac{\eta(t)}{\alpha(t)\beta(t)} u_{pre}(t+1) + \frac{\eta(t)}{\alpha(t)\beta(t)}u_{pre}(0)\prod^{t}_{j=0}\alpha(j)   + \tilde{x}(t+1)\\
    u_{pre}(t+1) &- u_{pre}(0)\prod^{t}_{j=0}\alpha(j) = \frac{\alpha(t)\beta(t)}{\eta(t)}\big( \tilde{x}(t+1) - \tilde{f}(t) \big)
\end{align*}
Substituting $u_{pre}$ with $\tilde{f}$ and $\tilde{x}$ in the definition of $s$,
\begin{align*}
    s(t+1) &= \mathbb{H}\big(u_{pre}(t+1) - u_{pre}(0)\prod^{t}_{j=0}\alpha(j) \big) =  \mathbb{H}\bigg( \frac{\alpha(t)\beta(t)}{\eta(t)} \big(\tilde{x}(t+1) - \tilde{f}(t)\big) \bigg) = \mathbb{H}\big( \tilde{x}(t+1) - \tilde{f}(t) \big)\\
    \therefore \tilde{f}(t) &= (1-\eta(t)) \tilde{f}(t-1) + \eta(t) \mathbb{H}\big( \tilde{x}(t) - \tilde{f}(t-1) \big)\\
    &= \tilde{f}(t-1) - \eta(t)\cdot  \tilde{g}(\tilde{f}(t-1); \tilde{x}(t))
\end{align*} 
\end{proof}

\begin{figure}[!ht]
\vskip -0.0in
\begin{center}
    \subfigure[IF neuron with $\theta_{th} = 1, R = 1$, float encoding.]{
        \includegraphics[width=0.31\linewidth]{        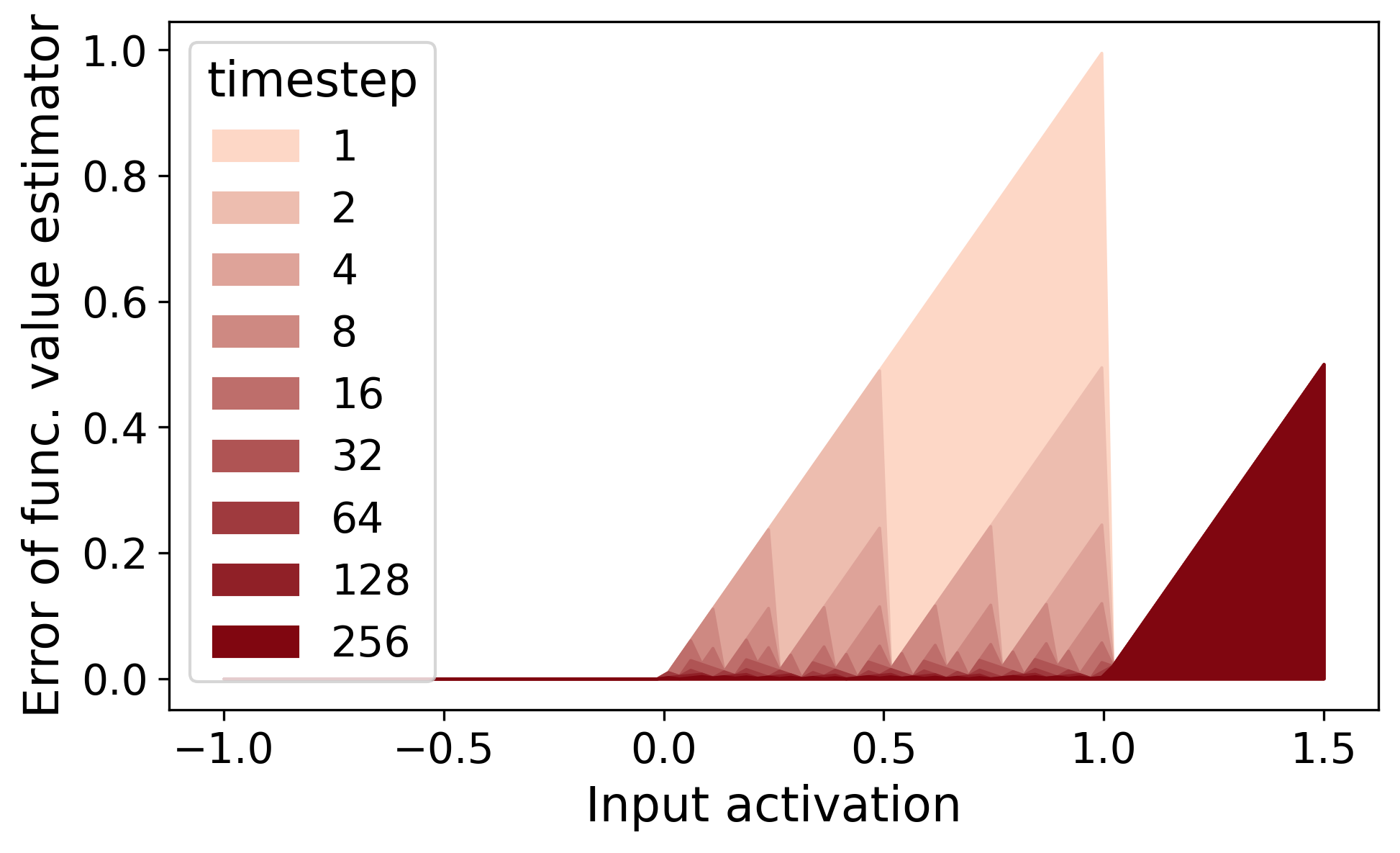}\label{fig:if_rate_error}
    }
    \hfill
    \subfigure[Fig.~\ref{fig:if_rate_error} transformed to $\tilde{f}(t)$]{
        \includegraphics[width=0.31\linewidth]{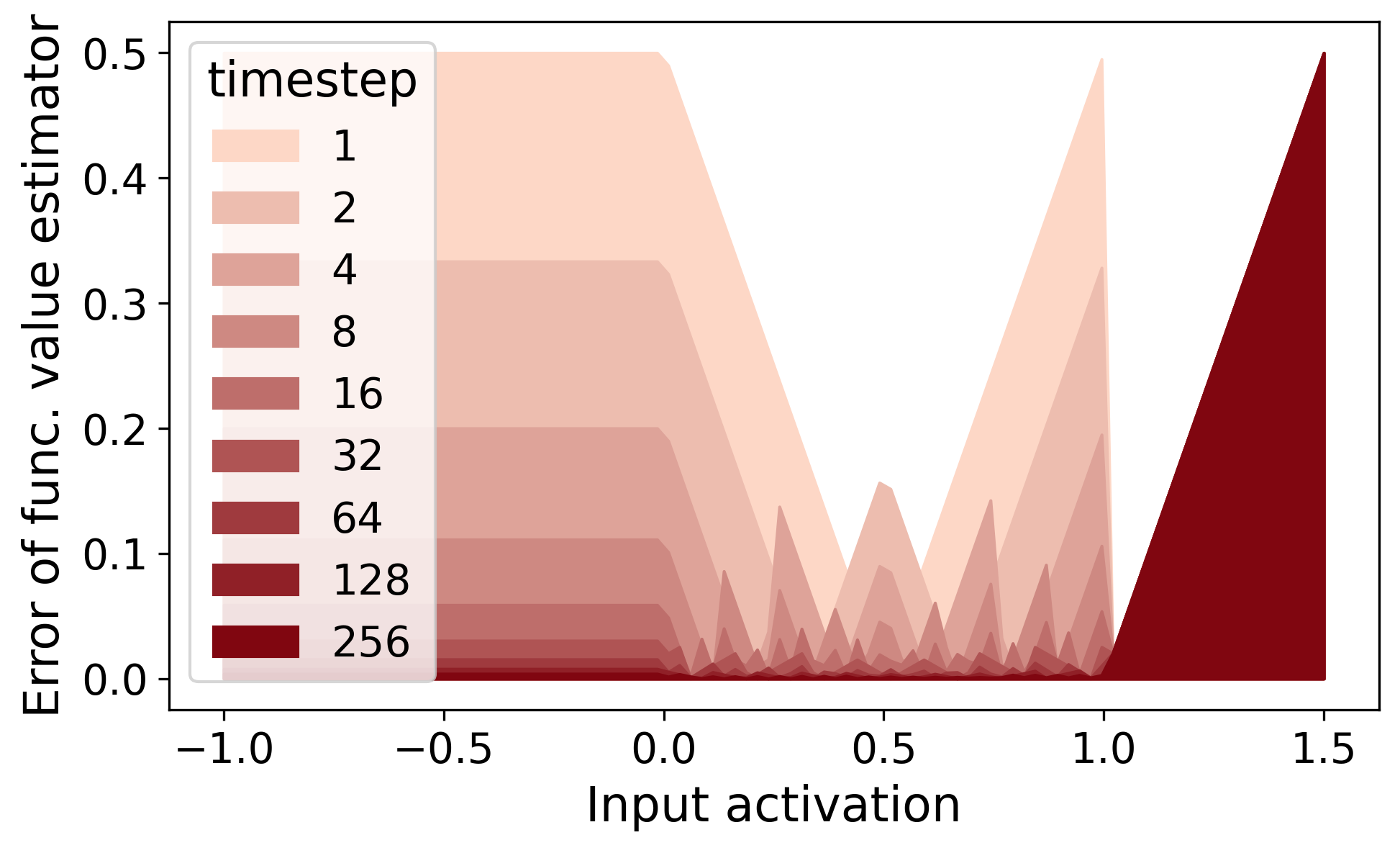}
    }
    \hfill
    \subfigure[Subgradient method on $\tilde{f}(t)$. Float encoding.]{
        \includegraphics[width=0.31\linewidth]{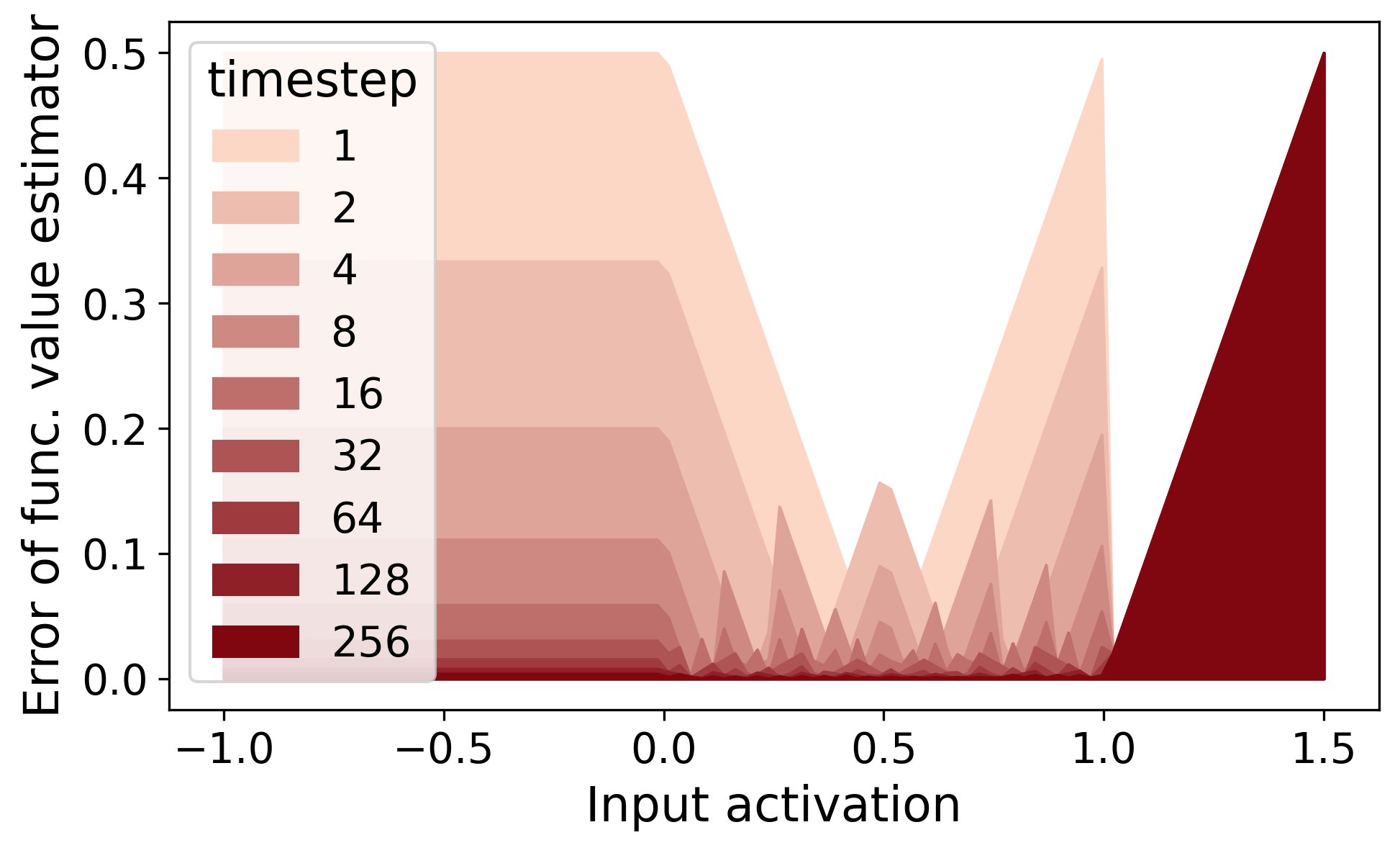}
        \label{fig:subgradient_error_float}
    }  
    \vfill
    \subfigure[IF neuron with $\theta_{th} = 1, R = 1$, poisson encoding.]{
        \includegraphics[width=0.31\linewidth]{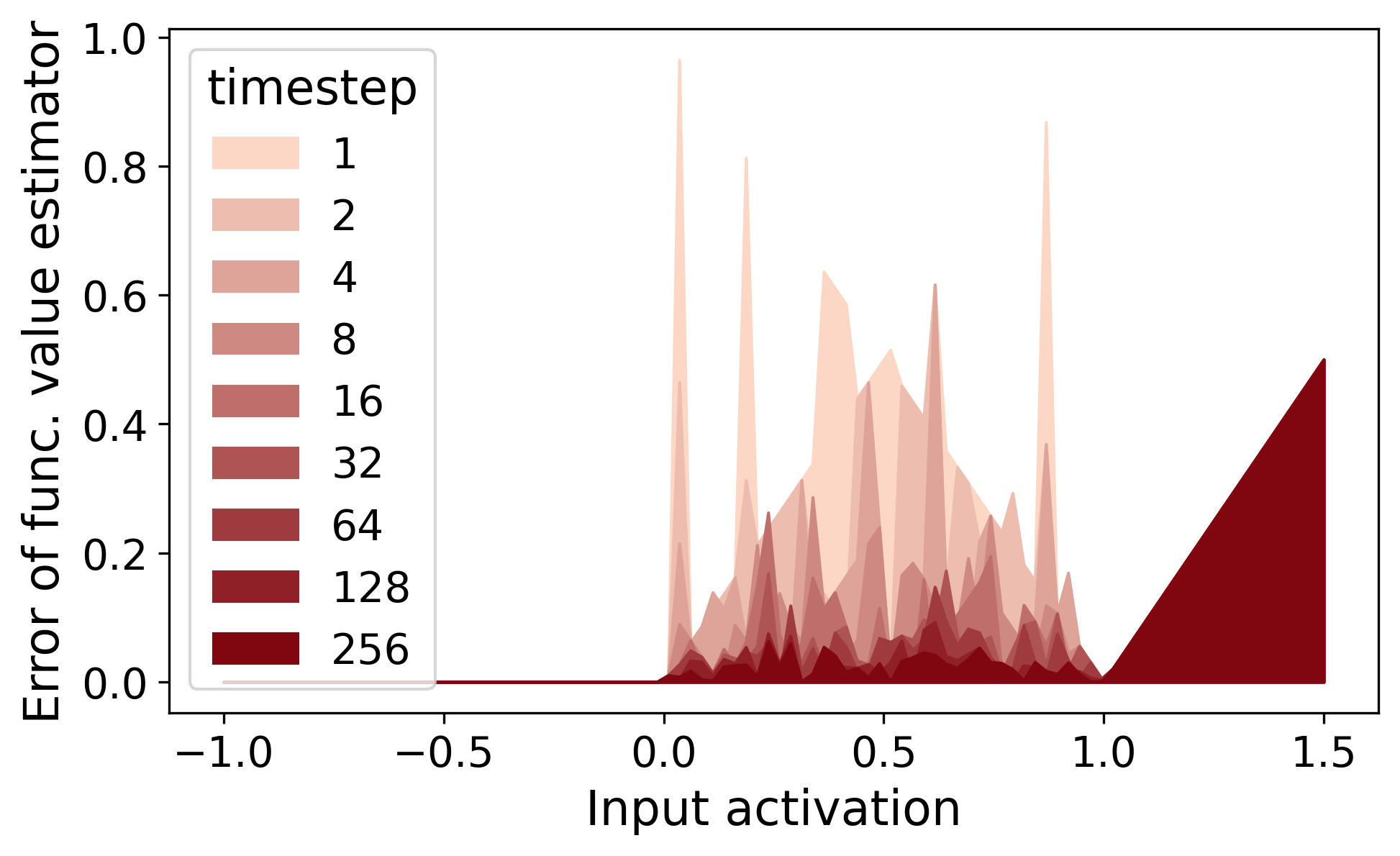}\label{fig:if_rate_poisson_error}
    }
    \hfill
    \subfigure[Fig.~\ref{fig:if_rate_poisson_error} transformed to $\tilde{f}(t)$]{
        \includegraphics[width=0.31\linewidth]{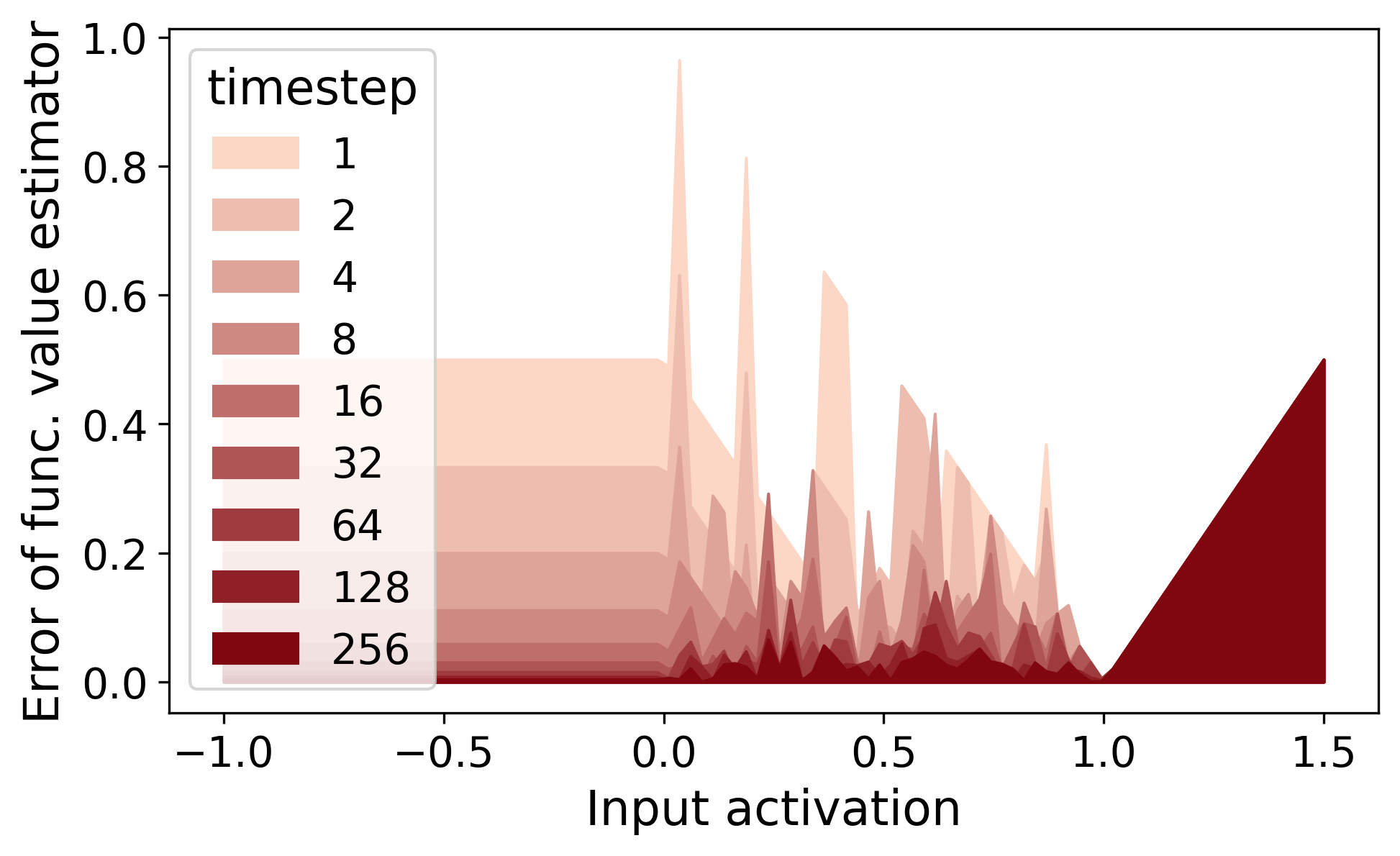}
    }
    \hfill
    \subfigure[Subgradient method on $\tilde{f}(t)$. Poisson encoding.]{\includegraphics[width=0.31\linewidth]{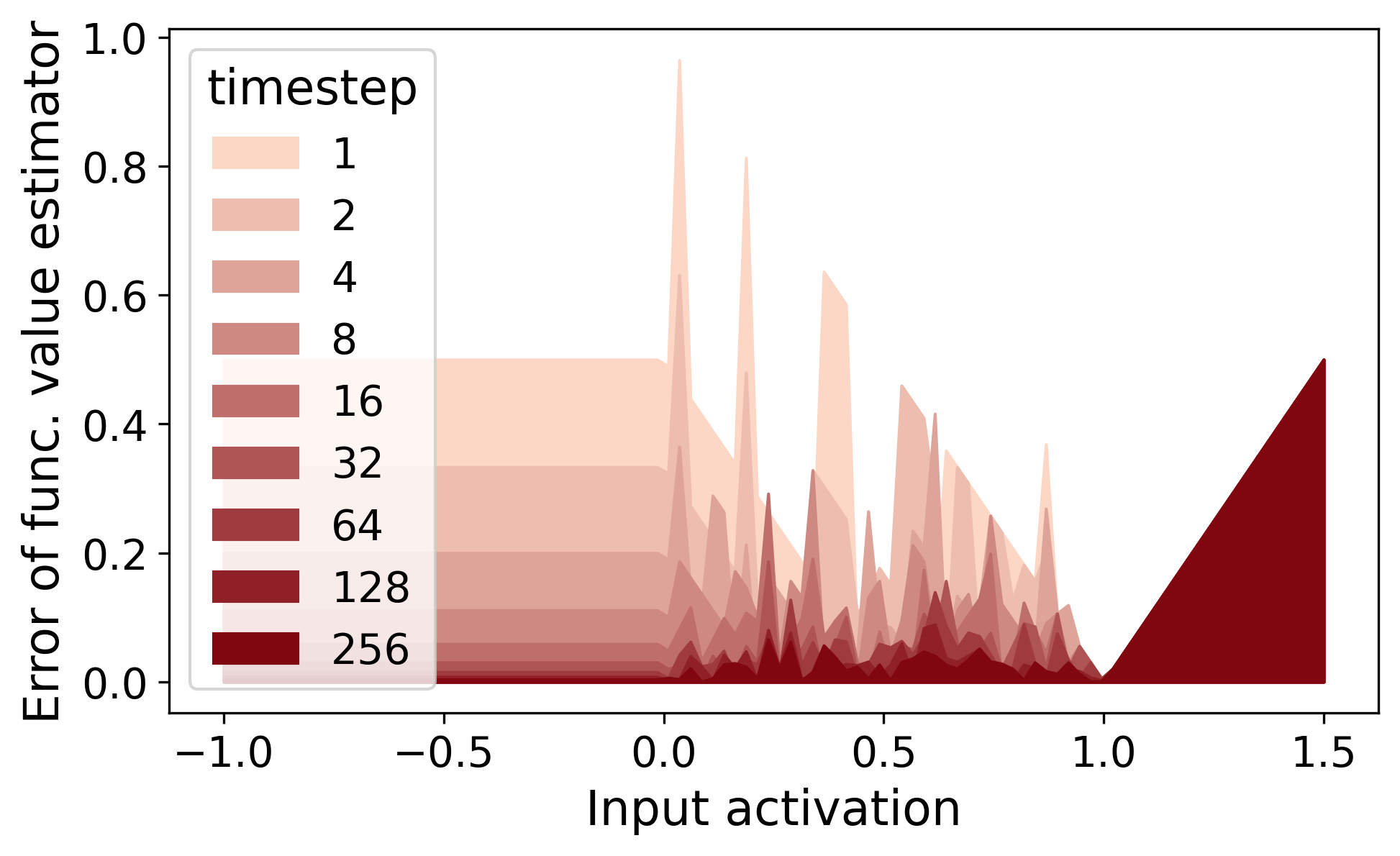}
        \label{fig:subgradient_error_poisson}
    }
\vskip -0.1in
\caption{Equivalence of IF neuron (Eq.~\ref{lif:2}-\ref{if:1}) + rate-coded input (Eq.~\ref{eq:rate}) with the subgradient method defined as Theorem~\ref{thm:if_rate}. We plot time-evolution of error between ReLU function and the rate-decoded output of IF neuron (Fig.~\ref{fig:if_rate_error},~\ref{fig:if_rate_poisson_error}) or the approximation of subgradient method (Fig.~\ref{fig:subgradient_error_float},~\ref{fig:subgradient_error_poisson}). Given input $x \in \mathbb{R}$, Float encoding is $I(t) = x~\forall t$~\cite{li2021freelunch}, and Poisson encoding is a stochastic spike train representation of $x$ that samples $I(t)\sim B(1,\text{ReLU1}(x)), \forall t$~\cite{Sengupta2018SNNVGGResNet}  }
\label{fig:toy_if_rate}
\end{center}
\vskip -0.2in
\end{figure}
\begin{figure}[!ht]
\vskip 0.0in
\begin{center}
    \subfigure[LIF neuron, $\tau = 10, R = 1, \theta_{th} = 1$, Float encoding]{
        \includegraphics[width=0.40\linewidth]{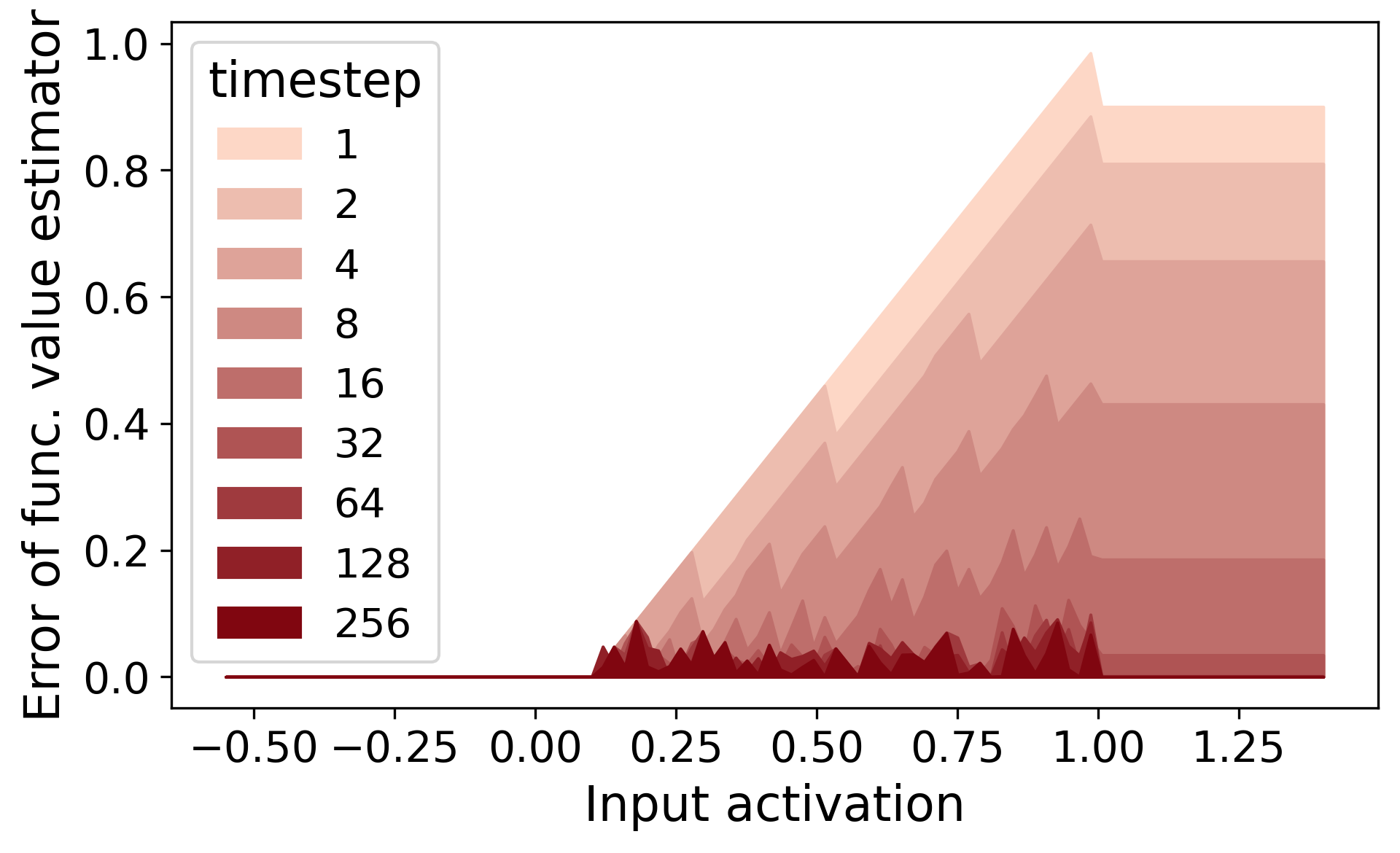}\label{fig:lif_error_float}
    }
    \hfill
    \subfigure[LIF neuron,  $\tau = 10, R = 1, \theta_{th} = 1$, spike encoding.]{
        \includegraphics[width=0.40\linewidth]{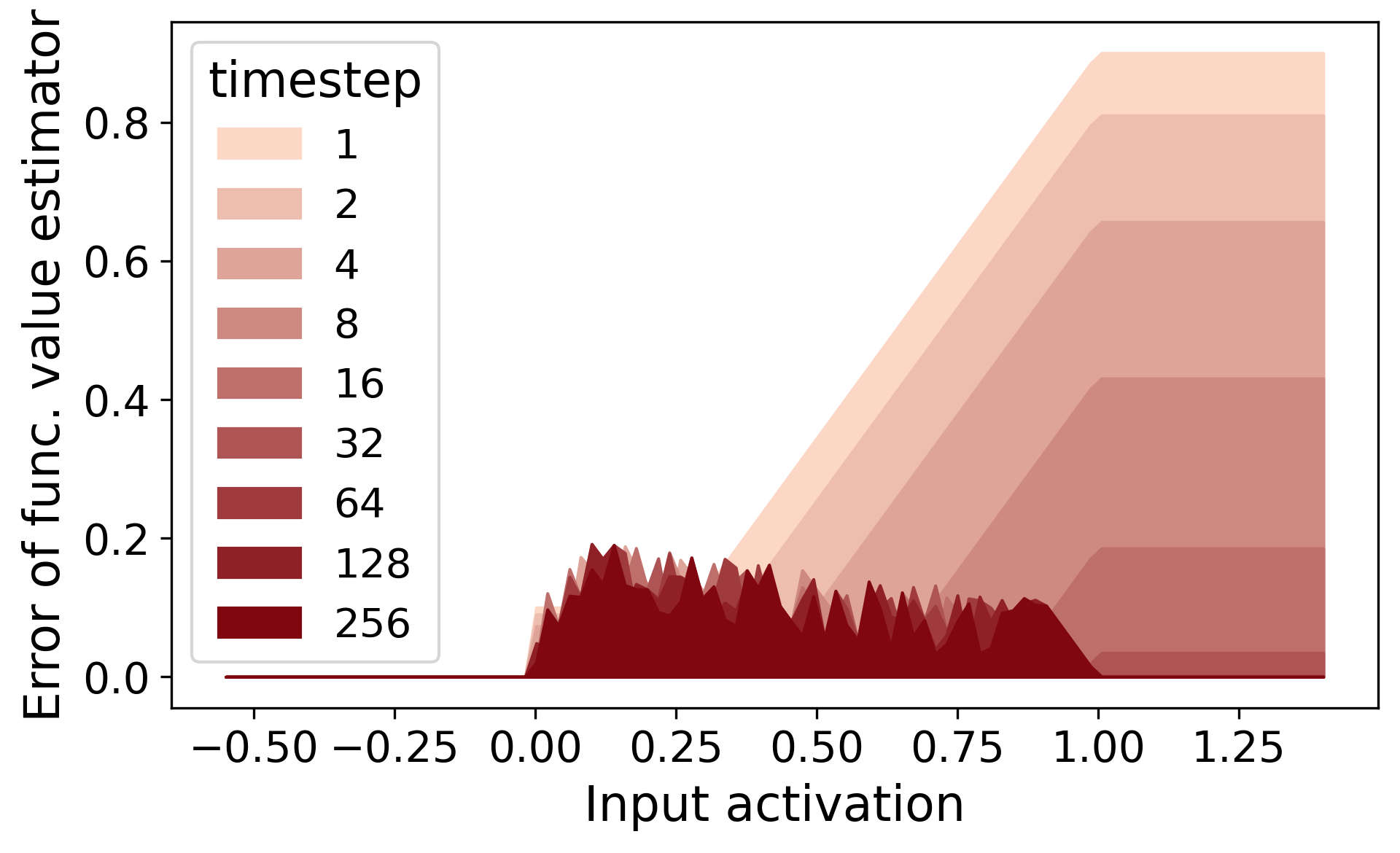}\label{fig:lif_error_spike}
    }\vfill
    \subfigure[Subgradient method, $\tau = 10, R = 1$, Float encoding.]{
        \includegraphics[width=0.40\linewidth]{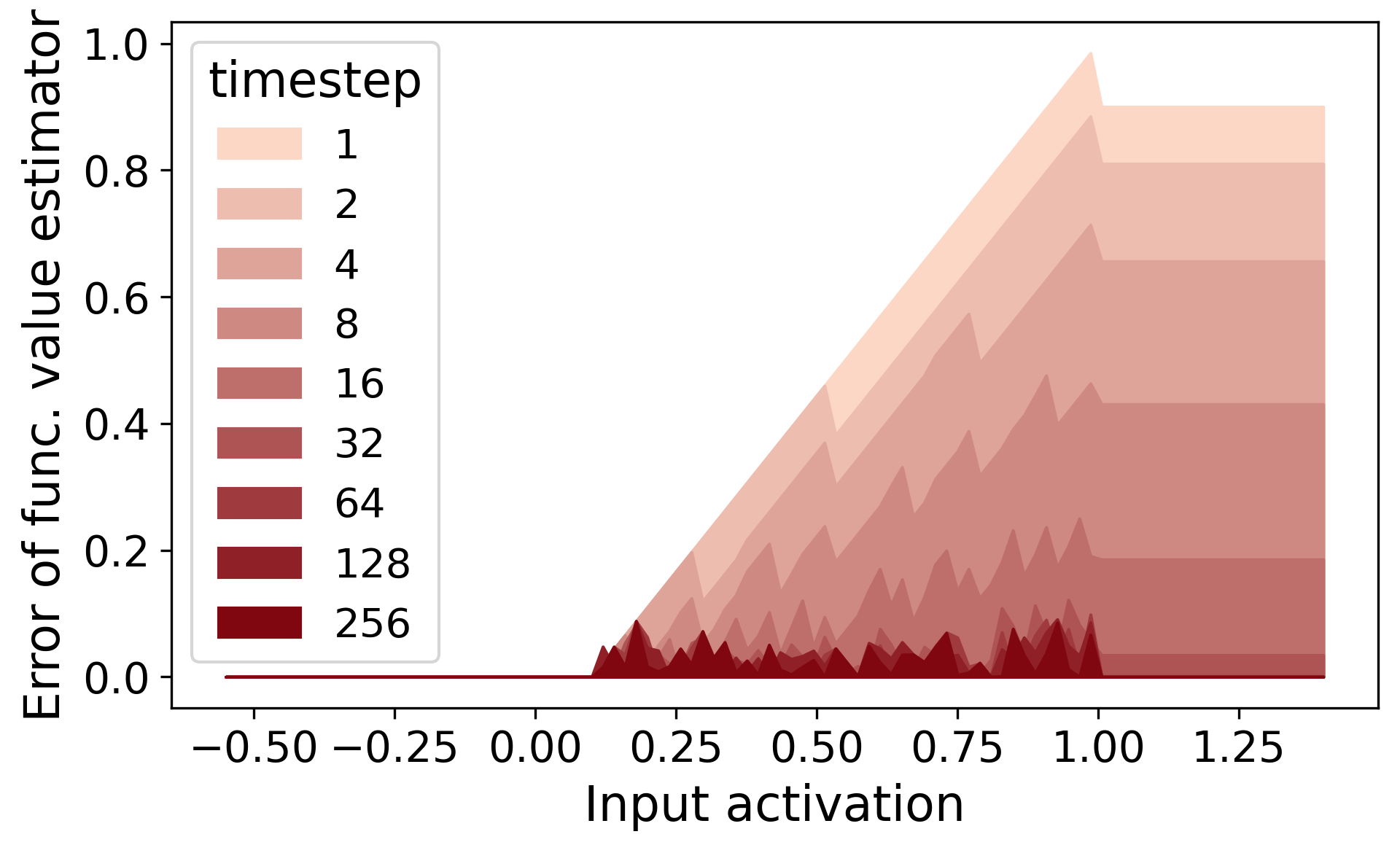}\label{fig:lif_subgradient_error_float}
    }
    \hfill
    \subfigure[Subgradient method $\tau = 10, R = 1$, spike encoding.]{
        \includegraphics[width=0.40\linewidth]{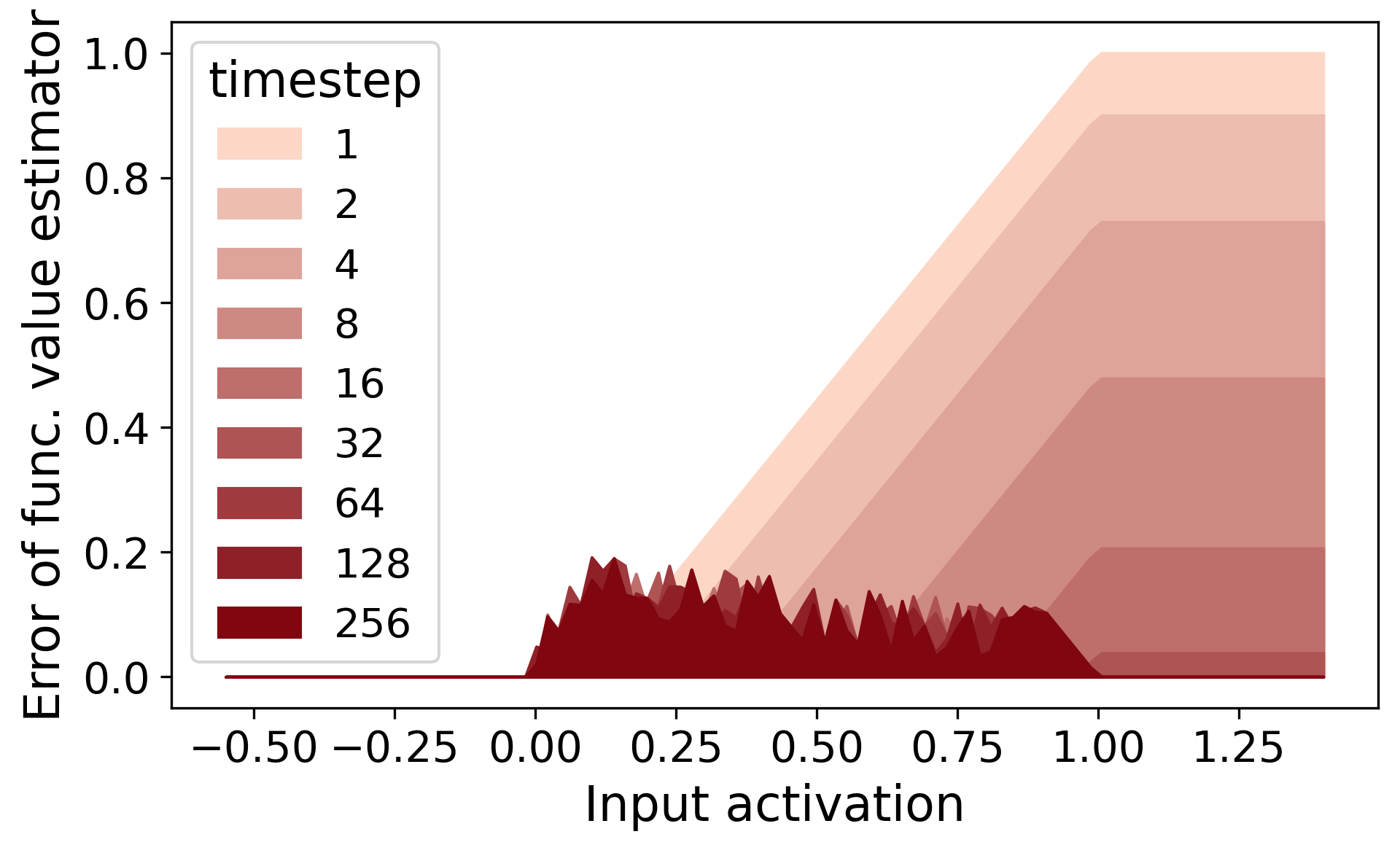}
     \label{fig:lif_subgradient_error_spike}
    }
\vskip -0.1in
\caption{Equivalence of LIF neuron (Eq.~\ref{lif:1}-\ref{lif:3}) + EMA-coded input (Eq.~\ref{def:ema_coding}) with the subgradient method defined as Theorem~\ref{thm:lif_rate}. We plot time-evolution of error between $\text{ReLU1}(\frac{Rx - \theta_{th}}{\tau - 1})$ and the EMA-decoded output of neuron (Fig.~\ref{fig:lif_error_float},~\ref{fig:lif_error_spike}) or the approximation of subgradient method (Fig.~\ref{fig:lif_subgradient_error_float},~\ref{fig:lif_subgradient_error_spike}). Given input $x \in \mathbb{R}$, Float encoding is $I(t) = x~\forall t$, and spike encoding is a deterministic spike train representation $I(t)$ of $x$ defined as $I(t)= \mathbb{H}(x - 
\frac{\tau-1}{\tau} \tilde{x}(t-1)),~\tilde{x}(t) = \frac{\tau-1}{\tau} \tilde{x}(t-1) + \frac{1}{\tau}I(t)$. }
\label{fig:toy_lif}
\end{center}
\vskip -0.3in
\end{figure}
\begin{figure}[!ht]
\vskip 0.2in
\begin{center}    
    \subfigure[SignGD-based neuron, $\eta(t) = \frac{1}{t+1}$]{
        \includegraphics[width=0.45\linewidth]{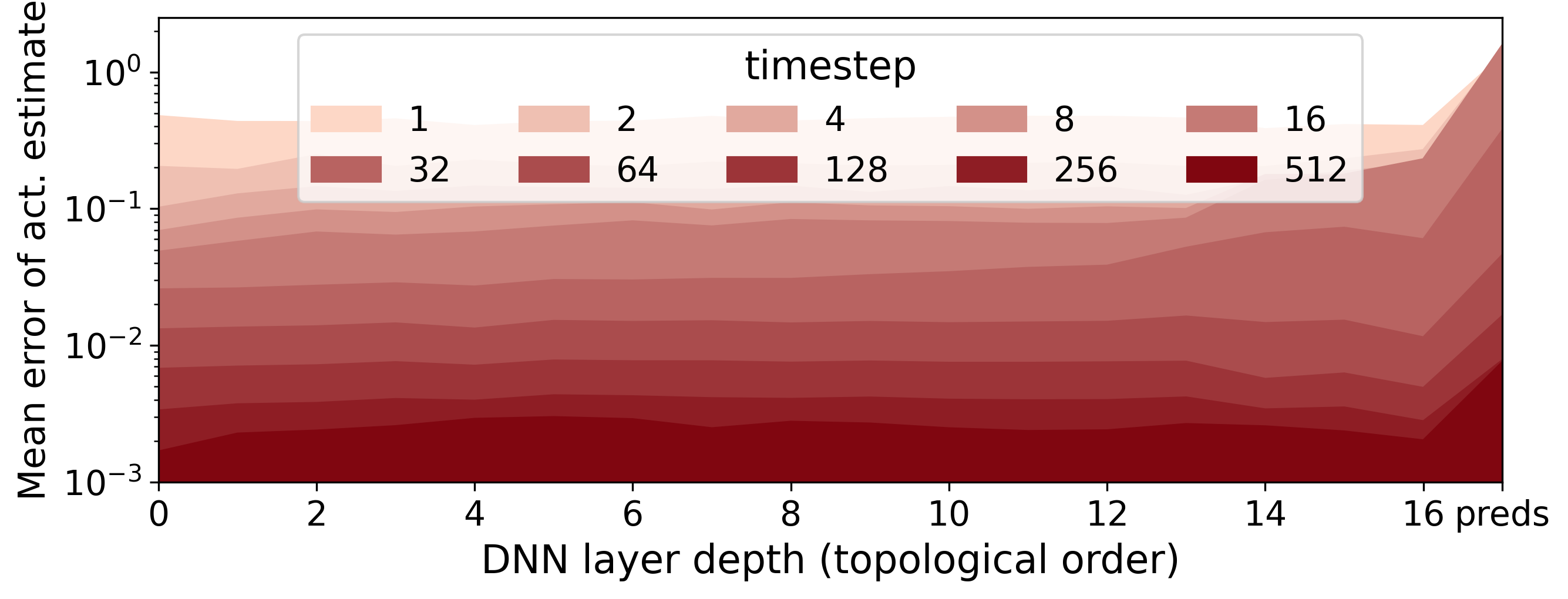}
        \label{fig:visualize_signgd_inv}
    }
    \hfill
    \subfigure[Subgradient-based neuron, $\eta(t) = \frac{1}{t+1}$ $\approx$ IF + Rate]{
        \includegraphics[width=0.45\linewidth]{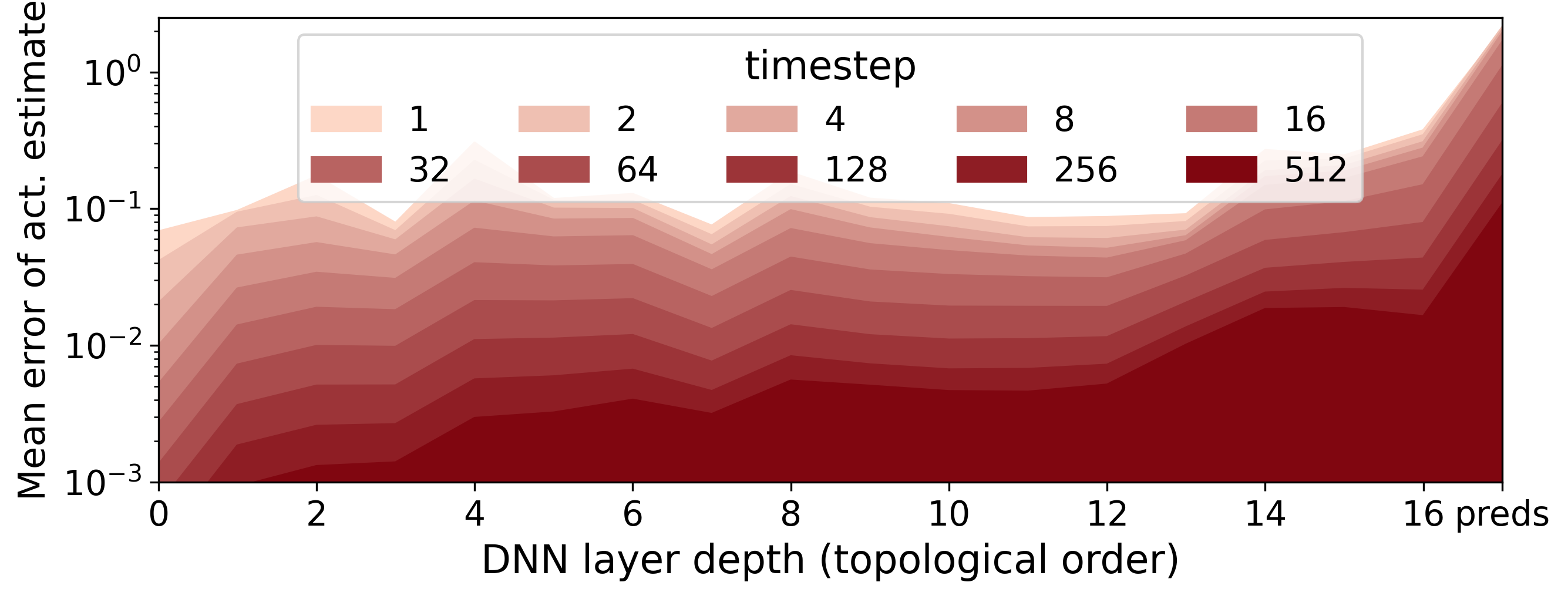}
        \label{fig:visualize_if_rate}
    }
    \vfill
    \subfigure[SignGD-based neuron, $\eta(t) = 0.15 \cdot (0.95)^t$.]{
        \includegraphics[width=0.45\linewidth]{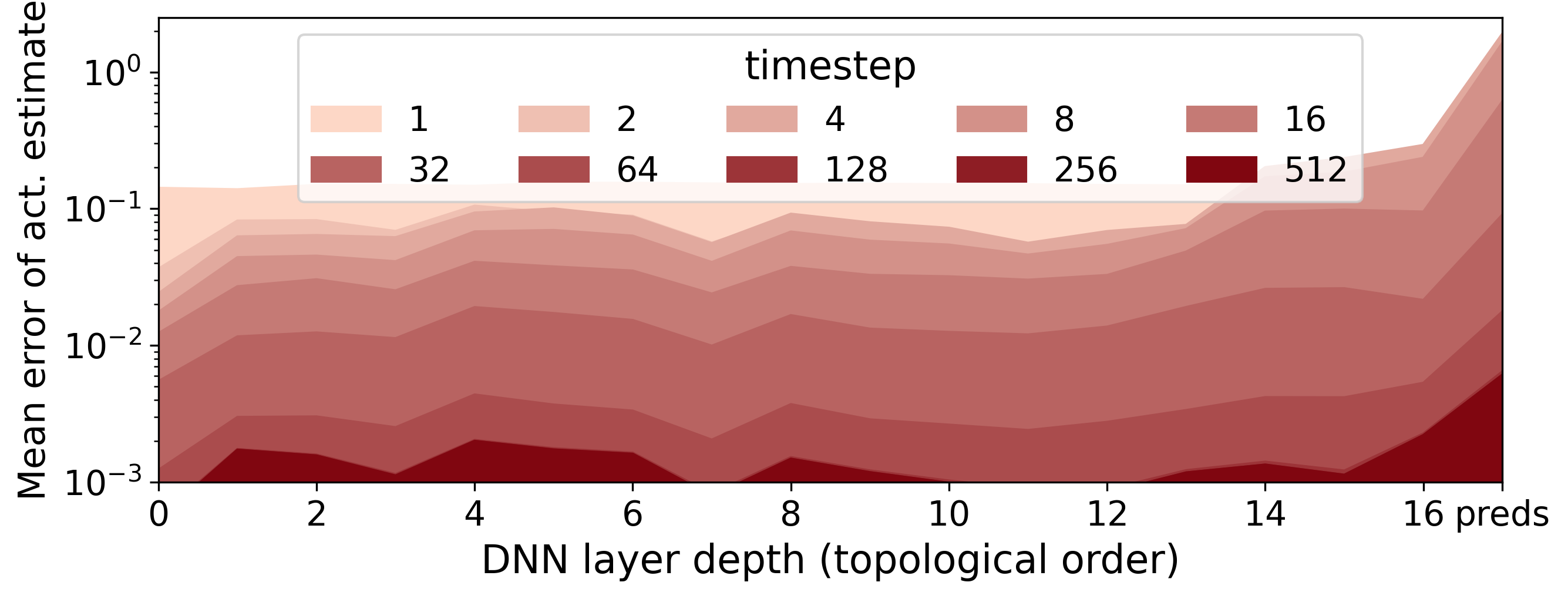}
        \label{fig:visualize_signgd_exp}
    }
    \hfill
    \subfigure[Subgradient-based neuron, $\eta(t) = 0.15 \cdot (0.95)^t$.]{
        \includegraphics[width=0.45\linewidth]{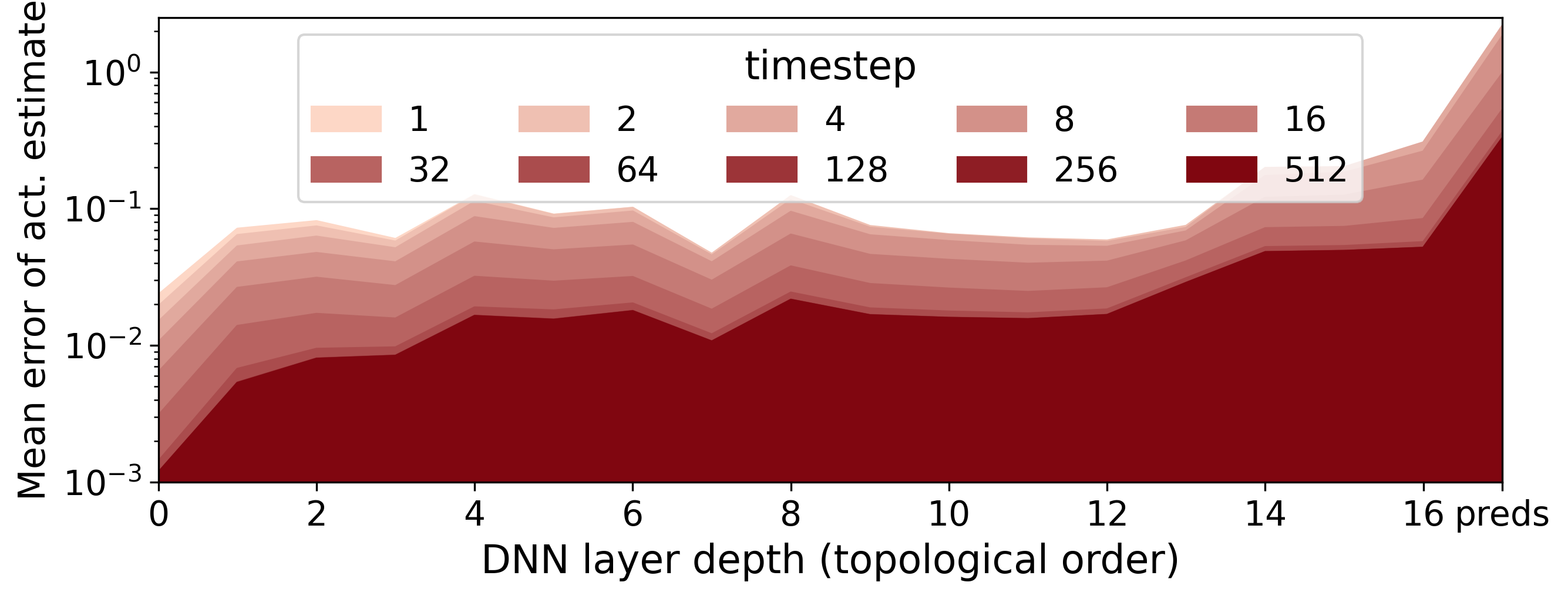}
        \label{fig:vis_exponential}
    }
\caption{Layer-wise time-evolution of error between the true ANN activation and the spike-decoded SNN activations. We qualitatively compare two neuronal dynamics side-by-side with the same learning rate schedules; inverse schedule (Fig.~\ref{fig:visualize_signgd_inv},~\ref{fig:visualize_if_rate}) and exponential schedule (Fig.~\ref{fig:visualize_signgd_exp},~\ref{fig:vis_exponential}). Measured with ResNet-18 on a single instance of CIFAR-10 dataset.}
\label{fig:error_propagation}
\end{center}
\vskip -0.2in
\end{figure}

\begin{algorithm}[tb]
   \caption{ANN-to-SNN Conversion with signGD-based Neuron (Definition~\ref{def:signgd_dynamics})}
   \label{alg:ann_to_snn_conversion}
\begin{algorithmic}[1]
   \STATE {\bfseries Input:} Target ANN model $F$, learning rate schedule $\eta: \mathbb{N} \to \mathbb{R}$, key-value mapping $\mathcal{D}$ of $\{\text{Nonlinearity}: \text{signGD-based neuronal dynamics}\}$, number of training batches for normalization $N \in \mathbb{N}$
   \STATE {\bfseries Output:} Converted SNN model $S$
   \STATE Extract the computational graph $G$ of tensor operators from ANN model $F$.
   \STATE Decompose max pooling and layer normalization operators of $G$ to generate a new graph $G'$ (See Section~\ref{sec:multioperand_nonlinearities}.).
    \IF{ReLU operator $\in$ G} 
\FOR{any ReLU operator $f \in G'$}
   \STATE Initialize the maximum ReLU output activation $M_{f} \leftarrow - \infty$.
   \ENDFOR 
   \STATE Register a callback for every ReLU operator $f$ to record its output activations $X_{f}$.
   \FOR{$t=1$ {\bfseries to} $N$}
   \STATE Sample a batch $x$ from training dataset.
   \STATE Feed-forward $x$ through ANN model $F$ and record $X_{f}$. 
   \FOR{any ReLU operator $f \in G'$}
   \STATE $M_{f} \leftarrow \max(M_{f}, X_{f})$
   \ENDFOR 
   \ENDFOR
   \FOR{any ReLU operator $f \in G'$}
   \STATE Replace the $f(\cdot)$ operator with a scaled ReLU operator $M_{f} \cdot f(\frac{\cdot}{M_{f}})$ to generate a new graph $G''$.
   \ENDFOR 
    \ELSE 
    \STATE $G'' \leftarrow G'$
    \ENDIF 
   \FOR{nonlinearity $h \in \text{domain of }\mathcal{D}$}
   \FOR{any operator $f \in G''$}
   \IF{$f$ is $h$}
   \STATE Initialize signGD-based neuron $n \leftarrow \mathcal{D}(f; \eta)$.
   \STATE Replace the operator $f$ of $G''$ with the neuron $n$ to generate a computational graph $G^S$ of SNN. 
   \ENDIF
   \ENDFOR 
   \ENDFOR 
   \STATE Create a SNN model $S$ from the computational graph $G^S$.
   \STATE Reset membrane potentials of SNN model $S$.
   \STATE Register a callback for every neuron $h$ of SNN $S$  to record its influx current $I_{h}$.
   \STATE Stimulate every neuron $h$ of SNN $S$  to emit a spike $1$ for a single time-step.
   \STATE Record the influx current $I_{h}^+ = I_{h}$ for every neuron $h$ of SNN $S$.
   \STATE Depress every neuron $h$ of SNN $S$ to not emit a spike for a single time-step.
   \STATE Record the idle current $I_{h}^- = I_{h}$ for every neuron $h$ of SNN $S$.
   \STATE Store the sum of weights $W \leftarrow I_{h}^+ - I_{h}^-$ and bias $b \leftarrow I_{h}^-$ for every neuron.
\end{algorithmic}
\end{algorithm}

\begin{table*}[t]
\caption{Comparing ANN-to-SNN conversion performance on CIFAR-10 ~\cite{krizhevsky2009cifar} dataset. 
\emph{No Spike-aware Activation Func} means ReLU functions in ANN architecture are not replaced with spike-aware functions before training, e.g., QCFS~\cite{bu2021optimalannsnn}, SlipReLU~\cite{Jiang2023SlipReLU}. For our signGD-based neuron, we used exponential schedule with initial LR $0.135$ and decay factor $0.95$. Results of RTS~\cite{Deng2021OptimalConversionTheory} are from \cite{li2021freelunch}.}
\label{tab:cifar10}
\vskip 0.15in
\begin{center}
\begin{small}
\begin{tabular}{lccccccc}
\toprule
\multirow{2}{*}{Methods} & No Spike-aware &ANN & \multicolumn{4}{c}{Simulation time-steps} \\
 & Activation Func. & Acc.& T = 16 &  T = 32 &T = 64 & T = 128 &T = 256 \\
\midrule
\multicolumn{8}{c}{ResNet-18~\cite{He2015ResNet} CIFAR-10}\\
\midrule
TSC~\cite{han2020deeptsc}    
& \Greencheck & 91.47  &- & - & 69.38 & 88.57 & 90.10  \\
RMP~\cite{Han2020RMPSNNRM}    
& \Greencheck & 91.47  &- & - & - &  87.60 & 89.37   \\
RTS~\cite{Deng2021OptimalConversionTheory}
& \Redlargex & 95.46  &- & 84.06 & 92.48 & 94.68 & 95.30  \\
SNNC-AP~\cite{li2021freelunch}      
& \Greencheck & 95.46  &- & 94.78 & 95.30 & 95.42 & 95.41   \\
SNM~\cite{Wang2022SignedNW}      
& \Greencheck & 95.39  & - & 94.03 & 94.03  & 95.19 &  -  \\
QCFS~\cite{bu2021optimalannsnn}      
&\Redlargex & 96.04  &  95.92 &  96.08  & 96.06 &  -  & - \\
SlipReLU~\cite{Jiang2023SlipReLU}  
& \Redlargex &  96.15   & \textbf{96.10}  & 96.12  & 96.22 &  -  & -  \\
SRP~\cite{Hao2023ResidualMembranePotential}       
& \Redlargex & 95.64   & 95.55  & 95.55  & 95.58 & - & - \\
Subgradient-based neuron (Thm.~\ref{thm:lr_schedule})& \Greencheck & 96.82 & 53.40 & 88.34 & 94.20 & 94.84& 94.87  \\
Ours (signGD-based neuron, Def.~\ref{def:signgd_dynamics})& \Greencheck & 96.82  & 80.74 & \textbf{96.29} & \textbf{96.78} & \textbf{96.79}& \textbf{96.79} \\
\midrule
\multicolumn{8}{c}{VGG-16~\cite{Simonyan2014VGG} CIFAR-10}\\
\midrule
TSC~\cite{han2020deeptsc}    
&  \Greencheck & 93.63  & - & - & 92.79 & 93.27 & 93.45 \\
RMP~\cite{Han2020RMPSNNRM}    
&  \Greencheck & 93.63 & -& 60.30 & 90.35 & 92.41 & 93.04   \\
RTS~\cite{Deng2021OptimalConversionTheory}
&  \Redlargex & 95.72  & - & 76.24 &  90.64 & 94.11 & 95.33 \\
SNNC-AP~\cite{li2021freelunch}      
& \Greencheck & 95.72   & - & 93.71 & 95.14 & 95.65 & 95.79   \\
SNM~\cite{Wang2022SignedNW}      
&  \Greencheck &94.09 & - & 93.43  & 94.07  & 94.07 &  -  \\
QCFS~\cite{bu2021optimalannsnn}      
&  \Redlargex & 95.52   & 95.40  & \textbf{95.54}  & 95.55  & - & -  \\
SlipReLU~\cite{Jiang2023SlipReLU}  
&  \Redlargex & 95.60   & 95.20  & 95.66  & 95.65  & - & -  \\
SRP~\cite{Hao2023ResidualMembranePotential}       
& \Redlargex &  95.52  & \textbf{95.44} &  95.42 & 95.40 & - & -\\
Subgradient-based neuron (Thm.~\ref{thm:lr_schedule})& \Greencheck & 95.96  & 50.98 & 82.84 & 90.66 & 91.75 &  91.80 \\
Ours (signGD-based neuron, Def.~\ref{def:signgd_dynamics})& \Greencheck & 95.96 & 81.06 & 95.53 & \textbf{95.96} & \textbf{95.97} & \textbf{95.97}  \\ 
\bottomrule
\end{tabular}
\end{small}
\end{center}
\vskip -0.1in
\end{table*}

\begin{table*}[t]
\caption{Comparing ANN-to-SNN conversion performance on CIFAR-100 ~\cite{krizhevsky2009cifar} dataset. 
\emph{No Spike-aware Activation Func} means ReLU functions in ANN architecture are not replaced with spike-aware functions before training, e.g., QCFS~\cite{bu2021optimalannsnn}, SlipReLU~\cite{Jiang2023SlipReLU}. For our signGD-based neuron, we used exponential schedule with initial LR $0.135$ and decay factor $0.95$. Results of RTS~\cite{Deng2021OptimalConversionTheory} are from \cite{li2021freelunch}.}
\label{tab:cifar100}
\vskip 0.15in
\begin{center}
\begin{small}
\begin{tabular}{lcccccccc}
\toprule
\multirow{2}{*}{Methods} & No Spike-aware &ANN & \multicolumn{4}{c}{Simulation time-steps} \\
 & Activation Func. & Acc.& T = 16 &  T = 32 &T = 64 & T = 128 &T = 256 \\
\midrule
\multicolumn{7}{c}{ResNet-20~\cite{He2015ResNet} CIFAR-100}\\
\midrule
TSC~\cite{han2020deeptsc}    &\Greencheck & 68.72 & - & - & - & 58.42 & 65.27  \\
RMP~\cite{Han2020RMPSNNRM}    &\Greencheck & 68.72 & - &  27.64 & 46.91 & 57.69 & 64.06   \\
RTS~\cite{Deng2021OptimalConversionTheory}
&\Redlargex & 77.16  & - &  51.27 & 70.12 & 75.81 & 77.22 \\
SNNC-AP~\cite{li2021freelunch}      
&\Greencheck & 77.16 & - &76.32 & 77.29 & 77.73 & 77.63 \\
SNM~\cite{Wang2022SignedNW}      
&\Greencheck &78.26  & -  & 74.48 & 77.59 &  77.97 &  -   \\
QCFS~\cite{bu2021optimalannsnn}      
&\Redlargex & 78.80  & \textbf{79.48} & \textbf{79.62} & 79.54 & - & 79.61 \\
SlipReLU~\cite{Jiang2023SlipReLU}  
&\Redlargex &  77.08  & 77.29 & 78.04 & 77.97 & - & 77.99 \\
SRP~\cite{Hao2023ResidualMembranePotential}       
&\Redlargex & 69.94 & 64.71 & 65.50 & 65.82 & - & - \\
Subgradient-based neuron (Thm.~\ref{thm:lr_schedule})&\Greencheck & 81.19  & 22.39 & 57.79 & 71.22 & 73.08 & 73.14 \\
Ours (signGD-based neuron, Def.~\ref{def:signgd_dynamics})&\Greencheck & 81.19  & 36.78 & 79.13 & \textbf{81.10} & \textbf{81.22} & \textbf{81.23} \\
\midrule
\multicolumn{7}{c}{VGG-16~\cite{Simonyan2014VGG} CIFAR-100}\\
\midrule
TSC~\cite{han2020deeptsc}    
&\Greencheck & 71.22 & - & - & - & 69.86 &  70.65  \\
RMP~\cite{Han2020RMPSNNRM}    
&\Greencheck & 71.22 & - & - & - & 63.76  & 68.34  \\
RTS~\cite{Deng2021OptimalConversionTheory}
&\Redlargex & 77.89 & - & 7.64  & 21.84  & 55.04  & 73.54  \\
SNNC-AP~\cite{li2021freelunch}      
&\Greencheck & 77.89  & - & 73.55  & 76.64  & 77.40 &  77.68    \\
SNM~\cite{Wang2022SignedNW}      
&\Greencheck &74.13  & -  & 71.8 & 73.69 & 73.95 &  -  \\
QCFS~\cite{bu2021optimalannsnn}      
&\Redlargex & 76.28   & 76.24  & \textbf{77.01}  & 77.10  &  -  &  77.08\\
SlipReLU~\cite{Jiang2023SlipReLU}  
&\Redlargex & 70.03  & 69.35 & 70.65  & 71.23  & - & - \\
SRP~\cite{Hao2023ResidualMembranePotential}       
&\Redlargex & 76.28  & \textbf{76.42} & 76.45 & 76.37 & - & -\\
Subgradient-based neuron (Thm.~\ref{thm:lr_schedule})&\Greencheck & 78.28  & 13.12 & 42.05 & 60.61 & 64.03 & 64.15 \\
Ours (signGD-based neuron, Def.~\ref{def:signgd_dynamics})&\Greencheck & 78.28  & 39.42 & 76.33 & \textbf{78.17} & \textbf{78.33} & \textbf{78.23} \\
\bottomrule
\end{tabular}
\end{small}
\end{center}
\vskip -0.1in
\end{table*}


\end{document}